\documentclass{article}

\usepackage[main,final]{neurips_2026}
\usepackage[utf8]{inputenc} % allow utf-8 input
\usepackage[T1]{fontenc}    % use 8-bit T1 fonts
\usepackage{hyperref}       % hyperlinks
\usepackage{url}            % simple URL typesetting
\usepackage{booktabs}       % professional-quality tables
\usepackage{amsfonts}       % blackboard math symbols
\usepackage{nicefrac}       % compact symbols for 1/2, etc.
\usepackage{microtype}      % microtypography
\usepackage{xcolor}         % colors
\usepackage{amsmath}
\usepackage{amssymb}
\usepackage{graphicx}
\usepackage{enumitem}
\usepackage{mathtools}
\usepackage{cleveref}
\usepackage{algorithm}
\usepackage{algorithmic}
\usepackage{caption}
\usepackage{aliascnt}
\usepackage{tikz}
\usetikzlibrary{calc}
\usetikzlibrary{arrows.meta}
\usepackage{amsthm}

\title{Is $\sqrt{d}$ Separation Necessary for Gradient EM to Learn Gaussian Mixtures in High Dimensions?}

\author{%
  Yiran Zhang \\
  University of California, Berkeley\\
  \texttt{zhangyir@berkeley.edu} \\
  \And
  Mo Zhou\\
  University of Washington\\
  \texttt{mozhou17@cs.washington.edu} \\
  \AND
  Weihang Xu\\
  University of Washington\\
  \texttt{xuwh@cs.washington.edu} \\
  \And
  Maryam Fazel\\
  University of Washington\\
  Amazon, Inc.\\
  \texttt{mfazel@uw.edu} \\
  \And
  Simon S.~Du\\
  University of Washington\\
  \texttt{ssdu@cs.washington.edu} \\
}

\theoremstyle{plain}
\newtheorem{theorem}{Theorem}

\newaliascnt{lemma}{theorem}
\newtheorem{lemma}[lemma]{Lemma}
\aliascntresetthe{lemma}

\theoremstyle{remark}
\newtheorem{remark}[theorem]{Remark}

\crefname{theorem}{theorem}{theorems}
\Crefname{theorem}{Theorem}{Theorems}

\crefname{proposition}{proposition}{propositions}
\Crefname{proposition}{Proposition}{Propositions}

\crefname{corollary}{corollary}{corollaries}
\Crefname{corollary}{Corollary}{Corollaries}

\crefname{assumption}{assumption}{assumptions}
\Crefname{assumption}{Assumption}{Assumptions}

\crefname{lemma}{lemma}{lemmas}
\Crefname{lemma}{Lemma}{Lemmas}

\crefname{remark}{remark}{remarks}
\Crefname{remark}{Remark}{Remarks}
\let\cref\Cref

\newcommand{\E}{\mathbb{E}}
\newcommand{\R}{\mathbb{R}}
\newcommand{\Rpos}{\mathbb{R}_+}

\newcommand{\bx}{\bar{x}}

\newcommand{\calL}{\mathcal{L}}
\newcommand{\calP}{\mathcal{P}}
\newcommand{\calN}{\mathcal{N}}

\newcommand{\botmu}{\mu^\bot}
\newcommand{\pimin}{\pi_{\min}}
\newcommand{\pigap}{\pi_{\textnormal{gap}}^*}
\newcommand{\parmu}{\mu^\parallel}

\begin{document}

\maketitle

\begin{abstract}
  Learning Gaussian mixture models (GMMs) using the Expectation-Maximization (EM) algorithm and its gradient-based variants is a fundamental problem in machine learning. It is known that randomly initialized (gradient) EM fails to learn multi-component GMMs in the exact-parameterized setting, where the number of components matches that of the ground-truth GMM. Recently, global convergence of gradient EM has been established in the over-parameterized setting, where more components are used, provided that the ground-truth components are well separated. In particular, the minimum separation between ground-truth components is required to scale as $\Omega(\sqrt{d})$, where $d$ is the dimension.
In this paper, we show that this dimensional dependence is unavoidable in high-dimensional settings. Specifically, we consider a hybrid EM algorithm that uses standard EM updates for the mixing weights and gradient EM updates for the component means. For any $\epsilon > 0$, we prove that when the dimension is sufficiently large, in the worst case a separation of order $\Omega(d^{0.5-\epsilon})$ is insufficient to guarantee global convergence of population gradient EM in sub-exponential time under random initialization, even in the over-parameterized regime. Our result establishes an almost optimal worst-case lower bound on the ground-truth separation required for learning Gaussian mixtures via gradient EM in high dimensions.
\end{abstract}

\section{Introduction}
\label{sec:intro}
%In this section, introduce your research area and explain why this problem is important (motivation). Provide enough background for the reader to understand your work. Finally, clearly state the research question or core problem you aim to solve.
%\subsection{Background}

The \emph{Expectation-Maximization (EM)} algorithm \citep{dempster1977maximum} is one of the most popular methods for learning latent-variable models. Despite its empirical success and significant theoretical progress \citep{wu1983convergence,xu1996convergence,balakrishnan2017statistical,dwivedi2020singularity,wu2021randomly}, the global behavior of EM, as well as its practical variants such as \emph{gradient EM}, remains not fully understood. In this paper, we study the classical problem of learning \emph{Gaussian Mixture Models (GMMs)} as a canonical testbed for analyzing EM dynamics. Our primary focus is on how global convergence depends on the \emph{separation} between mixture component means in high-dimensional settings.

Specifically, we consider the problem of learning Gaussian mixtures with identity covariance.
The ground truth distribution is a $d$-dimensional $m$-component Gaussian mixture model ($m$-GMM) with the probability density function $p^*$ given by
\[
p^*(x)=\sum_{i=1}^m \pi_i^*\phi(\mu_i^*;x),
\]
where $\pi_i^*\in\Rpos$ are the mixing weights with $\sum_{i}\pi_i^*=1$, $\mu_i^*\in \R^d$ are the true mean vectors. The function $\phi(\mu;x)$ is the p.d.f. of a $d$ dimensional Gaussian with mean $\mu$ and identity covariance $I_d$
\[
\phi(\mu;x):=(2\pi)^{-d/2}\exp\left(-\|x-\mu\|^2/2\right),
\]
where $\|\cdot\|$ denotes 2-norm throughout the paper. 
The goal is to recover the true distribution $p^*$ with observed data sampled from it.

\paragraph{The EM and Gradient EM Algorithms. }
The EM algorithm fits an $n$-GMM 
\begin{equation}
    p(x) = \sum_{i=1}^{n} \pi_i \, \phi(\mu_i; x),
\end{equation}
to recover the ground-truth GMM $p^*$, where the number of Gaussian components in the model ($n$) and in the ground truth ($m$) may differ.
The algorithm iteratively updates the mixing weights $\pi := (\pi_1, \ldots, \pi_n)^\top$ and model means $\mu := (\mu_1^\top, \ldots, \mu_n^\top)^\top$.
In practice, a commonly used variant is \emph{gradient EM}, which replaces the exact maximization step when standard EM updates are computationally expensive.

To understand optimization behavior in its cleanest form, we focus on the population dynamics of EM and gradient EM to separate statistical issues from optimization dynamics. This population-level analysis is standard in studies of EM algorithm \citep{xu2016global, xu2024toward,zhou2025global}.

At iteration $\ell$, both population EM and gradient EM consist of two steps:\\
\textbf{Expectation (E) step.} 
For $i = 1, \ldots, n$, compute the membership weight function $\psi_i : \mathbb{R}^d \to \mathbb{R}_+$, the posterior probability of $x \in \mathbb{R}^d$ being sampled from the $i$-th Gaussian:
\begin{equation}
    \psi_i(x) = \Pr(x|i) = 
    \frac{\pi_i^{(\ell)} \, \phi(\mu_i^{(\ell)}; x)}
         {\sum_{k=1}^{n} \pi_k^{(\ell)} \, \phi(\mu_k^{(\ell)}; x)}.
\end{equation}
\textbf{Maximization (M) step.}
Define the $Q$ function as
\begin{equation}
    Q(\pi, \mu \mid \pi^{(\ell)}, \mu^{(\ell)}) = 
    \mathbb{E}_{x \sim p^*} \left[
        \sum_{i=1}^{n} \psi_i(x; \pi^{(\ell)}, \mu^{(\ell)}) 
        \left( \log \pi_i - \frac{1}{2} \|x - \mu_i\|_2^2 \right)
    \right].
\end{equation}
Standard EM updates $(\pi, \mu)$ to be the maximizer of $Q$ function:
\[
(\pi^{(\ell+1)}, \mu^{(\ell+1)}) = \arg\max_{\pi, \mu} Q(\pi, \mu \mid \pi^{(\ell)}, \mu^{(\ell)}).
\]
Gradient EM replaces this maximization by a gradient step on $Q$ function. Let $\eta \in \mathbb{R}_+$ be the learning rate. The update rule is:
\begin{equation}\label{eq: gradient em}
    (\pi^{(\ell+1)}, \mu^{(\ell+1)}) 
    \leftarrow 
    (\pi^{(\ell)}, \mu^{(\ell)}) 
    + \eta \, \nabla_{\pi, \mu} Q(\pi, \mu \mid \pi^{(\ell)}, \mu^{(\ell)}).
\end{equation}
%In mixture models, the mixing weights lie on the simplex $\sum_i \pi_i = 1$, so a naive unconstrained gradient step on $\pi$ does not preserve feasibility. Accordingly, throughout this paper we use standard EM updates for $\pi$ and gradient steps only for $\mu$. The precise update rules are described in \Cref{sec:prel}.

A central quantity governing both the computational complexity of learning GMMs and the dynamics of EM is the minimum separation between the true component means:
\[
\Delta := \min_{i \neq j} \| \mu_i^* - \mu_j^* \|.
\]
Information-theoretic bounds suggest that $\Delta=\Omega(\sqrt{\log m})$ is necessary for efficient learning \citep{regev2017learning}, and specialized algorithms have been developed to match this bound \citep{liu2022clustering,diakonikolas2024implicit}. However, these algorithms often differ significantly from the EM algorithm used in practice. Despite extensive study \citep{balakrishnan2017statistical, dwivedi2020singularity, wu2021randomly}, the global convergence of EM for general $m$-GMMs under such separation remains an open problem. Moreover, \cite{jin2016local} showed that when $n=m$ the exact-parameterized (gradient) EM fails to converge to global minima starting from random initialization, even for well-separated GMMs with $\Delta=\Omega(\sqrt{d})$. 

Recent work by \cite{zhou2025global} established a global convergence guarantee for gradient EM in the over-parameterized regime where $n = O(m \log m)$, under certain technical assumptions. One of their key assumptions (see Assumption~3 in \cite{zhou2025global}) requires the separation $\Delta = \Omega(\sqrt{d})$. 
Under this condition, when the initializations for $\mu_i$ are randomly sampled from the ground-truth mixture, each true mean $\mu_i^*$ has, with high probability, a nonempty neighborhood consisting of some $\mu_j$'s. 
Roughly speaking, $\mu_j$ is a \emph{neighbor} of $\mu_i^*$ if its distance to $\mu_i^*$ is substantially smaller than its distance to any other $\mu_k^*$. 
Consequently, the training dynamics in the vicinity of each $\mu_i^*$ behave similarly to the single-component setting analyzed by \cite{xu2024toward}.
When the ground-truth separation falls below $\sqrt{d}$, however, this intuition no longer applies, and existing analyses cannot establish global convergence.
This motivates the central question of this paper:

\begin{center}
\emph{Can global convergence of gradient EM be guaranteed for smaller separation?}
\end{center}

In this paper, we show that the $\Omega(\sqrt{d})$ dependence on the dimension $d$ is intrinsic to global convergence in high dimensions. 
We prove that for any $\epsilon > 0$, in the worst case a separation of order $\Omega(d^{0.5-\epsilon})$ is insufficient to guarantee global convergence in sub-exponential time for sufficiently large $d$. 
In contrast, a separation of order $\Omega(\sqrt{d})$ suffices to ensure global convergence in polynomial time, and therefore constitutes a nearly optimal separation.

\subsection{Related Work}
\paragraph{Analysis of EM and gradient EM algorithm. }
The theoretical analysis of (gradient) EM dates back to the seminal works of \cite{wu1983convergence,xu1996convergence}. Here we focus on EM for learning GMMs. For exactly parameterized 2-GMMs, a line of work established local and, in some cases, global convergence guarantees \citep{balakrishnan2017statistical,daskalakis2017ten,xu2016global,wu2021randomly,weinberger2022algorithm}. Beyond two components, existing results for exactly parameterized $m$-GMMs guarantee only local convergence from good initialization \citep{yan2017convergence,zhao2020statistical,kwon2020algorithm,segol2021improved}, while global convergence is known to be intractable due to spurious local optima even under strong separation \citep{jin2016local}. In the over-parameterized regime, prior work primarily studied using multi-component mixtures to learn one or two Gaussians \citep{dwivedi2020singularity,dwivedi2020sharp,dwivedi2018theoretical,xu2024toward} and the optimization landscape of general $m$-GMMs \citep{chen2024local}. Recently, \cite{zhou2025global} provided the first global convergence guarantee for (gradient) EM beyond the special case of $m=1,2$.

\paragraph{Learning Gaussian Mixtures. } 
Following \cite{dasgupta1999learning}, which introduced efficient GMM learning into theoretical computer science and learning theory, many algorithmic alternatives to EM have been developed. Early works typically clustered the data before parameter estimation \citep{sanjeev2001learning,vempala2004spectral,achlioptas2005spectral}. More recently, theoretical algorithms based on sum-of-squares proofs \citep{hopkins2018mixture,kothari2018robust,liu2022clustering} and method-of-moments techniques \citep{feldman2006pac,anandkumar2014tensor,ge2015learning,bhaskara2014smoothed} have achieved strong guarantees. In practice, however, despite these theoretical advances, EM remains the most widely used method due to its simplicity and efficiency. It is still not well understood under what conditions EM succeeds or whether it can achieve optimal guarantees. This work addresses these questions in the high-dimensional setting.

\paragraph{Separation requirement in learning GMMs.}
The difficulty of learning GMMs is largely governed by the minimum separation $\Delta$ between ground-truth components. It is known that $\Delta = \Omega(\sqrt{\log m})$ is necessary for efficient learning \citep{regev2017learning}. Early algorithms required much stronger separation, such as $\Delta = \Omega(\sqrt{d})$ or $\Delta = \Omega(d^{1/4})$ \citep{dasgupta1999learning,dasgupta2007probabilistic,sanjeev2001learning}, while later spectral methods reduced the dependence on $d$ to $m$ by projecting onto the span of the true means \citep{vempala2004spectral, achlioptas2005spectral}. Method-of-moments approaches avoid explicit separation assumptions but rely on algebraic non-degeneracy \citep{feldman2006pac, anandkumar2014tensor, ge2015learning, bhaskara2014smoothed}. More recent sum-of-squares and higher-order moment methods achieve learning under $\Delta = \Omega(m^{1/\gamma})$ with polynomial complexity in $d^{\mathrm{poly}(1/\gamma)}$ \citep{hopkins2017mixturemodelsrobustnesssum, kothari2018robust, diakonikolas2018list}, and were further improved to $\Delta = \Omega(\sqrt{\log m})$, matching the information-theoretic limit \citep{liu2022clustering, diakonikolas2024implicit}. In contrast, EM is only guaranteed to converge locally under this optimal separation with good initialization \citep{kwon2020algorithm}. We prove that in high dimensions EM fails from random initialization even with large separation, showing that the main challenge for EM lies in the early, pre-local phase.

\section{Preliminaries}
\label{sec:prel}
In this section, we specify the update scheme used in our analysis and present some useful facts.
%In \cite{zhou2025global}, the authors apply the standard gradient EM update for the component means $\mu_i$, while the mixing weights $\pi_i$ are updated using a first-order nearly optimal solution.
%The reason for using a different update algorithm for $\pi_i$ is that gradient EM on $\pi_i$'s cannot guarantee the constraint $\sum_i\pi_i=1$.
%As discussed earlier, gradient EM update on $\pi_i$ cannot guarantee to preserve the constraint $\sum_i\pi_i=1$.
%Our work follows \cite{zhou2025global} in using gradient EM for updating the means.

In mixture models, the mixing weights lie on the simplex $\sum_i \pi_i = 1$, so a naive unconstrained gradient step~\eqref{eq: gradient em} on $\pi$ does not preserve feasibility. Accordingly, throughout this paper we use standard EM updates for $\pi$ and gradient steps only for $\mu$. %The precise update rules are described in \Cref{sec:prel}.
%We therefore adopt an EM-style update for the mixing weights, which we believe is a more natural choice and automatically preserves feasibility.
We believe this is a more natural choice and automatically preserves feasibility.
Formally, at iteration $t$, we consider the following update:
\begin{align*}
\pi_i^{(t+1)}
    &=\E_{x\sim p^*}[\psi_i(x)],\\
\mu_i^{(t+1)}
    &= \mu_i^{(t)} + \eta \, \nabla_{\mu_i} Q(\pi, \mu \mid \pi^{(t)}, \mu^{(t)}).
\end{align*} 
We initialize all $\mu_i$'s independently according to $p^*$, and we let $\pi_i=\frac{1}{n}$ in the initialization. The full procedure is summarized in Algorithm~\ref{alg:em_update}.

\begin{algorithm}[th]
\caption{Population Hybrid (Gradient) EM}
\label{alg:em_update}
\begin{algorithmic}[1]
\REQUIRE Step size $\eta > 0$, number of components $n$
\STATE \textbf{Initialization:} for any $i\in [n]$, set $\mu_i^{(0)} \sim p^*$ and $\pi_i^{(0)} = \frac{1}{n}$
\FOR{$t = 0,1,2,\dots$}
    \FOR{$i = 1$ to $n$}
        \STATE Update $\pi$:
        $\pi_i^{(t+1)} \gets \mathbb{E}_{x \sim p^*}[\psi_i(x)]$
        \STATE Update $\mu$:
        $\mu_i^{(t+1)} \gets \mu_i^{(t)} + \eta \, \nabla_{\mu_i} Q(\mu \mid \pi^{(t)}, \mu^{(t)})=\mu_i^{(t)} - \eta \, \nabla_{\mu_i}\calL(\pi^{(t)}, \mu^{(t)})$
    \ENDFOR
\ENDFOR
\end{algorithmic}
\end{algorithm}

One interpretation of the above update rule is that (gradient) EM is equivalent to minimizing the following KL divergence loss \citep{jin2016local, xu2024toward}:
\[
\calL(\pi,\mu):=D_{KL}(p^*\Vert p)=-\E_{x\sim  p^*}[\log p(x)]+\E_{x\sim  p^*}[\log p^*(x)].
\]

The lemma below formalizes the equivalence between a gradient step on the $Q$-function in gradient EM and a gradient step on the KL loss $\mathcal{L}$. It also provides an explicit form of the gradient update that we rely on in our analysis. Its proof can be found in \cite{zhou2025global}.
\begin{lemma}
\label{lm:gradient}
For each $1\leq i\leq n$, the gradient of the loss is equivalent to the following:
\[
\nabla_{\mu_i}\calL=-\nabla_{\mu_i} Q(\mu \mid \pi^{(t)}, \mu^{(t)})=\sum_j \pi_j^*\E_{x\sim j}\left[\psi_i(x)\sum_k \psi_k(x)(\mu_k-\mu_j^*)\right].
\]
\end{lemma}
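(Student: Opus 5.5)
The plan is a direct differentiation of both sides, followed by a comparison. The target formula uses only the mixture form of $p^*$ and the posterior weights $\psi_k$.

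\textbf{Step 1: the KL loss.} Write $\calL(\pi,\mu)=-\E_{x\sim p^*}[\log p(x)]+\text{const}$. Since $\nabla_{\mu_i}\phi(\mu_i;x)=\phi(\mu_i;x)(x-\mu_i)$, exchanging derivative and expectation gives
\[
\nabla_{\mu_i}\calL=-\E_{x\sim p^*}\left[\frac{\pi_i\phi(\mu_i;x)}{p(x)}(x-\mu_i)\right]=\E_{x\sim p^*}[\psi_i(x)(\mu_i-x)].
\]
The exchange is justified by dominated convergence: everything has Gaussian tails and $\psi_i\in[0,1]$.

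\textbf{Step 2: the $Q$ function.} Differentiate $Q$ in $\mu_i$ with the weights $\psi_i(\cdot;\pi^{(t)},\mu^{(t)})$ held fixed. This gives $\nabla_{\mu_i}Q=\E_{x\sim p^*}[\psi_i(x)(x-\mu_i)]$. Evaluating at the current iterate $\mu=\mu^{(t)}$ makes the frozen weights coincide with the $\psi_i$ appearing in Step 1. Hence $\nabla_{\mu_i}\calL=-\nabla_{\mu_i}Q$, which is the first equality.

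\textbf{Step 3: rewriting into the stated form.} Decompose $p^*=\sum_j\pi_j^*\phi(\mu_j^*;\cdot)$. Then
\[
\E_{x\sim p^*}[\psi_i(x)(\mu_i-x)]=\sum_j\pi_j^*\E_{x\sim j}[\psi_i(x)(\mu_i-x)].
\]
Under component $j$, write $x=\mu_j^*+z$ with $z\sim\calN(0,I_d)$, and apply Stein's lemma to the vector $\E_z[\psi_i(x)z]=\E_z[\nabla_x\psi_i(x)]$. Differentiating the softmax gives
\[
\nabla_x\psi_i=\psi_i\Bigl((\mu_i-x)-\sum_k\psi_k(\mu_k-x)\Bigr)=\psi_i\Bigl(\mu_i-\sum_k\psi_k\mu_k\Bigr),
\]
where the second equality uses $\sum_k\psi_k=1$. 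Now $\mu_i-x=(\mu_i-\mu_j^*)-z$, so
\[
\E_{x\sim j}[\psi_i(\mu_i-x)]=\E_{x\sim j}\Bigl[\psi_i(\mu_i-\mu_j^*)-\psi_i\Bigl(\mu_i-\sum_k\psi_k\mu_k\Bigr)\Bigr]=\E_{x\sim j}\Bigl[\psi_i\Bigl(\sum_k\psi_k\mu_k-\mu_j^*\Bigr)\Bigr].
\]
Using $\sum_k\psi_k=1$ once more, this equals $\E_{x\sim j}\bigl[\psi_i\sum_k\psi_k(\mu_k-\mu_j^*)\bigr]$. Summing over $j$ with weights $\pi_j^*$ yields the claimed expression.

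The only delicate point is the Stein step, which needs integrability of $\nabla\psi_i$. This holds because $\|\nabla_x\psi_i\|\le 2\max_k\|\mu_k\|$ is bounded, so the argument is essentially routine.
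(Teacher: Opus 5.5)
Your proof is correct. Step 1 gives $\nabla_{\mu_i}\calL=\E_{x\sim p^*}[\psi_i(x)(\mu_i-x)]$, which equals $-\nabla_{\mu_i}Q$ once the frozen weights are evaluated at the current iterate. Step 3 then splits off $\E_{x\sim j}[\psi_i(x)(\mu_i-\mu_j^*)]$ and applies Stein's identity to $\E_{x\sim j}[\psi_i(x)(x-\mu_j^*)]$, using the bounded gradient $\nabla_x\psi_i=\psi_i(\mu_i-\sum_k\psi_k\mu_k)$, which yields exactly the stated form. The paper gives no proof of its own and defers to \cite{zhou2025global}; your route (direct differentiation followed by Gaussian integration by parts under each true component) is the standard derivation of this identity, so there is nothing to add.
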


In this paper, for each $1\leq j\leq m$, we use $x\sim j$ as a simplified notation for $x\sim \calN(\mu_j^*,I_d)$.
\section{Main results}
\label{sec:mainresults}
In this section, we present our main negative and positive results.
We assume that $m\geq 3$ and that the ground-truth mixture has a unique component of largest
weight. Relabeling the components if necessary, we write
\[
\pi_1^*>\max_{2\leq i\leq m}\pi_i^*.
\]
We use the notation
\[
\pimin^*:=\min_{1\leq i\leq m}\pi_i^*,\qquad
\pigap:=\pi_1^*-\max_{2\leq i\leq m}\pi_i^*,\qquad
\rho:=\min\{\pimin^*,\pigap\}.
\]
For $0<\epsilon\leq0.01$, define
\begin{equation}
\label{eq:dimension-threshold}
d_0(\epsilon):=\exp\!\left(\frac{C_0}{\epsilon}\log\frac{C_0}{\epsilon}\right).
\end{equation}
Here and below, $C_0$ denotes a sufficiently large universal constant.

%We consider $d$ to be large enough, and each pair of ground truth centers is of distance $d^{0.5-\epsilon}$, where $0<\epsilon\leq0.01$. There is a $m-1$ dimensional hyperplane $\calP$ containing all ground truth centers. 
%Also, we assume that $(0,0,\cdots,0)\in \calP$. We assume that $n\geq \frac{3}{\pi_{\min}^*}\log m$, $m\geq 3$. Therefore, $n\geq 10$. Assume $d>\Omega\left((100n/\pimin^*)^{15/\epsilon}\right)$.
%We also let $d>\Omega\left((100n/\pigap)^{15/\epsilon}\right)$.

\subsection{A Separation Lower Bound at \texorpdfstring{$d^{0.5-\epsilon}$}{d to the power 0.5-epsilon}}
Our main result in this paper is the following:
\begin{theorem}
\label{thm:main}
Let $n\geq \frac{3}{\pimin^*}\log m$ and $0<\epsilon\leq0.01$. Suppose that
\begin{equation}
\label{eq:standing-dimension}
d\geq \max\left\{C_0\left(\frac{100n}{\rho}\right)^{15/\epsilon},d_0(\epsilon)\right\}.
\end{equation}
Then there exist
$\mu_1^*,\mu_2^*,\ldots,\mu_m^*$ such that:
\begin{itemize}
\item For each $1\leq i<j\leq m$, $\|\mu_i^*-\mu_j^*\|=d^{0.5-\epsilon}$. 
\item Consider using \Cref{alg:em_update} to learn the ground truth with learning rate $0<\eta\leq 1/(n^{10} d^{0.6})$. With probability at least $1-\frac{1}{m}$, for each $t\leq \exp\left(d/100\right)/\eta$, the loss
\[
\calL\left(\pi^{(t)},\mu^{(t)}\right)\geq \frac{(1-\pi_1^*)^2}{3}.
\]
\end{itemize}
\end{theorem}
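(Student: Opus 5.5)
Place the true means at the vertices of a regular simplex: put $\mu_i^*=\frac{d^{0.5-\epsilon}}{\sqrt2}e_i$ for $1\le i\le m$ (pairwise distance $d^{0.5-\epsilon}$), all lying in a low-dimensional subspace $\calP$ of dimension $m$. The mechanism behind the lower bound is this. After initialization from $p^*$, each $\mu_j^{(0)}=\mu^*_{c(j)}+g_j$ with $g_j\sim\calN(0,I_d)$, so its component orthogonal to $\calP$ has norm about $\sqrt d$. This norm dwarfs the separation $d^{0.5-\epsilon}$. As a result, every model mean is at distance $\approx\sqrt{d+\|\cdot\|^2}$ from every true mean, and these distances differ only by lower-order terms. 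For a sample $x\sim j$, the posteriors $\psi_k(x)$ are then governed mainly by $\langle x-\mu_j^*,g_k\rangle$-type terms of size $\sqrt d$ and by the mixing weights. The goal is to show that the orthogonal parts $\botmu_k$ stay of size $\Theta(\sqrt d)$ for exponentially long, and that all components are therefore "captured" in a way that effectively fits only the dominant component $\mu_1^*$.

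\textbf{Step 1 (good initialization event).} With probability $\ge 1-1/m$, every true component receives at least one initial mean (this is where $n\ge\frac{3}{\pimin^*}\log m$ enters, via a coupon-collector bound). In addition, the vectors $g_k$ satisfy $\|\botmu_k\|^2\in[d-d^{0.6},d+d^{0.6}]$ and $|\langle\botmu_k,\botmu_l\rangle|\le d^{0.6}$ for $k\neq l$, by Gaussian concentration and a union bound over the $n^2$ pairs.

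\textbf{Step 2 (invariant for the orthogonal parts).} By \Cref{lm:gradient}, the gradient on $\botmu_i$ is $\sum_j\pi_j^*\E_{x\sim j}[\psi_i\sum_k\psi_k\botmu_k]$ plus correlation terms coming from the dependence of $\psi$ on $x^\perp$. Since $\|\botmu_k\|\approx\sqrt d$ while $\|x^\perp\|\approx\sqrt{d}$ and is essentially independent, a single step moves $\botmu_i$ by at most $\eta\,\mathrm{poly}(n)\sqrt d$. The tiny step size $\eta\le 1/(n^{10}d^{0.6})$ makes each step an $O(d^{-0.1}/n^{10})$ relative perturbation. I would then track the potential $\sum_k\|\botmu_k\|^2$. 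The step is a contraction only toward the weighted average of the $\botmu_k$, so the main task is to show that the $\botmu_k$ cannot all collapse. To do this, I would exhibit a region $\mathcal R$ (the norms stay $\ge \sqrt d/2$, together with near-orthogonality) and show that the dynamics started in $\mathcal R$ can escape only through events of probability $e^{-\Omega(d)}$ per unit of time. The exponentially small terms arise from Gaussian tails: a posterior ratio changes appreciably only when $\langle x,\botmu_k-\botmu_l\rangle$ is of order $d$. Summing over $t\le e^{d/100}/\eta$ steps keeps the total drift sub-constant.

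\textbf{Step 3 (weights concentrate on the dominant component).} With all model means far from $\calP$, the posteriors are nearly independent of which true component $x$ came from. Because $\pi_1^*$ is the unique largest weight (gap $\pigap$), I would show that the in-plane components $\parmu_k$ all drift toward the weighted mean, with the largest pull toward $\mu_1^*$. The model density $p$ therefore effectively places mass only near one blurred location. Finally, I lower bound the KL divergence. Since $\mu_k$ stays $\Omega(\sqrt d)$ away from $\calP$, $p$ assigns mass to regions disjoint from the mass of $p^*$; for components $j\ge2$, the mass $1-\pi_1^*$ is misfit. Pinsker's inequality, $\calL\ge 2\,\mathrm{TV}^2$, applied to the total variation on the $1-\pi_1^*$ portion, yields $\calL\ge(1-\pi_1^*)^2/3$. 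In fact, remaining far off-plane already gives TV close to $1$, which is stronger than needed.

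\textbf{Main obstacle.} The hardest step is Step 2, the exponential-time invariant. The gradient contains a term pulling each $\mu_k$ toward the data, whose orthogonal mean is $0$, so naively $\botmu_k$ shrinks at a constant rate. The nontrivial claim is that this pull is suppressed: $\sum_k\psi_k\botmu_k$ nearly cancels $\psi$-weighted $x^\perp$ correlations only up to $e^{-\Omega(d^{2\epsilon})}$-type errors. I would try to prove this by a careful second-order expansion of $\E[\psi_i\psi_k]$ in the Gaussian $x^\perp$, using the $d^{0.6}$ near-orthogonality. I expect the precise exponents ($15/\epsilon$ in the bound on $d$, $d^{0.6}$) to be tuned exactly to close this bootstrap.
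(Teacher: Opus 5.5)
\textbf{Your Step~2 invariant is false, not merely hard to prove.} Lemma~\ref{lm:gradient} is an exact identity: it is $-\E_{p^*}[\psi_i(x)(x-\mu_i)]$ after Gaussian integration by parts. So the dependence of $\psi$ on $x^\perp$ is already accounted for, and there are no further correlation terms that could cancel the pull toward $\calP$.

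To see the invariant fail, write $w_k:=\mu_k-\parmu_k$ (your $\botmu_k$) and $F(x):=\sum_k\psi_k(x)w_k$. Because the true means lie in $\calP$, projecting the update gives $w_i'=w_i-\eta v_i$ with $v_i=\E_{x\sim p^*}[\psi_i(x)F(x)]$. Hence $\sum_i\langle w_i,v_i\rangle=\E_{p^*}\|F\|^2$. By Cauchy--Schwarz and $\sum_i\psi_i=1$, also $\sum_i\|v_i\|^2\le\E_{p^*}\|F\|^2$. For $\eta\le1$ the potential $\Phi:=\sum_k\|w_k\|^2$ therefore satisfies $\Phi'\le\Phi-\eta\,\E_{p^*}\|F\|^2$.

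On your region $\mathcal R$ (all $\|w_k\|\ge\sqrt d/2$ and $|\langle w_k,w_l\rangle|\le d^{0.6}$) we have $\|F\|^2\ge\frac d4\sum_k\psi_k^2-d^{0.6}\ge\frac{d}{4n}-d^{0.6}$. So $\Phi$ drops by at least $\eta d/(5n)$ at every step spent in $\mathcal R$. Since $\Phi(0)\le n(d+d^{0.6})$, the deterministic population trajectory spends at most $O(n^2/\eta)$ iterations in $\mathcal R$, far fewer than $e^{d/100}/\eta$. There is also no ``escape probability per unit time'' to sum here.

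This matches what actually happens. A fitted mean that captures the samples of its own source has gradient $\approx\pi_i^*(\mu_i-\mu_i^*)$ and contracts toward $\calP$ at rate $\pi_i^*$. In the paper $\mu_1$ ends within distance $1$ of $\calP$, so your closing claim that the total variation is close to $1$ is also wrong.

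\textbf{The missing idea is a race decided by $\pigap$, followed by starvation of the losers.} The premise of your Step~3, that posteriors are nearly independent of the source, is incorrect too. Squared distances to fitted means from different groups differ by about $\Delta^2=d^{1-2\epsilon}$ (with $\Delta=d^{0.5-\epsilon}$). This dwarfs the sampling fluctuations of size $d^{0.5+O(\epsilon)}$, so a sample from $\mu_i^*$ is assigned to the fitted means initialized from $\mu_i^*$. The paper exploits exactly this, in four steps.
\begin{enumerate}
\item[(i)] At initialization each group has a unique member of smallest orthogonal radius, with gap $1/n^3$. In Stage~1 this member wins the within-group competition slightly more often, the gap grows to $d^\epsilon$, and it becomes the sole representative of its source.
\item[(ii)] Each representative then contracts toward its own $\mu_i^*$ at rate $\pi_i^*$, so its squared orthogonal radius behaves like $d(1-2\pi_i^*\eta t)$. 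Because $\pi_1^*$ is the unique maximum, $\mu_1$ gains at least about $2\pigap\eta t d$ in squared distance on every other fitted mean. This exceeds the parallel handicap $\Delta^2$ once $\eta t$ is of order $d^{-2\epsilon}/\pigap\ll1$, while all the others are still at orthogonal radius about $\sqrt d$. From then on $\mu_1$ is closer to \emph{every} true mean than any other fitted mean. This is precisely where $\Delta\ll\sqrt d$ enters.
\item[(iii)] All other weights then collapse, eventually to at most $2e^{-d/20}$, and their gradients become exponentially small. Those means stay off-plane because they are \emph{frozen}, not because their pull is cancelled.
\item[(iv)] Finally, $\mu_1$ can lie within $\Delta/2.1$ of at most one true mean $\mu_k^*$. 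Apply Pinsker's inequality to the union of radius-$\Delta/5$ neighborhoods of the remaining true means, measured after projecting onto a fixed low-dimensional subspace. Together with $\pi_k^*\le\pi_1^*$, this gives $\calL\ge(1-\pi_1^*)^2/3$.
\end{enumerate}
Your proposal contains none of (i)--(iii). Without them, nothing keeps the non-dominant components away from $\calP$.
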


\begin{remark}
This theorem shows that for fixed $m$, $n$, and the ground-truth mixing weights, when the dimension $d$ is sufficiently large, gradient EM fails to learn an $m$-component Gaussian mixture in sub-exponential time if the component separation is only guaranteed to be at least $d^{0.5-\epsilon}$. 
Thus no fixed separation exponent below $1/2$ can guarantee efficient learning by this hybrid
update in the stated regime. This is an exponent-level lower bound. It does not rule out thresholds
such as $\sqrt d/\log d$.

The dimension condition also quantifies the range of over-parameterization covered by the
counterexample. For fixed true weights and fixed $\epsilon$, it applies to
$n\leq c\rho d^{\epsilon/15}$, where $c>0$ is a sufficiently small universal constant,
subject to the initialization lower bound on $n$.
Avoiding this particular obstruction therefore requires leaving that range; the theorem does not
assert that increasing $n$ beyond it suffices.
The positive result below concerns the same hybrid update.

\end{remark}

We briefly sketch the proof of this theorem in Section~\ref{sec: proof sketch}. The full proof is deferred to \Cref{sec:thm1proof}.

\subsection{Exponentially Small Loss under \texorpdfstring{$\sqrt{d}$}{Square-Root-D} Separation}
To complement the negative result, we show that $\Delta\geq10\sqrt d$ suffices for the same
hybrid population update to reach exponentially small KL loss from random initialization
within the stated polynomial time bound.

%As we use a slightly different update for the parameter $\pi_i$'s compared to \cite{zhou2025global} (they use a first-order optimal for each $\pi$ update, while we use EM update as discussed in \Cref{sec:prel}), for completeness, we prove a global convergence guarantee for our update rule.
%Actually, 

\begin{theorem}
\label{thm:positive}
Let $n\geq \frac{3}{\pimin^*}\log m$,
\[
d\geq \max\left\{C_0\left(\frac{100n}{\rho}\right)^{1500},d_0(1/100)\right\},
\qquad 0<\eta\leq 1/(n^{10} d^{0.6}).
\]
Assume that for each $1\leq i< j\leq m$, $\|\mu_i^*-\mu_j^*\|\geq 10\sqrt{d}$.
Then with probability at least $\left(1-\frac{1}{m}\right)$, for each
$10d/(\eta\pimin^*)\leq t\leq \exp(d/50)/\eta$, Algorithm~\ref{alg:em_update} returns the loss
\[
\calL\leq \exp(-d/200).
\]
\end{theorem}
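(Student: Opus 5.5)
We need to prove Theorem~\ref{thm:positive}, the positive result under $10\sqrt d$ separation. The plan is to follow the over-parameterized analysis of \cite{zhou2025global} while tracking the different $\pi$-update. There are three phases.

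\textbf{Phase 1: initialization.} Each $\mu_i^{(0)}\sim p^*$ lands in component $j$ with probability $\pi_j^*$. Since $n\ge \frac{3}{\pimin^*}\log m$, a union bound gives that with probability at least $1-m(1-\pimin^*)^n\ge 1-1/m$ every true component $j$ receives at least one model mean. Gaussian norm concentration then shows that, with probability $1-\exp(-\Omega(d))$, every $\mu_i^{(0)}$ drawn from component $j$ satisfies
\[
\|\mu_i^{(0)}-\mu_j^*\|\le \sqrt d+d^{0.3}.
\]
By the triangle inequality and $\Delta\ge10\sqrt d$, it is then at distance at least $8\sqrt d$ from every other $\mu_k^*$. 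Call such an $i$ a neighbor of $j$, and let $S_j$ be the set of neighbors.

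\textbf{Phase 2: decoupling, the main invariant.} We show by induction on $t$ that each neighbor of $j$ stays within distance $2\sqrt d$ of $\mu_j^*$ for all $t\le \exp(d/50)/\eta$. The key estimate is that for $x\sim k$ with $k\neq j$, the posterior $\psi_i(x)$ of a neighbor $i\in S_j$ is at most $\exp(-cd)$ up to weight ratios. The reason is that $x$ is about $\sqrt d$ from $\mu_k^*$ and hence about $\sqrt d$ from the neighbors of $k$, but at least $6\sqrt d$ from $\mu_i$. The exponent gap $\Omega(d)$ dominates any weight ratio $\pi_{i'}/\pi_i$, provided we also maintain a lower bound $\pi_i\ge \exp(-d/1000)$.

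With this estimate, \Cref{lm:gradient} splits the gradient on $\mu_i$ into two parts:
\begin{itemize}
\item the term from $x\sim j$, which is exactly the gradient of the single-Gaussian over-parameterized problem for the local mixture $\{(\pi_i,\mu_i)\}_{i\in S_j}$ with target $\pi_j^*\calN(\mu_j^*,I)$;
\item an error of size $\mathrm{poly}(d)\,e^{-cd}$.
\end{itemize}
Similarly, the $\pi$-update gives $\sum_{i\in S_j}\pi_i^{(t+1)}=\pi_j^*\pm e^{-cd}$ after one step, so each cluster's total weight is essentially correct. The small step $\eta\le 1/(n^{10}d^{0.6})$ ensures per-step movement is tiny. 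Accumulated errors over $\exp(d/50)/\eta$ steps then total at most $e^{d/50-cd}$, which is negligible if $c>1/50$. We must check this constant carefully; the room comes from the $10\sqrt d$ margin.

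\textbf{Phase 3: local convergence.} Within each cluster, we invoke the single-Gaussian analysis of \cite{xu2024toward}, as adapted in \cite{zhou2025global}, for the weighted local loss. The potential we track is
\[
\sum_{i\in S_j}\pi_i\|\mu_i-\mu_j^*\|^2,
\]
which has the contraction rate
\[
\frac{d}{dt}\lesssim -\eta\,\pimin^*\cdot(\text{potential}).
\]
The EM update for $\pi$ only helps: it is the exact minimizer in $\pi$, so it does not increase $\calL$. Therefore after $10d/(\eta\pimin^*)$ steps the potential is below $e^{-10d}\cdot\mathrm{poly}$. Finally we bound the KL loss by a sum of two pieces:
\begin{itemize}
\item the local KL losses, each quadratic or better in the potential;
\item the cross-cluster leakage, which is $e^{-\Omega(d)}$.
\end{itemize}
Together these give $\calL\le\exp(-d/200)$.

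\textbf{Main obstacle.} The hardest step is Phase 2 together with the rate in Phase 3. The over-parameterized single-Gaussian dynamics converge only sublinearly in general, not linearly. So either we use the stronger bounds from \cite{xu2024toward} on that dynamics, or we exploit the EM-updated weights to concentrate mass. If linear contraction fails, the first thing to try is to show that the EM weight update makes the weighted average of the $\mu_i$ converge linearly, and that this already controls the KL loss to the required exponential accuracy.
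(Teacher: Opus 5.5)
There is a genuine gap in Phase~3, exactly where you flagged it. Your Phases~1 and~2 are in line with what the paper uses: every cluster receives a fitted mean, and cross-cluster posteriors are exponentially small, so the clusters decouple. The problem is the linear contraction $\frac{d}{dt}\sum_{i\in S_j}\pi_i\|\mu_i-\mu_j^*\|^2\lesssim-\eta\pimin^*(\cdot)$, which you assert but do not derive. The tool you cite cannot supply it. The over-parameterized single-Gaussian analysis of \cite{xu2024toward} gives only an $O(1/\sqrt t)$ rate, and that paper attributes the sublinearity to the over-parameterized dynamics itself. Such a rate leaves the loss at best polynomially small in $d$ at $t=10d/(\eta\pimin^*)$, not $\exp(-d/200)$. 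EM monotonicity of the $\pi$-step gives no rate at all. Your fallback also fails: linear convergence of a weighted average of the $\mu_i$ does not control the KL. A cluster whose members sit at distance about $\sqrt d$ from $\mu_j^*$ with non-negligible weights can have the right average and still incur loss of order one. What is missing is a mechanism that makes each local problem effectively exact-parameterized.

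That mechanism is within-cluster symmetry breaking. Initially the distances of the members of $S_j$ to $\mu_j^*$ are only guaranteed to differ by $1/n^3$. A radius gap $g$ shifts the energy difference by about $2\sqrt d\,g$, against a random cross term $2\langle\bx-\mu_j^*,\mu_k-\mu_l\rangle$ of standard deviation about $2\sqrt{2d}$. So winner-take-all needs $g\gg1$, and the dynamics must create this gap.

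The paper does this in three steps.
\begin{itemize}
\item It first runs the Stage~1 analysis, \cref{thm:stage1} with $\epsilon=1/100$, which needs only a lower bound on separation. The closest member has strictly larger $\E_{x\sim j}[\psi^2]$, so it moves faster. After $T_1$ steps it is closer to $\mu_j^*$ than every other member by $d^{0.01}/2$ (\cref{lm:positive:stage1}).
\item With this gap, and because responsibilities depend only on the projection $\bx$ onto the low-dimensional span $\calP_0$, the representative wins essentially every sample of its source (\cref{lm:stage5:far}). Hence $\nabla_{\mu_j}\calL=\pi_j^*(\mu_j-\mu_j^*)+O(n\sqrt d\,e^{-d^{0.01}})$, a genuine linear contraction. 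Non-representatives get exponentially small gradients and move at most $d^{0.03}$ over the whole horizon, so the gap persists.
\item Once the representatives are within unit distance, the errors improve to $e^{-d/40}$ (\cref{lm:stage5:far:muismall}). This yields $\|\mu_j-\mu_j^*\|\leq e^{-d/50}$ and, through the EM step, $|\pi_j-\pi_j^*|\leq4ne^{-d/40}$, so non-representative weights collapse. The loss is then bounded by discarding non-representatives ($p\geq\widetilde p$) and applying Jensen: $\calL\leq\sum_j\pi_j^*\log(\pi_j^*/\pi_j)+\frac12\sum_j\pi_j^*\|\mu_j-\mu_j^*\|^2$.
\end{itemize}

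A smaller point in Phase~2: a lower bound $\pi_i\geq e^{-d/1000}$ for \emph{every} component cannot be maintained, because non-representative weights fall to $O(n^2e^{-d/40})$. The cross-cluster comparison only needs a weight lower bound for one comparator in the source's cluster, for example $\pi_k>\pi_k^*/n$ for its representative.
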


The proof of this theorem is deferred to Appendix~\ref{sec:positive:proof}.

\begin{remark}
The positive guarantee is a finite-accuracy statement on the displayed time interval for
sufficiently large $d$. It does not assert that the loss tends to zero as $t\to\infty$ at fixed
$d$. Together with \cref{thm:main}, it identifies the separation exponent $1/2$ up to an
arbitrarily small fixed polynomial factor for Algorithm~\ref{alg:em_update}.
\end{remark}

\section{Proof sketch of \texorpdfstring{\Cref{thm:main}}{the negative theorem}}\label{sec: proof sketch}
In this section, we provide a proof sketch of \Cref{thm:main}. We show that random initialization satisfies the desired property in Section~\ref{sec:init}. We then analyze the dynamics of gradient EM (Algorithm~\ref{alg:em_update}) in three stages, presented in Section~\ref{sec:stage1}, Section~\ref{sec:stage2}, and Section~\ref{sec:stage3}, respectively.
%The ground truth in our proof is a set of points satisfying $\|\mu_i^*-\mu_j^*\|=d^{0.5-\epsilon}$ for any $i\neq j$.
The lower bound is established over the following class of ground-truth instances: the true means $\{\mu_i^*\}_{i=1}^m$ are separated satisfying
 $$\|\mu_i^*-\mu_j^*\|=d^{0.5-\epsilon}$$ for any $i\ne j$.
%Our proof consists of an initialization analysis and three stages of convergence dynamics analysis.
% In the following, we always assume that for each $i\neq j$, $\|\mu_i^*-\mu_j^*\|=d^{0.5-\epsilon}$.
Translate the affine span of the true means to contain the origin, and let $\calP$ be an
$(m-1)$-dimensional linear subspace containing it. For the centered equilateral construction,
$\calP$ is exactly that span. The initialization and Stage~1 lemmas below also hold for arbitrary
true means with minimum separation at least $d^{0.5-\epsilon}$ and $0<\epsilon\leq0.01$;
their proofs do not require an upper bound on separation. This generality will be used in
Appendix~\ref{sec:positive:proof}.

\subsection{Initialization}\label{sec:init}
For initialization, we prove that with high probability, we have the following properties.

\begin{lemma}
\label{thm:init}
Under initialization from $p^*$, let $S_i$ be the set of indices $j$ for which $\mu_j$ was sampled from true component $i$. For each $\mu_i$, we consider its projection on $\calP$ to be $\parmu_i$. With probability at least $\left(1-\frac{1}{m}\right)$, by permuting indices of $\mu_i$'s (and corresponding $\parmu_i$'s) we have:
\begin{itemize}
\item For each $1\leq i\leq m$, $|S_i|\geq 1$.
\item For each $1\leq i\leq m$, we have $i=\arg\min_{j\in S_i}\|\mu_j-\parmu_j\|$. Also, for any $k\in S_i,k\neq i$, $\|\mu_k-\parmu_k\|\geq \|\mu_i-\parmu_i\|+\frac{1}{n^3}$.
\item For each $1\leq i\leq m,j\in S_i$, $\|\parmu_j-\mu_i^*\|\leq n$.
\item For each $1\leq j\leq n$, $\sqrt{d}-n\leq \|\mu_j-\parmu_j\|\leq \sqrt{d}+n$.
\item For each $1\leq i\leq n$, there exists $\botmu_i\bot\calP$ such that 
\begin{enumerate}
\item $(\mu_i-\botmu_i)\bot\botmu_i$.
\item $\|\mu_i-\botmu_i-\parmu_i\|\leq n$.
\item $\botmu_i,1\leq i\leq n$ are pairwise orthogonal.
\end{enumerate}
\end{itemize}
\end{lemma}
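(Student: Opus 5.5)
We will prove the statement by writing each initialization as $\mu_j=\mu^*_{c(j)}+z_j$ with $z_j\sim\calN(0,I_d)$ independent, where $c(j)$ is the sampled component. We then establish each bullet as a high-probability event and take a union bound so that the total failure probability is at most $1/m$. The decomposition along $\calP$ is clean because $\mu^*_{c(j)}\in\calP$ after the translation, which gives $\parmu_j=\mu^*_{c(j)}+P z_j$ and $\mu_j-\parmu_j=P^\perp z_j$, with $P,P^\perp$ the orthogonal projections onto $\calP$ and $\calP^\perp$. The vectors $Pz_j$ are $(m-1)$-dimensional standard Gaussians, and the $P^\perp z_j$ are $(d-m+1)$-dimensional standard Gaussians independent of them.

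\textbf{Bullet 1 (coverage).} This is coupon collector. We have $\Pr(S_i=\emptyset)=(1-\pi_i^*)^n\le e^{-n\pimin^*}\le m^{-3}$ because $n\ge \frac{3}{\pimin^*}\log m$, so a union bound over $i$ gives failure probability at most $m^{-2}$.

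\textbf{Bullets 3 and 4 (norm concentration).} For bullet 3, $\|Pz_j\|^2$ is $\chi^2_{m-1}$ with $m-1<n$, so $\Pr(\|Pz_j\|>n)\le e^{-cn^2}$, which is tiny. For bullet 4, standard $\chi^2$ concentration gives $\big|\|P^\perp z_j\|-\sqrt{d-m+1}\big|\le t$ with probability $1-2e^{-t^2/2}$. Since $\sqrt d-\sqrt{d-m+1}\le m/\sqrt d$ is negligible, taking $t=n/2$ yields $\sqrt d\pm n$. A union bound over $j\le n$ leaves failure probability $ne^{-cn^2}$, and since $n\ge 3\log m/\pimin^*\ge 3m\log m$ this is far below $1/(10m)$.

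\textbf{Bullet 2 (strict gaps).} The norms $r_j=\|P^\perp z_j\|$ are i.i.d. with the $\chi_{d-m+1}$ law, independent of the labels $c(j)$. Within each $S_i$ we relabel so that the minimizer gets index $i$; this is possible because the $S_i$ are disjoint and nonempty. The gap condition then follows once all pairwise differences satisfy $|r_j-r_k|\ge n^{-3}$. The density of $\chi_k$ is bounded by an absolute constant $C$ (it concentrates near $\sqrt k$ with $O(1)$ spread), so $\Pr(|r_j-r_k|<n^{-3})\le 2Cn^{-3}$. A union bound over $\binom n2$ pairs gives $Cn^{-1}\le C/(3m\log m)$. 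This is the step where the probability budget is tightest, so we choose constants so that the total stays below $1/m$. If the constant does not fit, we can tighten it using that the density of $\chi_k$ is at most about $1/\sqrt{\pi}$.

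\textbf{Bullet 5 (orthogonal components).} This is the main obstacle, and we plan it as a Gram–Schmidt construction inside $\calP^\perp$. Let $w_j=P^\perp z_j$. Define $\botmu_i$ as the component of $w_i$ orthogonal to $\mathrm{span}\{w_k:k\neq i\}$. Then the $\botmu_i$ are pairwise orthogonal, which is property 3, and they lie in $\calP^\perp$. Property 1 holds because $\mu_i-\botmu_i=\parmu_i+(w_i-\botmu_i)$: here $\parmu_i\in\calP\perp\botmu_i$, and $w_i-\botmu_i$ is the projection of $w_i$ onto $\mathrm{span}\{w_k\}_{k\ne i}$, which is orthogonal to $\botmu_i$. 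For property 2 we need $\|w_i-\botmu_i\|\le n$, i.e. the projection of $w_i$ onto the $(n-1)$-dimensional span $\mathrm{span}\{w_k\}_{k\neq i}$ has norm at most $n$. Conditioned on the $w_k$ with $k\ne i$, the vector $w_i$ is an independent standard Gaussian in $\calP^\perp$, so the squared norm of this projection is $\chi^2_{n-1}$. It therefore exceeds $n^2$ with probability at most $e^{-cn^2}$, and a union bound over $i$ handles all $n$ indices.

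Summing the failure probabilities of all five bullets gives at most $1/m$ for $d$ as large as in \eqref{eq:standing-dimension} and $n\ge 3\log m/\pimin^*$.
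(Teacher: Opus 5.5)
Your treatment of bullets 1--4 matches the paper's argument: coupon collector for coverage, the bounded $\chi$-density with a union bound over pairs for the strict gaps, and Gaussian norm tails for the parallel and orthogonal radii. The bookkeeping there is fine.

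\textbf{The gap is in bullet 5.} Your leave-one-out residual $\botmu_i:=w_i-\Pi_{-i}w_i$, with $\Pi_{-i}$ the projection onto $\mathrm{span}\{w_k:k\neq i\}$, does satisfy properties 1 and 2. However, the claim that these residuals are pairwise orthogonal is false. Each residual lies in $\mathrm{span}\{w_1,\ldots,w_n\}$ and is orthogonal to every $w_k$ with $k\neq i$, so it is a positive multiple of the $i$-th dual-basis vector. The inner product of the $i$-th and $j$-th dual-basis vectors is the $(i,j)$ entry of $G^{-1}$, where $G$ is the Gram matrix of the $w$'s. These entries vanish for all $i\neq j$ only if $G$ is diagonal, which happens with probability zero for Gaussian $w$'s. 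A concrete example: $w_1=e_1$ and $w_2=e_1+e_2$ give residuals $(1/2,-1/2)$ and $(0,1)$, whose inner product is $-1/2$. Approximate orthogonality would not rescue this, because exact orthogonality is used later. The Stage~1 argument needs the fixed directions $u_l$ to be orthonormal so that the coordinates $y_l=\langle\bx,u_l\rangle$ are independent $\calN(0,1)$ variables.

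\textbf{The fix is a genuinely sequential Gram--Schmidt, which is what the paper does.} Project $w_i$ only onto $\calP_i:=\mathrm{span}\{w_1,\ldots,w_{i-1}\}$ and set $\botmu_i:=w_i-\Pi_{\calP_i}w_i$. Then $\botmu_i\in\calP_{i+1}$ and $\botmu_i\perp\calP_i$, so for $i<j$ you get $\botmu_i\in\calP_j\perp\botmu_j$. Properties 1 and 2 follow exactly as in your argument: conditional on $w_1,\ldots,w_{i-1}$, the projection $\Pi_{\calP_i}w_i$ is a standard Gaussian in dimension $i-1\leq n-1$, so its norm exceeds $n$ with probability $e^{-\Omega(n^2)}$.

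This construction depends on the order of the vectors, so carry it out in the original sampling order, before the data-dependent relabeling of bullet 2. Then apply the same permutation to the means, their projections, and the constructed vectors. All three properties are invariant under such a simultaneous permutation.
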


The proof of this lemma is deferred to \Cref{subsec:init}.
Here we briefly outline the main ideas behind the initialization analysis. The first item is a direct result from $n\geq \frac{3}{\pimin^*}\log m$. Now we consider the other items.

Notice that each $\mu_j$ is sampled from one of the ground-truth components $\calN(\mu_i^*, I_d)$. 
We decompose $\mu_j$ into its projection $\parmu_j$ onto the subspace $\calP$ and its orthogonal component $\mu_j - \parmu_j$. 
From this viewpoint, $\parmu_j$ and $\mu_j - \parmu_j$ can be treated as independent Gaussian vectors drawn from an $(m-1)$-dimensional and a $(d-m+1)$-dimensional Gaussian distribution, respectively.
Items (2)–(4) follow from a careful analysis of the concentration properties of these two components. 
In particular, the norms of the orthogonal components $\|\mu_j - \parmu_j\|$ concentrate sharply around $\sqrt{d}$, while the projected components $\parmu_j$ remain within a bounded distance of the corresponding ground-truth means $\mu_i^*$.

Finally, since the vectors $\mu_j - \parmu_j$ lie in a $(d-m+1)$-dimensional space and are independently sampled, they are nearly orthogonal to one another with high probability. 
Item (5) formalizes this intuition by introducing vectors $\botmu_j$ that approximate the directions of $\mu_j - \parmu_j$ and are pairwise orthogonal.

In the following, we analyze the training dynamics.

\subsection{Stage 1}\label{sec:stage1}
In this stage, given an initialization satisfying all items in \cref{thm:init}, we prove the following lemma:
\begin{lemma}
\label{thm:stage1}
Assume minimum true-center separation at least $d^{0.5-\epsilon}$ and the standing dimension and step-size conditions. Given the initialization in \cref{thm:init}, when $t=\big\lceil1000n^{10}d^{\epsilon-0.5}/\eta\big\rceil$, the following are true:
\begin{itemize}
\item For $1\leq i\leq m$, $\pi_i>\pi_i^*/n$.
\item For any $i\in [m],j\in S_i$ such that $i\neq j$,
$\|\mu_i-\parmu_i\|\leq \|\mu_j-\parmu_j\|-d^{\epsilon}$;
\item For each $1\leq i\leq m,j\in S_i$, $\|\parmu_j-\mu_i^*\|\leq 2n$;
\item For each $1\leq j\leq n$, $\sqrt{d}-2d^{2\epsilon}\leq \|\mu_j-\parmu_j\|\leq \sqrt{d}+2n$.
\end{itemize}
\end{lemma}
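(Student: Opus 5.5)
We will prove \cref{thm:stage1} by a continuous-induction (bootstrap) argument over the time window $0\leq t\leq T:=\lceil 1000n^{10}d^{\epsilon-0.5}/\eta\rceil$, tracking each model mean through its decomposition $\mu_j=\parmu_j+(\mu_j-\parmu_j)$ and writing $r_j:=\|\mu_j-\parmu_j\|$. The invariants are weakened versions of the conclusions: $\|\parmu_j-\mu_i^*\|\leq 2n$ for $j\in S_i$, and $\sqrt d-2d^{2\epsilon}\leq r_j\leq \sqrt d+2n$. Under these invariants we first establish a \emph{winner-take-all} picture for the posteriors. For $x\sim i$, write $\|x-\mu_k\|^2=\|x^\parallel-\parmu_k\|^2+\|x^\perp-(\mu_k-\parmu_k)\|^2$. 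The perpendicular part is roughly $(d-m+1)+r_k^2-2\langle x^\perp,\mu_k-\parmu_k\rangle$, and the cross term is $O(\sqrt{d}\,\mathrm{polylog})$ with high probability; the near-orthogonality from item (5) of \cref{thm:init} keeps these cross terms nearly independent across $k$. The parallel part separates the components, since for $k\in S_{i'}$ with $i'\neq i$ it exceeds the one for $k\in S_i$ by about $d^{1-2\epsilon}$. Since $d^{1-2\epsilon}\gg \sqrt d\cdot r$-fluctuations, we get $\psi_k(x)\leq \exp(-d^{1-2\epsilon}/4)$ for $k\notin S_i$, up to an event of probability $\exp(-\Omega(d^{1-2\epsilon}))$. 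Within $S_i$, the posterior is dominated by the $k$ with smallest $r_k$, because $r_k^2$ differences of order $\sqrt d\cdot n^{-3}$ dominate the $O(n)$ differences in parallel distance; this is where item (2) of \cref{thm:init} enters.

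Next we compute the dynamics using \cref{lm:gradient}. Conditioned on $x\sim i$, the perpendicular component of $\sum_k\psi_k(x)(\mu_k-\mu_i^*)$ is approximately the posterior-weighted perpendicular part of the means. This yields a radial contraction for $r_j$ of rate roughly $\pi_i^*\,\E_{x\sim i}[\psi_j(x)\cdots]$. The key asymmetry is that the leader $i\in S_i$ (the one with smallest $r$) carries almost all the posterior mass and so contracts at a rate of order $\pi_i^*$, while every non-leader $j\in S_i$ has exponentially small $\psi_j$ and barely moves. Over $T$ steps the leader's radius therefore decreases by about $\eta T\cdot\Theta(\pi_i^*\sqrt d)\gtrsim d^{\epsilon}$, producing the gap $r_i\leq r_j-d^\epsilon$ in item (2). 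The lower bound $r_j\geq\sqrt d-2d^{2\epsilon}$ must be checked separately. Since the contraction is multiplicative with $\eta T\lesssim n^{10}d^{\epsilon-0.5}\cdot 1000$, the total decrease is at most $O(n^{10}d^{\epsilon})\leq d^{2\epsilon}$, using the dimension condition~\eqref{eq:standing-dimension}. For the parallel components we show the drift toward $\mu_i^*$ is a contraction within the $(m-1)$-dimensional subspace, plus errors of size $\exp(-d^{\Omega(1)})$; hence $\|\parmu_j-\mu_i^*\|$ never exceeds $2n$. The upper bound $r_j\leq\sqrt d+2n$ follows the same way, since the perpendicular gradient only shrinks $r_j$ up to tiny errors.

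For the mixing weights, $\pi_i^{(t+1)}=\E_{p^*}[\psi_i]\geq \pi_i^*\E_{x\sim i}[\psi_i(x)]$. We will show the leader keeps $\E_{x\sim i}[\psi_i]\geq 1/n$. Initially $\pi_k=1/n$ for all $k$, so the leader's radius advantage gives it at least a $1/|S_i|$ share. Thereafter the leader's weight only grows, because both its $r$-advantage and the EM reweighting reinforce it. This gives $\pi_i>\pi_i^*/n$.

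The main obstacle is quantitative control of the within-$S_i$ competition at early times, when the radius gaps are only $1/n^3$ and the weights are all $1/n$. There the posterior is not yet sharply winner-take-all, because $\sqrt d\,n^{-3}$ differences compete with the Gaussian fluctuations $\langle x^\perp,\mu_k-\parmu_k\rangle$ of order $\sqrt d$. I would handle this with a monotonicity argument. Using the pairwise orthogonality of the $\botmu_k$, I would show that the ordering of the $r_k$ within $S_i$ is preserved and the gap $r_j-r_i$ is nondecreasing, since the leader always receives at least as much expected posterior mass as any competitor. Then, once the gap exceeds a constant, the exponential domination argument above takes over and drives the gap to $d^\epsilon$ by time $T$.
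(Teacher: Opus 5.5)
There is a genuine gap, and it sits in the step you defer as the ``main obstacle''. Nothing in your proposal gives a quantitative rate at which the within-group radial gap grows, and that rate is the real content of this lemma.

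\textbf{Why your central picture fails.} You assume the leader carries almost all posterior mass and contracts at rate $\Theta(\pi_i^*)$, while non-leaders have exponentially small $\psi_j$ and barely move. This is false throughout Stage~1. Let $u_l$ be the fixed directions of the initial $\botmu_l$, and set $y_l:=\langle\bx,u_l\rangle$ and $a_l:=\|\botmu_l\|$. On the event $\|\bx-\mu_i^*\|\le d^{2\epsilon}$, the energy of $l\in S_i$ is $(a_l-y_l)^2$ up to an additive $O(d^{4\epsilon})$. Under $x\sim i$ the $y_l$ are independent standard Gaussians. Hence $j$ beats the leader essentially iff $y_j-y_i>a_j-a_i$.

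Each sample is assigned almost deterministically. But with $a_j-a_i$ only of order $n^{-3}$, each member of $S_i$ wins roughly a $1/|S_i|$ fraction of samples. Even at a constant gap the competitor still wins with constant probability. The leader dominates only once $a_j-a_i\gg1$. So all members of $S_i$ contract at comparable rates, and the terminal gap $d^\epsilon$ comes only from a small \emph{difference} of rates.

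Your fallback is that the gap is nondecreasing because the leader gets at least as much posterior mass. That yields no growth in finite time. Your hand-off ``once the gap exceeds a constant, exponential domination takes over'' is then applied to a gap you never show reaches a constant, and at a merely constant gap there is no exponential domination to take over. Note also that the radial contraction coefficient is $\pi_i^*\E[\psi_r^2]$, not the posterior mass $\E[\psi_r]$.

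\textbf{The missing ingredient.} You need a strict anti-concentration comparison for this competition among independent Gaussians with radii $a_l$ and log-weight offsets $\delta_l=-2\log\pi_l$. This is \cref{lm:stage1:orthoprob}, built on \cref{lem:stage1-2variable}; for $|S_q|\ge3$ one conditions on the best of the remaining competitors. It says the leader $q$ beats everyone by an energy margin $d^{5\epsilon}$ with probability at least $1/(300n^9)$ larger than the probability that a given competitor is even within that margin of winning. Transferred to responsibilities via \cref{lm:stage1:psiclose}, this gives $\E_{x\sim q}[\psi_q^2]\ge\E_{x\sim q}[\psi_k^2]+1/(500n^9)$, and the same gap for first moments.
\begin{itemize}
\item The second-moment gap makes $r_k-r_q$ increase by at least $\eta\pi_q^*\sqrt d/(800n^9)$ at \emph{every} step, uniformly in the current gap. $T_1=\lceil1000n^{10}d^{\epsilon-0.5}/\eta\rceil$ is calibrated so this accumulates to $d^\epsilon$, using $n\pi_q^*>1$.
\item The first-moment gap keeps $\pi_q=\max_{l\in S_q}\pi_l$. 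This is exactly the offset ordering $\delta_q\le\delta_l$ the comparison lemma needs at the next step. It also gives $\pi_q'>\pi_q^*/n$ by averaging over $S_q$.
\end{itemize}
Your claim that the leader's weight ``only grows'' is not needed, and it is false in general: after one EM step the leader's weight is about $\pi_q^*/|S_q|$, which can be below $1/n$.

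\textbf{What is fine.} The rest of your plan matches the paper: cross-group suppression, parallel control, and the radius bounds $\sqrt d-2d^{2\epsilon}\le r_j\le\sqrt d+2n$. One correction: the within-group cross terms are not $\exp(-d^{\Omega(1)})$. $\E[\psi_r\psi_k]$ is only $O(1/\sqrt d)$, coming from near-ties (\cref{lm:stage1:nocorrolation}). This gives $O(\eta n\pi_q^*)$ errors per step, which are harmless over $T_1$ steps and dominated by the strict drift above.
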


The proof of this lemma is in \Cref{subsec:stage1}.
Here we briefly outline the main idea of this lemma and how to prove it.
This stage shows that, given the initialization in \cref{thm:init}, in some steps, for any $i$, there is a $\mu_j$ that is far closer to $\mu_i^*$ than any other $\mu_k$. Moreover, the projection of each $\mu_j$ on $\calP$ doesn't move much. An illustration is in Figure~\ref{fig:stage1}.

\begin{figure}[t]
\centering
\begin{tikzpicture}[scale=1.0]

% --- Force a symmetric-ish bounding box so the picture is truly centered ---
% (adjust these corners if you later move points)
\path[use as bounding box] (-4.8,-2.5) rectangle (4.8,3.4);

% --- Plane as a parallelogram (centered) ---
\fill[gray!10] (-4.2,-1.0) -- (3.8,-0.2) -- (4.6,2.4) -- (-3.4,1.6) -- cycle;
\draw[gray!55] (-4.2,-1.0) -- (3.8,-0.2) -- (4.6,2.4) -- (-3.4,1.6) -- cycle;

% --- Ground truths (separated + non-collinear) ---
\coordinate (g1) at (-3.0,-0.15);
\coordinate (g2) at (0.0,0.25);
\coordinate (g3) at (3.1,1.25);

\fill[blue] (g1) circle (3.0pt);
\fill[blue] (g2) circle (3.0pt);
\fill[blue] (g3) circle (3.0pt);

% Labels for ground-truth means
\node[blue] at ($(g1)+(-0.55,-0.35)$) {\small $\mu_1^*$};
\node[blue] at ($(g2)+(-0.55,-0.35)$) {\small $\mu_2^*$};
\node[blue] at ($(g3)+(0.20,0.35)$) {\small $\mu_3^*$};

% =======================
% Points around g1 (off-plane) + projection lines to g1
\fill[black] (-4.1, 1.9) circle (1.8pt); \draw[dashed,gray!60] (-4.1,1.9) -- (g1);
\fill[black] (-3.9,-1.6) circle (1.8pt); \draw[dashed,gray!60] (-3.9,-1.6) -- (g1);
\fill[black] (-2.0, 1.6) circle (1.8pt); \draw[dashed,gray!60] (-2.0,1.6) -- (g1);
\fill[black] (-4.4, 0.3) circle (1.8pt); \draw[dashed,gray!60] (-4.4,0.3) -- (g1);
\fill[black] (-2.6,-2.0) circle (1.8pt); \draw[dashed,gray!60] (-2.6,-2.0) -- (g1);

% closest point for g1
\coordinate (r1) at (-2.7,0.85);
\draw[red,thick] (r1) circle (2.2pt);
\draw[dashed,gray!60] (r1) -- (g1);
\node[red] at ($(r1)+(0.25,0.35)$) {\small $\mu_1$};

% =======================
% Points around g2
\fill[black] (-1.2, 2.0) circle (1.8pt); \draw[dashed,gray!60] (-1.2,2.0) -- (g2);
\fill[black] ( 1.4,-1.7) circle (1.8pt); \draw[dashed,gray!60] (1.4,-1.7) -- (g2);
\fill[black] ( 2.0, 1.7) circle (1.8pt); \draw[dashed,gray!60] (2.0,1.7) -- (g2);
\fill[black] (-0.8,-2.1) circle (1.8pt); \draw[dashed,gray!60] (-0.8,-2.1) -- (g2);
\fill[black] ( 0.6, 2.2) circle (1.8pt); \draw[dashed,gray!60] (0.6,2.2) -- (g2);

% closest point for g2
\coordinate (r2) at (0.3,1.05);
\draw[red,thick] (r2) circle (2.2pt);
\draw[dashed,gray!60] (r2) -- (g2);
\node[red] at ($(r2)+(0.25,0.35)$) {\small $\mu_2$};

% =======================
% Points around g3
\fill[black] ( 2.0, 2.8) circle (1.8pt); \draw[dashed,gray!60] (2.0,2.8) -- (g3);
\fill[black] ( 4.6, 0.2) circle (1.8pt); \draw[dashed,gray!60] (4.6,0.2) -- (g3);
\fill[black] ( 4.0, 3.1) circle (1.8pt); \draw[dashed,gray!60] (4.0,3.1) -- (g3);
\fill[black] ( 2.6,-1.4) circle (1.8pt); \draw[dashed,gray!60] (2.6,-1.4) -- (g3);
\fill[black] ( 4.3,-1.1) circle (1.8pt); \draw[dashed,gray!60] (4.3,-1.1) -- (g3);

% closest point for g3
\coordinate (r3) at (3.4,2.05);
\draw[red,thick] (r3) circle (2.2pt);
\draw[dashed,gray!60] (r3) -- (g3);
\node[red] at ($(r3)+(0.25,0.35)$) {\small $\mu_3$};

\end{tikzpicture}
\caption{The gray parallelogram represents the ground-truth subspace $\calP$, and the blue points denote the ground-truth means $\mu_1^*,\mu_2^*,\mu_3^*$.
For each $1 \le i \le m$, we assumed $\mu_i$ as the initialized parameter in $S_i$ whose distance to $\calP$ is minimal.
The remaining parameters in $S_i$ are denoted by black points.
At the end of Stage~1, the distance from $\mu_i$ to $\calP$ is smaller than that of any other $\mu_j \in S_i$ by at least $d^{\epsilon}$.}
\label{fig:stage1}
\end{figure}

The main idea of this lemma is to use induction to prove that in $\big\lceil1000n^{10}d^{\epsilon-0.5}/\eta\big\rceil$ steps, all properties in \cref{thm:init} still hold with some small relaxations.
By such properties, we can use \cref{lm:gradient} to analyze the gradient.
Recall the formula in \cref{lm:gradient}.
We prove that for each $i\in S_l$, only the $(\mu_i-\mu_l^*)$ term dominates the gradient.
Therefore, for any $i\in S_j$, intuitively we have
\begin{equation}
\label{eq:sketch:stage1}
\nabla_{\mu_i}\calL\approx \pi_j^*\E_{x\sim j}\left[\psi_i(x)^2(\mu_i-\mu_j^*)\right].
\end{equation}
Finally, we show that when $d$ is large enough, for each $j\in S_i$ such that $j\neq i$,
\[
\E_{x\sim i}[\psi_i(x)^2]\geq \E_{x\sim i}[\psi_j(x)^2]+\operatorname{poly}(1/n).
\]
Therefore, since the update rule is
\[
\mu_i^{(t+1)}=\mu_i^{(t)}-\eta\nabla_{\mu_i}\calL(\pi^{(t)},\mu^{(t)}),
\]
by (\ref{eq:sketch:stage1}), we can prove that for each $1\leq i\leq m$, $\mu_i$ moves towards $\mu_i^*$ at a higher speed than any other $\mu_j$ for $j\in S_i$.
Also, for each $j\in S_i$, the moving direction of $\mu_j$ is almost towards $\mu_i^*$.
Therefore, by a careful analysis, we can prove \cref{thm:stage1}.

\subsection{Stage 2}\label{sec:stage2}
In this subsection, we are given the parameters generated by Stage 1.
We prove the following lemma:

\begin{lemma}
\label{thm:stage2}
Given the initialization corresponding to Stage 1 (i.e., it satisfies the four conditions in \cref{thm:stage1}), after $\lceil\log d/(\pi_1^*\eta)\rceil$ steps, the following are true:
\begin{itemize}
\item $\|\mu_1-\parmu_1\|\leq 1$.
\item For any $2\leq i\leq n$, we have $\pi_i\leq 2\exp(-d^{2\epsilon/3}/3)$.
\item For each $1\leq i\leq m$, $\|\parmu_1-\mu_i^*\|\leq d^{0.5-\epsilon}+2n+\frac{800}{(\pigap)^2}$.
\item For any $2\leq i\leq n$, 
$\sqrt{d}-100d^{0.5-2\epsilon}/\pigap\leq\|\mu_i-\parmu_i\|
\leq\sqrt d+2n+800/(\pigap)^2$.
\item For $2\leq i\leq n,i\in S_j$, $\|\parmu_i-\mu_j^*\|\leq 2n+\frac{800}{(\pigap)^2}$.
\end{itemize}
\end{lemma}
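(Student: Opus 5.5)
The plan is to split the $\lceil\log d/(\pi_1^*\eta)\rceil$ steps into two phases and track, by induction over steps, the squared distance to $\calP$. Write $x=x^\parallel+x^\perp$ and use
\[
\|x-\mu_k\|^2=\|x^\parallel-\parmu_k\|^2+\|x^\perp\|^2+\|\mu_k-\parmu_k\|^2-2\langle x^\perp,\mu_k-\parmu_k\rangle .
\]
For $x\sim j$, the cross term is Gaussian with standard deviation $\|\mu_k-\parmu_k\|\le\sqrt d+O(n)$. It can therefore be controlled by a tail bound of size $d^{1/2+\epsilon/3}$, say, outside an event of probability $\exp(-d^{2\epsilon/3}/3)$. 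This is where the $\exp(-d^{2\epsilon/3}/3)$ in the second item comes from.

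Consequently, the posterior $\psi_k(x)$ is governed, up to this error, by the quantity
\[
\log\pi_k-\tfrac12\|x^\parallel-\parmu_k\|^2-\tfrac12\|\mu_k-\parmu_k\|^2 .
\]
At the start of Stage 2, the last three items of \cref{thm:stage1} apply. Within each $S_j$, the representative $\mu_j$ beats every other $\mu_k$, $k\in S_j$, by at least $d^{\epsilon}\sqrt d$ in squared orthogonal norm. Components in other $S_{j'}$ are penalized by $\approx d^{1-2\epsilon}$ through the parallel term. So for $x\sim j$ we get $\psi_j(x)\approx1$ on the good event.

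\textbf{Phase A (race).} While this assignment persists, \cref{lm:gradient} gives
\[
\nabla_{\mu_j}\calL\approx\pi_j^*(\mu_j-\mu_j^*)
\]
for $j\le m$, with all other gradients exponentially small. Hence the orthogonal norm of $\mu_j$ contracts by the factor $(1-\eta\pi_j^*)$ per step, while the parallel part stays near $\mu_j^*$. The non-representatives remain essentially frozen, which preserves the last two items for $i\ge m+1$.

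Since $\pi_1^*\ge\pi_j^*+\pigap$, after $s$ effective time units we have
\[
\|\mu_j-\parmu_j\|^2-\|\mu_1-\parmu_1\|^2\gtrsim d\bigl(e^{-2\pi_j^*s}-e^{-2\pi_1^*s}\bigr)\gtrsim d\,\pigap s .
\]
This exceeds the parallel penalty $d^{1-2\epsilon}$ once $s\approx 100d^{-2\epsilon}/\pigap$. Over that time the other representatives shrink by at most $O(d^{0.5-2\epsilon}/\pigap)$, which is exactly the lower bound in the fourth item.

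The step I expect to be the main obstacle is the transition window. Here $\psi_1$ is taking over component $j$'s points, so the assignments are genuinely mixed. I would handle it with a monotonicity argument: the difference of squared orthogonal norms between $\mu_1$ and the others keeps increasing, since $\mu_1$ receives at least the gradient share of the others. During this window the parallel parts are pulled by mixed targets. I would bound that drift by integrating $\eta\|\nabla\|$ over the window, whose length is $O(1/(\pigap\eta))$, together with a perturbation bound. This should give the $800/(\pigap)^2$ slack in items 3--5.

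\textbf{Phase B (domination).} Once
\[
\|\mu_1-\parmu_1\|^2\le\|\mu_k-\parmu_k\|^2-\Omega(d^{1-2\epsilon}\log d)
\]
for all $k\ge2$, we have $\psi_1(x)\ge1-2\exp(-d^{2\epsilon/3}/3)$ for $x\sim j$, for every $j$. The EM update then gives $\pi_k=\E[\psi_k]\le2\exp(-d^{2\epsilon/3}/3)$ for $k\ge2$, which is the second item. Moreover,
\[
\nabla_{\mu_1}\calL\approx\mu_1-\textstyle\sum_j\pi_j^*\mu_j^* .
\]
This contracts the orthogonal part at rate $\ge\pi_1^*$ and sends $\parmu_1$ toward the weighted centroid, which lies within $d^{0.5-\epsilon}$ of every $\mu_i^*$; this gives the third item. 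All other gradients carry a factor $\psi_k$ and are negligible, so their items persist. The total contraction over $\lceil\log d/(\pi_1^*\eta)\rceil$ steps is at least $(1-\eta\pi_1^*)^{\log d/(\pi_1^*\eta)}\approx 1/d$, which brings $\sqrt d$ down to at most $1$. The length of Phase A is negligible against this horizon, which yields the first item.
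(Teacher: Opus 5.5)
The step you defer as the main obstacle is where the proof actually lives, and the mechanism you sketch for it does not work. The race cannot be run to $s\approx100d^{-2\epsilon}/\pigap$. Representative $i$ stops owning source $i$ once $\|\mu_i-\parmu_i\|^2-\|\mu_1-\parmu_1\|^2$ reaches about $\Delta^2=d^{1-2\epsilon}$, i.e.\ already near $s\approx d^{-2\epsilon}/(2(\pi_1^*-\pi_i^*))$. From there until component~$1$ dominates, $\psi_1$ and $\psi_i$ share the mass of source $i$, by an amount your plan does not control.

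In that window $\nabla_{\mu_i}\calL$ contains $\pi_i^*\E_{x\sim i}[\psi_i\psi_1(\mu_1-\mu_i^*)]$, and $\nabla_{\mu_1}\calL$ contains $\pi_i^*\E_{x\sim i}[\psi_1\psi_i(\mu_i-\mu_i^*)]$. With order-one overlap these are as large as the principal terms: norm of order $\sqrt d$, parallel part of order $d^{0.5-\epsilon}$. Your energy approximation fixes posteriors only up to factors $e^{\pm O(d^{1/2+\epsilon/3})}$, so it is silent here. Integrating $\eta\|\nabla\|\le2\eta\sqrt d$ over $O(1/(\pigap\eta))$ steps gives a drift of order $\sqrt d/\pigap$. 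Even over the correct window of $O(d^{-2\epsilon}/(\pigap\eta))$ steps it gives $d^{0.5-2\epsilon}/\pigap$. Items 3 and 5 require an $O(1)$ slack of $800/(\pigap)^2$ instead. Your monotonicity claim is unsupported for the same reason: the cross term pushes $\mu_1$ along $\mu_i-\parmu_i$, and nothing in your plan rules out that it stalls the contraction of $\|\mu_1-\parmu_1\|$.

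The missing ingredient is an overlap bound that holds \emph{during} the mixing. Keeping only components $i,j$ in the denominator gives $\psi_i\psi_j\le e^{-|Z|}$, where $Z$ is a one-dimensional Gaussian of standard deviation $\|\mu_i-\mu_j\|$. Hence $\E_{x\sim k}[\psi_i\psi_j]\le 1/(1.1\|\mu_i-\mu_j\|)$ whatever the mean of $Z$ (\cref{lm:stage1:nocorrolation}).

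You combine this with a separation bound from \cref{lm:stage2:mu1small}: for $t>d^{-2\epsilon}/(10\eta)$,
$\|\mu_i-\mu_1\|\ge\|\mu_i-\parmu_i\|-\|\mu_1-\parmu_1\|\ge\frac{\pigap}{30}d^{0.5-2\epsilon}$.
That lemma compares the linear shrinkage bound $\pi_i^*\eta t(\sqrt d+80d^{2\epsilon}/\pigap)$ of representatives with the geometric decay of $\|\mu_1-\parmu_1\|$; the decay holds because component~$1$ owns source~$1$ throughout. Together these show that every gradient is a contraction plus an error of at most $70d^{2\epsilon}/\pigap$ (times $\pi_i^*$ for representatives), uniformly through the takeover. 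After $t_s=10d^{-2\epsilon}/(\eta\pigap)$ steps, \cref{lm:stage2:larget} makes every $\psi_i$, $i\geq2$, exponentially small. Summing the error over those $t_s$ steps gives $700/(\pigap)^2$, which is where $800/(\pigap)^2$ comes from.

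You also need the paper's handoff for non-representatives. The early interval $K$ in \cref{lm:stage2:early} is chosen so that, while every representative still owns its source, $\mu_1$ already becomes closer than every non-representative to every $\mu_i^*$. Afterwards non-representatives are suppressed by comparison with component~$1$, whose weight is at least $1/n^2$. Comparing them with a representative whose weight is collapsing would leave $\pi_k/\pi_j$ uncontrolled. Your plan compares them with their representative in Phase~A and with $\mu_1$ only in Phase~B, which leaves the window in between uncovered.
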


This lemma shows that after Stage 2, only one $\mu_i$ is close to $\calP$, while others are far away from $\calP$. The proof of this lemma is in \Cref{subsec:stage2}. Here we still outline the main idea of this lemma.
The intuition is that given the stage 1 input, for any $i>m$, $\|\nabla_{\mu_i}\calL\|$ is exponentially small, and for $1\leq i\leq m$, $\mu_i$ moves towards $\calP$ at a speed proportional to $\pi_i^*$ at first. As we assumed $\pi_1^*$ to be the largest, and the separation of each two $\mu_j^*$ is $d^{0.5-\epsilon}$, at last $\mu_1$ becomes the closest parameter to $\calP$, while other $\mu_k$'s are far away from $\calP$. An illustration is in Figure~\ref{fig:stage2}.

\begin{figure}
\centering
\begin{tikzpicture}[scale=1.0, >=latex]

% --- Plane as a parallelogram ---
\fill[gray!10] (-4.2,-1.0) -- (3.8,-0.2) -- (4.6,2.4) -- (-3.4,1.6) -- cycle;
\draw[gray!55] (-4.2,-1.0) -- (3.8,-0.2) -- (4.6,2.4) -- (-3.4,1.6) -- cycle;

% --- Ground truths (mu_i^*) ---
\coordinate (g1) at (-3.0,-0.15);
\coordinate (g2) at ( 0.0, 0.25);
\coordinate (g3) at ( 3.1, 1.25);

\fill[blue] (g1) circle (3.0pt);
\fill[blue] (g2) circle (3.0pt);
\fill[blue] (g3) circle (3.0pt);

\node[blue] at (-3.35,-0.55) {\small $\mu_1^*$};
\node[blue] at (0.35, 0.00) {\small $\mu_2^*$};
\node[blue] at ( 3.45, 1.25) {\small $\mu_3^*$};

% --- Start-of-stage points (mu_i) ---
\coordinate (m1) at (-2.7,0.85);
\coordinate (m2) at ( 0.3,1.05);
\coordinate (m3) at ( 3.4,2.05);

\draw[red,thick] (m1) circle (2.2pt);
\draw[red,thick] (m2) circle (2.2pt);
\draw[red,thick] (m3) circle (2.2pt);

\node[red!80!black] at (-2.35,1.15) {\small $\mu_1$};
\node[red!80!black] at ( 0.70,1.35) {\small $\mu_2$};
\node[red!80!black] at ( 3.85,2.35) {\small $\mu_3$};

% --- Stage-2 dynamics ---
% mu_1: first goes toward mu_1^*, then curves and ends on P, still closer to mu_1^*
\coordinate (m1mid)   at (-2.95,0.10);   % along the direction to g1
\coordinate (m1final) at (-0.55,0.35);   % final point on the plane (your suggested endpoint)

\draw[thick, red!80!black, ->]
  (m1) .. controls (-2.85,0.55) and (-2.95,0.25) .. (m1mid)
       .. controls (-1.80,0.10) and (-1.10,0.55) .. (m1final);

\draw[red,thick] (m1final) circle (2.2pt);
\node[red!80!black] at (-0.15,0.70) {};

% mu_2, mu_3: move toward their targets but stop short
\coordinate (m2end) at (0.12,0.55);
\coordinate (m3end) at (3.18,1.55);

\draw[thick, red!80!black, ->] (m2) -- (m2end);
\draw[thick, red!80!black, ->] (m3) -- (m3end);

\draw[red,thick] (m2end) circle (2.0pt);
\draw[red,thick] (m3end) circle (2.0pt);

% optional dashed "remaining gap" to show they cannot reach
\draw[dashed, gray!60] (m2end) -- (g2);
\draw[dashed, gray!60] (m3end) -- (g3);

\end{tikzpicture}
\caption{
Illustration of Stage 2 dynamics.
Similar as in Figure~\ref{fig:stage1}, the gray parallelogram represents the ground-truth subspace $\calP$, and the blue points denote the ground-truth means $\mu_1^*,\mu_2^*,\mu_3^*$.
During this stage, at first each $\mu_i$ moves towards $\mu_i^*$ for $1\leq i\leq m$, but at last $\mu_1$ becomes very close to $\calP$ while others stay far from $\calP$.
}
\label{fig:stage2}
\end{figure}

More precisely, \cref{lm:stage2:early} first proves that every representative has weight at
least half its true weight during an initial interval. Before that interval ends, component~1
becomes closer than every non-representative to every true source. Thereafter non-representatives
are suppressed by comparison with component~1, without any assumption about their weights
relative to the other representatives. These responsibility estimates give the refined gradients
that close the Stage~2 induction.

The collapse becomes quantitative after $t>10d^{-2\epsilon}/(\eta\pigap)$. At that point
$\E_{x\sim p^*}[\psi_i(x)]$ is exponentially small for every $i\geq2$, so only $\mu_1$ experiences a
non-negligible update. Since the number of iterations before this point is small, the motion parallel
to $\calP$ remains controlled.
In this regime, we show that
\[
\nabla_{\mu_1}\calL \approx \sum_j \pi_j^* \E_{x\sim j}\!\left[\psi_1(x)^2(\mu_1-\mu_j^*)\right].
\]
Therefore, $\mu_1$ is driven toward the subspace $\calP$ and converges to a point near both $\calP$ and the convex hull of the ground-truth means $\{\mu_j^*\}$.
This describes the mechanism formalized in \cref{thm:stage2}.

\subsection{Stage 3}\label{sec:stage3}
In this section, we prove that given the input of Stage 2, when $t\leq \exp\left(d/100\right)/\eta$, the loss is lower bounded by $\frac{(1-\pi_1^*)^2}{3}$.
We need to prove the following lemma.

\begin{lemma}
\label{thm:stage3}
Given the output of Stage~2 (i.e., the conditions in \cref{thm:stage2}), when
$t\leq \exp\left(d/100\right)/\eta$,
\[
\calL\left(\pi^{(t)},\mu^{(t)}\right)\geq \frac{(1-\pi_1^*)^2}{3}.
\]
\end{lemma}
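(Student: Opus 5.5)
We will prove \cref{thm:stage3} by combining an invariant-preservation induction with a direct lower bound on the KL loss in the collapsed configuration. Starting from the Stage~2 output, we will show by induction over $t\leq \exp(d/100)/\eta$ that a relaxed version of the conditions of \cref{thm:stage2} persists. Specifically, $\pi_i\leq 3\exp(-d^{2\epsilon/3}/3)$ for all $i\geq 2$, $\|\mu_i-\parmu_i\|\geq \sqrt d/2$ for all $i\geq2$, and $\|\parmu_1-\mu_j^*\|\leq 2d^{0.5-\epsilon}$ for all $j$. The key observation is that, for $i\geq 2$, every term in \cref{lm:gradient} carries the factor $\psi_i(x)$. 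For $x\sim j$ this factor is bounded by
\[
\frac{\pi_i\phi(\mu_i;x)}{\pi_1\phi(\mu_1;x)}.
\]
Here $\pi_1\geq 1/2$, $\mu_1$ lies within distance $O(d^{0.5-\epsilon})$ of $\mu_j^*$, and $\mu_i$ lies at distance at least $\sqrt d/2$. Consequently, outside an event of probability $\exp(-\Omega(d))$, we have $\psi_i(x)\leq \exp(-d/20)$. Taking expectations gives $\|\nabla_{\mu_i}\calL\|\leq \mathrm{poly}(d)\exp(-d/20)$, and the same bound gives $\pi_i^{(t+1)}=\E[\psi_i]\leq \exp(-d/20)$ after one step. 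Over $\exp(d/100)/\eta$ steps, each $\mu_i$ with $i\geq2$ then moves by at most $\exp(d/100)\,\mathrm{poly}(d)\exp(-d/20)\ll 1$, so the distance and weight invariants are preserved.

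The invariant for $\mu_1$ needs a separate argument, since $\mu_1$ does move. Because $\psi_1\approx 1$ everywhere up to exponentially small error, we have
\[
\nabla_{\mu_1}\calL\approx \sum_j\pi_j^*(\mu_1-\mu_j^*)=\mu_1-\bar\mu^*,
\]
where $\bar\mu^*:=\sum_j\pi_j^*\mu_j^*$. The update is therefore a contraction toward the weighted mean $\bar\mu^*\in\calP$, which lies in the convex hull and so within $d^{0.5-\epsilon}$ of every $\mu_j^*$. This keeps $\mu_1$ inside a ball around the hull, with exponentially small perturbations that cannot accumulate to more than $o(1)$ over the time horizon.

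With the invariants in hand, we lower bound the loss. The model is $p=\pi_1\phi(\mu_1;\cdot)+r$, where the remainder $r$ has total mass $\delta\leq n\exp(-d/20)$. By Pinsker's inequality, $\calL\geq 2\,\mathrm{TV}(p^*,p)^2$, so it suffices to show $\mathrm{TV}(p^*,p)\geq (1-\pi_1^*)/\sqrt6$. A single Gaussian cannot match $p^*$ in total variation. To see this, consider each $j$ with $\|\mu_1-\mu_j^*\|\geq d^{0.5-\epsilon}/2$; at most one $j$ can violate this bound, since the $\mu_j^*$ are $d^{0.5-\epsilon}$-separated. For each such $j$, take the halfspace $H_j$ that separates $\mu_j^*$ from $\mu_1$ and from every other true mean along the appropriate direction. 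One choice is the set where $x-\mu_j^*$ projected onto $\mu_j^*-\mu_1$ is at least $-d^{0.5-\epsilon}/4$. On $\bigcup_j H_j$, the mass of $p^*$ is at least $(1-\pi_1^*)(1-o(1))$ when $\mu_1$ is near $\mu_1^*$. In general, we instead take the union over all $j$ except the closest one, whose total weight is $1-\pi_{j_0}^*\geq 1-\pi_1^*$. The mass of $\phi(\mu_1;\cdot)$ on that union is $\exp(-\Omega(d^{1-2\epsilon}))$. This yields
\[
\mathrm{TV}\geq (1-\pi_1^*)-o(1),
\]
and hence
\[
\calL\geq 2(1-\pi_1^*)^2(1-o(1))\geq (1-\pi_1^*)^2/3.
\]
One must check that the regions $H_j$ for distinct $j$ do not double count $p^*$ mass badly. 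This is fine because we only need the mass $\sum_{j\neq j_0}\pi_j^*\Pr_{x\sim j}(x\in\bigcup H)$, which is a lower bound on the $p^*$ mass of the union.

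The main obstacle is the long-horizon control, since the time horizon is exponential. We must ensure that the per-step leakage bound on $\psi_i$ for $i\geq2$ is of the form $\exp(-cd)$ with $c>1/100$ uniformly along the trajectory. That uniformity requires the distance gap between $\mu_i$ and $\mu_1$ relative to every data source to remain order $\sqrt d$, and that gap is exactly what the invariants supply. We would handle the low-probability tails, where $x$ is atypically close to $\mu_i$, by a Gaussian tail bound on $\langle x-\mu_j^*,\mu_i-\mu_1\rangle$. This bound will be combined with the lower bound $\pi_1\geq1/2$, which itself follows from $\sum_{i\geq2}\pi_i$ being tiny.
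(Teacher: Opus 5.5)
Your proposal is correct, and its skeleton matches the paper's (\cref{lm:stage3}, \cref{lm:stage3:psismall}, \cref{lm:stage3:loss}). That skeleton is an induction showing that every $\mu_i$ with $i\ge2$ moves only $e^{-\Omega(d)}$ in total over the exponential horizon, while $\mu_1$ stays near the hull of the true means, followed by Pinsker on a suitable event.

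The real difference is the loss step. The paper's \cref{lm:stage3:loss} is purely geometric: it uses only $\|\mu_j-\mu_i^*\|\ge\sqrt d/2$ for $j\ge2$ and ignores the weights. Its event is a union of radius-$d^{0.5-\epsilon}/5$ balls around the true means, taken after projecting onto the low-dimensional subspace $\calP_0$. Balls are needed there because halfspaces fail in that generality: a far component $\mu_j$ displaced from $\mu_\ell^*$ orthogonally to the normal of your $H_\ell$ lies inside $H_\ell$ and puts mass near one on it. You instead use that all components other than $\mu_1$ carry total weight $o(\pimin^*)$. This reduces the problem to a single Gaussian versus $p^*$, so one-dimensional halfspace tails in the full space suffice and you avoid the $\calP_0$ reduction. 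Your route is simpler, but it is valid only once the weights have collapsed. That suffices for this lemma. The paper's weight-free version, by contrast, is also reused for Stages~1--2 in the proof of \cref{thm:main}, where the weights of components $j\ge2$ are not small.

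Two bookkeeping corrections.

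\begin{itemize}
\item At local time $0$ the remainder mass is only $O(ne^{-d^{2\epsilon/3}/3})$, not $ne^{-d/20}$. This is still $o(\pimin^*)$, so nothing breaks.
\item Your bound $\|\nabla_{\mu_i}\calL\|\le\mathrm{poly}(d)e^{-d/20}$ drops the exceptional event. It should read $\mathrm{poly}(d)e^{-cd}$, with $c$ determined by the tail of $\langle x-\mu_j^*,\mu_i-\mu_1\rangle$. With your relaxed radius $\sqrt d/2$ this $c$ is thin: thresholding at $e^{-d/20}$ gives $c\approx 9/800$, and an exact one-dimensional computation gives $c\approx 1/32$. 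Both are still above the required $1/100$. Carrying forward the Stage~2 radius $\sqrt d-O(d^{0.5-2\epsilon}/\pigap)$, as the paper does, yields $c=1/20$ with room to spare.
\end{itemize}
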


The proof of this lemma is in \Cref{subsec:stage3}. 
Here we also outline the main ideas.
Using the output properties in \cref{thm:stage2}, we prove that every $\mu_i$, $i\geq2$, moves only
an exponentially small distance for $t\leq \exp(d/100)/\eta$. Therefore, we prove the following lemma:
\begin{lemma}[See also \cref{lm:stage3}]
Given the initialization corresponding to \cref{thm:stage2}, for any $t\leq \exp(d/100)/\eta$, we always have
\begin{itemize}
\item $\|\mu_1-\parmu_1\|\leq 1$.
\item At local time $t=0$, $\pi_i\leq2\exp(-d^{2\epsilon/3}/3)$ for $i\geq2$;
at every $t\geq1$, $\pi_i\leq2\exp(-d/20)$.
\item For each $1\leq i\leq m$, $\|\parmu_1-\mu_i^*\|\leq d^{0.5-\epsilon}+2n+\frac{800}{(\pigap)^2}$.
\item For any $2\leq i\leq n$, 
$\sqrt{d}-200d^{0.5-2\epsilon}/\pigap\leq \|\mu_i-\parmu_i\|\leq 2\sqrt{d}$.
\item For any $2\leq i\leq n$ and $1\leq j\leq m$, $\|\parmu_i-\mu_j^*\|\leq 2d^{0.5-\epsilon}$.
\end{itemize}
\end{lemma}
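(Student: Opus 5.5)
The proof is an induction on $t$ over $0\le t\le T:=\lfloor\exp(d/100)/\eta\rfloor$. The hypothesis at step $t$ is the five items with slightly weaker constants. We also carry an auxiliary invariant: the total displacement of each $\mu_i$, $i\ge2$, since local time $0$ is at most $t\eta\cdot 2\exp(-d/20)\cdot\mathrm{poly}(n,d)\le \exp(-d/30)$. This is exponentially small because $T\eta\exp(-d/20)\le\exp(-d/25)$. The items for $i\ge2$ then follow from the Stage~2 output (\cref{thm:stage2}) and the triangle inequality. For the lower bound on $\|\mu_i-\parmu_i\|$ the slack widens from $100d^{0.5-2\epsilon}/\pigap$ to $200d^{0.5-2\epsilon}/\pigap$, and the upper bounds $\|\mu_i-\parmu_i\|\le 2\sqrt d$ and $\|\parmu_i-\mu_j^*\|\le 2d^{0.5-\epsilon}$ follow trivially since $2n+800/(\pigap)^2\ll d^{0.5-\epsilon}$ by \eqref{eq:standing-dimension}.

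\textbf{Weights.} Fix $x\sim j$. Then $\|x-\mu_1\|^2-\|x-\mu_i\|^2 = \|\mu_1\|^2-\|\mu_i\|^2-2\langle x,\mu_1-\mu_i\rangle$. The orthogonal part of $\mu_i$ has norm at least $\sqrt d-200d^{0.5-2\epsilon}/\pigap$, while $\mu_1$ is within $1$ of $\calP$ and all parallel parts lie within $O(d^{0.5-\epsilon})$ of $\mu_j^*$. Expanding, $\|x-\mu_i\|^2-\|x-\mu_1\|^2\ge d-O(d^{1-\epsilon})-O(\text{Gaussian fluctuation})$. Indeed, the term $\langle x-\mu_j^*,\mu_i-\mu_1\rangle$ is Gaussian with standard deviation $O(\sqrt d)$, and the parallel squared differences are $O(d^{1-2\epsilon})$. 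A Gaussian tail bound gives $\psi_i(x)\le (\pi_i/\pi_1)\exp(-d/3)$ outside an event of probability $\exp(-\Omega(d))$. We also need $\pi_1\ge 1/2$, which holds because $\sum_{i\ge2}\pi_i$ is tiny. Hence $\pi_i^{(t+1)}=\E[\psi_i]\le 2\exp(-d/20)$ for all $t\ge0$, using at $t=0$ only $\pi_i\le 2\exp(-d^{2\epsilon/3}/3)\le 1$. This gives the second item for $t\ge1$. The same estimate bounds $\|\nabla_{\mu_i}\calL\|$ via \cref{lm:gradient}: with $\psi_i\le1$ and $\|\mu_k-\mu_j^*\|=O(\sqrt d)$ (using also $\sum_k\psi_k=1$), we get $\|\nabla_{\mu_i}\calL\|\le \mathrm{poly}(d)\,(\pi_i\exp(-d/3)+\exp(-\Omega(d)))$. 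This closes the displacement invariant; for $t=0$ we use a direct bound on $\E[\psi_i]$ instead.

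\textbf{Component 1.} Since $\psi_1\ge 1-\exp(-\Omega(d))$ on typical $x$, \cref{lm:gradient} gives $\nabla_{\mu_1}\calL=\sum_j\pi_j^*(\mu_1-\mu_j^*)+e$ with $\|e\|\le\exp(-\Omega(d))$. This is gradient descent toward $\bar\mu:=\sum_j\pi_j^*\mu_j^*\in\calP$, with contraction factor $(1-\eta)$ and exponentially small perturbation. So $\|\mu_1-\parmu_1\|$ contracts: it stays $\le 1$, since the perturbation adds at most $\eta\exp(-\Omega(d))$ per step while the contraction removes $\eta\|\mu_1-\parmu_1\|$. The parallel component moves along a segment toward $\bar\mu$ (up to tiny error). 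Because $\bar\mu$ lies in the convex hull, and balls around each $\mu_i^*$ are convex, $\|\parmu_1-\mu_i^*\|\le\max(\text{initial},\|\bar\mu-\mu_i^*\|)+\exp(-\Omega(d))$. Here $\|\bar\mu-\mu_i^*\|\le d^{0.5-\epsilon}$ since all pairwise distances equal $d^{0.5-\epsilon}$. To keep the third item exact without error accumulation, I would measure the distance to $\bar\mu$ and use exact contraction of the linear part; the bound $2n+800/(\pigap)^2$ absorbs the $\exp(-\Omega(d))\cdot T\eta$ drift.

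\textbf{Main obstacle.} The hardest step is the uniform tail estimate in the weights step. The initial $\pi_i$ at local time $0$ is only $\exp(-d^{2\epsilon/3})$-small, so the first update must rely on the geometric gap $\approx d$ alone. The gap must also dominate both the cross term $\langle\parmu_i-\parmu_1,\cdot\rangle$ and the $O(d^{1-\epsilon})$ loss from the shrunken orthogonal norm. I would carefully split $x-\mu_j^*$ into its $\calP$ part and its orthogonal part, and apply $\chi^2$ and Gaussian concentration to each.
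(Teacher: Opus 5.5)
Your architecture matches the paper's proof of \cref{lm:stage3}. It uses induction over the horizon. The weight bound comes from comparing $\psi_i$ with component~1, using $\pi_1\ge 1/2$ and only $\pi_i\le 1$, so the first update is covered. The resulting bound $\|\nabla_{\mu_i}\calL\|\lesssim\sqrt d\,\E_{p^*}[\psi_i]$ makes the motion of $\mu_i$, $i\ge2$, exponentially small over $e^{d/100}/\eta$ steps. The component-1 update is a contraction toward the true means plus an exponentially small error.

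Two local differences are harmless. You replace $\E_j[\psi_1^2]$ by $1$, so $\mu_1$ contracts toward $\bar\mu=\sum_j\pi_j^*\mu_j^*$ with factor exactly $1-\eta$; the paper keeps $a_j=\pi_j^*\E_j[\psi_1^2]$ with $\sum_j a_j\ge1/2$. You also control the sample through the one-dimensional Gaussian $\langle x-\mu_j^*,\mu_i-\mu_1\rangle$, a half-space good event in full dimension. \cref{lm:stage3:samplepi,lm:stage3:psismall} instead use the ball $\{\|\bx-\mu_j^*\|\le\sqrt d/3\}$ in the low-dimensional span $\calP_0$, with a triangle-inequality bound that needs only the lower bound on $\|\mu_i-\parmu_i\|$.

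Two quantitative steps need repair.

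First, your cut in the weight step does not give the stated constant. To get $\psi_i\le(\pi_i/\pi_1)e^{-d/3}$, the energy gap must be at least $2d/3$. Its mean is $d(1-o(1))$, so the fluctuation $2\langle x-\mu_j^*,\mu_i-\mu_1\rangle$ must stay below about $d/3$. That fluctuation has standard deviation $2\|\mu_i-\mu_1\|\approx2\sqrt d$, so this fails with probability about $e^{-d/72}$. You therefore only get $\pi_i'\lesssim e^{-d/72}$, not $2e^{-d/20}$. That still suffices for your displacement and component-1 estimates, since $1/72>1/100$, but not for the second item.

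The fix is to cut lower. Allowing fluctuation up to $0.7d$ costs probability at most $\exp(-(0.7d)^2/(8\|\mu_i-\mu_1\|^2))\le e^{-0.06d}$. It leaves $\psi_i\le2\pi_i e^{-0.14d}$ on the good event, hence $\E_{p^*}[\psi_i]\le 2e^{-d/20}$; the optimal cut gives about $e^{-d/8}$. This uses $\|\mu_i-\mu_1\|\le\sqrt d(1+o(1))$, which follows from your displacement invariant plus the Stage~2 bound $\sqrt d+2n+800/(\pigap)^2$. With only the stated bound $2\sqrt d$, you would instead need to keep mean and spread coupled via $(\|a\|^2-\|b\|^2)/\|a-b\|\ge\|a\|-\|b\|$, where $a=\mu_i-\mu_j^*$ and $b=\mu_1-\mu_j^*$.

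Second, for the third item, write $\Delta=d^{0.5-\epsilon}$ and $B=2n+800/(\pigap)^2$. The Stage~2 output may already sit at $\Delta+B$, so $B$ has no room left to absorb drift. Routing through $\|\parmu_1-\bar\mu\|$ and the triangle inequality loses about $\Delta$. What closes the step is the strict gain of the linear part. Since $\|\bar\mu-\mu_i^*\|\le(1-\pi_i^*)\Delta$, one step gives $\|\parmu_1'-\mu_i^*\|\le(1-\eta)(\Delta+B)+\eta(1-\pi_i^*)\Delta+\eta\|e\|\le\Delta+B-\eta B+\eta\|e\|\le\Delta+B$. This is the paper's step, with $a\ge1/2$ in place of $1$. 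With these two fixes your argument is complete.
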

Then we can prove the lower bound of loss based on the following lemma:
\begin{lemma}[See also \cref{lm:stage3:loss}]
If for all $j\geq 2$ and $i\in [m]$, $\|\mu_j-\mu_i^*\|\geq \sqrt{d}/2$, then the loss satisfies
$\calL\geq \frac{(1-\pi_1^*)^2}{3}$.
\end{lemma}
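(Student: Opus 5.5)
The plan is to lower bound the KL loss through total variation. By Pinsker's inequality, $\calL=D_{KL}(p^*\Vert p)\geq 2\,\mathrm{TV}(p^*,p)^2$. It therefore suffices to exhibit a measurable set $A$ with
\[
p^*(A)-p(A)\geq (1-\pi_1^*)-\delta_d ,
\]
where $\delta_d\to0$ quickly (say $\delta_d\leq \exp(-d^{c})$). Indeed $2(1-\pi_1^*-\delta_d)^2\geq (1-\pi_1^*)^2/3$ as soon as $\delta_d\leq (1-1/\sqrt6)(1-\pi_1^*)$. This holds under \eqref{eq:standing-dimension}, because $1-\pi_1^*\geq \pimin^*\geq\rho$ and $d$ is polynomially large in $1/\rho$.

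\textbf{Step 1: a typical region that the far components miss.} Let
\[
E=\bigcup_{i=1}^m\bigl\{x:\ \|x-\mu_i^*\|^2\leq d+d^{0.6}\bigr\}.
\]
By $\chi^2$ concentration, each true component puts mass at least $1-e^{-d^{0.2}/8}$ on its own piece, so $p^*(E)\geq 1-e^{-\Omega(d^{0.2})}$. Now fix $j\geq2$. For $x\sim\calN(\mu_j,I_d)$ we have
\[
\|x-\mu_i^*\|^2=\|x-\mu_j\|^2+2\langle x-\mu_j,\mu_j-\mu_i^*\rangle+\|\mu_j-\mu_i^*\|^2 .
\]
Under the hypothesis $\|\mu_j-\mu_i^*\|\geq\sqrt d/2$, this concentrates at or above $1.25d$, up to fluctuations of order $\sqrt d+\|\mu_j-\mu_i^*\|$; when $\|\mu_j-\mu_i^*\|$ is huge the linear term is dominated by the quadratic one. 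Hence $\Pr_{x\sim\calN(\mu_j,I_d)}[x\in E]\leq m\,e^{-\Omega(d)}$ for every $j\geq2$. Since $\sum_{j\geq2}\pi_j\leq1$, only $\pi_1\phi(\mu_1;\cdot)$ contributes non-negligible $p$-mass to $E$.

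\textbf{Step 2: the single remaining Gaussian covers at most one cluster.} Project onto $\calP$ and let $B_i=\{x:\|x^\parallel-\mu_i^{*}\|\leq d^{0.1}\}$. Here I use that $\calP$ is a linear subspace containing the translated true means, so $\mu_i^*\in\calP$ after the translation. Since the true means are $d^{0.5-\epsilon}\gg 2d^{0.1}$ apart, the $B_i$ are disjoint.

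The projection of $\calN(\mu_1,I_d)$ onto $\calP$ is $\calN(\parmu_1,I_{m-1})$. Its mass outside the ball of radius $d^{0.1}$ around $\parmu_1$ is $e^{-\Omega(d^{0.2})}$, and that ball meets at most one $B_{i}$, say $B_{i_0}$. Set
\[
A=E\cap\bigcup_{i\neq i_0}B_i .
\]
Then $p(A)\leq \pi_1 e^{-\Omega(d^{0.2})}+m\,e^{-\Omega(d)}$. On the other side, each true component $i$ puts mass $1-e^{-\Omega(d^{0.2})}$ on $B_i\cap E$. This gives $p^*(A)\geq \sum_{i\neq i_0}\pi_i^*-e^{-\Omega(d^{0.2})}\geq 1-\pi_1^*-e^{-\Omega(d^{0.2})}$, where the last step uses that $\pi_1^*$ is the largest weight.

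\textbf{Step 3: conclude.} Combining the two bounds gives $\mathrm{TV}(p^*,p)\geq 1-\pi_1^*-\delta_d$ with $\delta_d=e^{-\Omega(d^{0.2})}$. Pinsker's inequality and the dimension condition then finish the argument as explained in the first paragraph.

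I expect Step~1 to be the main obstacle, specifically making the far-component estimate uniform in the unknown locations $\mu_j$ (the two regimes: $\|\mu_j-\mu_i^*\|$ near $\sqrt d/2$, and $\|\mu_j-\mu_i^*\|$ very large). I would handle it with a single Gaussian tail bound on the linear term plus a $\chi^2$ lower-tail bound, choosing the slack $d^{0.6}$ so that the gap $0.25d-d^{0.6}$ dominates both. The radius $d^{0.1}$ in Step~2 only needs to lie between $\sqrt{\log}$-scale Gaussian fluctuations and half the separation $d^{0.5-\epsilon}/2$, so it is not delicate.
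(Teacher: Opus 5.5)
Your proof is correct, but it handles the components $j\geq2$ differently from the paper.

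\textbf{The paper's route.} The paper works entirely in the fixed subspace $\calP_0$ spanned by the true means and the initial fitted means. This subspace has dimension $d_0\leq m+n$ and contains every iterate. It uses a single test set: the union of $\calP_0$-balls of radius $\Delta/5$, with $\Delta=d^{0.5-\epsilon}$, around all true means except the one (if any) lying within $\Delta/2.1$ of $\mu_1$. One low-dimensional ball-entry estimate then bounds every fitted component's mass on this set. For $j\geq2$ the margin is of order $\sqrt d$, giving $e^{-d/100}$; for $\mu_1$ it is a constant multiple of $\Delta$.

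\textbf{Your route.} You remove the components $j\geq2$ by full-dimensional norm concentration. The true components concentrate on the typical ball $\|x-\mu_i^*\|^2\leq d+d^{0.6}$. Under $\calN(\mu_j,I_d)$, the value of $\|x-\mu_i^*\|^2$ is instead about $d+\|\mu_j-\mu_i^*\|^2\geq1.25d$. You use the $(m-1)$-dimensional projection onto $\calP$ only to show that component~$1$ covers at most one cluster.

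\textbf{What each buys.} The paper's argument is shorter and uses one kind of estimate. However, it implicitly needs the dynamical fact that all fitted means lie in $\calP_0$, since the ball-entry bound uses $P_0\mu_j=\mu_j$. Your argument needs no such subspace, so it proves the lemma as literally stated for arbitrary fitted means. The price is the $\chi^2$-plus-linear-term tail computation, which is uniform over $\|\mu_j-\mu_i^*\|\geq\sqrt d/2$ as you note: the linear term has standard deviation $2\|\mu_j-\mu_i^*\|$, against a margin of order $\|\mu_j-\mu_i^*\|^2$. You also get a weaker but still sufficient error of $e^{-\Omega(d^{0.2})}$.
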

Notice that during Stage 1 and Stage 2, we also always have $\|\mu_j-\mu_i^*\|\geq \sqrt{d}/2$ for $j\geq2$ and $i\in[m]$. This completes the proof of \cref{thm:main}.

\section{Experiments}
\label{sec:experiments}
In this section, we present several numerical simulations that illustrate the behavior of our updates and support our results. 

In all experiments, we use the EM update for the mixing weights $\pi_i$ and the gradient EM update for the means $\mu_i$, both computed on fresh samples at each iteration. 
For the experiments shown here, the learning rate is set to $0.05$ and the batch size is $8192$. We train the student model for 2000 steps.
We consider a mixture model with two ground-truth components,
\[
\pi_1^* = 0.3,\qquad 
\pi_2^* = 0.7,\qquad
\mu_1^*=(3,0,0,\ldots),\qquad 
\mu_2^*=( -3,0,0,\ldots).
\]
The initialization of $(\pi_i, \mu_i)$ follows Algorithm~\ref{alg:em_update}. We use $n = 10$ components to fit the two ground-truth components.

We examine two cases with different dimension $d$, using seed 0 in both:
\begin{itemize}
    \item $d=10$: a low-dimensional setting in which the algorithm successfully fits both ground-truth components.
    \item $d=500$: a high-dimensional setting in which most learned means fail to approach the 1D signal subspace, resulting in an \emph{underfitting regime}.
\end{itemize}

Figure~\ref{fig:traj} shows the trajectories of all learned $\mu_i$'s under the 2D projection of
\[
x_{i}(t) = \mu_{i,1}^{(t)},\qquad 
y_{i}(t) = \big\|\mu_{i,2:d}^{(t)}\big\|_2.
\]
The ground-truth means appear as red stars on the horizontal axis ($y=0$), and the evolution of each learned component is plotted as a curve.

The empirical results are consistent with our theoretical analysis.
When the dimension $d$ is small, learning the ground-truth distribution is relatively easy.
In the $d=500$ run shown, a single fitted component receives essentially all the weight and its mean moves between $\mu_1^*$ and $\mu_2^*$, demonstrating the difficulty of learning Gaussian mixtures when the separation is small relative to the dimension.
Additional experiments across three random seeds and with dimension-dependent separation at $d=10^6$ are reported in Appendix~\ref{app:simulation-additional}. The code used to reproduce the experiments in this paper is publicly available at \url{https://github.com/Ryanzhang887/EM_Separation}.

\begin{figure}[t]
    \centering
    \includegraphics[width=\linewidth]{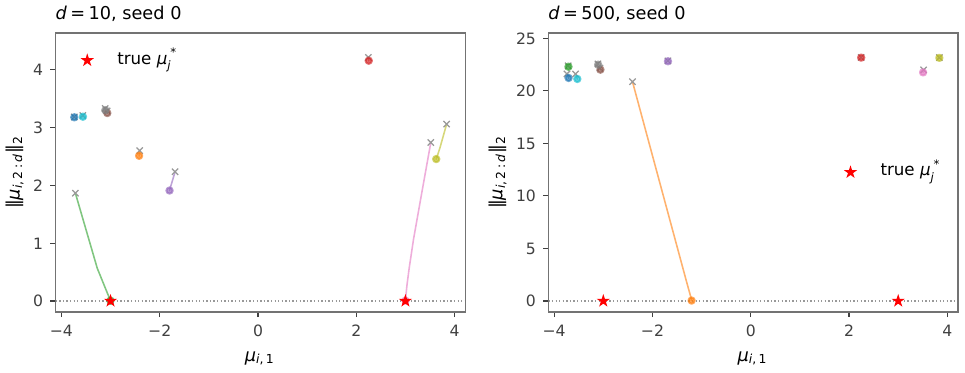}
    \caption{Trajectories of the learned means $\mu_i$ for $d=10,500$, generated by the released code with seed 0. \textbf{Left:} $d=10$. Both ground-truth components are fit by some $\mu_i$.
    \textbf{Right:} $d=500$. Only one component converges toward the signal axis, while the remaining ones stay far away, demonstrating underfitting in this run. Gray crosses mark initialization, colored dots mark final means, and red stars mark true means. All three seeds are shown in Appendix~\ref{app:simulation-additional}.}
    \label{fig:traj}
\end{figure}

\section{Conclusion}
\label{sec:conclusion}
In this paper, we study the ground-truth separation required for global convergence of gradient EM in the high-dimensional regime. Our results show that $\Omega(\sqrt{d})$ separation is nearly optimal when the dimension $d$ is very large. Our analysis of the training dynamics characterizes how the parameters converge to the ground truth. However, our work focuses on the regime where the number of components $n$ is much smaller than the dimension $d$. One natural direction for future work is to investigate whether heavier over-parameterization, with $n$ comparable to or larger than $d$, can enable global convergence under weaker separation. Another direction is to extend our analysis to the classical EM algorithm. Finally, we believe that balanced mixtures, in which all ground-truth mixing weights are equal, may exhibit a different separation threshold; understanding this setting remains an open problem.

\section*{Acknowledgements}

SSD was supported by NSF DMS 2134106, NSF IIS 2143493, NSF IIS 2229881, Sloan Fellowship, and the AI2050 program at Schmidt Sciences.
MF, MZ, and WX acknowledge the support of NSF TRIPODS II DMS 2023166. The work of MF was also supported by awards NSF CCF 2212261 and NSF CCF 2312775 and the Moorthy Family Professorship at UW.

\bibliographystyle{plainnat}
\bibliography{references}

% \section*{References}

% References follow the acknowledgments in the camera-ready paper. Use unnumbered first-level heading for
% the references. Any choice of citation style is acceptable as long as you are
% consistent. It is permissible to reduce the font size to \verb+small+ (9 point)
% when listing the references.
% Note that the Reference section does not count towards the page limit.
% \medskip

% {
% \small

% [1] Alexander, J.A.\ \& Mozer, M.C.\ (1995) Template-based algorithms for
% connectionist rule extraction. In G.\ Tesauro, D.S.\ Touretzky and T.K.\ Leen
% (eds.), {\it Advances in Neural Information Processing Systems 7},
% pp.\ 609--616. Cambridge, MA: MIT Press.

% [2] Bower, J.M.\ \& Beeman, D.\ (1995) {\it The Book of GENESIS: Exploring
%   Realistic Neural Models with the GEneral NEural SImulation System.}  New York:
% TELOS/Springer--Verlag.

% [3] Hasselmo, M.E., Schnell, E.\ \& Barkai, E.\ (1995) Dynamics of learning and
% recall at excitatory recurrent synapses and cholinergic modulation in rat
% hippocampal region CA3. {\it Journal of Neuroscience} {\bf 15}(7):5249-5262.
% }

%%%%%%%%%%%%%%%%%%%%%%%%%%%%%%%%%%%%%%%%%%%%%%%%%%%%%%%%%%%%

\appendix

\section{Further Preliminaries}
\begin{lemma}[Chi distribution]\label{lem:chi-density}
Let $X\sim\mathcal N(0,I_d)$ and $R:=\|X\|$. Then $R\sim\chi_d$ with density
\[
f_d(r)=\frac{1}{2^{\frac d2-1}\Gamma(\tfrac d2)}\,r^{d-1}e^{-r^2/2},\qquad r\ge 0.
\]
Moreover, for all $d\geq 2$,
\[
\|f_d\|_\infty \leq 1.
\]
\end{lemma}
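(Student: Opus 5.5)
The density formula is the standard change of variables to polar coordinates, so I would dispatch it first. Since $X\sim\calN(0,I_d)$ has density $(2\pi)^{-d/2}e^{-\|x\|^2/2}$, which depends only on $\|x\|$, I would integrate over the sphere of radius $r$. This gives
\[
f_d(r)=(2\pi)^{-d/2}\,|S^{d-1}|\,r^{d-1}e^{-r^2/2},
\qquad |S^{d-1}|=\frac{2\pi^{d/2}}{\Gamma(d/2)}.
\]
Simplifying yields
\[
f_d(r)=\frac{2}{2^{d/2}\Gamma(d/2)}\,r^{d-1}e^{-r^2/2},
\]
which is the stated expression.

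For the sup bound, I would maximize $r^{d-1}e^{-r^2/2}$ on $r\geq0$. Setting the log-derivative $(d-1)/r-r$ to zero gives the maximizer $r_*=\sqrt{d-1}$. Hence
\[
\|f_d\|_\infty=\frac{(d-1)^{(d-1)/2}e^{-(d-1)/2}}{2^{d/2-1}\Gamma(d/2)}.
\]
It then suffices to show this is at most $1$ for every $d\geq2$.

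\textbf{Small cases.} I would check $d=2$ and $d=3$ directly. For $d=2$, $f_2(r)=re^{-r^2/2}$ has maximum $e^{-1/2}<1$. For $d=3$, $f_3(r)=\sqrt{2/\pi}\,r^2e^{-r^2/2}$ has maximum $\sqrt{2/\pi}\cdot 2e^{-1}\approx0.587$.

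\textbf{General $d$.} Here I would use the Stirling-type lower bound $\Gamma(x)\geq\sqrt{2\pi}\,x^{x-1/2}e^{-x}$ for $x>0$, applied with $x=d/2$. Writing $k=d-1$, the bound becomes
\[
\|f_d\|_\infty\leq\frac{k^{k/2}e^{-k/2}\,e^{d/2}}{2^{d/2-1}\sqrt{2\pi}\,(d/2)^{(d-1)/2}}
=\frac{2\sqrt e}{\sqrt{2\pi}}\cdot\frac{k^{k/2}\,2^{(d-1)/2}}{2^{d/2}\,d^{k/2}}.
\]
Collecting the factors, this equals
\[
\sqrt{\frac{e}{\pi}}\left(1-\frac1d\right)^{k/2}.
\]
Using $(1-1/d)^{(d-1)/2}\leq e^{-(d-1)/(2d)}$, which is at most $e^{-1/4}$ for $d\geq2$, the whole quantity is at most $\sqrt{e/\pi}\,e^{-1/4}\approx0.72<1$.

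The main obstacle is just keeping the constants straight in the Stirling bound, and citing a correct version of it valid for all real $x>0$, including half-integers. A safe option is $\Gamma(x+1)\geq\sqrt{2\pi x}\,(x/e)^x$; if needed, I would apply it with $x=d/2-1$ and treat $d=2,3$ separately as above.
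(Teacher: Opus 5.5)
Your proposal is correct and follows essentially the same route as the paper: you maximize at $r=\sqrt{d-1}$, apply $\Gamma(x)\geq\sqrt{2\pi}\,x^{x-1/2}e^{-x}$ at $x=d/2$, and reduce to $\sqrt{e/\pi}\,(1-1/d)^{(d-1)/2}$. The differences are cosmetic: you derive the density via polar coordinates where the paper cites it, and you sharpen the last step with $e^{-1/4}$ where the paper uses $(1-1/d)^{(d-1)/2}\leq1$ and $\sqrt{e/\pi}<1$. Since that Gamma bound holds for all real $x>0$, your separate checks of $d=2,3$ and the fallback are unnecessary.
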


\begin{proof}
The density $f_d$ is Chi distribution (see, e.g., \cite{johnson1995continuous}). 

We first maximize $f_d(r)$ over $r\ge0$. Since $f_d(r)>0$ for $r>0$, we equivalently maximize
\[
\log f_d(r)=(d-1)\log r-\frac{r^2}{2}+C_d,
\]
where $C_d$ is a constant independent of $r$. The differentiate of it corresponding to $r$ is:
\[
(\log f_d(r))^\prime=\frac{d-1}{r}-r,
\]
so $f_d(r)$ is maximized at $r=\sqrt{d-1}$. We have
\[
f_d(\sqrt{d-1})=\frac{(d-1)^{\frac{d-1}{2}}e^{-\frac{d-1}{2}}}{2^{\frac d2-1}\Gamma(\frac d2)}.
\]
A standard Gamma function lower bound (see, e.g., section 5.6 in \cite{olver2010nist}) shows
\[
\Gamma(x)>\sqrt{2\pi}\cdot x^{x-0.5}e^{-x}.
\]
Therefore,
\begin{align*}
\|f_d\|_{\infty}&=f_d(\sqrt{d-1})=\frac{(d-1)^{\frac{d-1}{2}}e^{-\frac{d-1}{2}}}{2^{\frac d2-1}\Gamma(\frac d2)}\\
&\leq \frac{\sqrt{e}}{\sqrt{\pi}}\left(1-\frac{1}{d}\right)^{\frac{d-1}{2}}\leq 1.
\end{align*}
\end{proof}

We also need the following Mill's inequality, which can be found in \cite{wasserman2013all}.
\begin{lemma}[Mill's Inequality]
\label{lm:mill}
For any $t>0$, we have
\[
\Pr_{x\sim\calN(0,1)}[x\geq t]\leq \frac{1}{t}\exp(-t^2/2).
\]
\end{lemma}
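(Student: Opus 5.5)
The statement immediately above is Mill's inequality (\cref{lm:mill}): for $t>0$, $\Pr_{x\sim\calN(0,1)}[x\geq t]\leq \frac{1}{t}\exp(-t^2/2)$. My plan is to write the tail probability as an integral of the standard normal density and bound it by inserting the factor $x/t\geq 1$ on the domain of integration. Concretely,
\[
\Pr[x\geq t]=\frac{1}{\sqrt{2\pi}}\int_t^\infty e^{-x^2/2}\,dx\leq \frac{1}{\sqrt{2\pi}}\int_t^\infty \frac{x}{t}e^{-x^2/2}\,dx .
\]
The right-hand integrand has the explicit antiderivative $-\frac{1}{t}e^{-x^2/2}$. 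Evaluating from $t$ to $\infty$ gives the bound $\frac{1}{\sqrt{2\pi}\,t}e^{-t^2/2}$.

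The final step is to note that $\frac{1}{\sqrt{2\pi}}<1$. The sharper bound therefore implies the stated one, $\frac{1}{t}\exp(-t^2/2)$.

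There is no real obstacle here. The only point needing care is the restriction $t>0$. It guarantees that $x/t\geq 1$ holds throughout $[t,\infty)$, and that dividing by $t$ is legitimate. Alternatively, one can simply cite \cite{wasserman2013all}, as the paper does.
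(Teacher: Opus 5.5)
Your proof is correct: inserting the factor $x/t\geq 1$ on $[t,\infty)$ and integrating gives the bound $\frac{1}{\sqrt{2\pi}\,t}e^{-t^2/2}$, which is stronger than the stated one. The paper gives no proof of its own and simply cites Wasserman's textbook, so your standard derivation is a valid self-contained justification of what the paper takes from that reference.
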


The following Gaussian tail bound is a direct consequence of Lemma 1 in \cite{laurent2000adaptive}.

\begin{lemma}[Gaussian Tail Bound]
\label{lm:gausstail}
Let $d$ be a positive integer. For any positive real number $t>0$, we have
\[
\Pr_{x\sim\calN(\boldsymbol{0},I_d)}\left[\Vert x\Vert_2\geq t+\sqrt{d}\right]\leq \exp\left(-t^2/2\right).
\]
Moreover, for any $0\leq t\leq \frac{1}{2}\sqrt{d}$, we have
\[
\Pr_{x\sim\calN(\boldsymbol{0},I_d)}\left[\Vert x\Vert_2\leq \sqrt{d}-t\right]\leq \exp\left(-t^2/2\right).
\]
\end{lemma}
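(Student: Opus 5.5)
The statement just before the paper breaks off is the Gaussian tail bound (\cref{lm:gausstail}), so this plan targets that result. The tool is Lemma~1 of \cite{laurent2000adaptive}. For $Z=\|x\|^2\sim\chi^2_d$ and every $u>0$ it gives
\[
\Pr[Z-d\geq 2\sqrt{du}+2u]\leq e^{-u},\qquad \Pr[Z-d\leq -2\sqrt{du}]\leq e^{-u}.
\]
Both parts of the statement reduce to these two inequalities through a single substitution, $u=t^2/2$. After that it only remains to translate the events $\{\|x\|\geq\sqrt d+t\}$ and $\{\|x\|\leq\sqrt d-t\}$ into events about $Z$.

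\textbf{Upper tail.} Put $u=t^2/2$. Then
\[
d+2\sqrt{du}+2u=d+\sqrt2\,t\sqrt d+t^2\leq d+2t\sqrt d+t^2=(\sqrt d+t)^2 .
\]
Hence $\{\|x\|\geq\sqrt d+t\}\subseteq\{Z\geq d+2\sqrt{du}+2u\}$, and the first inequality gives probability at most $e^{-t^2/2}$. This holds for every $t>0$, as claimed.

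\textbf{Lower tail.} Here $0\leq t\leq\sqrt d/2$, so $\sqrt d-t\geq 0$ and squaring the event is legitimate:
\[
\{\|x\|\leq\sqrt d-t\}=\{Z\leq d-2t\sqrt d+t^2\}.
\]
I want to compare this threshold with $d-2\sqrt{du}$ for some $u\geq t^2/2$. The requirement is $2\sqrt{du}\leq 2t\sqrt d-t^2$, that is, $\sqrt u\leq t-t^2/(2\sqrt d)$. The restriction $t\leq\sqrt d/2$ gives $t^2/(2\sqrt d)\leq t/4$, so the right-hand side is at least $3t/4$. Taking $\sqrt u=3t/4$, i.e.\ $u=9t^2/16\geq t^2/2$, we get
\[
\Pr[\|x\|\leq\sqrt d-t]\leq e^{-9t^2/16}\leq e^{-t^2/2}.
\]
The case $t=0$ is trivial.

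\textbf{Main obstacle.} The only step needing care is the lower-tail comparison. The cross term $2t\sqrt d$ has to absorb $t^2$ while still leaving a constant at least $1/\sqrt2$ in front of $t$; the condition $t\leq\frac12\sqrt d$ provides exactly that room, with slack. Everything else is a direct substitution into the cited lemma.
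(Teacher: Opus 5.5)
Your proposal is correct and takes the same route as the paper, which gives no written proof and simply calls the bound a direct consequence of Lemma~1 of \cite{laurent2000adaptive}. Your two substitutions supply the computation that citation leaves implicit: $u=t^2/2$ for the upper tail, and $u=9t^2/16$ for the lower tail, the latter using exactly $t\le\sqrt d/2$.
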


\begin{lemma}\label{lemma:chernoff}
\emph{(Chernoff bounds (see, e.g., \cite{mitzenmacher2005probability})).}
Let $X_1,\dots,X_n$ be independent Bernoulli variables with $\E[X_i]=p_i$. 
Let $X=\sum_{i=1}^nX_i$ and $\mu=\E[X]=\sum_{i=1}^np_i$. 
For $\nu\geq6\mu$, one has
\(
\Pr[X\geq\nu]\leq 2^{-\nu}
\).
\end{lemma}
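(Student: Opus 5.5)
This is the standard multiplicative Chernoff bound, so I would prove it by the usual exponential-moment method. Fix $\lambda>0$. Markov's inequality applied to $e^{\lambda X}$ gives
\[
\Pr[X\geq\nu]\leq e^{-\lambda\nu}\,\E[e^{\lambda X}].
\]
By independence, $\E[e^{\lambda X}]=\prod_i \E[e^{\lambda X_i}]=\prod_i\bigl(1+p_i(e^\lambda-1)\bigr)$. Using $1+y\leq e^y$ this is at most $\exp\bigl(\mu(e^\lambda-1)\bigr)$. Hence
\[
\Pr[X\geq\nu]\leq \exp\bigl(\mu(e^\lambda-1)-\lambda\nu\bigr).
\]

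Next I would choose $\lambda$. Writing $\nu=(1+\delta)\mu$, the optimal choice is $\lambda=\log(1+\delta)=\log(\nu/\mu)$, which requires $\nu>\mu$ and is guaranteed by $\nu\geq 6\mu$. Substituting gives the standard form
\[
\Pr[X\geq\nu]\leq \left(\frac{e^{\delta}}{(1+\delta)^{1+\delta}}\right)^{\mu}\leq \left(\frac{e}{1+\delta}\right)^{(1+\delta)\mu}=\left(\frac{e\mu}{\nu}\right)^{\nu}.
\]
The middle inequality just drops the factor $e^{-\mu}\leq 1$.

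Finally, since $\nu\geq 6\mu$ we have $e\mu/\nu\leq e/6<1/2$, so the bound is at most $2^{-\nu}$.

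There are two degenerate cases. If $\mu=0$, then $X=0$ almost surely. In that case either $\nu>0$ and the probability is $0$, or $\nu=0$ and the claim reads $1\leq 1$. If $\nu\leq 0$ with $\mu>0$, this is impossible because $\nu\geq 6\mu>0$.

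There is no real obstacle here. The only point needing care is justifying $\lambda>0$ and the step $(1+\delta)^{-(1+\delta)\mu}e^{\delta\mu}\leq (e/(1+\delta))^{(1+\delta)\mu}$, both of which are elementary. Alternatively, the result can simply be cited from \cite{mitzenmacher2005probability}, Theorem 4.4(3).
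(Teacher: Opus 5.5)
Your proof is correct. It is the standard exponential-moment argument from the cited source (Mitzenmacher--Upfal, Theorem~4.4(3)): bound $\bigl(e^{\delta}/(1+\delta)^{1+\delta}\bigr)^{\mu}$ by $(e/6)^{\nu}\leq 2^{-\nu}$. The paper states the lemma with that citation and gives no proof of its own; your separate treatment of the degenerate case $\mu=0$ is a small detail the citation leaves implicit.
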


\section{Proof of \texorpdfstring{\cref{thm:main}}{the negative theorem}}
\label{sec:thm1proof}

Throughout this appendix, we assume $m\geq3$, relabel the true components so that
$\pi_1^*>\max_{2\leq i\leq m}\pi_i^*$, and assume
$n\geq 3(\pimin^*)^{-1}\log m$. We also impose the dimension condition
\cref{eq:standing-dimension}, with $0<\epsilon\leq0.01$.
The initialization and Stage~1 arguments below apply to arbitrary true means satisfying
\[
\min_{i\neq j}\|\mu_i^*-\mu_j^*\|\geq d^{0.5-\epsilon}.
\]
There is no upper bound on their diameter. Translate their affine span to contain the origin,
and choose a fixed $(m-1)$-dimensional linear subspace $\calP$ containing it.
Equal pairwise separation is imposed only in Stages~2 and~3 of the negative construction.

\subsection{Initialization Analysis}
\label{subsec:init}
In this section, we prove \cref{thm:init}.

\begin{proof}
We verify the five items in \cref{thm:init} on a common high-probability event.

For item~1, a particular true component $i$ receives no initialized fitted center with probability
$(1-\pi_i^*)^n$. Since $1-u\leq e^{-u}$ and
$n\pi_i^*\geq n\pimin^*\geq3\log m$,
\[
\Pr[|S_i|=0]=(1-\pi_i^*)^n\leq e^{-n\pi_i^*}\leq m^{-3}.
\]
A union bound over the $m$ true components gives failure probability at most $m^{-2}$.

We next prove items~2 and~4. Conditional on the source labels of the initialization, the vectors
$\mu_j-\parmu_j$, $1\leq j\leq n$, are independent and distributed as
$\calN(0,I_{d-m+1})$. For each pair $1\leq k<j\leq n$, \cref{lem:chi-density} gives
\[
\left|\|\mu_j-\parmu_j\|-\|\mu_k-\parmu_k\|\right|\leq \frac{1}{n^3}
\]
with probability at most $2/n^3$. There are at most $n^2/2$ pairs, so the probability that any two
orthogonal norms differ by at most $1/n^3$ is at most $1/n$. On the complementary event, each group
$S_i$ has a unique smallest orthogonal norm; relabeling the fitted centers inside each group makes
that minimizer index $i$ and proves item~2.

For item~4, \cref{lm:gausstail} gives, for each $j$,
\[
\Pr\left[\Big|\|\mu_j-\parmu_j\|-\sqrt{d-m+1}\Big|>n/2\right]\leq 2\exp(-n^2/8)\leq 1/n^3.
\]
Because $m\leq n$ and
\[
0\leq\sqrt d-\sqrt{d-m+1}
=\frac{m-1}{\sqrt d+\sqrt{d-m+1}}\leq n/2,
\]
the displayed concentration event implies
$\sqrt d-n\leq\|\mu_j-\parmu_j\|\leq\sqrt d+n$. A union bound over $j$ therefore proves item~4
except on an event of probability at most $2n\exp(-n^2/8)$.

For item~3, if $j\in S_i$, then $\parmu_j-\mu_i^*$ is a standard Gaussian vector on the
$(m-1)$-dimensional subspace $\calP$. Hence \cref{lm:gausstail} gives
\[
\Pr\!\left[\|\parmu_j-\mu_i^*\|>n\right]
\leq
\exp\left(-(n-\sqrt{m-1})^2/2\right)\leq \exp(-(0.75n)^2/2)\leq \exp(-n^2/4).
\]
There are $n$ initialized centers, so item~3 fails with probability at most
$n\exp(-n^2/4)$.

It remains to prove item~5. For this construction only, use the original sampling order,
before the data-dependent relabeling that selects the representatives. Process the vectors $\mu_i-\parmu_i$ in increasing order of $i$ and set
\[
\calP_i:=\operatorname{span}\{\mu_1-\parmu_1,\ldots,\mu_{i-1}-\parmu_{i-1}\},
\qquad 1\leq i\leq n+1.
\]
Every $\calP_i$ is contained in $\calP^\bot$. Let $\mu_i'$ be the orthogonal projection of
$\mu_i-\parmu_i$ onto $\calP_i$. Conditional on the preceding vectors, $\mu_i'$ is a standard
Gaussian vector on a space of dimension at most $i-1\leq n$. Thus \cref{lm:gausstail} implies
\[
\Pr[\|\mu_i'\|>n]\leq \exp(-(n-\sqrt{n})^2/2)\leq \exp(-n^2/5).
\]
On the event that $\|\mu_i'\|\leq n$ for every $i$, define
\[
\botmu_i:=\mu_i-\parmu_i-\mu_i'.
\]
By construction, $\botmu_i\perp(\calP+\calP_i)$, while
$\mu_i-\botmu_i=\parmu_i+\mu_i'\in\calP+\calP_i$. Therefore
$(\mu_i-\botmu_i)\bot\botmu_i$.
Moreover,
$\|\mu_i-\botmu_i-\parmu_i\|=\|\mu_i'\|\leq n$.
Finally, $\botmu_i$ is a linear combination of
$\mu_1-\parmu_1,\ldots,\mu_i-\parmu_i$, and hence belongs to $\calP_{i+1}$. If $i<j$, then
$\botmu_i\in\calP_j$ whereas $\botmu_j\perp\calP_j$, proving
$\botmu_i\perp\botmu_j$. This establishes all three parts of item~5 in the original sampling order.
Applying the same permutation to the fitted means, their projections, and these constructed
vectors preserves all three conclusions. A union bound over the original indices gives
failure probability at most $n\exp(-n^2/5)$.

Combining the preceding estimates, the total failure probability is at most
\[
m^{-2}+n^{-1}+2n\exp(-n^2/8)+n\exp(-n^2/4)+n\exp(-n^2/5)\leq \frac1m.
\]
Here the last inequality uses $\pimin^*\leq1/m$, and hence
$n\geq3m\log m$, together with $m\geq3$.

\end{proof}

\subsection{Stage 1 Analysis}
\label{subsec:stage1}
We reset the iteration counter to $t=0$ at the start of Stage~1 and write
\[
T_1:=\left\lceil1000n^{10}d^{\epsilon-0.5}/\eta\right\rceil.
\]
When analyzing a fixed update from time $t$ to time $t+1$, an unprimed parameter denotes its
time-$t$ value and a primed parameter denotes its time-$(t+1)$ value. We restore superscripts when
more than one time is present in the same calculation.

Recall that
\[
\nabla_{\mu_i}\calL=\sum_j \pi_j^*\E_{x\sim j}\left[\psi_i(x)\sum_k \psi_k(x)(\mu_k-\mu_j^*)\right].
\]
Without loss of generality, after the relabeling in \cref{thm:init}, for each $1\leq i\leq m$ we
have $i=\arg\min_{j\in S_i}\|\mu_j-\parmu_j\|$.
For clarity about the time-dependent notation, let $u_i$ be the unit vector in the direction of the
initial $\botmu_i$. At later iterations we use
$\botmu_i:=\langle\mu_i,u_i\rangle u_i$ and let $\parmu_i$ denote the orthogonal projection of
$\mu_i$ onto $\calP$. The directions $u_i$ are fixed and pairwise orthogonal. The remaining component
$\mu_i-\parmu_i-\botmu_i$ is orthogonal to $\botmu_i$.

\begin{lemma}
\label{lm:stage1}
For every integer $0\leq t\leq T_1$, the following invariants hold at time $t$:

\begin{itemize}
\item For $1\leq i\leq m$, $\pi_i=\max_{j\in S_i}\pi_j$. Moreover, $\pi_i>\pi_i^*/n$.
\item For each $1\leq i\leq m$, and any $k\in S_i,k\neq i$, $\|\mu_k-\parmu_k\|\geq \|\mu_i-\parmu_i\|+\frac{1}{n^3}$.
\item For each $1\leq i\leq m,j\in S_i$, $\|\parmu_j-\mu_i^*\|\leq n+2n\eta t\leq 2n$.
\item For each $1\leq j\leq n$, $\sqrt{d}-2d^{2\epsilon}\leq \|\mu_j-\parmu_j\|\leq \sqrt{d}+n+2n\eta t\leq \sqrt{d}+2n$.
\item For each $1\leq i\leq n$, $\botmu_i\bot\calP$ satisfy that 
\begin{enumerate}
\item $(\mu_i-\botmu_i)\bot\botmu_i$.
\item $\|\mu_i-\botmu_i-\parmu_i\|\leq n+2n\eta t\leq 2n$.
\item $\botmu_i,1\leq i\leq n$ are pairwise orthogonal.
\end{enumerate}
\end{itemize}

Moreover, at $t=T_1$, the following stronger conclusions hold:
\begin{itemize}
\item For $1\leq i\leq m$, $\pi_i>\pi_i^*/n$.
\item For any $i\in [m],j\in S_i$ such that $i\neq j$, $\|\mu_i-\parmu_i\|\leq \|\mu_j-\parmu_j\|-d^{\epsilon}$;
\item For each $1\leq i\leq m,j\in S_i$, $\|\parmu_j-\mu_i^*\|\leq 2n$;
\item For each $1\leq j\leq n$, $\sqrt{d}-2d^{2\epsilon}\leq \|\mu_j-\parmu_j\|\leq \sqrt{d}+2n$.
\end{itemize}
\end{lemma}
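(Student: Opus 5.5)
We prove the invariants by induction on $t$, with base case $t=0$ given by \cref{thm:init}; the initial weights $\pi_j=1/n$ trivially satisfy the first item since $\pi_i^*\leq1$. Assuming all invariants hold at time $t<T_1$, we first derive responsibility estimates and then propagate each invariant to $t+1$.

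\textbf{Responsibility estimates.} For $x\sim i$, write $x=\mu_i^*+z$ and $\|x-\mu_k\|^2=\|x-\parmu_k-\botmu_k\|^2+O(\text{cross terms})$. By the invariants, $\|\botmu_k\|=\sqrt d\pm O(d^{2\epsilon})$, the $\botmu_k$ are orthogonal, and $\langle z,u_k\rangle$ are independent standard normals. Hence
\[
\|x-\mu_k\|^2 = \|z\|^2+\|\mu_k-\parmu_k\|^2 - 2\|\botmu_k\|\langle z,u_k\rangle+\|\parmu_k-\mu_i^*\|^2+O(n\|z_{\calP}\|+n^2),
\]
so $\psi_k(x)$ for $k\in S_i$ is, up to controlled factors, a softmax of $\log\pi_k-\tfrac12\|\mu_k-\parmu_k\|^2+\sqrt d\,g_k$ with i.i.d.\ Gaussians $g_k$. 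For $k\in S_l$ with $l\neq i$, the term $\|\parmu_k-\mu_i^*\|^2\geq (d^{0.5-\epsilon}-2n)^2$ makes $\psi_k$ tiny, of order $\exp(-d^{1-2\epsilon}/3)$, except on a Gaussian tail event. Two consequences follow. First, cross terms in \cref{lm:gradient} from foreign components, and from $x\sim l\neq i$, are negligible. Second, within $S_i$ the smaller orthogonal norm combined with the larger weight gives $\E_{x\sim i}[\psi_i^2]\geq\E_{x\sim i}[\psi_k^2]+\mathrm{poly}(1/n)$. Proving this second fact is the main obstacle. Each normalized gap $\sqrt d\,g$ has spread of order $\sqrt d$, while the norm gap is only about $1/n^3$, and the offsets $\tfrac12(\|\mu_k-\parmu_k\|^2-\|\mu_i-\parmu_i\|^2)\approx \sqrt d/n^3$ are on the same $\sqrt d$ scale. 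I would therefore express both expectations as functions of the Gaussian vector $(g_k)_{k\in S_i}$ and show that shifting one coordinate's mean down by $\geq 1/n^3$ (in $\sqrt d$ units), together with $\pi_i\geq\pi_k$, strictly decreases $\E[\psi_k^2]$ by a $\mathrm{poly}(1/n)$ amount, using a monotone coupling.

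\textbf{Propagating invariants.} With these estimates, $\nabla_{\mu_k}\calL=\pi_i^*\E_{x\sim i}[\psi_k^2](\mu_k-\mu_i^*)+\text{error}$, where the error contains cross terms $\psi_k\psi_{k'}(\mu_{k'}-\mu_i^*)$ within $S_i$. Projected onto $u_k$, these cross terms are tiny because $u_k\perp \mu_{k'}$ up to $O(n)$. The gradient has norm at most about $2\sqrt d$, so the update moves $\parmu_k$ by at most $2n\eta$ per step, giving items 3 and 5.2. Items 5.1 and 5.3 hold by the definition $\botmu_k=\langle\mu_k,u_k\rangle u_k$ with fixed orthogonal $u_k$. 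The orthogonal norm shrinks by a factor $1-\eta\pi_i^*\E[\psi_k^2]$. Over $T_1$ steps the total shrinkage is at most $1000n^{10}d^{\epsilon}$, far less than $2d^{2\epsilon}$, which gives item 4. For item 2, the contraction rate for $i$ beats that for $k$ by $\eta\pi_i^*\mathrm{poly}(1/n)$, so the gap grows by $\geq\eta\,\mathrm{poly}(1/n)\sqrt d$ per step. The gap is therefore preserved, and after $T_1$ steps it is $\geq 1000n^{10}d^{\epsilon}\cdot\mathrm{poly}(1/n)\geq d^{\epsilon}$, which is the strengthened conclusion. For the weights, $\pi_k'=\E[\psi_k]$, and the same comparison gives $\E[\psi_i]\geq\E[\psi_k]$ for $k\in S_i$. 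Since the representatives together carry at least about $\pi_i^*$ responsibility within $S_i$, we get $\pi_i'\geq\pi_i^*/|S_i|>\pi_i^*/n$. All remaining errors are controlled by the dimension condition \cref{eq:standing-dimension} and $\eta\leq 1/(n^{10}d^{0.6})$.
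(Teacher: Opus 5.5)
Your architecture matches the paper's. You induct from \cref{thm:init}, suppress foreign components, take $\pi_i^*\E_{x\sim i}[\psi_k^2](\mu_k-\mu_i^*)$ as the principal term, and let a second-moment gap drive differential contraction of the orthogonal radii. However, the within-group cross terms are not controlled.

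For $k\neq k'$ in $S_i$, $\nabla_{\mu_k}\calL$ contains $\pi_i^*\E_{x\sim i}[\psi_k\psi_{k'}](\mu_{k'}-\mu_i^*)$. Here $\mu_{k'}-\mu_i^*$ is dominated by $\botmu_{k'}$, which has length about $\sqrt d$ along $u_{k'}$. Projecting onto $u_k$ discards exactly this piece, but the piece does not disappear. It lies in $\calP^\perp\cap u_k^\perp$, so each step it shifts $\mu_k-\parmu_k-\botmu_k$ by roughly $\eta\pi_i^*\sqrt d\,\E_{x\sim i}[\psi_k\psi_{k'}]$, always in the direction $-u_{k'}$. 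That is precisely the quantity item 5.2 must keep below $n+2n\eta t$.

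To get the stated $2n\eta$-per-step bound you need $\sum_{k'}\E_{x\sim i}[\psi_k\psi_{k'}]\,\|\mu_{k'}-\mu_i^*\|=O(n)$, i.e.\ overlaps of order $1/\sqrt d$. Even the cruder bound $2n$ at time $T_1$ needs overlaps $\lesssim n^{-9}d^{-\epsilon}$, since $\eta T_1\sqrt d\approx1000n^{10}d^{\epsilon}$. You neither state nor prove such an estimate. Your sentence ``the gradient has norm at most about $2\sqrt d$, so the update moves $\parmu_k$ by at most $2n\eta$ per step, giving items 3 and 5.2'' does not follow: a $2\sqrt d$ bound gives $2\sqrt d\,\eta$ per step, and item 5.2 concerns $\mu_k-\parmu_k-\botmu_k$, not $\parmu_k$.

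The paper supplies the missing ingredient in \cref{lm:stage1:nocorrolation}. There, $\psi_k\psi_{k'}\le e^{-|Z|}$ with $Z$ Gaussian of standard deviation $\|\mu_k-\mu_{k'}\|$. Within a group $\|\mu_k-\mu_{k'}\|\geq\sqrt d$, because the $\botmu$'s are orthogonal with length about $\sqrt d$, so $\E[\psi_k\psi_{k'}]\le1/(1.1\sqrt d)$. Hence the full error $e_k=\nabla_{\mu_k}\calL-A_k(\mu_k-\mu_i^*)$ satisfies $\|e_k\|\le2n\pi_i^*$ in norm. Projecting $\mu_k'=(1-\eta A_k)\mu_k+\eta A_k\mu_i^*-\eta e_k$ onto $\calP$, onto $\calP^\perp$, and onto the complement of $\calP\oplus\operatorname{span}\{u_k\}$ then gives item 3, the upper half of item 4, and item 5.2 at once.

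The second-moment gap, which you correctly flag as the crux, is also only announced, and two things are missing.

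First, the coupling alone is not enough. A monotone (swap) coupling of the Gaussian coordinates of $i$ and $k$ gives only the weak inequality, and only in the idealized one-dimensional model. The true energies differ from the idealized ones by additive errors in the exponent much larger than $1$ (already $O(n^2)$ in your own expansion; the paper bounds them by $9d^{4\epsilon}$). So these are not ``controlled factors'' of the softmax. You need winning margins, as in the paper's $d^{5\epsilon}$-margin events in \cref{lem:stage1-2variable,lm:stage1:orthoprob,lm:stage1:psiclose}. You also need an explicit event of non-negligible probability on which $i$ and $k$ are the two leading competitors within the $\sqrt d/n^3$ window.

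Second, the size of the gain matters, because $T_1$ is fixed. If $g$ denotes the gap in $\E_{x\sim i}[\psi^2]$, the terminal radius gap is about $1000n^{10}d^{\epsilon}\pi_i^*g$. Reaching $d^\epsilon$ requires $g\gtrsim1/(1000n^{10}\pi_i^*)$, which is of order $n^{-9}$ when $\pi_i^*$ is near $3\log m/n$. The paper's $1/(300n^3s^6)\geq1/(300n^9)$ just suffices, using $n\pi_i^*>1$; an unspecified $\mathrm{poly}(1/n)$ does not deliver the strengthened conclusion.

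Finally, if you want the lemma for arbitrary separations at least $d^{0.5-\epsilon}$, as the paper later uses it, ``negligible'' for foreign terms also needs a distance-weighted bound such as \cref{lm:far-weighted}. In that setting $\|\mu_k-\mu_i^*\|$ is unbounded.
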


Define the fixed subspace
\[
\calP_0:=\operatorname{span}\bigl(\calP\cup
\{\mu_1^{(0)},\ldots,\mu_n^{(0)}\}\bigr).
\]
Every vector appearing in the gradient formula is a linear combination of the current fitted means
and the true means. It follows inductively that $\mu_i^{(t)}\in\calP_0$ for every $i$ and every
Stage~1 iteration $t$. The projections $(\parmu_i)^{(t)}$ and $(\botmu_i)^{(t)}$ also belong to
$\calP_0$. The densities share the factor
$\exp(-\|x-P_0x\|^2/2)$, where $P_0$ is the orthogonal projector onto $\calP_0$.
This factor cancels in every responsibility ratio. Thus the responsibilities, although not the
individual full-dimensional densities, depend on $x$ only through $P_0x$.

Let $d_0:=\dim(\calP_0)\leq m+n\leq2n$, and let $\bx$ denote the projection of $x$ onto
$\calP_0$. Then, for every $i,t$, and $x$,
\begin{equation}
\label{eq:stage1:projection-responsibility}
\psi_i^{(t)}(x)=\psi_i^{(t)}(\bx).
\end{equation}
Notice that $\bx\in\calP_0$ but need not belong to $\calP$; its component orthogonal to $\calP$
will be controlled using $\|\bx-\mu_i^*\|$.

\begin{lemma}
\label{lm:barx}
Let $x\sim \calN(\mu_i^*,I_d)$. With probability at least $1-\exp\left(-d^{4\epsilon}/3\right)$, we have
\[
\|\bx-\mu_i^*\|\leq d^{2\epsilon}.
\]
\end{lemma}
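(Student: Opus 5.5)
Since $\mu_i^*\in\calP\subseteq\calP_0$ (the translation putting the affine span of the true means into $\calP$ is assumed), the projection $P_0$ fixes $\mu_i^*$. Hence
\[
\bx-\mu_i^*=P_0x-P_0\mu_i^*=P_0(x-\mu_i^*).
\]
With $x\sim\calN(\mu_i^*,I_d)$, the vector $z:=x-\mu_i^*$ is standard Gaussian in $\R^d$. Because $\calP_0$ is a fixed subspace (determined by the initialization and the true means, and independent of the fresh sample $x$), $P_0z$ is a standard Gaussian vector on $\calP_0$. Thus $\|\bx-\mu_i^*\|$ has the law of the norm of a $\calN(0,I_{d_0})$ vector, with $d_0=\dim\calP_0\leq m+n\leq 2n$.

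Next I apply the first bound of \cref{lm:gausstail} in dimension $d_0$ with $t:=d^{2\epsilon}-\sqrt{d_0}$:
\[
\Pr\bigl[\|\bx-\mu_i^*\|\geq d^{2\epsilon}\bigr]\leq \exp\!\left(-\tfrac{(d^{2\epsilon}-\sqrt{d_0})^2}{2}\right).
\]
I then need $(d^{2\epsilon}-\sqrt{2n})^2/2\geq d^{4\epsilon}/3$, which holds whenever $\sqrt{2n}\leq(1-\sqrt{2/3})\,d^{2\epsilon}$, for instance when $\sqrt{2n}\leq d^{2\epsilon}/10$. The standing condition \cref{eq:standing-dimension} gives $d\geq C_0(100n/\rho)^{15/\epsilon}$, so $d^{\epsilon/15}\geq 100n$ and therefore $d^{2\epsilon}\geq(100n)^{30}$. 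This is far more than needed.

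The only subtlety is the edge case $d_0=0$ or a degenerate dimension, which is trivial. I also need to confirm that $\calP_0$ does not depend on $x$. It does not, because the responsibilities are population quantities and $x$ is an independent integration variable. I therefore expect no real obstacle; the main thing to check is the inclusion $\mu_i^*\in\calP_0$, which follows from the definition of $\calP_0\supseteq\calP$.
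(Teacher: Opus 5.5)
Your proposal is correct and follows the paper's proof essentially verbatim. Both arguments project onto the fixed subspace $\calP_0\ni\mu_i^*$ to obtain a standard Gaussian in $d_0\leq 2n$ dimensions, apply the first bound of \cref{lm:gausstail} with $t=d^{2\epsilon}-\sqrt{d_0}$, and then absorb $\sqrt{2n}$ via $\sqrt{2n}\leq(1-\sqrt{2/3})d^{2\epsilon}$, which follows from \cref{eq:standing-dimension}.
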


\begin{proof}
Since $\mu_i^*\in\calP_0$, projecting
$x\sim\calN(\mu_i^*,I_d)$ onto $\calP_0$ gives a standard Gaussian on this $d_0$-dimensional
subspace, centered at $\mu_i^*$. Thus \cref{lm:gausstail} gives
\[
\Pr_{\bx\sim \calN(\mu_i^*,I_{d_0})}\left[\|\bx-\mu_i^*\|>d^{2\epsilon}\right]\leq \exp\left(-\left(d^{2\epsilon}-\sqrt{d_0}\right)^2/2\right)\leq\exp\left(-d^{4\epsilon}/3\right).
\]
For the final inequality we used $d_0\leq2n$ and the consequence
$\sqrt{2n}\leq(1-\sqrt{2/3})d^{2\epsilon}$ of \cref{eq:standing-dimension}.
\end{proof}

\begin{lemma}
\label{lm:stage1close}
Fix $0\leq t\leq T_1$ and suppose that the invariants in \cref{lm:stage1} hold at time $t$.
If $\|\bx-\mu_i^*\|\leq d^{2\epsilon}$, then for every $j\in S_i$ and $k\notin S_i$,
\[
\|\bx-\mu_k\|^2-\|\bx-\mu_j\|^2\geq d^{1-2\epsilon}/2.
\]
\end{lemma}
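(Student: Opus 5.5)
We plan to decompose $\mu_j$ and $\mu_k$ into their components along $\calP$ and orthogonal to $\calP$, and then compare the two squared distances term by term. Let $j\in S_i$ and $k\in S_l$ with $l\neq i$. Write $\mu_j=\parmu_j+\botmu_j+r_j$, where $r_j:=\mu_j-\parmu_j-\botmu_j$ satisfies $\|r_j\|\leq 2n$ by the fifth invariant. Write $\mu_k$ in the same way. Expanding,
\[
\|\bx-\mu_k\|^2-\|\bx-\mu_j\|^2=\|\mu_k\|^2-\|\mu_j\|^2-2\langle \bx,\mu_k-\mu_j\rangle .
\]
It is cleaner to set $y:=\bx-\mu_i^*$, with $\|y\|\leq d^{2\epsilon}$, and write
\[
\|\bx-\mu_k\|^2-\|\bx-\mu_j\|^2=\|\mu_k-\mu_i^*\|^2-\|\mu_j-\mu_i^*\|^2-2\langle y,\mu_k-\mu_j\rangle .
\]

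\textbf{Main term.} Since $\mu_i^*\in\calP$, Pythagoras gives
\[
\|\mu_k-\mu_i^*\|^2=\|\parmu_k-\mu_i^*\|^2+\|\mu_k-\parmu_k\|^2,
\]
and the analogous identity holds for $j$. The third invariant gives $\|\parmu_j-\mu_i^*\|\leq 2n$ and $\|\parmu_k-\mu_l^*\|\leq 2n$. Hence
\[
\|\parmu_k-\mu_i^*\|\geq d^{0.5-\epsilon}-2n,
\]
and so the parallel contribution is at least $(d^{0.5-\epsilon}-2n)^2-4n^2\geq 0.9\,d^{1-2\epsilon}$. The fourth invariant bounds the difference of the orthogonal norms squared in absolute value by
\[
(\sqrt d+2n)^2-(\sqrt d-2d^{2\epsilon})^2\leq 8d^{0.5+2\epsilon}.
\]
This is $o(d^{1-2\epsilon})$ because $\epsilon\leq0.01$ gives $0.5+2\epsilon<1-2\epsilon$.

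\textbf{Cross term.} This is the step I expect to be the main obstacle. The crude bound $2\|y\|\,\|\mu_k-\mu_j\|\lesssim d^{2\epsilon}\cdot 2\sqrt d$ is harmless here, since $0.5+2\epsilon<1-2\epsilon$ again. To be careful, I would split $\mu_k-\mu_j$ into its part in $\calP$, of norm at most $d^{0.5-\epsilon}+4n$ if the instance is equilateral. For general separation this part can be large, so instead I bound $|\langle y,\parmu_k-\mu_i^*\rangle|\leq d^{2\epsilon}\|\parmu_k-\mu_i^*\|$. That is dominated by $\|\parmu_k-\mu_i^*\|^2/10$, because $\|\parmu_k-\mu_i^*\|\geq d^{0.5-\epsilon}/2\gg d^{2\epsilon}$. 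This handles arbitrarily large diameter, which matters since the lemma is also used without an upper bound on separation. The orthogonal part has norm at most $2\sqrt d+8n$, contributing at most $5d^{0.5+2\epsilon}$.

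\textbf{Conclusion.} Collecting terms, the difference is at least
\[
0.8\|\parmu_k-\mu_i^*\|^2-4n^2-O(d^{0.5+2\epsilon}).
\]
The dimension condition \cref{eq:standing-dimension} ensures $d^{1-4\epsilon}\gg d^{0.5}$ and $n\ll d^{\epsilon/15}$, so every error term is below $0.1\,d^{1-2\epsilon}$. This yields the bound $d^{1-2\epsilon}/2$.
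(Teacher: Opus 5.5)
Your proof is correct and follows essentially the same route as the paper. You split into $\calP$ and $\calP^\perp$ components, use the third invariant plus the separation lower bound to get a parallel gap of order $d^{1-2\epsilon}$, and use the fourth invariant to show that the orthogonal radii differ by only $O(d^{0.5+2\epsilon})$. The only difference is cosmetic: the paper applies triangle inequalities directly to $\|\bx_\parallel-\parmu_k\|$ and $\|P_\perp(\bx-\mu_k)\|$, so arbitrarily large separation needs no extra care, whereas you expand exactly around $\mu_i^*$ and absorb the cross term $2d^{2\epsilon}\|\parmu_k-\mu_i^*\|$ into $\|\parmu_k-\mu_i^*\|^2/10$, which works equally well.
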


\begin{proof}
All fitted parameters in this proof are evaluated at the fixed time $t$. Let $P_{\parallel}$ and
$P_{\bot}$ be the orthogonal projections onto $\calP$ and $\calP^\bot$, and set
$\bx_{\parallel}:=P_{\parallel}\bx$ and $\bx_{\bot}:=P_{\bot}\bx$. Since
$\mu_i^*\in\calP$ and $\|\bx-\mu_i^*\|\leq d^{2\epsilon}$,
\begin{equation}
\label{eq:stage1close:x-components}
\|\bx_{\parallel}-\mu_i^*\|\leq d^{2\epsilon},
\qquad
\|\bx_{\bot}\|\leq d^{2\epsilon}.
\end{equation}

Because the sets $S_1,\ldots,S_m$ partition $[n]$, there is a unique $\ell\neq i$ such that
$k\in S_\ell$. The third invariant in \cref{lm:stage1} and the true-center separation give
\begin{align}
\|\bx_{\parallel}-\parmu_k\|
&\geq\|\mu_i^*-\mu_\ell^*\|
-\|\bx_{\parallel}-\mu_i^*\|-\|\parmu_k-\mu_\ell^*\| \notag\\
&\geq d^{0.5-\epsilon}-d^{2\epsilon}-2n,
\label{eq:stage1close:parallel-lower}
\end{align}
whereas $j\in S_i$ implies
\begin{equation}
\label{eq:stage1close:parallel-upper}
\|\bx_{\parallel}-\parmu_j\|
\leq\|\bx_{\parallel}-\mu_i^*\|+\|\parmu_j-\mu_i^*\|
\leq d^{2\epsilon}+2n.
\end{equation}

For the orthogonal components, \cref{eq:stage1close:x-components} and the fourth invariant in
\cref{lm:stage1} yield
\begin{align}
\|P_{\bot}(\bx-\mu_k)\|
&\geq\|\mu_k-\parmu_k\|-\|\bx_{\bot}\|
\geq\sqrt d-3d^{2\epsilon},
\label{eq:stage1close:orth-lower}\\
\|P_{\bot}(\bx-\mu_j)\|
&\leq\|\mu_j-\parmu_j\|+\|\bx_{\bot}\|
\leq\sqrt d+2n+d^{2\epsilon}.
\label{eq:stage1close:orth-upper}
\end{align}

The dimension condition implies $2n\leq d^{2\epsilon}$. Applying the Pythagorean identity to
$\calP\oplus\calP^\bot$ and then using
\cref{eq:stage1close:parallel-lower,eq:stage1close:parallel-upper,eq:stage1close:orth-lower,eq:stage1close:orth-upper},
we obtain
\begin{align*}
&\|\bx-\mu_k\|^2-\|\bx-\mu_j\|^2\\
&\geq \left(\sqrt{d}-3d^{2\epsilon}\right)^2
+\left(d^{0.5-\epsilon}-2d^{2\epsilon}\right)^2
-\left(\sqrt{d}+2d^{2\epsilon}\right)^2
-\left(2d^{2\epsilon}\right)^2\\
&=d^{1-2\epsilon}-10d^{0.5+2\epsilon}-4d^{0.5+\epsilon}+5d^{4\epsilon}\\
&\geq d^{1-2\epsilon}/2.
\end{align*}
For the final inequality, note that $0<\epsilon\leq0.01$, so the exponents
$0.5+2\epsilon$, $0.5+\epsilon$, and $4\epsilon$ are all strictly smaller than
$1-2\epsilon$; the required simultaneous comparison is guaranteed by
\cref{eq:standing-dimension}.
\end{proof}

\begin{lemma}
\label{lm:stage1:psisum}
Under assumptions in \cref{lm:stage1}, for any $1\leq i\leq m$, we have
\[
\sum_{j\in S_i}\E_{x\sim p^*}[\psi_j(x)]\geq \pi_i^*\left(1-2\exp\left(-d^{4\epsilon}/3\right)\right).
\]
\end{lemma}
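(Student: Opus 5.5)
Since $p^*=\sum_l\pi_l^*\calN(\mu_l^*,I_d)$ and every $\psi_j$ is nonnegative, I will keep only the $l=i$ term:
\[
\sum_{j\in S_i}\E_{x\sim p^*}[\psi_j(x)]\geq \pi_i^*\,\E_{x\sim i}\Big[\sum_{j\in S_i}\psi_j(x)\Big].
\]
It then suffices to show $\E_{x\sim i}\big[\sum_{j\in S_i}\psi_j(x)\big]\geq 1-2\exp(-d^{4\epsilon}/3)$. Because the responsibilities sum to one, this is equivalent to bounding $\E_{x\sim i}\big[\sum_{k\notin S_i}\psi_k(x)\big]$ by $2\exp(-d^{4\epsilon}/3)$.

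By \cref{eq:stage1:projection-responsibility}, each $\psi_k(x)=\psi_k(\bx)$. I will split on the event $E:=\{\|\bx-\mu_i^*\|\leq d^{2\epsilon}\}$. By \cref{lm:barx}, $\Pr_{x\sim i}[E^c]\leq\exp(-d^{4\epsilon}/3)$. On $E^c$ I will simply use $\sum_{k\notin S_i}\psi_k\leq1$, so this part contributes at most $\exp(-d^{4\epsilon}/3)$.

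On $E$, I will compare the competing components with the representative $i\in S_i$ (by \cref{thm:init}, $|S_i|\geq1$ and $i\in S_i$). For each $k\notin S_i$,
\[
\psi_k(x)\leq\frac{\pi_k\phi(\mu_k;x)}{\pi_i\phi(\mu_i;x)}=\frac{\pi_k}{\pi_i}\exp\Big(-\tfrac12\big(\|\bx-\mu_k\|^2-\|\bx-\mu_i\|^2\big)\Big).
\]
The exponent is computed in $\bx$ because the factor from $x-\bx$ cancels, as $\mu_k,\mu_i\in\calP_0$. I will then apply \cref{lm:stage1close} with $j=i$, which is valid under the assumed invariants of \cref{lm:stage1}; it bounds this ratio by $\frac{\pi_k}{\pi_i}\exp(-d^{1-2\epsilon}/4)$. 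For the weights, the first invariant gives $\pi_i>\pi_i^*/n$, and $\pi_k\leq1$. Summing over at most $n$ indices $k$, the contribution on $E$ is at most
\[
\frac{n^2}{\pi_i^*}\exp(-d^{1-2\epsilon}/4)\leq\frac{n^2}{\rho}\exp(-d^{1-2\epsilon}/4).
\]

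The final step is to show this is at most $\exp(-d^{4\epsilon}/3)$. By \cref{eq:standing-dimension}, $n/\rho\leq d^{\epsilon/15}$, and since $\epsilon\leq0.01$ we have $d^{1-2\epsilon}\gg d^{4\epsilon}+\log d$ once $d$ is large. Adding the two contributions gives the stated total of $2\exp(-d^{4\epsilon}/3)$.

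I expect no serious obstacle here. The only subtle point is that the comparison with $\psi_i$ needs a lower bound on $\pi_i$, and the invariant $\pi_i>\pi_i^*/n$ from \cref{lm:stage1} supplies it. The remaining care is checking that the polynomial prefactor $n^2/\rho$ is absorbed by the gap between the exponents $1-2\epsilon$ and $4\epsilon$.
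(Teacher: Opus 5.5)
Your proof is correct and follows essentially the same route as the paper. Like the paper, you keep only the source-$i$ term, split on the event $\|\bx-\mu_i^*\|\leq d^{2\epsilon}$ controlled by \cref{lm:barx}, and use \cref{lm:stage1close} together with $\pi_i>\pi_i^*/n$. The only difference is cosmetic: the paper lower-bounds $\sum_{j\in S_i}\psi_j$ on the good event using the total group weight $q_i=\sum_{j\in S_i}\pi_j$, whereas you upper-bound each out-of-group responsibility by comparing it with the representative alone (the device the paper uses in the next lemma), which costs only a harmless extra factor of $n$.
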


\begin{proof}
Fix a time at which the invariants of \cref{lm:stage1} hold, and let
$G_i:=\{\|\bx-\mu_i^*\|\leq d^{2\epsilon}\}$. By
\cref{eq:stage1:projection-responsibility}, it is enough to work with $\bx$. On $G_i$,
\cref{lm:stage1close} implies, simultaneously for every $j\in S_i$ and $k\notin S_i$,
\[
e^{-\|\bx-\mu_k\|^2/2}
\leq e^{-d^{1-2\epsilon}/4}e^{-\|\bx-\mu_j\|^2/2}.
\]
Let $q_i:=\sum_{j\in S_i}\pi_j$. After multiplying the preceding inequality by the weights and
summing, the total unnormalized responsibility outside $S_i$ is at most
$e^{-d^{1-2\epsilon}/4}(1-q_i)/q_i$ times the total unnormalized responsibility inside $S_i$.
Consequently,
\begin{align*}
\sum_{j\in S_i}\psi_j(\bx)&=\frac{\sum_{j\in S_i}\pi_j\exp(-\|\bx-\mu_j\|^2/2)}{\sum_{k\in [n]} \pi_k\exp(-\|\bx-\mu_k\|^2/2)}\\
&\geq \frac{q_i}{q_i+e^{-d^{1-2\epsilon}/4}(1-q_i)}.
\end{align*}
The first invariant gives
\[
q_i\geq \pi_i\geq \pi_i^*/n.
\]
Using $1/(1+u)\geq1-u$ for $u\geq0$, we obtain
\begin{align*}
\sum_{j\in S_i}\psi_j(\bx)
&\geq \frac{\pi_i^*/n}{\pi_i^*/n+e^{-d^{1-2\epsilon}/4}}\\
&\geq 1-\frac{n}{\pi_i^*}\exp\left(-d^{1-2\epsilon}/4\right)\\
&\geq 1-\exp(-d^{0.5}).
\end{align*}
The last inequality follows from \cref{eq:standing-dimension}, which implies
$\log(n/\pimin^*)\leq d^{1-2\epsilon}/4-d^{0.5}$.

Under $x\sim i$, \cref{lm:barx} gives
$\Pr(G_i)\geq1-e^{-d^{4\epsilon}/3}$. Since all responsibilities are nonnegative, retaining only
the contribution of true component $i$ and the event $G_i$ yields
\begin{align*}
\sum_{j\in S_i}\E_{x\sim p^*}[\psi_j(x)]&\geq \pi_i^*\left(1-\exp\left(-d^{4\epsilon}/3\right)\right)\left(1-\exp(-d^{0.5})\right)\\
&\geq \pi_i^*\left(1-2\exp\left(-d^{4\epsilon}/3\right)\right).
\end{align*}
In the last step we used $e^{-d^{0.5}}\leq e^{-d^{4\epsilon}/3}$, which follows from
$4\epsilon<0.5$.
\end{proof}

\begin{lemma}
\label{lm:stage1:psifar}
Under assumptions in \cref{lm:stage1}, for any $1\leq i\leq m,j\not\in S_i$, we have
\[
\E_{x\sim i}[\psi_j(x)]\leq 2\exp\left(-d^{4\epsilon}/3\right).
\]
\end{lemma}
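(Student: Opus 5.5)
We split the expectation according to the good event $G_i:=\{\|\bx-\mu_i^*\|\leq d^{2\epsilon}\}$ used in \cref{lm:stage1:psisum}, with $\bx$ the projection of $x$ onto $\calP_0$:
\[
\E_{x\sim i}[\psi_j(x)]\leq \Pr_{x\sim i}(G_i^c)+\E_{x\sim i}\bigl[\psi_j(x)\mathbf 1_{G_i}\bigr].
\]
Since $\psi_j\leq1$, the first term is at most $\exp(-d^{4\epsilon}/3)$ by \cref{lm:barx}. It therefore suffices to show that on $G_i$ we have the pointwise bound $\psi_j(x)\leq\exp(-d^{4\epsilon}/3)$ whenever $j\notin S_i$.

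For the pointwise bound we use \eqref{eq:stage1:projection-responsibility} to replace $x$ by $\bx$. We then compare the numerator $\pi_j\exp(-\|\bx-\mu_j\|^2/2)$ with the single denominator term coming from the representative $i\in S_i$:
\[
\psi_j(\bx)\leq \frac{\pi_j e^{-\|\bx-\mu_j\|^2/2}}{\pi_i e^{-\|\bx-\mu_i\|^2/2}}.
\]
Applying \cref{lm:stage1close} with the roles $i\in S_i$ and $j\notin S_i$ gives $\|\bx-\mu_j\|^2-\|\bx-\mu_i\|^2\geq d^{1-2\epsilon}/2$. The first invariant of \cref{lm:stage1} gives $\pi_i>\pi_i^*/n$, and trivially $\pi_j\leq1$. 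Hence
\[
\psi_j(\bx)\leq \frac{n}{\pi_i^*}\exp\bigl(-d^{1-2\epsilon}/4\bigr)\leq\exp(-d^{0.5}).
\]
The last inequality is exactly the consequence of \cref{eq:standing-dimension} already used in \cref{lm:stage1:psisum}, namely $\log(n/\pimin^*)\leq d^{1-2\epsilon}/4-d^{0.5}$.

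Finally, $4\epsilon<0.5$ gives $e^{-d^{0.5}}\leq e^{-d^{4\epsilon}/3}$. Summing the two contributions yields $2\exp(-d^{4\epsilon}/3)$.

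There is no serious obstacle here. The only care needed is to check that \cref{lm:stage1close} applies with $j$ playing the role of the ``$k\notin S_i$'' index and the representative $i$ playing the role of the in-group index, and that the weight lower bound on $\pi_i$ comes from the induction hypothesis at the current time.
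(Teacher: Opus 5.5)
Your proposal is correct and follows essentially the same route as the paper. You split on $G_i=\{\|\bx-\mu_i^*\|\leq d^{2\epsilon}\}$, bound $\Pr(G_i^c)$ via \cref{lm:barx}, and on $G_i$ compare $\psi_j$ with the representative $i$ in the denominator using \cref{lm:stage1close} and $\pi_i>\pi_i^*/n$. The only difference is cosmetic: you close the pointwise bound at $e^{-d^{0.5}}$ by reusing the dimension consequence from \cref{lm:stage1:psisum}, whereas the paper uses $e^{-d^{0.4}}$; both are below $e^{-d^{4\epsilon}/3}$.
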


\begin{proof}
Fix the current iteration and set $G_i:=\{\|\bx-\mu_i^*\|\leq d^{2\epsilon}\}$.
Equation~\eqref{eq:stage1:projection-responsibility} gives $\psi_j(x)=\psi_j(\bx)$, and
\cref{lm:barx} gives $\Pr_{x\sim i}(G_i^c)\leq e^{-d^{4\epsilon}/3}$.
On $G_i$, compare fitted component $j\notin S_i$ with the representative $i\in S_i$ in the
denominator. By \cref{lm:stage1close} and the first invariant in \cref{lm:stage1},
\begin{align*}
\psi_j(x)&=\frac{\pi_j\exp(-\|\bx-\mu_j\|^2/2)}{\sum_{k\in[n]}\pi_k\exp(-\|\bx-\mu_k\|^2/2)}\\
&\leq \frac{\pi_j}{\pi_i}\exp\left(\left(\|\bx-\mu_i\|^2-\|\bx-\mu_j\|^2\right)/2\right)\\
&\leq \frac{n}{\pi_i^*}\exp(-d^{1-2\epsilon}/4)\\
&\leq \exp(-d^{0.4}).
\end{align*}
The last comparison is guaranteed by \cref{eq:standing-dimension}. Since $0\leq\psi_j\leq1$ on
$G_i^c$,
\[
\E_{x\sim i}[\psi_j(x)]\leq \exp(-d^{0.4})+\exp\left(-d^{4\epsilon}/3\right)\leq 2\exp\left(-d^{4\epsilon}/3\right).
\]
Here $0.4>4\epsilon$ because $\epsilon\leq0.01$.
\end{proof}

\begin{lemma}
\label{lm:far-weighted}
Assume \cref{eq:standing-dimension}, and suppose that all current means and true
means belong to the fixed subspace $\calP_0$, where $\dim(\calP_0)\leq 2n$.
For any $1\leq i\leq m$ such that
\[
\pi_i\geq \frac{\pi_i^*}{n},
\qquad
\|\mu_i-\mu_i^*\|\leq \frac{8}{7}\sqrt{d},
\]
and any $1\leq j\leq n$ such that
$s:=\|\mu_j-\mu_i^*\|\geq 8\sqrt{d}$, we have
\begin{align*}
\Pr_{x\sim i}\left[s\psi_j(x)\geq \exp(-s^2/10)\right]
&\leq \frac{1}{s}\exp(-s^2/20),\\
s\E_{x\sim i}[\psi_j(x)]
&\leq 2\exp(-s^2/20).
\end{align*}
\end{lemma}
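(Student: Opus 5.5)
The plan is to compare component $j$ with the representative $i$ in the denominator of $\psi_j$, exactly as in the proof of \cref{lm:stage1:psifar}. For every $x$,
\[
\psi_j(x)\leq \frac{\pi_j}{\pi_i}\exp\!\left(\frac{\|x-\mu_i\|^2-\|x-\mu_j\|^2}{2}\right)\leq \frac{n}{\pi_i^*}\exp\!\left(\frac{\|x-\mu_i\|^2-\|x-\mu_j\|^2}{2}\right).
\]
Here we use $\pi_j\leq1$ and $\pi_i\geq\pi_i^*/n$. Write $x=\mu_i^*+z$ with $z\sim\calN(0,I_d)$, and set $a:=\mu_i-\mu_i^*$ and $b:=\mu_j-\mu_i^*$, so that $\|a\|\leq\frac87\sqrt d$ and $\|b\|=s\geq8\sqrt d$. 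The exponent is then
\[
\tfrac12\left(\|a\|^2-s^2\right)+\langle z,b-a\rangle .
\]
It is linear in $z$, so its random part is a one-dimensional Gaussian $g\sim\calN(0,\|b-a\|^2)$, with $\|b-a\|\leq s+\frac87\sqrt d\leq\frac87 s$. Because of this reduction, neither the subspace $\calP_0$ nor the projection identity \eqref{eq:stage1:projection-responsibility} is really needed. We could alternatively work with $\bx$, but the direct one-dimensional reduction is cleaner.

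The deterministic part satisfies $\frac12(\|a\|^2-s^2)\leq \frac12\left(\frac{64}{49}\cdot\frac{s^2}{64}-s^2\right)\leq -\frac{48}{98}s^2$, which is about $-0.49s^2$. The event $s\psi_j\geq e^{-s^2/10}$ therefore forces
\[
g\geq 0.49s^2-s^2/10-\log(sn/\pi_i^*)\geq 0.38 s^2 .
\]
This uses the standing dimension condition $\log(sn/\pimin^*)\ll d\leq s^2/64$. One subtlety: $\log s$ must be absorbed, which works since $\log s\leq s^2/1000$ for $s\geq8\sqrt d$ large. Apply Mill's inequality (\cref{lm:mill}) to $g/\|b-a\|$ at threshold $t=0.38s^2/(\tfrac87 s)\approx0.33 s$. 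This gives probability at most $\frac{1}{0.33s}e^{-0.054 s^2}$. That beats $\frac1s e^{-s^2/20}$ with margin, since $0.054>0.05$ and the prefactor $3/s$ is absorbed by $e^{-0.004s^2}$. The main obstacle is bookkeeping constants tightly enough here: $0.054$ versus $0.05$ is a slim gap. If it fails, I will sharpen $\|b-a\|$ via $\|b-a\|^2=s^2-2\langle a,b\rangle+\|a\|^2$ and keep the exact Gaussian tail.

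For the expectation bound, split on the event $E=\{s\psi_j\geq e^{-s^2/10}\}$. Off $E$ the contribution is at most $e^{-s^2/10}$. On $E$, use $\psi_j\leq1$, so the contribution to $s\E[\psi_j]$ is at most $s\Pr[E]\leq e^{-s^2/20}$. Summing gives $s\E_{x\sim i}[\psi_j]\leq e^{-s^2/10}+e^{-s^2/20}\leq 2e^{-s^2/20}$.
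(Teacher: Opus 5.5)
Your proof is correct, but it argues differently from the paper.

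\textbf{The paper's route.} The paper works with the projection $\bx$ onto the low-dimensional subspace $\calP_0$. It introduces the ball $G_i=\{\|\bx-\mu_i^*\|\leq s/3\}$ and bounds $\Pr(G_i^c)$ by the Gaussian norm tail in $d_0\leq 2n$ dimensions. On $G_i$ it uses triangle inequalities ($\|\bx-\mu_j\|\geq 2s/3$ and $\|\bx-\mu_i\|\leq s/3+s/7$) to get the deterministic energy gap $96s^2/441>s^2/5+2\log(ns/\pi_i^*)$. Hence the bad event is contained in $G_i^c$, and the expectation bound follows from the same split you use.

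\textbf{Your route.} You observe that the log-ratio of the two identity-covariance densities is affine in $x$. Under $x\sim i$ its fluctuation is therefore a single Gaussian with standard deviation $\|\mu_j-\mu_i\|\leq 8s/7$, and Mill's inequality finishes. Your constants check out. The deterministic part is $-\frac{24}{49}s^2\approx-0.4898s^2$; your rounding to $0.49$ is immaterial. The threshold is then at least $0.38\cdot\frac78 s=0.3325s$, giving an exponent of about $0.0553s^2>s^2/20$, and the prefactor is absorbed since $s^2\geq 64d$.

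\textbf{What each approach buys.}
\begin{itemize}
\item Your approach makes the hypotheses that all means lie in $\calP_0$ with $\dim\calP_0\leq 2n$, and the projection identity, unnecessary. This is substantive: a ball argument carried out in all of $\R^d$ would fail when $s$ is of order $\sqrt d$, because $\|x-\mu_i^*\|\approx\sqrt d$ is then comparable to $s/3$. That is exactly why the paper passes to $\calP_0$.
\item Your one-dimensional reduction is the same device the paper itself uses in \cref{lm:stage1:nocorrolation}.
\item The paper's approach instead gives a pointwise bound on an explicit geometric event. It reuses that template verbatim in \cref{lm:stage5:extremefar}.
\item The numerical margins of the two arguments are essentially identical: about $s^2/18$ against the target $s^2/20$ in both cases.
\end{itemize}
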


\begin{proof}
Fix $i,j$ as in the statement, and let
$G_i:=\{\|\bx-\mu_i^*\|\leq s/3\}$.
Recall that $\bx$ is the projection of $x$ onto $\calP_0$ and
$d_0:=\dim(\calP_0)\leq 2n$.
By \cref{lm:gausstail} and \cref{eq:standing-dimension},
\[
\Pr_{x\sim i}[G_i^c]
\leq \exp\left(-\left(s/3-\sqrt{d_0}\right)^2/2\right)
\leq \frac{1}{s}\exp(-s^2/20).
\]

On $G_i$, since $\|\mu_i-\mu_i^*\|\leq (8/7)\sqrt{d}\leq s/7$,
the triangle inequality gives
\begin{align*}
\|\bx-\mu_j\|^2-\|\bx-\mu_i\|^2
&\geq \left(s-\|\bx-\mu_i^*\|\right)^2
-\left(\|\bx-\mu_i^*\|+\|\mu_i-\mu_i^*\|\right)^2\\
&\geq \left(\frac{2s}{3}\right)^2
-\left(\frac{s}{3}+\frac{s}{7}\right)^2\\
&=\frac{96}{441}s^2\\
&>\frac{s^2}{5}+2\log\frac{ns}{\pi_i^*}.
\end{align*}
The last inequality follows from \cref{eq:standing-dimension} and
$s\geq 8\sqrt{d}$.

Using $\psi_j(x)=\psi_j(\bx)$ and comparing with component $i$ in the
denominator, we obtain
\begin{align*}
\psi_j(x)
&\leq \frac{\pi_j}{\pi_i}
\exp\left(\left(\|\bx-\mu_i\|^2-\|\bx-\mu_j\|^2\right)/2\right)\\
&<\frac{n}{\pi_i^*}
\exp\left(-\frac{s^2}{10}-\log\frac{ns}{\pi_i^*}\right)\\
&=\frac{1}{s}\exp(-s^2/10).
\end{align*}
Therefore,
\[
\Pr_{x\sim i}\left[s\psi_j(x)\geq \exp(-s^2/10)\right]
\leq \Pr_{x\sim i}[G_i^c]
\leq \frac{1}{s}\exp(-s^2/20).
\]
Finally, splitting the expectation over $G_i$ and $G_i^c$ and using
$0\leq \psi_j(x)\leq 1$, we have
\begin{align*}
s\E_{x\sim i}[\psi_j(x)]
&\leq \exp(-s^2/10)+s\Pr_{x\sim i}[G_i^c]\\
&\leq \exp(-s^2/10)+\exp(-s^2/20)\\
&\leq 2\exp(-s^2/20).
\end{align*}
\end{proof}

Under the Stage~1 invariants, this lemma and \cref{lm:stage1:psifar} imply the uniform bound
\begin{equation}
\label{eq:stage1:weighted-cross}
\|\mu_k-\mu_i^*\|\E_{x\sim i}[\psi_k(x)]
\leq16\sqrt d\,e^{-d^{4\epsilon}/3}
\qquad(k\notin S_i).
\end{equation}
Indeed, if the distance is at most $8\sqrt d$, use \cref{lm:stage1:psifar}; otherwise use
\cref{lm:far-weighted}, with reference $i$. Its reference-radius condition follows from
$\|\mu_i-\mu_i^*\|\leq\sqrt d+4n\leq(8/7)\sqrt d$.

\begin{lemma}
\label{lem:stage1-2variable}
Let $a_1,a_2\geq\sqrt{d/2}$ such that $a_1+\frac{1}{8n^3}\leq a_2$.
Let $X_1,X_2$ be two independent random variables and $X_1,X_2\sim \calN(0,1)$.
For $i=1,2$, let $Y_i=(a_i-X_i)^2$.
Then  we have
\begin{enumerate}
\item 
\[
\Pr[Y_1+d^{5\epsilon}\leq Y_2]\geq \Pr[Y_2-d^{5\epsilon}\leq Y_1]+\frac{1}{1000n^3}.
\]
\item For any positive real number $M>d/10$,
\[
\Pr[Y_1+d^{5\epsilon}\leq M,Y_2]\geq \Pr[Y_2-d^{5\epsilon}\leq M,Y_1].
\]
\item For any positive integer $3\leq s\leq n$, when $M>(a_1-\sqrt{2\log s})^2$, 
\[
\Pr[Y_1+d^{5\epsilon}\leq M,Y_2]\geq \Pr[Y_2-d^{5\epsilon}\leq M,Y_1]+\frac{1}{100n^3s^6}.
\]

\end{enumerate}
\end{lemma}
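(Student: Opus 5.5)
The plan is to reduce all three items to statements about the two-dimensional Gaussian vector $(X_1,X_2)$ and a single comparison variable. Set $\delta:=d^{5\epsilon}$ and $D:=Y_2-Y_1$, and expand
\[
D=(a_2^2-a_1^2)+2(a_1X_1-a_2X_2)+(X_2^2-X_1^2).
\]
Two consequences of \cref{eq:standing-dimension} will be used repeatedly. First, since $n\leq d^{\epsilon/15}$ and $a_1,a_2\geq\sqrt{d/2}$, the deterministic shift satisfies $a_2^2-a_1^2\geq (a_1+a_2)/(8n^3)\geq \sqrt d/(8n^3)\gg\delta$. Second, the leading noise term $G:=2(a_1X_1-a_2X_2)$ is exactly $\calN(0,4(a_1^2+a_2^2))$. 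The quadratic remainder $X_2^2-X_1^2$ is $O(\log n)$ outside an event of probability $n^{-10}$, so it is negligible against the shift.

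\textbf{Item 1.} The target inequality says $\Pr[D\geq\delta]\geq\Pr[D\leq\delta]+\frac{1}{1000n^3}$, which is equivalent to $\Pr[D\leq\delta]\leq\frac12-\frac{1}{2000n^3}$. I will truncate to $|X_1|,|X_2|\leq\sqrt{20\log n}$, which costs at most $n^{-10}$. On that event, $D\leq\delta$ forces $G\leq -(a_2^2-a_1^2)/2$. The normalized threshold is
\[
\frac{a_2^2-a_1^2}{4\sqrt{a_1^2+a_2^2}}\geq\frac{a_2-a_1}{4\sqrt2}\geq\frac{1}{32\sqrt2\,n^3}.
\]
This holds with no upper bound on the $a_i$, because $(a_1+a_2)/\sqrt{a_1^2+a_2^2}\geq1$. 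If the threshold exceeds $1$, the bound improves. Otherwise the standard normal density on $[0,1]$ is at least $0.24$, so the probability lies below $\frac12$ by at least $0.24/(32\sqrt2 n^3)\geq\frac{1}{200n^3}$. This leaves ample room for the truncation error.

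\textbf{Items 2 and 3.} Put $Z:=Y_2-\delta$. The two events then become
\[
A=\{Y_1\leq Z,\ Y_1\leq M-\delta\},\qquad B=\{Z\leq Y_1,\ Z\leq M\}.
\]
So $A$ is the event that the minimum of $(Y_1,Z)$ is attained by $Y_1$ and lies below $M-\delta$, and $B$ is the analogous event for $Z$ with level $M$. Writing $g_1$ and $g_Z$ for the densities and using independence,
\[
\Pr[A]=\int_0^{M-\delta}g_1(y)\Pr[Z>y]\,dy,\qquad \Pr[B]=\int_0^{M}g_Z(y)\Pr[Y_1>y]\,dy .
\]
I will compare these integrands pointwise. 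The densities are explicit: $g_i(y)=\frac{1}{2\sqrt y}\bigl(\phi(a_i-\sqrt y)+\phi(a_i+\sqrt y)\bigr)$. The claim to establish is that for $y\leq M$, $Y_1$ dominates $Z$ in the likelihood-ratio (hazard) sense. This is because the noncentrality gap $a_2-a_1\geq 1/(8n^3)$ produces a factor of roughly $\exp\bigl((a_2-a_1)(a_1+a_2-2\sqrt y)\bigr)$, which beats the shift $\delta$. Since $M>d/10$ gives $\sqrt y\lesssim\sqrt M$ on the relevant range, and $\delta$ perturbs $\sqrt y$ only by $\delta/\sqrt{d/10}=o(1/n^3)$, the gap term should win. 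The leftover boundary strip $y\in[M-\delta,M]$ in $\Pr[B]$ must then be absorbed by the gain on the rest of the range.

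For item~3, $M>(a_1-\sqrt{2\log s})^2$ means $\{Y_1\leq M-\delta\}$ contains $\{X_1\geq\sqrt{2\log s}+o(1)\}$, which has probability at least $\mathrm{poly}(1/s)$. On this region the per-level advantage is quantitatively of order $1/n^3$, as in item~1, and this yields the extra $\frac{1}{100n^3s^6}$.

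I expect the main obstacle to be item~2 when $M$ is far below $a_1^2$, that is, in the deep left tail. There the boundary strip in $\Pr[B]$ and the shift by $\delta$ compete with an exponentially small advantage. I will first try to show directly that $g_Z(y)\Pr[Y_1>y]\leq g_1(y-\delta)\Pr[Z>y-\delta]$ after the substitution $y\mapsto y-\delta$, using monotonicity of $\phi(a-\sqrt y)$ in $a$. If that fails near the top of the range, I will fall back on a coupling $X_2=X_1+$ (small shift) to show stochastic dominance of $Y_2$ over $Y_1+\delta$ on $\{Y\leq M\}$.
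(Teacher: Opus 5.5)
Your item~1 argument is sound. On $\{|X_1|,|X_2|\le\sqrt{20\log n}\}$, the event $Y_2-Y_1\le\delta$ forces $G\le-(a_2^2-a_1^2)/2$. The normalized threshold is at least $(a_2-a_1)/(4\sqrt2)$, which puts $\Pr[Y_2-Y_1\le\delta]$ below $\tfrac12$ by order $1/n^3$. This is a valid alternative to the paper's use of the event $\{X_2\le X_1+h\}$.

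The gap is in items~2 and~3. There the decisive estimates are announced rather than proved, and the one concrete claim is false as stated. Pointwise hazard domination of $Z=Y_2-\delta$ by $Y_1$ does not hold on all of $[0,M]$. Your bound $\sqrt{y+\delta}-\sqrt y\le\delta/\sqrt{d/10}$ requires $y\gtrsim d$, whereas $M>d/10$ only constrains the upper end of the range. At $y=1$, say, the shift moves $\sqrt y$ by about $\sqrt\delta=d^{2.5\epsilon}$. That is far more than $a_2-a_1$, which may equal $1/(8n^3)$. So $g_Z(1)/g_1(1)$ is roughly $\exp(a_1\sqrt\delta)$, while both survival functions are essentially $1$. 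Also, the exponential factor you invoke is essentially the density ratio $g_1(y)/g_2(y)$, which drops below $1$ once $\sqrt y>(a_1+a_2)/2$. Item~2 allows arbitrarily large $M$, so this factor cannot justify hazard domination there; a genuine hazard-rate argument would be needed. (Moreover $Z\ge-\delta$, not $Z\ge0$, so $\Pr[B]$ must be integrated from $-\delta$.)

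More importantly, absorbing the strip $[M-\delta,M]$ is the entire content of item~2 when $a_1\gg\sqrt M$. There every probability involved is exponentially small, and the comparison essentially reduces to $\Pr[Y_1\le M-\delta]\ge\Pr[Y_2\le M+\delta]$. That is a comparison of the thresholds $\alpha:=a_1-\sqrt{M-\delta}$ and $\beta:=a_2-\sqrt{M+\delta}$. The needed fact, which the paper proves with a margin, is
\[
\alpha\le\beta-h,\qquad h:=\frac{1}{10n^3}.
\]
It holds because $\sqrt{M+\delta}-\sqrt{M-\delta}\le d^{-5\epsilon}$ when $M>d/10$, while $a_2-a_1\ge1/(8n^3)$. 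Your proposal never isolates this inequality, and the gain $\frac{1}{100n^3s^6}$ in item~3 is asserted rather than derived.

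The paper avoids the laws of the $Y_i$ altogether. On $\{X_2\le X_1+h,\ X_1\le a_1-\sqrt d/5\}$ one has $Y_2-Y_1\ge(a_1-X_1)/(20n^3)>\delta$. So, up to the tail $\Pr[X_1>a_1-\sqrt d/5]$, your event $A$ contains $\{X_1\ge\alpha,\ X_2\le X_1+h\}$, and $B$ is contained in $\{X_2\ge\beta,\ X_2\ge X_1+h\}$. By independence these have probabilities $\int_\alpha^\infty\varphi(x)\Phi(x+h)\,dx$ and $\int_\beta^\infty\varphi(x)\Phi(x-h)\,dx$, where $\varphi,\Phi$ are the standard normal density and distribution function. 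Because $\alpha\le\beta-h$, the first exceeds the second by at least $\int_\alpha^{\alpha+h}\varphi(x)\Phi(x+h)\,dx$. When $\alpha\ge\sqrt{2\log s}$, this is at least half the normal mass of $[\alpha,\alpha+h]$ and dominates the tail terms. Otherwise the first exceeds the second by $h\int\varphi^2$ over $[\sqrt{2\log s}+0.01,\sqrt{2\log s}+1.01]$, using $\Phi(x+h)-\Phi(x-h)\ge h\varphi(x)$; this gives item~3.

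Your competing-risks formulation could probably be completed along these lines. You would restrict the comparison to $y\ge d/20$, show that smaller $y$ contribute negligibly relative to the gain, and bound the strip using the threshold inequality. As written, however, it does not prove items~2 or~3.
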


\begin{proof}
Let $D:=d^{5\epsilon}$. Define
$g(x)=(2\pi)^{-1/2}e^{-x^2/2}$ and $G(x)=\int_{-\infty}^{x}g(t)\,dt$, the standard Gaussian
density and distribution function. We prove the three claims in order.

\begin{enumerate}
\item First consider
\[
p=\Pr\left[X_2\leq X_1+\frac{1}{10n^3}\right].
\]
Then
\[
p=\int_{-\infty}^{+\infty}g(x)G\left(x+\frac{1}{10n^3}\right)\,dx.
\]
When $|x|\leq 1$, $g(x)\geq 0.1$. Thus for $-1\leq x\leq 0$,
\[
G\left(x+\frac{1}{10n^3}\right)-G(x)=\int_{x}^{x+\frac{1}{10n^3}}g(y)\,dy\geq \frac{1}{100n^3}.
\]
Therefore,
\begin{align*}
p&\geq \int_{-1}^{0}g(x)\left(G\left(x+\frac{1}{10n^3}\right)-G(x)\right)\,dx+\int_{-\infty}^{+\infty}g(x)G(x)\,dx\\
&\geq \frac{1}{1000n^3}+\Pr[X_2\leq X_1]=\frac{1}{1000n^3}+\frac{1}{2}.
\end{align*}

By a union bound and the one-dimensional Gaussian tail estimate, the standing dimension condition
implies
\[
\Pr[\max\{|X_1|,|X_2|\}\geq d^{\epsilon/3}]
\leq4e^{-d^{2\epsilon/3}/2}\leq \frac{1}{2000n^3}.
\]
Therefore, the following two conditions hold simultaneously with probability at least
$1/2+1/(2000n^3)$:
\begin{itemize}
\item $X_1,X_2<d^{\epsilon/3}$;
\item $X_2\leq X_1+\frac{1}{10n^3}$,
\end{itemize}
On this event,
$a_2-X_2\geq a_1-X_1+1/(8n^3)-1/(10n^3)=a_1-X_1+1/(40n^3)$, and hence
\begin{align*}
Y_2-Y_1&=(a_2-X_2)^2-(a_1-X_1)^2\\
&\geq (a_1-X_1+1/(40n^3))^2-(a_1-X_1)^2\\
&\geq \frac{1}{20n^3}(a_1-X_1)\\
&\geq \frac{\sqrt{d}}{40n^3}>d^{5\epsilon}.
\end{align*}
The penultimate inequality uses
$a_1-X_1\geq\sqrt{d/2}-d^{\epsilon/3}\geq\sqrt d/2$, and the final inequality is a consequence
of \cref{eq:standing-dimension}.

Thus
\[
\Pr[Y_1+D\leq Y_2]\geq \frac12+\frac{1}{2000n^3}.
\]
Apart from a probability-zero equality event, $\{Y_2-D\leq Y_1\}$ is the complement of
$\{Y_1+D<Y_2\}$. Therefore
\[
\Pr[Y_2-D\leq Y_1]\leq \frac12-\frac{1}{2000n^3},
\]
and subtracting proves item~1.

\item Fix $M>d/10$. For readability, write
\begin{align*}
A&:=a_1-\sqrt{M-D},\qquad B:=a_2-\sqrt{M+D},\\
p_1&:=\Pr\left[X_1\geq A,\ X_2\leq X_1+\frac{1}{10n^3}\right],\\
p_2&:=\Pr\left[X_2\geq B,\ X_2\geq X_1+\frac{1}{10n^3}\right].
\end{align*}
The standing dimension condition ensures $D<M$ and makes both square roots well defined. If
$A\leq X_1\leq a_1-\sqrt d/5$ and $X_2\leq X_1+1/(10n^3)$, then the calculation from item~1 gives
\[
Y_2\geq \left(a_1-X_1+\frac{1}{40n^3}\right)^2\geq Y_1+D.
\]
The condition $X_1\geq A$ is exactly $Y_1\leq M-D$ on the relevant side of the Gaussian tail,
and therefore $Y_1+D\leq M$. We have proved that the event defining $p_1$, except possibly for
$X_1>a_1-\sqrt d/5$, is contained in $\{Y_1+D\leq M,\,Y_2\}$. Hence
\begin{align}
\Pr[Y_1+D\leq M,Y_2]&\geq p_1-\Pr\left[X_1>a_1-\sqrt{d}/5\right] \nonumber\\
&\geq p_1-\exp\left(-\left(a_1-\frac{\sqrt{d}}{5}-1\right)^2/2\right). \label{eq:p1}
\end{align}
The final line is a deliberately loose form of the Gaussian tail bound; the extra $-1$ avoids
tracking a prefactor.

Conversely, suppose $Y_2-D\leq M$ and $Y_2-D\leq Y_1$. The first inequality implies
$X_2\geq B$ on the relevant side. If also
$X_2-1/(10n^3)\leq X_1\leq a_1-\sqrt d/5$, then
\[
Y_2-Y_1\geq \left(a_1-X_1+\frac{1}{40n^3}\right)^2- (a_1-X_1)^2>D,
\]
contradicting $Y_2-D\leq Y_1$. Thus the target event is contained in the union of the event
defining $p_2$ and $\{X_1>a_1-\sqrt d/5\}$. Consequently,
\begin{align}
\Pr[Y_2-D\leq M,Y_1]&\leq p_2+\Pr\left[X_1>a_1-\sqrt{d}/5\right] \nonumber\\
&\leq p_2+\exp\left(-\left(a_1-\frac{\sqrt{d}}{5}-1\right)^2/2\right). \label{eq:p2}
\end{align}

We next compare the two integration thresholds. Since
\[
\sqrt{M+D}-\sqrt{M-D}=\frac{2D}{\sqrt{M+D}+\sqrt{M-D}}
\leq d^{-5\epsilon}
\]
under \cref{eq:standing-dimension}, and since
$a_2-a_1\geq1/(8n^3)$ and $d^{-5\epsilon}\leq1/(40n^3)$, we have
\[
A\leq B-\frac{1}{10n^3}.
\]

We prove the second item in this lemma by considering the following two cases (as in item 3, we let $3\leq s\leq n$ to be a positive integer):

\textbf{Case 1: $M\leq (a_1-\sqrt{2\log s})^2$.}

The definitions of $p_1,p_2$ and independence give
$p_1=\int_A^\infty g(x)G(x+1/(10n^3))\,dx$ and
$p_2=\int_B^\infty g(x)G(x-1/(10n^3))\,dx$. The threshold ordering $A\leq B-1/(10n^3)$
leaves an interval of length at least $1/(10n^3)$ that contributes only to $p_1$. Moreover, in
this case $A\geq\sqrt{2\log s}>0$, so $G(x+1/(10n^3))\geq1/2$ on that interval. Therefore,
Put $h=1/(10n^3)$. Then
\begin{align*}
p_1&=\int_A^\infty g(x)G(x+h)\,dx\\
&\geq\int_B^\infty g(x)G(x-h)\,dx
+\frac12\int_A^{A+h}g(x)\,dx\\
&\geq p_2+\frac{h}{2\sqrt{2\pi}}e^{-(A+h)^2/2}\\
&\geq p_2+2\exp\!\left[-\frac12\left(a_1-\frac{\sqrt d}{5}-1\right)^2\right].
\end{align*}
The last inequality uses $M>d/10$ and the standing dimension condition; it says that the mass of
the extra interval dominates twice the tail error appearing in
\cref{eq:p1,eq:p2}. Combining \cref{eq:p1}, \cref{eq:p2} and the interval bound gives
\begin{align*}
\Pr[Y_1+d^{5\epsilon}\leq M,Y_2]&\geq p_1-\exp\left(-\left(a_1-\frac{\sqrt{d}}{5}-1\right)^2/2\right)\\
&\geq p_2+\exp\left(-\left(a_1-\frac{\sqrt{d}}{5}-1\right)^2/2\right)\\
&\geq\Pr[Y_2-d^{5\epsilon}\leq M,Y_1].
\end{align*}

\textbf{Case 2: $M>(a_1-\sqrt{2\log s})^2$.}

In this case, the difference between the two Gaussian cdf factors supplies a quantitative gain.
Indeed, integrating
$G(x+1/(10n^3))-G(x-1/(10n^3))$ over
$[\sqrt{2\log s}+0.01,\sqrt{2\log s}+1.01]$ gives
\begin{align*}
p_1&=\int_{a_1-\sqrt{M-d^{5\epsilon}}}^{+\infty}g(x)G\left(x+\frac{1}{10n^3}\right)\,dx\\
&\geq \int_{a_1-\sqrt{M-d^{5\epsilon}}}^{+\infty}g(x)G\left(x-\frac{1}{10n^3}\right)\,dx+\frac{1}{10n^3}\int_{\sqrt{2\log s}+0.01}^{\sqrt{2\log s}+1.01}g(x)^2\,dx\\
&\geq \int_{a_1-\sqrt{M-d^{5\epsilon}}}^{+\infty}g(x)G\left(x-\frac{1}{10n^3}\right)\,dx+\frac{1}{10n^3}\cdot\frac{1}{2\pi}\exp\left(-\left(\sqrt{2\log s}+1.01\right)^2\right)\\
&\geq \int_{a_1-\sqrt{M-d^{5\epsilon}}}^{+\infty}g(x)G\left(x-\frac{1}{10n^3}\right)\,dx+\frac{1}{20\pi n^3}\cdot\frac{1}{s^6}\\
&\geq p_2+\frac{1}{20\pi n^3s^6}.
\end{align*}
Here $g(x)^2=(2\pi)^{-1}e^{-x^2}$ and
$(\sqrt{2\log s}+1.01)^2\leq6\log s$ for $s\geq3$.
The integration interval lies above $A$, since $A\leq\sqrt{2\log s}+d^{-5\epsilon}<\sqrt{2\log s}+0.01$.
On this interval $g$ is convex, so
$G(x+h)-G(x-h)\geq2h g(x)\geq h g(x)$ for $h=1/(10n^3)$.
Each of the two tail corrections in \cref{eq:p1,eq:p2} is at most
$1/(1000n^3s^6)$ under \cref{eq:standing-dimension}; consequently the net gain is at least
$(1/(20\pi)-2/1000)/(n^3s^6)>1/(100n^3s^6)$. Hence
\begin{align*}
\Pr[Y_1+d^{5\epsilon}\leq M,Y_2]&\geq p_1-\exp\left(-\left(a_1-\frac{\sqrt{d}}{5}-1\right)^2/2\right)\\
&\geq \left(p_2+\exp\left(-\left(a_1-\frac{\sqrt{d}}{5}-1\right)^2/2\right)\right)+\frac{1}{100n^3s^6}\\
&\geq\Pr[Y_2-d^{5\epsilon}\leq M,Y_1]+\frac{1}{100n^3s^6}.
\end{align*}

Cases~1 and~2 together prove item~2 for every $M>d/10$.

\item Under the additional hypothesis in item~3, only Case~2 applies. The last display in that case
is exactly the claimed improvement by $1/(100n^3s^6)$.

\end{enumerate}
\end{proof}

\begin{lemma}
\label{lm:stage1:orthoprob}
Fix a positive integer $2\leq s\leq n$. 
Let $a_1,a_2,\cdots,a_s\geq \sqrt{d/2}$ and for $2\leq j\leq s$, $a_1+\frac{1}{8n^3}\leq a_j$.
Also, let $\delta_1,\cdots,\delta_s$ be real numbers such that $\delta_1\leq \delta_2,\cdots,\delta_s$.
Consider $s$ independent Gaussian variables $X_1,\cdots,X_s\sim\calN(0,1)$.
We define
\[
P_1=\Pr\left[(a_1-X_1)^2+\delta_1+d^{5\epsilon}\leq (a_j-X_j)^2+\delta_j, 2\leq j\leq s\right]
\]
For any $2\leq i\leq s$, we define
\[
P_i=\Pr\left[(a_i-X_i)^2+\delta_i-d^{5\epsilon}\leq (a_j-X_j)^2+\delta_j, 1\leq j\leq s,j\neq i\right]
\]
Then $P_1\geq P_i+\frac{1}{300n^3s^6}$.
% \[
% \Pr[X_1+a_1-\delta>X_j+a_j,2\leq j\leq s]\geq \Pr[X_i+a_i+\delta>X_j+a_j,1\leq j\leq s,j\neq i]+\delta\cdot\Omega(1/s^3).
% \]
\end{lemma}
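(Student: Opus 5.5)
The plan is to reduce the $s$-variable comparison to the two-variable estimates of \cref{lem:stage1-2variable} by conditioning on everything except $X_1$ and $X_i$. Write $D:=d^{5\epsilon}$ and $Y_j:=(a_j-X_j)^2$. Fix $i\geq2$ and condition on $\{X_j\}_{j\neq 1,i}$. Set
\[
M:=\min_{j\neq1,i}(Y_j+\delta_j),
\]
which is a function of the conditioned variables only; when $s=2$ take $M=+\infty$. The event in $P_1$ is $\{Y_1+\delta_1+D\leq M\}\cap\{Y_1+\delta_1+D\leq Y_i+\delta_i\}$. The event in $P_i$ is contained in $\{Y_i+\delta_i-D\leq M\}\cap\{Y_i+\delta_i-D\leq Y_1+\delta_1\}$.

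Next I would absorb the offsets $\delta$. Since $\delta_1\leq\delta_i$, the $P_1$ event contains $\{Y_1+D\leq Y_i\}\cap\{Y_1+D\leq M-\delta_1\}$, and the $P_i$ event is contained in $\{Y_i-D\leq Y_1\}\cap\{Y_i-D\leq M-\delta_i\}\subseteq\{Y_i-D\leq Y_1\}\cap\{Y_i-D\leq M-\delta_1\}$. Thus, with $M':=M-\delta_1$, both conditional probabilities are exactly the quantities compared in \cref{lem:stage1-2variable}, applied with the pair $(a_1,a_i)$, threshold $M'$, and the gap $a_i-a_1\geq 1/(8n^3)$. I would then split into three cases.

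\emph{Case $M'=\infty$ or $M'$ very large}, namely $M'>(a_1-\sqrt{2\log s})^2$ when $s\geq3$. Here item~3 gives a conditional gain of $1/(100n^3s^6)$. For $s=2$, item~1 gives a gain of $1/(1000n^3)\geq 1/(300n^3 s^6)$ times a constant; this constant must be checked, and if it falls short I would instead redo the item~1 bound with the $1/(2000n^3)$ margin.

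\emph{Case $d/10<M'\leq(a_1-\sqrt{2\log s})^2$.} Item~2 gives a conditional difference $\geq0$.

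\emph{Case $M'\leq d/10$.} I expect this to be negligible: with $a_1,a_i\geq\sqrt{d/2}$, having $Y_1\leq d/10$ or $Y_i\leq d/10+D$ requires a Gaussian deviation of order $\sqrt d$. Both conditional probabilities are therefore at most $e^{-cd}$, so their difference is at least $-e^{-cd}$.

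Finally I would integrate over the conditioned variables. The gain comes from the event $\{M'>(a_1-\sqrt{2\log s})^2\}$, which I must show has probability bounded below by a constant, say $\geq 1/3$, uniformly in the $a_j,\delta_j$. This is the main obstacle, because the $\delta_j$ could make $M'$ small. The plan is to use $\delta_j\geq\delta_1$, which gives $M'\geq\min_{j\neq1,i}Y_j$. Then $Y_j>(a_1-\sqrt{2\log s})^2$ holds whenever $X_j<a_j-a_1+\sqrt{2\log s}$, and this is implied by $X_j<\sqrt{2\log s}$. By a union bound the probability that all $s-2$ of these hold is at least $1-(s-2)\,\Pr[X>\sqrt{2\log s}]\geq1-(s-2)/(s\sqrt{2\log s}\,\cdot\sqrt{2\pi})$ by \cref{lm:mill}, which is at least $1/2$.

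Hence $P_1-P_i\geq\frac12\cdot\frac{1}{100n^3s^6}-e^{-cd}\geq\frac{1}{300n^3s^6}$ under \cref{eq:standing-dimension}, using the first two cases. The $s=2$ case is handled separately with item~1.
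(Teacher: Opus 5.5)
Your proposal is correct and follows essentially the paper's argument: condition on the coordinates other than $X_1,X_i$ to form the threshold $M'=\min_{j\neq 1,i}(Y_j+\delta_j-\delta_1)$ (your containment step is the paper's monotonicity-in-$\delta_i$ reduction), apply items~2 and~3 of \cref{lem:stage1-2variable}, and use the Mills-ratio union bound to show the favorable event $\{M'>(a_1-\sqrt{2\log s})^2\}$ has probability at least $1/2$. The $s=2$ constant you flagged checks out immediately, since $1/(1000n^3)\geq 1/(300\cdot 2^6 n^3)$. Your direct bound on the conditional probability of the $P_i$-event when $M'\leq d/10$ is an equally valid substitute for the paper's bound $\Pr[M\leq d/10]\leq (s-2)e^{-d/20}$.
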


\begin{proof}
The indices $2,\ldots,s$ are symmetric, so it is enough to prove the claim for $i=2$. Holding all
other parameters fixed, $P_1$ is nondecreasing in $\delta_2$, while $P_2$ is nonincreasing in
$\delta_2$. Hence the smallest possible value of $P_1-P_2$ occurs at $\delta_2=\delta_1$, and we
may assume equality. If $s=2$, item~1 of \cref{lem:stage1-2variable} gives the stronger gap
$1/(1000n^3)$. We therefore assume $s\geq3$.

In this case, let $M$ be the random variable
\[
M=\min_{3\leq j\leq s}\left\{(a_j-X_j)^2+\delta_j-\delta_1\right\}.
\]
Set $Y_1=(a_1-X_1)^2$ and $Y_2=(a_2-X_2)^2$. Since $M$ depends only on
$X_3,\ldots,X_s$, it is independent of $(Y_1,Y_2)$. Conditional on $M$, the two events become
\[
P_1=\Pr[Y_1+d^{5\epsilon}\leq M,Y_2],\qquad
P_2=\Pr[Y_2-d^{5\epsilon}\leq M,Y_1].
\]
Let $P_1(M),P_2(M)$ denote the corresponding conditional probabilities over $(Y_1,Y_2)$.
Item~2 of \cref{lem:stage1-2variable} gives $P_1(M)\geq P_2(M)$ whenever $M>d/10$.

If $M\leq d/10$, then, because $\delta_j-\delta_1\geq0$, some $j\in\{3,\ldots,s\}$ satisfies
$|a_j-X_j|\leq\sqrt{d/10}$. Since $a_j\geq\sqrt{d/2}$, this implies
$X_j\geq\sqrt{d/2}-\sqrt{d/10}\geq\sqrt d/3$. Thus the Gaussian tail bound and a union bound give
\[
\Pr[M\leq d/10]\leq \sum_{j=3}^s\Pr[X_j\geq \sqrt{d}/3]\leq (s-2)\exp(-d/20).
\]

Next consider the region in which item~3 of \cref{lem:stage1-2variable} gives a strict gain. If
$X_j\leq\sqrt{2\log s}$ for every $3\leq j\leq s$, then $a_j\geq a_1$ and
$\delta_j\geq\delta_1$ imply
$M\geq(a_1-\sqrt{2\log s})^2$. Moreover, \cref{lm:mill} gives
\[
\Pr[X_j>\sqrt{2\log s}]\leq
\frac{1}{s\sqrt{4\pi\log s}}.
\]
and a union bound over $3\leq j\leq s$ shows that these inequalities hold simultaneously with
probability at least $1/2$. Therefore,
\[
\Pr\left[M>(a_1-\sqrt{2\log s})^2\right]\geq \Pr[X_j\leq \sqrt{2\log s},3\leq j\leq s]\geq \frac{1}{2}.
\]

Finally, average the conditional difference over $M$. On $M\leq d/10$ use the trivial lower bound
$P_1(M)-P_2(M)\geq-1$; on $M>d/10$ use item~2 of
\cref{lem:stage1-2variable}; and on the favorable subevent use the additional gain from item~3.
This gives
\begin{align*}
P_1-P_2&=\E_M\left[P_1(M)-P_2(M)\right]\\
&\geq -\Pr[M\leq d/10]+\frac{1}{100n^3s^6}\Pr[M>(a_1-\sqrt{2\log s})^2]\\
&\geq \frac{1}{200n^3s^6}-(s-2)\exp(-d/20)\\
&\geq \frac{1}{300n^3s^6}.
\end{align*}
The final comparison follows from \cref{eq:standing-dimension}.
\end{proof}

\begin{lemma}
\label{lm:stage1:psiclose}
Under assumptions in \cref{lm:stage1}, for any $1\leq i\leq m,j\in S_i$ such that $j\neq i$, we have
\[
\Pr_{x\sim i}\left[\psi_i(x)\geq1-\exp\left(-d^{\epsilon}\right)\right]\geq \Pr_{x\sim i}[\psi_j(x)>\exp(-d^\epsilon)]+\frac{1}{400n^9}.
\]
\end{lemma}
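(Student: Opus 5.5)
We reduce the statement to \cref{lm:stage1:orthoprob}, applied to the fitted components in $S_i$ and using the coordinates along the fixed orthogonal directions $u_k$. Fix $i\in[m]$, let $s:=|S_i|$, and order $S_i$ so that the representative $i$ comes first. (If $s=1$ the claim is vacuous.) Work on the good event $G_i=\{\|\bx-\mu_i^*\|\leq d^{2\epsilon}\}$, which fails with probability at most $e^{-d^{4\epsilon}/3}$ by \cref{lm:barx}. On $G_i$, \cref{lm:stage1close} together with the first invariant shows, as in \cref{lm:stage1:psifar}, that all components outside $S_i$ carry total responsibility at most $\exp(-d^{0.4})$. So up to this error, $\psi_i$ and $\psi_j$ are determined by a softmax over $k\in S_i$ of $\log\pi_k-\|x-\mu_k\|^2/2$.

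Next decompose each squared distance. For $k\in S_i$ write $x-\mu_k$ along $u_k$, along $\calP$, and along the remainder. Since the $u_k$ are pairwise orthogonal and orthogonal to $\calP$, under $x\sim i$ the scalars $X_k:=\langle x-\mu_i^*,u_k\rangle$ are i.i.d.\ $\calN(0,1)$. We then get
\[
\|x-\mu_k\|^2=(a_k-X_k)^2+\delta_k+E_k+R(x),
\]
with the following pieces. First, $a_k:=\|\botmu_k\|\geq\sqrt{d/2}$. Second, $R(x)$ is a term common to all $k$: the squared norm of the part of $x-\mu_i^*$ orthogonal to $\calP$ and to all the $u_l$, plus the $X_l^2$ with $l\neq k$. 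Third, $\delta_k$ is deterministic. Fourth, $E_k$ is an error from the cross terms with the $O(n)$-size components $\mu_k-\botmu_k-\parmu_k$ and $\parmu_k-\mu_i^*$ (invariants 3 and 5). On $G_i$ (extended to the span of the $u$'s), $E_k$ is $O(nd^{2\epsilon})$. This is $\leq d^{5\epsilon}/4$ by \cref{eq:standing-dimension}.

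To satisfy the hypothesis $\delta_1\leq\delta_k$ of \cref{lm:stage1:orthoprob}, we absorb $-2\log\pi_k$ into $\delta_k$ and exploit $\pi_i=\max_{k\in S_i}\pi_k$ (invariant 1). The deterministic parallel and residual parts of $\delta_k$ are not ordered a priori, so they must also be placed inside $E_k$ or bounded by $d^{5\epsilon}/4$. We expect them to be at most $O(n^2)$, which suffices. This bookkeeping is the main obstacle. If the residual term $\|\mu_k-\botmu_k-\parmu_k\|^2$ does not fit, we use instead that $a_k$ was defined via $\|\mu_k-\parmu_k\|$ and the second invariant, taking $a_k:=\|\mu_k-\parmu_k\|$ and accepting $O(n)$ corrections. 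The gap $a_1+\frac{1}{8n^3}\leq a_k$ follows from the second invariant, after these $O(n^2/\sqrt d)$ adjustments.

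Now apply \cref{lm:stage1:orthoprob}. The event defining $P_1$ (margin $d^{5\epsilon}$, with errors $\leq d^{5\epsilon}/2$) gives, for every $k\in S_i\setminus\{i\}$,
\[
\log\frac{\pi_k\phi(\mu_k;x)}{\pi_i\phi(\mu_i;x)}\leq -d^{5\epsilon}/4.
\]
Hence $\psi_i\geq1-n e^{-d^{5\epsilon}/4}-e^{-d^{0.4}}\geq1-e^{-d^\epsilon}$. Conversely, $\psi_j>e^{-d^\epsilon}$ forces, for every $k\in S_i$,
\[
\|x-\mu_j\|^2-2\log\pi_j\leq\|x-\mu_k\|^2-2\log\pi_k+2d^\epsilon+O(\log n).
\]
With the errors included this lies inside the event defining $P_j$, since $2d^\epsilon\ll d^{5\epsilon}/2$. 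Then
\[
\Pr[\psi_i\geq1-e^{-d^\epsilon}]-\Pr[\psi_j>e^{-d^\epsilon}]\geq P_1-P_j-2e^{-d^{4\epsilon}/3}\geq\frac{1}{300n^3s^6}-2e^{-d^{4\epsilon}/3}\geq\frac{1}{400n^9},
\]
using $s\leq n$ and \cref{eq:standing-dimension}.
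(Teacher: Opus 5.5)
Your proposal is essentially the paper's proof. On $G_i$ you reduce each $\|x-\mu_l\|^2$, $l\in S_i$, to $(a_l-X_l)^2$ with $a_l=\|\botmu_l\|$ and independent $X_l=\langle x-\mu_i^*,u_l\rangle$, take offsets $\delta_l=-2\log\pi_l$ (ordered by the first invariant), and sandwich the two responsibility events between the $P_1$ and $P_j$ events of \cref{lm:stage1:orthoprob} at a cost of $2e^{-d^{4\epsilon}/3}$. The one slip is that $\sum_{l\neq k}X_l^2$ depends on $k$, so it cannot be part of a common $R(x)$; it belongs in $E_k$, where it is at most $\|\bx-\mu_i^*\|^2\leq d^{4\epsilon}$ on $G_i$, still negligible against the margin $d^{5\epsilon}$. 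The deterministic pieces are at most $4n^2$ each by the third and fifth invariants, so the bookkeeping you flag as the main obstacle is immediate; the paper sidesteps the common term altogether by working with $\bx$, where the full residual $\|\bx-\bx_l-(\mu_l-\botmu_l)\|^2$ is at most $9d^{4\epsilon}$.
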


\begin{proof}
Fix the current Stage~1 iteration and evaluate all fitted parameters at this time. Let
$G_i:=\{\|\bx-\mu_i^*\|\leq d^{2\epsilon}\}$. For each fitted index $l$, let $u_l$ be the fixed
unit direction introduced before \cref{lm:stage1}, set
$y_l:=\langle\bx,u_l\rangle$, $\bx_l:=y_lu_l$, and $a_l:=\|\botmu_l\|$. We orient $u_l$ so that
$\botmu_l=a_lu_l$. Since $(\mu_l-\botmu_l)\perp u_l$, Pythagoras gives
\begin{equation}
\label{eq:stage1:one-dimensional-decomposition}
\|\bx-\mu_l\|^2=(y_l-a_l)^2+
\|\bx-\bx_l-(\mu_l-\botmu_l)\|^2.
\end{equation}

For $l\in S_i$, the third invariant and item~5(b) of \cref{lm:stage1} give
$\|\mu_l-\botmu_l-\mu_i^*\|\leq4n$. On $G_i$, orthogonality of $u_l$ and $\mu_i^*$ gives
$|y_l|\leq d^{2\epsilon}$ and
$\|\bx-\bx_l-\mu_i^*\|\leq2d^{2\epsilon}$. Since
$4n\leq d^{2\epsilon}$ under \cref{eq:standing-dimension},
\[
\|\bx-\bx_l-(\mu_l-\botmu_l)\|\leq3d^{2\epsilon}.
\]
Substituting this estimate into \cref{eq:stage1:one-dimensional-decomposition} yields
\begin{equation}
\label{eq:stage1:energy-one-dimensional}
\left|\|\bx-\mu_l\|^2-(a_l-y_l)^2\right|\leq9d^{4\epsilon}
\qquad(l\in S_i).
\end{equation}
Because the $u_l$'s are pairwise orthogonal and orthogonal to $\calP$, the variables
$\{y_l:l\in S_i\}$ are independent $\calN(0,1)$ variables under $x\sim i$.

We next verify the hypotheses of \cref{lm:stage1:orthoprob}. Put
$r_l:=\|\mu_l-\parmu_l\|$ and
$b_l:=\|\mu_l-\parmu_l-\botmu_l\|$. Then $a_l^2=r_l^2-b_l^2$. For
$k\in S_i\setminus\{i\}$, the second invariant and item~5(b) imply
\begin{align*}
a_k^2-a_i^2
&=(r_k-r_i)(r_k+r_i)-b_k^2+b_i^2\\
&\geq \frac{r_i}{n^3}-4n^2
\geq\frac{\sqrt d}{2n^3}.
\end{align*}
The final inequality uses $r_i\geq\sqrt d-2d^{2\epsilon}$ and the standing dimension condition.
The Stage~1 orthogonal-radius bound directly gives
$a_l\leq r_l\leq\sqrt d+2n\leq2\sqrt d$, independently of the true-center diameter. Hence
\[
a_k-a_i=\frac{a_k^2-a_i^2}{a_k+a_i}\geq\frac{1}{8n^3}.
\]
Likewise,
$a_l^2=r_l^2-b_l^2\geq(\sqrt d-2d^{2\epsilon})^2-(2n)^2\geq d/2$, so
$a_l\geq\sqrt{d/2}$.

Define the offsets $\delta_l:=-2\log\pi_l$. The first invariant in \cref{lm:stage1} gives
$\delta_i\leq\delta_l$ for $l\in S_i$. Let $P_1$ be the probability of
\begin{equation}
\label{eq:stage1:P1-event}
(a_i-y_i)^2+\delta_i+d^{5\epsilon}
\leq(a_l-y_l)^2+\delta_l
\qquad\text{for every }l\in S_i\setminus\{i\}.
\end{equation}
On $G_i$, \cref{eq:stage1:energy-one-dimensional} and
$18d^{4\epsilon}\leq d^{5\epsilon}/2$ imply that the unnormalized responsibility of component $i$
exceeds that of every other component in $S_i$ by a factor at least $e^{d^{5\epsilon}/4}$.
For $k\notin S_i$, \cref{lm:stage1close} and $\pi_i\geq\pi_i^*/n$ give the stronger comparison
\[
\frac{\pi_i e^{-\|\bx-\mu_i\|^2/2}}
{\pi_k e^{-\|\bx-\mu_k\|^2/2}}
\geq \frac{\pi_i^*}{n}e^{d^{1-2\epsilon}/4}
\geq e^{d^{1-2\epsilon}/6}.
\]
Therefore, on $G_i$ and the event in \cref{eq:stage1:P1-event},
\[
\psi_i(x)\geq\frac{e^{d^{5\epsilon}/4}}{e^{d^{5\epsilon}/4}+n}
\geq1-e^{-d^\epsilon}.
\]
Since \cref{lm:barx} gives $\Pr(G_i^c)\leq e^{-d^{4\epsilon}/3}$,
\begin{equation}
\label{eq:stage1:P1-to-responsibility}
\Pr_{x\sim i}[\psi_i(x)\geq1-e^{-d^\epsilon}]
\geq P_1-e^{-d^{4\epsilon}/3}.
\end{equation}

For the fixed competitor $j\in S_i\setminus\{i\}$, let $P_2$ be the probability of
\begin{equation}
\label{eq:stage1:P2-event}
(a_j-y_j)^2+\delta_j-d^{5\epsilon}
\leq(a_l-y_l)^2+\delta_l
\qquad\text{for every }l\in S_i\setminus\{j\}.
\end{equation}
If this event fails, then some $l\in S_i$ beats component $j$ by an energy margin
$d^{5\epsilon}$. On $G_i$, \cref{eq:stage1:energy-one-dimensional} then gives
$\psi_j(x)\leq\exp[-(d^{5\epsilon}-9d^{4\epsilon})/2]
\leq e^{-d^{5\epsilon}/4}<e^{-d^\epsilon}$. Consequently,
\begin{equation}
\label{eq:stage1:responsibility-to-P2}
\Pr_{x\sim i}[\psi_j(x)>e^{-d^\epsilon}]
\leq P_2+e^{-d^{4\epsilon}/3}.
\end{equation}

Apply \cref{lm:stage1:orthoprob} with $s=|S_i|$, distinguished index $i$, competitor $j$, radii
$a_l$, and offsets $\delta_l$. Since $s\leq n$,
\[
P_1-P_2\geq\frac{1}{300n^3s^6}\geq\frac{1}{300n^9}.
\]
Combining this inequality with
\cref{eq:stage1:P1-to-responsibility,eq:stage1:responsibility-to-P2}, and using
$2e^{-d^{4\epsilon}/3}\leq1/(1200n^9)$, proves
\[
\Pr_{x\sim i}[\psi_i(x)\geq1-e^{-d^\epsilon}]
\geq\Pr_{x\sim i}[\psi_j(x)>e^{-d^\epsilon}]+\frac{1}{400n^9}.
\]
\end{proof}

\begin{lemma}
\label{lm:stage1:nocorrolation}
For any $i,j\in [n],i\neq j$, let $r=\|\mu_i-\mu_j\|$. Then for any $k\in [m]$,
$\E_{x\sim k}[\psi_i(x)\psi_j(x)]\leq \frac{1}{1.1r}$ when $r>0$.
If $r=0$, the bound $\psi_i\psi_j\leq1/4$ applies instead. Moreover, under the assumptions in
\cref{lm:stage1}, we have
$\E_{x\sim k}[\psi_i(x)\psi_j(x)]\leq \frac{1}{1.1\sqrt{d}}$.
\end{lemma}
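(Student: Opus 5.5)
The plan is to reduce the expectation to a one-dimensional integral of a logistic-type kernel. The reduction rests on the elementary bound $\psi_i\psi_j\leq \frac{\psi_i\psi_j}{(\psi_i+\psi_j)^2}$, valid since $\psi_i+\psi_j\leq 1$. Set
\[
u(x):=\log\frac{\pi_i\phi(\mu_i;x)}{\pi_j\phi(\mu_j;x)}=\langle x,\mu_i-\mu_j\rangle-\tfrac12\left(\|\mu_i\|^2-\|\mu_j\|^2\right)+\log\frac{\pi_i}{\pi_j}.
\]
The normalizing denominator cancels in the ratio, so
\[
\frac{\psi_i\psi_j}{(\psi_i+\psi_j)^2}=h(u(x)),\qquad h(u):=\frac{e^u}{(1+e^u)^2}.
\]
Two facts about $h$ will be used: $h\leq 1/4$, which immediately covers the case $r=0$, and $\int_{\R}h(u)\,du=1$, since $h$ is the derivative of the logistic function.

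For $r>0$, write $e:=(\mu_i-\mu_j)/r$ and $z:=\langle x,e\rangle$. Then $u(x)=rz+c$ for a constant $c$ that does not depend on $x$. Under $x\sim k$, the scalar $z$ is $\calN(\langle\mu_k^*,e\rangle,1)$, so its density is bounded by $1/\sqrt{2\pi}$. This gives
\[
\E_{x\sim k}[\psi_i\psi_j]\leq \E_z[h(rz+c)]\leq \frac{1}{\sqrt{2\pi}}\int_{\R}h(rz+c)\,dz=\frac{1}{r\sqrt{2\pi}}\leq\frac{0.4}{r}\leq\frac{1}{1.1r}.
\]
This proves the first claim, with room to spare in the constant.

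For the second claim, under the Stage~1 invariants of \cref{lm:stage1}, I will lower bound $r=\|\mu_i-\mu_j\|$ by projecting onto the fixed unit direction $u_i$. By construction $\langle\mu_i,u_i\rangle=a_i:=\|\botmu_i\|$. For $\mu_j$, decompose it as $\parmu_j+\botmu_j+(\mu_j-\parmu_j-\botmu_j)$. The first term is orthogonal to $u_i$ because $u_i\perp\calP$. The second is orthogonal to $u_i$ by pairwise orthogonality of the $u_l$. The third has norm at most $2n$ by item 5(b). Hence $|\langle\mu_j,u_i\rangle|\leq 2n$, and $r\geq a_i-2n$.

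It remains to bound $a_i$ from below. As in the proof of \cref{lm:stage1:psiclose}, $a_i^2=r_i^2-b_i^2$ with $r_i\geq\sqrt d-2d^{2\epsilon}$ and $b_i\leq 2n$, so $a_i\geq\sqrt{d/2}$. Therefore $r\geq\sqrt{d/2}-2n\geq 0.6\sqrt d$ by \cref{eq:standing-dimension}. Plugging this into the first part gives $\E\leq 0.4/(0.6\sqrt d)\leq 1/(1.1\sqrt d)$.

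The main subtlety is the second part: one must check that the cross-component $\langle\mu_j,u_i\rangle$ is controlled only through item 5 of the invariants and not through any separation of the true means, which the stated bounds do allow. The first part is a clean exact computation once the logistic kernel $h$ is identified.
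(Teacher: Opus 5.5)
Your proof is correct. The first part is essentially the paper's argument. The paper also bounds $\psi_i\psi_j$ by $\frac{\psi_i}{\psi_i+\psi_j}\cdot\frac{\psi_j}{\psi_i+\psi_j}=1/(2+e^{u}+e^{-u})$, which is exactly your $h(u)$. It then loosens this to $e^{-|u|}$ before integrating against the bounded one-dimensional Gaussian density, so it gets $2/(\sqrt{2\pi}\,r)$ where you get $1/(\sqrt{2\pi}\,r)$.

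The second part is where you take a genuinely different route. The paper splits into two cases.
\begin{itemize}
\item If $i$ and $j$ lie in different groups, it invokes \cref{lm:stage1:psifar}: for every source $k$, one of the two indices lies outside $S_k$, so $\E_{x\sim k}[\psi_i\psi_j]$ is exponentially small.
\item If they lie in the same group, it lower-bounds $\|\mu_i-\mu_j\|$ by $\|\botmu_i-\botmu_j\|$, minus the two residuals, minus $\|\parmu_i-\parmu_j\|\leq 4n$. This gives $r\geq\sqrt d$.
\end{itemize}
That last subtraction is why the paper's distance argument is restricted to a single group.

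Your projection onto the one fixed direction $u_i$ annihilates both $\parmu_j$ and $\botmu_j$ exactly. The only surviving cross term is the residual of norm at most $2n$, so $r\geq\sqrt{d/2}-2n$ holds uniformly for every pair. This needs no case split and no appeal to \cref{lm:stage1:psifar}. Hence it also avoids \cref{lm:stage1close}, the true-center separation, and the weight invariant, and uses only the radius and orthogonality invariants (items 4 and 5). In exchange, the paper's route yields an exponentially small bound for cross-group pairs and a somewhat larger distance for same-group pairs. Neither is needed for the statement as written.
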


\begin{proof}
Fix the current iteration and assume $r>0$. Retaining only the two terms indexed by $i$ and $j$ in the responsibility
denominators gives
\[
\Delta(x) := (\|x-\mu_i\|^2 - \|x-\mu_j\|^2)/2
= (\mu_j-\mu_i)^\top x + \frac{\|\mu_i\|^2 - \|\mu_j\|^2}{2}.
\]
Let $a := \pi_j/\pi_i$. Then
\[
\psi_i(x) \le \frac{1}{1 + a e^{\Delta(x)}}, \qquad
\psi_j(x) \le \frac{1}{1 + a^{-1} e^{-\Delta(x)}}.
\]
Thus
\[
\psi_i(x)\psi_j(x)
\le \frac{1}{(1 + a e^{\Delta(x)})(1 + a^{-1} e^{-\Delta(x)})}
\leq \frac{1}{e^{|\Delta(x)+\log a|}}.
\]

Let
$\zeta := \frac{\|\mu_i\|^2 - \|\mu_j\|^2}{2} + \log a+(\mu_j-\mu_i)^\top\mu_k^*$
and $\sigma := r$. Under $x\sim\calN(\mu_k^*,I_d)$,
\[
Z := \Delta(x) + \log a
= (\mu_j-\mu_i)^\top x + \left(\frac{\|\mu_i\|^2 - \|\mu_j\|^2}{2} + \log a\right)
\sim \mathcal N(\zeta,\sigma^2).
\]
Thus
\[
\E[\psi_i(X)\psi_j(X)]
\le \E[e^{-|Z|}].
\]

Let $\varphi_{\zeta,\sigma}$ be the density of $\mathcal N(\zeta,\sigma^2)$. Then
\[
\E[e^{-|Z|}]
= \int_{\R} e^{-|z|}\varphi_{\zeta,\sigma}(z)\,dz
= \int_0^\infty e^{-t}\big(\varphi_{\zeta,\sigma}(t) + \varphi_{\zeta,\sigma}(-t)\big)\,dt.
\]
Since $\varphi_{\zeta,\sigma}(z)\le (\sqrt{2\pi}\sigma)^{-1}$ for every $z$,
\[
\E[e^{-|Z|}]
\le \frac{2}{\sqrt{2\pi}\sigma}\int_0^\infty e^{-t}\,dt
= \frac{2}{\sqrt{2\pi}\sigma}
= \frac{2}{\sqrt{2\pi}r}.
\]
Hence
\[
\E[\psi_i(X)\psi_j(X)]
\le \frac{2}{\sqrt{2\pi}r}
\leq \frac{1}{1.1r}.
\]
It remains to derive the uniform bound $(1.1\sqrt d)^{-1}$ under the Stage~1 invariants. If $i$ and
$j$ belong to different groups, then for every true source $k$, at least one of $i,j$ is outside
$S_k$. Hence \cref{lm:stage1:psifar} and $\psi_i\psi_j\leq\min\{\psi_i,\psi_j\}$ give
\[
\E_{x\sim k}[\psi_i(x)\psi_j(x)]\leq2e^{-d^{4\epsilon}/3}
\leq\frac{1}{1.1\sqrt d}.
\]
If $i$ and $j$ are in the same group, item~5(c) gives $\botmu_i\perp\botmu_j$, while the other
components are bounded by the third and fifth invariants. Therefore
\begin{align*}
\|\mu_i-\mu_j\|&\geq \|\botmu_i-\botmu_j\|-\|\mu_i-\botmu_i-\parmu_i\|-\|\mu_j-\botmu_j-\parmu_j\|-\|\parmu_i-\parmu_j\|\\
&\geq \sqrt{2}\cdot (\sqrt{d}-3d^{2\epsilon})-8n\geq \sqrt{d}.
\end{align*}
Applying the already-proved $1/(1.1r)$ bound completes this case. All final comparisons follow from
\cref{eq:standing-dimension}.

\end{proof}

Now we are ready to prove \cref{lm:stage1}.

\begin{proof}[Proof of \cref{lm:stage1}]
We prove all invariants simultaneously by induction on $t$. At $t=0$, the geometric statements are
exactly the conclusions of \cref{thm:init}. The algorithm initializes every fitted weight to $1/n$,
so the first invariant also holds. Assume the invariants hold at some $t<T_1$. Throughout the proof,
unprimed quantities are evaluated at time $t$ and primed quantities at time $t+1$. We first propagate the invariants and then record the strengthened terminal gap.

\paragraph{Gaps inside one group.}
Fix a group $S_q$ ($q\in [m]$) and a competitor $k\in S_q\setminus\{q\}$. Put
$\alpha:=e^{-d^\epsilon}$. By \cref{lm:stage1:psiclose},
\[
\Pr_{x\sim q}[\psi_q(x)\geq1-\alpha]
\geq\Pr_{x\sim q}[\psi_k(x)>\alpha]+\frac{1}{400n^9}.
\]
Since $0\leq\psi_l\leq1$, this probability inequality implies both a first-moment and a
second-moment gap. Indeed,
\begin{align}
\E_{x\sim q}[\psi_q(x)]
&\geq(1-\alpha)\Pr_{x\sim q}[\psi_q(x)\geq1-\alpha] \notag\\
&\geq\Pr_{x\sim q}[\psi_k(x)>\alpha]+\frac{1}{450n^9} \notag\\
&\geq\E_{x\sim q}[\psi_k(x)]+\frac{1}{500n^9},
\label{eq:stage1:first-moment-gap}\\
\E_{x\sim q}[\psi_q(x)^2]
&\geq(1-\alpha)^2\Pr_{x\sim q}[\psi_q(x)\geq1-\alpha] \notag\\
&\geq\E_{x\sim q}[\psi_k(x)^2]+\frac{1}{500n^9}.
\label{eq:stage1:second-moment-gap}
\end{align}
For the last line of each estimate, split the expectation of $\psi_k$ or $\psi_k^2$ according to
$\{\psi_k>\alpha\}$ and use
$\alpha\leq1/(10^4n^9)$. This comparison follows from \cref{eq:standing-dimension}.

\paragraph{Proof of the first item.}
The exact EM weight update is
$\pi_l'=\E_{x\sim p^*}[\psi_l(x)]$. For $l\neq q$ where $l\in [m]$, both $q$ and $k$ are outside $S_l$; hence
\cref{lm:stage1:psifar} bounds the magnitude of their total contribution from true groups other than
$q$ by $4e^{-d^{4\epsilon}/3}$. Combining this with
\cref{eq:stage1:first-moment-gap} gives
\begin{equation}
\label{eq:stage1:global-weight-gap}
\pi_q'-\pi_k'
\geq\frac{\pi_q^*}{500n^9}-4e^{-d^{4\epsilon}/3}
\geq\frac{\pi_q^*}{600n^9}>0.
\end{equation}
Thus the representative $\pi_q$ remains the unique largest-weight member of $S_q$ after the update.

It remains to show $\pi_q'>\pi_q^*/n$. Let $s:=|S_q|$. By
\cref{lm:stage1:psisum}, the updated total weight of $S_q$ is at least
$\pi_q^*(1-2e^{-d^{4\epsilon}/3})$. If $s=1$, this already proves the claim. If $s\geq2$, then
\cref{eq:stage1:global-weight-gap} and elementary averaging imply
\begin{align*}
\pi_q'
&\geq\frac{1}{s}\sum_{l\in S_q}\pi_l'
+\frac{s-1}{s}\frac{\pi_q^*}{600n^9}\\
&\geq\frac{\pi_q^*}{n}\left(1-2e^{-d^{4\epsilon}/3}\right)
+\frac{\pi_q^*}{1200n^9}
>\frac{\pi_q^*}{n}.
\end{align*}
This proves the first invariant at time $t+1$.

\paragraph{A one-step gradient decomposition.}
Fix a fitted index $r\in S_q$ and define
\[
A_r:=\pi_q^*\E_{x\sim q}[\psi_r(x)^2],
\qquad
e_r:=\nabla_{\mu_r}\calL-A_r(\mu_r-\mu_q^*).
\]
For $l\in S_h$, the induction hypotheses give the local radius bound
\begin{equation}
\label{eq:stage1:uniform-diameter}
\|\mu_l-\mu_h^*\|\leq\sqrt d+4n\leq2\sqrt d.
\end{equation}
Put $\beta=e^{-d^{4\epsilon}/3}$. For source $q$, the same-group terms with $l\neq r$
are bounded using \cref{lm:stage1:nocorrolation}; the other fitted indices are bounded using
\cref{eq:stage1:weighted-cross}. For each other source $h\neq q$, use
$\sum_{l\in S_h}\psi_l\leq1$ and $\E_h\psi_r\leq2\beta$ for its local terms, and the
distance-weighted bound for its nonlocal terms. Since $\psi_r\leq1$, this gives
\begin{align*}
\|e_r\|
&\leq\frac{2n\pi_q^*}{1.1}
+16n\pi_q^*\sqrt d\,\beta
+4\sqrt d\,\beta+16n\sqrt d\,\beta\\
&\leq2n\pi_q^*.
\end{align*}
The last comparison uses the standing dimension, in particular $\pi_q^*\geq\pimin^*$.
It follows also that
$\|\nabla_{\mu_r}\calL\|\leq A_r(\sqrt d+4n)+2n\pi_q^*\leq2\sqrt d$.
No upper bound on a cross-group distance has been used.
Thus the one-step mean update can be written as
\begin{equation}
\label{eq:stage1:one-step-decomposition}
\mu_r'=(1-\eta A_r)\mu_r+\eta A_r\mu_q^*-\eta e_r,
\qquad \|e_r\|\leq2n\pi_q^*.
\end{equation}
The learning-rate assumption ensures $0\leq\eta A_r\leq1$.

\paragraph{Proof of the second item.}
Let $r_l:=\|\mu_l-\parmu_l\|$. Projecting
\cref{eq:stage1:one-step-decomposition} onto $\calP^\bot$ gives
\begin{equation}
\label{eq:stage1:orthogonal-update}
P_{\calP^\bot}\mu_l'=(1-\eta A_l)P_{\calP^\bot}\mu_l
-\eta P_{\calP^\bot}e_l.
\end{equation}
For $k\in S_q\setminus\{q\}$, let $\delta:=r_k-r_q\geq1/n^3$. The reverse and ordinary triangle
inequalities, \cref{eq:stage1:second-moment-gap}, and $\|e_l\|\leq2n\pi_q^*$ imply
\begin{align*}
r_k'-r_q'
&\geq(1-\eta A_k)r_k-(1-\eta A_q)r_q
-\eta(\|e_k\|+\|e_q\|)\\
&=\delta-\eta A_k\delta+\eta(A_q-A_k)r_q
-\eta(\|e_k\|+\|e_q\|)\\
&\geq\delta+\frac{\eta\pi_q^*\sqrt d}{800n^9}.
\end{align*}
In the last step we used $r_q\geq\sqrt d-2d^{2\epsilon}$ and the consequences
$\eta A_k\delta+4\eta n\pi_q^*\leq\eta\pi_q^*\sqrt d/(4000n^9)$ of
\cref{eq:standing-dimension}. In particular, the gap remains at least $1/n^3$, proving the second
invariant at time $t+1$. Iterating the strict increment for $T_1$ steps gives
\[
r_k(T_1)-r_q(T_1)
\geq\frac1{n^3}+T_1\frac{\eta\pi_q^*\sqrt d}{800n^9}
\geq d^\epsilon,
\]
where the last inequality follows from
$T_1\geq1000n^{10}d^{\epsilon-0.5}/\eta$ and
$n\pi_q^*\geq n\pimin^*\geq3\log m>1$. This proves the strengthened terminal gap.

\paragraph{Proof of the third and fifth items.}
Let $Q_r$ denote the orthogonal projection onto the complement of
$\calP\oplus\operatorname{span}\{u_r\}$. Then
$Q_r\mu_r=\mu_r-\parmu_r-\botmu_r$. Projecting
\cref{eq:stage1:one-step-decomposition} onto $\calP$ and onto the range of $Q_r$ gives
\begin{align*}
\left(\parmu_r\right)'-\mu_q^*
&=(1-\eta A_r)(\parmu_r-\mu_q^*)-\eta P_{\calP}e_r,\\
Q_r\mu_r'&=(1-\eta A_r)Q_r\mu_r-\eta Q_re_r.
\end{align*}
Because $0\leq1-\eta A_r\leq1$, both norms can increase by at most
$\eta\|e_r\|\leq2n\eta\pi_q^*\leq2n\eta$ in one step. Starting from the initialization bounds
in \cref{thm:init}, we obtain
\begin{align*}
\|(\parmu_r)^{(t)}-\mu_q^*\|&\leq n+2n\eta t,\\
\|\mu_r^{(t)}-(\parmu_r)^{(t)}-(\botmu_r)^{(t)}\|&\leq n+2n\eta t.
\end{align*}
The fixed directions $u_r$ remain pairwise orthogonal, so the corresponding vectors
$(\botmu_r)^{(t)}$ remain pairwise orthogonal as well. Finally,
$2n\eta T_1\leq n$ under \cref{eq:standing-dimension}; this proves the third and fifth invariants,
including their displayed upper bounds by $2n$.

\paragraph{Proof of the fourth item.}
Equation~\eqref{eq:stage1:orthogonal-update} shows
$r_r'\leq r_r+2n\eta\pi_q^*$, because the principal term is a contraction. Hence
\[
r_r(t)\leq r_r(0)+2n\eta t\leq\sqrt d+n+2n\eta t.
\]
For the lower bound, use the crude gradient estimate following
\cref{eq:stage1:uniform-diameter}. Orthogonal projection is nonexpansive, so
\begin{align*}
r_r(t)&\geq r_r(0)-\sum_{s=0}^{t-1}\eta
\|\nabla_{\mu_r}\calL(\pi^{(s)},\mu^{(s)})\|\\
&\geq\sqrt d-n-2\eta t\sqrt d\\
&\geq\sqrt d-2d^{2\epsilon}.
\end{align*}
The last step uses
$n+2\eta T_1\sqrt d\leq2d^{2\epsilon}$, which follows from
\cref{eq:standing-dimension}. This proves the fourth invariant.

All five invariants have now been propagated from $t$ to $t+1$. Their strengthened versions at
$t=T_1$ were established in the same argument, completing the induction.
\end{proof}

\subsection{Stage 2 Analysis}
\label{subsec:stage2}
We reset $t=0$ at the beginning of this stage. For $1\leq i\leq n$, let
$\lambda_i:=\|\mu_i^{(0)}-(\parmu_i)^{(0)}\|$; thus $\lambda_i$ is fixed throughout Stage~2.
Write $\Delta=d^{0.5-\epsilon}$ and define
\begin{equation}
\label{eq:stage2:overlap-time}
K:=\left\lceil
\frac{\left(1+\frac{\pimin^*}{4\pi_1^*}\right)d^{-2\epsilon}}{2\pi_1^*\eta}
\right\rceil.
\end{equation}
Beginning in this stage, we assume that for all $1 \le i < j \le m$, $\|\mu_i^* - \mu_j^*\| = d^{0.5-\epsilon}.$
Such a configuration exists in an $(m-1)$-dimensional linear subspace: writing
$\mathbf 1_m=(1,\ldots,1)$, one may take
\[
\mu_j^*=\frac{d^{0.5-\epsilon}}{\sqrt2}
\left(e_j-\frac1m\mathbf 1_m\right),\qquad 1\leq j\leq m.
\]

By \cref{thm:stage1}, we know $\sqrt{d}-2d^{2\epsilon}\leq \lambda_i\leq \sqrt{d}+2n$.
Throughout the proof below we analyze a fixed iteration $t$. Unless a time superscript is displayed,
$\mu_i$, $\pi_i$, and $\psi_i$ mean the parameters and responsibilities computed from
$(\mu^{(t)},\pi^{(t)})$. We write $\mu_i'$ and $\pi_i'$ for the parameters after the update, i.e.,
at time $t+1$. Thus every occurrence of $\mu_i$ in
\cref{lm:stage2:similar,lm:stage2:mu1small,lm:stage2:differentpsi,lm:stage2:psiprod,lm:stage2:larget,lm:stage2:gradient}
refers to a time-$t$ iterate. This convention is important because these lemmas are invoked inside the
outer induction proving \cref{lm:stage2}.

\begin{lemma}
\label{lm:stage2}
For every $0\leq t\leq T_2:=\lceil\log d/(\pi_1^*\eta)\rceil$, the following invariants hold:
\begin{itemize}
\item We always have $\pi_1\geq1/n^2$.
\item For $2\leq i\leq n,i\in S_j$, $\|\parmu_i-\mu_j^*\|\leq 2n+\frac{800}{(\pigap)^2}$. Also, $\|\mu_i-\parmu_i\|\leq \sqrt{d}+2n+\frac{800}{(\pigap)^2}$.
If $i>m$, then in addition
\[
\|\mu_i-\parmu_i\|\geq\lambda_i-\eta t\exp(-d^{\epsilon/2}).
\]
\item For any $j\in S_i$, $j\neq i$, we have
$\|\mu_j-\parmu_j\|\geq\|\mu_i-\parmu_i\|+d^\epsilon/2$ and
$\|\mu_j-\mu_i^*\|\geq \|\mu_i-\mu_i^*\|+d^{\epsilon}/3$;
\item For any $2\leq i\leq m$, $\|\mu_i-\parmu_i\|\geq \lambda_i-\pi_i^* \eta t\left(\sqrt{d}+80d^{2\epsilon}/\pigap\right)$.
\item For any $2\leq i\leq n$, 
$\|\mu_i-\parmu_i\|\geq \sqrt{d}-100d^{0.5-2\epsilon}/\pigap$.
\item $\|\mu_1-\parmu_1\|\leq \lambda_1\left(1-\pi_1^* \eta+d^{-0.4}\eta\right)^t$.
\item When $1\leq t\leq d^{-2\epsilon}/(10\eta)$, for $1\leq i\leq m$,
$\pi_i\geq \pi_i^*/2$.
\item When $t\leq d^{-2\epsilon}/(10\eta)$,
$\|\parmu_1-\mu_1^*\|\leq 3n+10/\pigap$ and, for every $1\leq j\leq n$,
\begin{equation}
\label{eq:stage2:mu1-lower}
\|\mu_j-\parmu_j\|\geq \|\mu_j^{(0)}-(\parmu_j)^{(0)}\|-2\eta t\sqrt d
\geq \sqrt d-2d^{2\epsilon}-\frac15d^{0.5-2\epsilon}.
\end{equation}
\item For each $1\leq i\leq m$, $\|\parmu_1-\mu_i^*\|\leq d^{0.5-\epsilon}+2n+\frac{800}{(\pigap)^2}$.
\end{itemize}

Moreover, at $t=T_2$ we have
\begin{itemize}
\item $\|\mu_1-\parmu_1\|\leq 1$.
\item $\pi_1\geq1-2n\exp(-d^{2\epsilon/3}/3)$, and
$\pi_i\leq2\exp(-d^{2\epsilon/3}/3)$ for every $i\geq2$.
\end{itemize}
\end{lemma}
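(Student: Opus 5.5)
We will prove all invariants of \cref{lm:stage2} simultaneously by induction on $t$, in the same style as the proof of \cref{lm:stage1}. At $t=0$ every item follows from \cref{thm:stage1}, using $\lambda_i\geq\sqrt d-2d^{2\epsilon}$. The first-moment gap argument of Stage~1 also gives $\pi_i\geq\pi_i^*/2$ after one step. For the inductive step we fix $t<T_2$ and, as in Stage~1, work in the fixed low-dimensional subspace spanned by $\calP$ and the initial means. There responsibilities depend only on $\bx$, so \cref{lm:barx} confines $\bx$ to a $d^{2\epsilon}$-ball around the source. This lets us reuse \cref{lm:stage1close}, \cref{lm:stage1:psifar}, \cref{lm:far-weighted} and \cref{lm:stage1:nocorrolation}, with the relaxed radius bounds supplied by the current invariants.

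The argument splits into two phases.

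\textbf{Early phase}, $t\leq K$, which contains $t\leq d^{-2\epsilon}/(10\eta)$. Here the groups are still separated. The gradient decomposition \eqref{eq:stage1:one-step-decomposition} holds with $A_r=\pi_q^*\E_{x\sim q}[\psi_r^2]$. For representatives, $\E_q[\psi_q^2]\geq 1-O(d^{-\epsilon})$, which follows from the $d^\epsilon$ gap and the proof of \cref{lm:stage1:psiclose}. For non-representatives, $\E_q[\psi_j]$ is exponentially small.

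Consequently each representative's orthogonal radius contracts at rate $\approx\pi_q^*\eta$. This gives the item $\|\mu_1-\parmu_1\|\leq\lambda_1(1-\pi_1^*\eta+d^{-0.4}\eta)^t$ and the lower bounds $\lambda_i-\pi_i^*\eta t(\sqrt d+\dots)$ for $2\le i\le m$. Non-representatives move by at most $\eta e^{-d^{\epsilon/2}}$ per step.

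The radii shrink like $\lambda(1-\pi_i^*\eta t)$. Since $\pi_1^*-\pi_i^*\geq\pigap$, after about $d^{-2\epsilon}/(2\pi_1^*\eta)$ steps we get
\[
\|\mu_i-\parmu_i\|-\|\mu_1-\parmu_1\|\gtrsim \pigap\, d^{0.5-2\epsilon}/\pi_1^*.
\]
Squared, this difference is of order $d^{1-2\epsilon}\pigap$. That dominates the squared in-plane separation $\Delta^2=d^{1-2\epsilon}$ only once the coefficient is large enough, which is exactly how $K$ is tuned. At that point $\mu_1$ becomes the closest fitted mean to every source.

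\textbf{Late phase}, $t>K$. Using this energy comparison against component $1$ (with $\pi_1\geq1/n^2$ in the denominator), we show that every $\psi_i$ with $i\geq2$ has exponentially small expectation $\leq e^{-d^{2\epsilon/3}/3}$. This yields the weight bounds and $\pi_1\geq1-2n e^{-d^{2\epsilon/3}/3}$. It also implies that the gradients of all $\mu_i$ with $i\ge2$ are tiny, which freezes their radii and in-plane positions. The $800/(\pigap)^2$ slack absorbs the bounded drift accumulated during the early phase.

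For $\mu_1$ we use $\nabla_{\mu_1}\calL\approx\sum_j\pi_j^*\E_j[\psi_1^2](\mu_1-\mu_j^*)$ with $\psi_1\approx1$. This is a contraction toward $\sum_j\pi_j^*\mu_j^*$, which lies in $\calP$. Its orthogonal component therefore decays as $(1-\eta+\dots)^t$, and after $T_2=\log d/(\pi_1^*\eta)$ steps it is below $1$, since $\lambda_1\le 2\sqrt d$. The in-plane component stays within the convex hull plus $O(n)$, giving the last item.

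\textbf{Main obstacle.} The hard step is the transition window around $t\approx K$. There the representatives $2,\dots,m$ are neither clearly dominant in their own groups nor clearly dominated by $\mu_1$, so responsibilities are genuinely mixed and the first-order decomposition loses its clean form. My first attempt would be a crude bound in this window: $\|\nabla\|\le 2\sqrt d$ per step, integrated over a window of length $O(d^{-2\epsilon}/(\pigap\eta))$. This is what produces the $100d^{0.5-2\epsilon}/\pigap$ radius loss and the $80d^{2\epsilon}/\pigap$ correction. I would then check that \eqref{eq:stage2:mu1-lower} still keeps all non-representatives at distance $\geq\sqrt d/2$ from every source, so that \cref{lm:far-weighted} continues to apply.
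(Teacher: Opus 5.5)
There is a genuine gap, and it is exactly where you place the main obstacle.

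\textbf{Why the crude bound fails in the transition window.} You propose to bound $\|\nabla_{\mu_i}\calL\|\leq2\sqrt d$ for a representative $2\leq i\leq m$ during the window after $K$. Per unit of $\eta t$, this speed exceeds the rate $\approx\pi_1^*\sqrt d$ at which $r_1=\|\mu_1-\parmu_1\|$ shrinks, so it cannot certify $r_i>r_1$. Over the window of length $t_s=10d^{-2\epsilon}/(\eta\pigap)$ it permits a radius loss of $20d^{0.5-2\epsilon}/\pigap$. The rate difference $\pi_1^*-\pi_i^*$ only builds $r_i-r_1$ to order $d^{0.5-2\epsilon}$ over the same window (\cref{lm:stage2:mu1small}).

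That ordering is the whole mechanism. It gives $r_i^2-r_1^2\geq2d^{1-2\epsilon}>\Delta^2$ after $t_s$. Hence $\E_{x\sim i}[\psi_i]$ is exponentially small (\cref{lm:stage2:larget}). Hence the terminal weight bounds hold and $\mu_i$ freezes, which protects the fifth invariant over the remaining $T_2$ steps; otherwise $\mu_i$ keeps contracting toward $\mu_i^*$. The factor $\pi_i^*$ in the fourth invariant is precisely what the crude bound discards. The $80d^{2\epsilon}/\pigap$ there is a per-step error, not an integrated one.

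\textbf{The role of $K$ is misidentified.} You claim that $\mu_1$ becomes closest to every source at $t\approx K$, and that every $\psi_i$, $i\geq2$, is exponentially small for $t>K$. Both are false. Up to time $K$ each representative still dominates its own source (\cref{lm:stage2:early}). By time $K$ one has $r_1^2\leq d-(1+\pimin^*/(8\pi_1^*))\Delta^2$, which only lets $\mu_1$ beat the non-representatives. $\mu_1$ overtakes representative $i$ at source $i$ only once $2(\pi_1^*-\pi_i^*)\eta t\,d\gtrsim\Delta^2$. This can be far later than $K$ when $\pigap\ll\pi_1^*$.

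\textbf{The missing idea.} The first-order decomposition does not break down in the window. Write $\nabla_{\mu_i}\calL=\pi_i^*\E_{x\sim i}[\psi_i^2](\mu_i-\mu_i^*)+e_i$. The principal term is a contraction toward $\mu_i^*$ with coefficient in $[0,\pi_i^*]$, however mixed the responsibilities are. The only non-negligible part of $e_i$ is the $(1,i)$ interaction $\pi_i^*\E_{x\sim i}[\psi_i\psi_1(\mu_1-\mu_i^*)]$.

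Bounding $\psi_1\psi_i\leq1/4$ would still leave an error of order $\pi_i^*\sqrt d$, comparable to the principal term. What works is the product estimate $\E_{x\sim k}[\psi_i\psi_j]\leq1/(1.1\|\mu_i-\mu_j\|)$ of \cref{lm:stage1:nocorrolation}. Combine it with $\|\mu_i-\mu_1\|\geq r_i-r_1\geq\frac{\pigap}{30}d^{0.5-2\epsilon}$ once $t>d^{-2\epsilon}/(10\eta)$. Before that time, early domination makes $\psi_1$ negligible under source $i$.

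This gives $\|e_i\|\leq70\pi_i^*d^{2\epsilon}/\pigap$ (\cref{lm:stage2:psiprod,lm:stage2:gradient}) and closes a bootstrap:
\begin{itemize}
\item the speed bound $\pi_i^*(\sqrt d+80d^{2\epsilon}/\pigap)$ keeps the radius gap growing;
\item the gap keeps the product small;
\item summing $\eta\cdot70d^{2\epsilon}/\pigap$ over $t_s$ steps gives $700/(\pigap)^2$.
\end{itemize}
That last sum is where the $800/(\pigap)^2$ in-plane slack comes from. The early-phase drift is at most $1$, and a crude motion of order $d^{0.5-2\epsilon}/\pigap$ could not fit in that slack.

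\textbf{A smaller point.} The bound $\E_{x\sim q}[\psi_q^2]\geq1-O(d^{-\epsilon})$ is too weak for the $d^{-0.4}$ margin in the sixth invariant. You need exponentially small cross-responsibilities, which requires the finer $d^{\epsilon/3}$-ball of \cref{lm:stage2:similar}. On the $d^{2\epsilon}$-ball of \cref{lm:barx}, the worst-case sample perturbation of order $\sqrt d\,d^{2\epsilon}$ exceeds the within-group squared-distance gap of order $\sqrt d\,d^{\epsilon}$.
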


The proof first establishes \cref{lm:stage2:early} directly from the Stage~1 output.
This preliminary lemma does not assume any Stage~2 invariant. We then prove the properties in \cref{lm:stage1} by induction.

\begin{lemma}
\label{lm:stage2:similar}
Let $x\sim \calN(\mu_i^*,I_d)$. With probability at least $1-\exp\left(-d^{2\epsilon/3}/3\right)$, we have
\[
\|\bx-\mu_i^*\|\leq d^{\epsilon/3}.
\]
Here $\bx$ is the projection of $x$ on $\calP_0$.
\end{lemma}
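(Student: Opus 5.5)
This is the Stage~2 analogue of \cref{lm:barx}, and I would prove it the same way. The only new issue is that the relevant subspace $\calP_0$ is now the span of $\calP$ together with the Stage~2 initial means, that is, the Stage~1 output. As in Stage~1, every vector in the gradient formula of \cref{lm:gradient} is a linear combination of current fitted means and true means. So by induction all Stage~2 iterates stay in the fixed subspace
\[
\calP_0:=\operatorname{span}\bigl(\calP\cup\{\mu_1^{(0)},\ldots,\mu_n^{(0)}\}\bigr),
\]
and $d_0:=\dim(\calP_0)\leq m+n\leq2n$. (The Stage~1 $\calP_0$ would also do, since Stage~1 iterates never leave it.)

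First, $\mu_i^*\in\calP\subseteq\calP_0$. Hence for $x\sim\calN(\mu_i^*,I_d)$, the projection $\bx$ onto $\calP_0$ is distributed as $\mu_i^*+g$, with $g$ a standard Gaussian on the $d_0$-dimensional space $\calP_0$. Second, apply \cref{lm:gausstail} with $t=d^{\epsilon/3}-\sqrt{d_0}$, which is positive for large $d$. This gives
\[
\Pr\bigl[\|\bx-\mu_i^*\|>d^{\epsilon/3}\bigr]\leq\exp\bigl(-(d^{\epsilon/3}-\sqrt{d_0})^2/2\bigr).
\]

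Third, and this is the only mildly delicate step, I need the arithmetic $(d^{\epsilon/3}-\sqrt{2n})^2/2\geq d^{2\epsilon/3}/3$. This is equivalent to $\sqrt{2n}\leq(1-\sqrt{2/3})d^{\epsilon/3}$. It follows from \cref{eq:standing-dimension}, because $d\geq C_0(100n)^{15/\epsilon}$ gives $d^{\epsilon/3}\geq C_0^{\epsilon/3}(100n)^5$, which far exceeds $\sqrt{2n}/(1-\sqrt{2/3})$. Nothing else is needed; no step is a real obstacle beyond confirming that the subspace $\calP_0$ used here contains $\mu_i^*$ and has dimension at most $2n$.
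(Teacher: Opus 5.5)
Your proof is correct and matches the paper's: since $\mu_i^*\in\calP_0$, the projection $\bx-\mu_i^*$ is a standard Gaussian on the fixed subspace $\calP_0$ of dimension $d_0\leq m+n$. The Gaussian tail bound is then applied, and the standing dimension condition absorbs $\sqrt{d_0}$. The paper uses $\sqrt{d_0}\leq d^{\epsilon/3}/10$, while you use the sharper threshold $(1-\sqrt{2/3})\,d^{\epsilon/3}$; either suffices. You also correctly note that either the Stage~1 or the Stage~2 choice of $\calP_0$ works.
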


\begin{proof}
Let $P_0$ be the orthogonal projector onto $\calP_0$ and put $d_0=\dim(\calP_0)$.
Because $\mu_i^*\in\calP_0$ and $x=\mu_i^*+z$ with $z\sim\calN(0,I_d)$, we have
$\bx=P_0x=\mu_i^*+P_0z$. In an orthonormal basis of $\calP_0$, the vector $P_0z$ has distribution
$\calN(0,I_{d_0})$. Hence \cref{lm:gausstail} gives
\[
\Pr\!\left[\|\bx-\mu_i^*\|>d^{\epsilon/3}\right]
\leq \exp\!\left(-\frac{(d^{\epsilon/3}-\sqrt{d_0})^2}{2}\right).
\]
The initialization construction gives $d_0\leq m+n$, while
\cref{eq:standing-dimension} implies $\sqrt{d_0}\leq d^{\epsilon/3}/10$. Therefore
$(d^{\epsilon/3}-\sqrt{d_0})^2/2\geq d^{2\epsilon/3}/3$, which proves the claim.
\end{proof}

\begin{lemma}
\label{lm:stage2:early}
For the trajectory initialized by the Stage~1 output, for every integer $0\leq t\leq K$,
\begin{align}
&\left\|\mu_i^{(t)}-\mu_i^*
-(1-\eta\pi_i^*)^t(\mu_i^{(0)}-\mu_i^*)\right\|
\leq20n\sqrt d\exp(-d^{2\epsilon/3}/3)\eta t\leq1
\quad(1\leq i\leq m),\label{eq:stage2:early-path}\\
&\|\mu_j^{(t)}-\mu_j^{(0)}\|
\leq20n\sqrt d\exp(-d^{2\epsilon/3}/3)\eta t\leq1
\quad(j>m).
\label{eq:stage2:early-stationary}
\end{align}
For $1\leq t\leq K$, $\pi_i^{(t)}\geq\pi_i^*/2$ for every $1\leq i\leq m$.
Moreover, for $0\leq t\leq K$, if $\|\bx-\mu_i^*\|\leq d^{\epsilon/3}$, then
\[
\psi_j^{(t)}(\bx)<\exp(-d^\epsilon)\qquad(j\neq i).
\]
\end{lemma}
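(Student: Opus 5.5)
We prove all four assertions of \cref{lm:stage2:early} simultaneously by induction on $t\in\{0,\ldots,K\}$. The inductive hypothesis at time $t$ is the two path bounds \cref{eq:stage2:early-path,eq:stage2:early-stationary}. We first derive the responsibility separation at time $t$ from these bounds and the Stage~1 output. Then we compute the weights and gradients at time $t$ and propagate the path bounds to $t+1$.

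\textbf{Responsibility separation.} Fix $i\in[m]$ and $\bx$ with $\|\bx-\mu_i^*\|\leq d^{\epsilon/3}$; recall $\bx$ is the projection onto $\calP_0$, which still contains every iterate. We compare $\|\bx-\mu_j\|^2$ with $\|\bx-\mu_i\|^2$, splitting into parallel and orthogonal parts relative to $\calP$ as in \cref{lm:stage1close}.

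For $j\in S_\ell$ with $\ell\neq i$, the parallel distance gap is $\Omega(d^{1-2\epsilon})$. This uses the parallel control $\|\parmu_j-\mu_\ell^*\|\leq 2n+1$, which follows from the path bound and the contraction factor $(1-\eta\pi_i^*)^t$. The orthogonal radii differ by at most $O(\eta K\sqrt d)=O(d^{0.5-2\epsilon})$, so they contribute $O(d^{1-2\epsilon}/\pi_1^*)$ to the squared-distance difference. This is dominated only if the constant is right, so I will exploit that $\mu_i$'s orthogonal part shrinks while $\mu_j$'s ($j>m$) is frozen or also shrinking. The choice of $K$, with the factor $1+\pimin^*/(4\pi_1^*)$, appears designed so that the worst case (representative $\ell$ of larger weight) still leaves a positive margin of order $\pimin^* d^{1-2\epsilon}$.

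For $j\in S_i\setminus\{i\}$, the Stage~1 gap $\|\mu_j-\parmu_j\|\geq\|\mu_i-\parmu_i\|+d^\epsilon$ gives a squared-distance gap of at least $d^{\epsilon}\sqrt d$, up to cross terms of size $d^{\epsilon/3}\cdot O(n)$. The path bounds preserve this gap, since $\mu_i$ only contracts further toward $\mu_i^*$ and $\mu_j$ barely moves.

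In both cases the gap beats $2\log(n/\pi_i)$. At $t=0$ we use $\pi_i>\pi_i^*/n$ from Stage~1, and for $t\geq1$ we use $\pi_i\geq\pi_i^*/2$. This yields $\psi_j<e^{-d^\epsilon}$.

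\textbf{Weights and gradients.} Combining the separation with \cref{lm:stage2:similar}, the weight update gives $\pi_i'=\E_{p^*}\psi_i\geq\pi_i^*(1-e^{-d^{2\epsilon/3}/3}-ne^{-d^\epsilon})\geq\pi_i^*/2$ for $i\leq m$. For the gradient of $\mu_i$ with $i\leq m$, we use \cref{lm:gradient}. On the good event of source $i$ we have $\psi_i\geq 1-ne^{-d^{\epsilon}}$, so the source-$i$ term equals $\pi_i^*(\mu_i-\mu_i^*)$ up to an error of order $n\sqrt d\,e^{-d^{2\epsilon/3}/3}$, using $\|\mu_k-\mu_i^*\|\leq O(\sqrt d)$ for all $k$ (distances to far true means are handled by \cref{lm:far-weighted}). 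Every other source $\ell$ contributes through $\E_{x\sim\ell}\psi_i$, which is similarly small. For $j>m$, every source gives $\psi_j$ exponentially small on its good event, so $\|\nabla_{\mu_j}\calL\|\leq 20n\sqrt d e^{-d^{2\epsilon/3}/3}$.

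\textbf{Propagating the path bounds.} Writing $\mu_i'-\mu_i^*=(1-\eta\pi_i^*)(\mu_i-\mu_i^*)-\eta e_i$ and unrolling, the accumulated errors add linearly, since the contraction factor is at most $1$. This yields the $\eta t$ bound. Finally, $\eta K\leq d^{-2\epsilon}$, so the error stays at most $1$ by the dimension condition.

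The main obstacle is the cross-group comparison in the separation step. There the orthogonal radii of different representatives drift apart, at rates $\pi_i^*\sqrt d$ versus $\pi_\ell^*\sqrt d$, and this drift must not overturn the parallel advantage of order $d^{1-2\epsilon}$. I expect to need the exact choice of $K$ and a careful expansion of $(\lambda(1-\eta\pi^*)^t)^2$ here, rather than crude triangle inequalities.
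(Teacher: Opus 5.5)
This is essentially the paper's proof: the same induction on the path bounds and weights, the same three-case energy comparison against representative $i$, then the weight update, gradient isolation and unrolled contraction. The crux you defer is closed exactly as you anticipate. Via $1-2\pi^*\eta t\leq(1-\eta\pi^*)^{2t}\leq1-2\pi^*\eta t+2(\pi^*\eta t)^2$, the orthogonal deficit is at most $2(\pi_1^*-\pimin^*)\eta t d$ plus lower-order terms, so with $x=\pimin^*/\pi_1^*$ the choice of $K$ leaves the margin $\bigl[1-(1-x)(1+x/4)\bigr]\Delta^2\geq\tfrac34x\Delta^2$.

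One correction: moving the sample from $\mu_i^*$ to $\bx$ shifts each energy gap by $2\langle\bx-\mu_i^*,\mu_j-\mu_i\rangle$. This is of order $\sqrt d\,d^{\epsilon/3}$, not $O(n)\,d^{\epsilon/3}$, because $\bx-\mu_i^*$ has components along the orthogonal parts of $\mu_i$ and $\mu_j$. It is still dominated by both the $d^{0.5+\epsilon}$ and the $\pimin^*\Delta^2/\pi_1^*$ margins, which is why the good-event radius $d^{\epsilon/3}$ must sit below the Stage~1 gap $d^{\epsilon}$.
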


\begin{proof}
We induct on the two path estimates and the weight lower bounds, starting from the Stage~1
output. At time zero $\pi_i\geq\pi_i^*/n$, and at each later time the induction
hypothesis gives $\pi_i\geq\pi_i^*/2$. All estimates below concern the current time.
The standing dimension and step size imply
\[
\eta K\leq\frac{d^{-2\epsilon}}{\pi_1^*},
\qquad
20n\sqrt d\exp(-d^{2\epsilon/3}/3)\eta K\leq1.
\]

The path hypotheses give a parallel displacement at most $2n+1$ from the appropriate true
mean. They also give orthogonal radii within one of $(1-\eta\pi_i^*)^t\lambda_i$
for representatives and within one of $\lambda_j$ for non-representatives.
For $j\in S_i\setminus\{i\}$, the initial gap and $(1-\eta\pi_i^*)^t\leq1$ imply
\[
\|\mu_j-\mu_i^*\|-\|\mu_i-\mu_i^*\|
\geq\lambda_j-\lambda_i-2-4n^2/\sqrt d\geq d^\epsilon/2.
\]
The sum of these two distances is at least $\sqrt d/2$. Thus their squared-distance gap is at
least $d^{0.5+\epsilon}/4$.

Now fix two representatives $1\leq i,j\leq m$ with $j\neq i$.
Since both true means belong to $\calP$, orthogonality gives the exact identity
\begin{align*}
&\|\mu_j-\mu_i^*\|^2-\|\mu_i-\mu_i^*\|^2\\
&=\|\parmu_j-\mu_i^*\|^2-\|\parmu_i-\mu_i^*\|^2
+\|\mu_j-\parmu_j\|^2-\|\mu_i-\parmu_i\|^2.
\end{align*}
We bound the parallel and orthogonal terms separately.

For the parallel terms, the path hypotheses and the Stage~1 output give
$\|\parmu_i-\mu_i^*\|,\|\parmu_j-\mu_j^*\|\leq2n+1$.
As $\|\mu_j^*-\mu_i^*\|=\Delta$ and $\Delta>2n+1$, the triangle inequality yields
\begin{align*}
\|\parmu_j-\mu_i^*\|^2-\|\parmu_i-\mu_i^*\|^2
&\geq(\Delta-2n-1)^2-(2n+1)^2\\
&=\Delta^2-(4n+2)\Delta.
\end{align*}

For the orthogonal terms, projecting the path hypothesis onto $\calP^\perp$ and
applying the reverse triangle inequality gives, for each representative $i$,
\[
\left|\|\mu_i-\parmu_i\|-(1-\eta\pi_i^*)^t\lambda_i\right|\leq1.
\]
Here $0\leq(1-\eta\pi_i^*)^t\leq1$. Consequently,
\begin{align*}
&\left|\|\mu_i-\parmu_i\|^2-(1-\eta\pi_i^*)^{2t}\lambda_i^2\right|\\
&\leq
\left|\|\mu_i-\parmu_i\|-(1-\eta\pi_i^*)^t\lambda_i\right|
\left(\|\mu_i-\parmu_i\|+(1-\eta\pi_i^*)^t\lambda_i\right)\\
&\leq2(1-\eta\pi_i^*)^t\lambda_i+1
\leq2(\sqrt d+2n)+1
\leq5\sqrt d.
\end{align*}
The same estimate holds for representative $j$.
Squaring the Stage~1 bounds
$\sqrt d-2d^{2\epsilon}\leq\lambda_i\leq\sqrt d+2n$ gives
\[
d-\lambda_i^2\leq4d^{0.5+2\epsilon},
\qquad
\lambda_i^2-d\leq4n\sqrt d+4n^2\leq5d^{0.5+2\epsilon}.
\]
Thus $|\lambda_i^2-d|\leq5d^{0.5+2\epsilon}$, and the same bound holds for
$\lambda_j$. Combining these estimates, we obtain
\begin{align*}
&\|\mu_j-\parmu_j\|^2-\|\mu_i-\parmu_i\|^2\\
&\geq(1-\eta\pi_j^*)^{2t}\lambda_j^2
-(1-\eta\pi_i^*)^{2t}\lambda_i^2-10\sqrt d\\
&\geq d\left((1-\eta\pi_j^*)^{2t}-(1-\eta\pi_i^*)^{2t}\right)
-10d^{0.5+2\epsilon}-10\sqrt d.
\end{align*}
The last line uses that both contraction factors lie in $[0,1]$.

Bernoulli's inequality, together with
$\exp(-u)\leq1-u+u^2/2$ for $u\geq0$, gives
\[
1-2\pi_i^*\eta t
\leq(1-\eta\pi_i^*)^{2t}
\leq\exp(-2\pi_i^*\eta t)
\leq1-2\pi_i^*\eta t+2(\pi_i^*\eta t)^2.
\]
Using the lower bound for the factor indexed by $j$ and the upper bound for the
factor indexed by $i$, we have
\begin{align*}
(1-\eta\pi_j^*)^{2t}-(1-\eta\pi_i^*)^{2t}
&\geq(1-2\pi_j^*\eta t)
-\left(1-2\pi_i^*\eta t+2(\pi_i^*\eta t)^2\right)\\
&=2(\pi_i^*-\pi_j^*)\eta t-2(\pi_i^*\eta t)^2.
\end{align*}
Since $\pi_i^*\leq1$, $\eta t\leq d^{-2\epsilon}/\pi_1^*$, and
$\sqrt d\leq d^{0.5+2\epsilon}$, it follows that
\begin{align*}
&\|\mu_j-\parmu_j\|^2-\|\mu_i-\parmu_i\|^2\\
&\geq2(\pi_i^*-\pi_j^*)\eta td
-2d(\pi_i^*\eta t)^2-10d^{0.5+2\epsilon}-10\sqrt d\\
&\geq2(\pi_i^*-\pi_j^*)\eta td
-\frac{2d^{1-4\epsilon}}{(\pi_1^*)^2}-20d^{0.5+2\epsilon}.
\end{align*}
Adding the parallel-distance bound now proves
\begin{equation}
\label{eq:stage2:early-cross-gap}
\begin{aligned}
&\|\mu_j-\mu_i^*\|^2-\|\mu_i-\mu_i^*\|^2\\
&\geq\Delta^2+2(\pi_i^*-\pi_j^*)\eta td
-(4n+2)\Delta-20d^{0.5+2\epsilon}
-\frac{2d^{1-4\epsilon}}{(\pi_1^*)^2}\\
&\geq\Delta^2+2(\pi_i^*-\pi_j^*)\eta td\\
&\quad-40\left(d^{0.5+2\epsilon}+(n+1)\Delta+(n+1)^2
+\frac{d^{1-4\epsilon}}{(\pi_1^*)^2}+\eta d\right).
\end{aligned}
\end{equation}
The last line enlarges the error bound, using $4n+2\leq40(n+1)$.
By \cref{eq:standing-dimension},
\[
40\left(d^{0.5+2\epsilon}+(n+1)\Delta+(n+1)^2
+\frac{d^{1-4\epsilon}}{(\pi_1^*)^2}+\eta d\right)
\leq\frac{\pimin^*\Delta^2}{8\pi_1^*}.
\]
Since $\pi_j^*-\pi_i^*\leq\pi_1^*-\pimin^*$ and
\[
\eta t\leq
\frac{\left(1+\frac{\pimin^*}{4\pi_1^*}\right)d^{-2\epsilon}}{2\pi_1^*}+\eta,
\]
we have
\begin{align*}
\Delta^2-2(\pi_1^*-\pimin^*)\eta td
&\geq\left[1-\left(1-\frac{\pimin^*}{\pi_1^*}\right)
\left(1+\frac{\pimin^*}{4\pi_1^*}\right)\right]\Delta^2-2\pi_1^*\eta d\\
&\geq\frac{3\pimin^*\Delta^2}{4\pi_1^*}-2\pi_1^*\eta d.
\end{align*}
Thus the squared-distance gap in \cref{eq:stage2:early-cross-gap} is at least
$\pimin^*\Delta^2/(4\pi_1^*)$.
For a non-representative $j>m$ with $j\notin S_i$, the bound
$\|\mu_j^{(t)}-\mu_j^{(0)}\|\leq1$ gives
$\left|\|\mu_j-\parmu_j\|-\lambda_j\right|\leq1$.
Its parallel distance to $\mu_i^*$ is at least $\Delta-2n-1$, because its own true
mean is at distance $\Delta$ from $\mu_i^*$.
The preceding squared-radius estimates therefore give
\begin{align*}
&\|\mu_j-\mu_i^*\|^2-\|\mu_i-\mu_i^*\|^2\\
&\geq\Delta^2-(4n+2)\Delta
+\lambda_j^2-(1-\eta\pi_i^*)^{2t}\lambda_i^2-10\sqrt d\\
&\geq\Delta^2-(4n+2)\Delta
+d\left(1-(1-\eta\pi_i^*)^{2t}\right)
-10d^{0.5+2\epsilon}-10\sqrt d\\
&\geq\Delta^2-(4n+2)\Delta-20d^{0.5+2\epsilon}\\
&\geq\Delta^2-\frac{\pimin^*\Delta^2}{8\pi_1^*}
\geq\Delta^2/2.
\end{align*}
Here we used $(1-\eta\pi_i^*)^{2t}\leq1$ and the same bound on the error terms
as in \cref{eq:stage2:early-cross-gap}.

Every fitted mean is within $2\sqrt d$ of every true mean during this induction.
Let $G_i:=\{\|\bx-\mu_i^*\|\leq d^{\epsilon/3}\}$.
Changing the sample from $\mu_i^*$ to $\bx\in G_i$ changes any squared-distance
difference by at most $8\sqrt d\,d^{\epsilon/3}$. The standing dimension ensures
\[
\frac12\left(\min\left\{\frac{d^{0.5+\epsilon}}4,
\frac{\pimin^*\Delta^2}{4\pi_1^*}\right\}
-8\sqrt d\,d^{\epsilon/3}\right)
>d^\epsilon+\log\frac{n}{\pimin^*}.
\]
Comparison with representative $i$, whose weight is at least $\pi_i^*/n$, proves
the stated pointwise responsibility bound. By \cref{lm:stage2:similar},
$\E_{x\sim i}[\psi_j(x)]\leq2\exp(-d^{2\epsilon/3}/3)$ for $j\neq i$.
The weight update consequently gives
\[
\pi_i'\geq\pi_i^*\left(1-2n\exp(-d^{2\epsilon/3}/3)\right)\geq\pi_i^*/2.
\]

Finally, isolate the source-$i$, fitted-index-$i$ term in the gradient of a representative.
Using $1-\psi_i^2\leq2\sum_{j\neq i}\psi_j$ under source $i$ and the bound
$\|\mu_j-\mu_i^*\|\leq2\sqrt d$ for every fitted index $j$ and true index $i$ gives
\begin{align*}
\|\nabla_{\mu_i}\calL-\pi_i^*(\mu_i-\mu_i^*)\|
&\leq(12n+4)\sqrt d\exp(-d^{2\epsilon/3}/3)\\
&\leq20n\sqrt d\exp(-d^{2\epsilon/3}/3).
\end{align*}
For a non-representative, the same estimates give
\[
\|\nabla_{\mu_j}\calL\|\leq4\sqrt d\exp(-d^{2\epsilon/3}/3)
\leq20n\sqrt d\exp(-d^{2\epsilon/3}/3).
\]
The representative update is therefore a contraction toward its true mean plus an error
of norm at most $20\eta n\sqrt d\exp(-d^{2\epsilon/3}/3)$; summing these errors proves
\cref{eq:stage2:early-path,eq:stage2:early-stationary} at time $t+1$.
This closes the induction. The same current-time comparisons at $t=K$ prove the last
responsibility assertion at the endpoint as well.
\end{proof}

\begin{lemma}
\label{lm:stage2:mu1small}
Under assumptions in \cref{lm:stage2}, for any $2\leq i\leq m$, we always have
\[
\|\mu_i-\parmu_i\|-\|\mu_1-\parmu_1\|\geq \min\left\{d^{0.5-1.5\epsilon},\frac{\pigap}{2}\eta t\sqrt{d}-3d^{2\epsilon}\right\}.
\]
\end{lemma}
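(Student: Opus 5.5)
We would prove this inside the outer induction for \cref{lm:stage2}, assuming the Stage~2 invariants hold at the current time $t$. The argument splits by time. Throughout, write $r_l:=\|\mu_l-\parmu_l\|$.

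\textbf{Early window $t\leq K$.} Here we use the path estimate \cref{eq:stage2:early-path} of \cref{lm:stage2:early}. Projecting it onto $\calP^\bot$ gives, for each representative $l\leq m$,
\[
\bigl|r_l-(1-\eta\pi_l^*)^t\lambda_l\bigr|\leq1.
\]
Hence
\[
r_i-r_1\geq \lambda_i\bigl((1-\eta\pi_i^*)^t-(1-\eta\pi_1^*)^t\bigr)+(1-\eta\pi_1^*)^t(\lambda_i-\lambda_1)-2.
\]
The Stage~1 bounds give $|\lambda_i-\lambda_1|\leq 2d^{2\epsilon}+2n$, so the second term is at least $-(2d^{2\epsilon}+2n)$.

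For the first term, note that $\eta t\leq d^{-2\epsilon}/\pi_1^*$ is small. A mean-value/Bernoulli argument then gives
\[
(1-\eta\pi_i^*)^t-(1-\eta\pi_1^*)^t\geq (\pi_1^*-\pi_i^*)\eta t\,(1-\eta\pi_1^*)^{t-1}\geq \tfrac{2}{3}\pigap\eta t.
\]
Multiplying by $\lambda_i\geq \sqrt d-2d^{2\epsilon}$, the first term is at least $\tfrac{\pigap}{2}\eta t\sqrt d$ after absorbing the slack. Collecting the constants ($2n+2\leq d^{2\epsilon}$) yields $r_i-r_1\geq \tfrac{\pigap}{2}\eta t\sqrt d-3d^{2\epsilon}$, which is the second entry of the minimum.

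\textbf{Late window $K<t\leq T_2$.} At $t=K$ the early bound already exceeds $d^{0.5-1.5\epsilon}$, since $\eta K\sqrt d\,\pigap/2\gtrsim \pigap d^{0.5-2\epsilon}/\pi_1^*$. Strictly, this holds only up to a $d^{-0.5\epsilon}$ discrepancy; I would resolve it by checking whether the intended comparison is against $d^{0.5-2\epsilon}\pigap$, since \eqref{eq:standing-dimension} forces $\pigap\geq d^{-\epsilon/15}\cdot 100n$, which is not quite enough. If it fails, I would instead use the genuine cross-over comparison from \cref{eq:stage2:early-cross-gap}.

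After $K$, I would show the gap cannot shrink. From the invariants, $r_1$ only decreases: $r_1\leq \lambda_1(1-\pi_1^*\eta+d^{-0.4}\eta)^t$. For $r_i$, the invariant gives $r_i\geq \sqrt d-100d^{0.5-2\epsilon}/\pigap$. Combining these, once $r_1$ has decayed by the factor $(1-\pi_1^*\eta/2)^t$ with $\eta t\geq K\eta$, we get
\[
r_1\leq \sqrt d\bigl(1-\tfrac{1}{3}d^{-2\epsilon}\bigr)+2n,
\]
so
\[
r_i-r_1\geq \tfrac13 d^{0.5-2\epsilon}-100d^{0.5-2\epsilon}/\pigap\cdot(\ldots).
\]
This is where care is needed. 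I expect to instead argue monotonically: for $t\geq K$, component $1$ dominates responsibility near every $\mu_j^*$ (the cross gap is $\geq \pimin^*\Delta^2/(4\pi_1^*)$ at $K$ and only grows). Consequently the weights $\pi_i$ for $i\geq2$ become exponentially small, and $\|\nabla_{\mu_i}\calL\|$ is tiny. The gap at time $t$ is then at least the gap at $K$ minus an exponentially small drift, which stays $\geq d^{0.5-1.5\epsilon}$.

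\textbf{Main obstacle.} The main obstacle is this handoff at $t\approx K$. We must show that once $\mu_1$ overtakes the other representatives, their gradients really switch off, so the lead $d^{0.5-1.5\epsilon}$ gained early is never lost. This requires a responsibility comparison like the last part of \cref{lm:stage2:early}, extended past $K$ via the inductive hypotheses.
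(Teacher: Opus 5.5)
\textbf{There is a genuine gap: splitting the time range at $K$ leaves $K<t<t_0$ uncovered, and your late-window mechanism is false there.}

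Your early-window computation is correct. It is the same rate comparison the paper uses: $r_1$ contracts by a factor of about $1-\pi_1^*\eta$ per step, while $r_i$ drops by at most about $\pi_i^*\eta\sqrt d$ per step. But you derive it from \cref{lm:stage2:early}, so it stops at $K$, and the handoff there fails.

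At $t=K$ your affine bound is only of order $(\pigap/\pi_1^*)d^{0.5-2\epsilon}$, a factor $O(d^{-0.5\epsilon})$ below $d^{0.5-1.5\epsilon}$. Set $t_0:=10d^{-1.5\epsilon}/(\eta\pigap)$; then $t_0/K$ is of order $\pi_1^*d^{0.5\epsilon}/\pigap$. On the long interval $K<t<t_0$ the minimum in the statement is still the linear term, so you must show the gap keeps growing linearly. Showing that it ``cannot shrink'' is not enough.

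Your proposed mechanism is also wrong. \eqref{eq:stage2:early-cross-gap} with $j=1$ says $\|\mu_1-\mu_i^*\|^2-\|\mu_i-\mu_i^*\|^2\geq\pimin^*\Delta^2/(4\pi_1^*)$. So at $t=K$ representative $i\geq2$ still beats component~1 on its own source, and this margin decreases afterwards, not increases. Hence $\pi_i$ is not small and $\nabla_{\mu_i}\calL$ is not switched off after $K$; $\mu_i$ is still contracting toward $\mu_i^*$ at rate roughly $\pi_i^*$. Component~1 takes over source $i$ only near $10d^{-2\epsilon}/(\eta\pigap)$ (\cref{lm:stage2:larget}). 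Both that lemma and the $t>K$ part of \cref{lm:stage2:differentpsi} are proved \emph{using} the statement you are proving, so appealing to them would be circular.

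\textbf{The fix: run the rate comparison off the Stage~2 invariants, not \cref{lm:stage2:early}.} Those invariants hold at every Stage~2 time.
\begin{itemize}
\item \emph{Range $t\leq t_0$.} The fourth invariant gives $r_i\geq\lambda_i-\pi_i^*\eta t(\sqrt d+80d^{2\epsilon}/\pigap)$. The sixth gives $r_1\leq\lambda_1(1-q\eta)^t$ with $q=\pi_1^*-d^{-0.4}$. For $t\leq t_0$ one has $q\eta t\leq1/10$, and $(1-a)^t\leq1-at+2(at)^2$ yields $r_i-r_1\geq\frac{\pigap}{2}\eta t\sqrt d-3d^{2\epsilon}$. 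At $t_0$ this equals $5d^{0.5-1.5\epsilon}-3d^{2\epsilon}\geq d^{0.5-1.5\epsilon}$.
\item \emph{Range $t\geq t_0$.} Your own first late-window idea works once you evaluate at $t_0$ instead of $K$. Let $L=\sqrt d-100d^{0.5-2\epsilon}/\pigap$ be the floor from the fifth invariant, and let $U(t)=\lambda_1(1-q\eta)^t$ be the decreasing envelope, so $U(t)\leq U(t_0)\leq\lambda_1(1-q\eta t_0/2)$. Then $L-U(t)\geq d^{0.5-1.5\epsilon}$. This holds because the decay $\frac12\lambda_1q\eta t_0\geq2d^{0.5-1.5\epsilon}$ dominates $|\lambda_1-\sqrt d|+100d^{0.5-2\epsilon}/\pigap$. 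The bound $\pigap\geq100nd^{-\epsilon/15}$ from \eqref{eq:standing-dimension} is ample for this comparison.
\end{itemize}
At $K$ the decay is only of order $d^{0.5-2\epsilon}$, which is why your version could not beat the $100d^{0.5-2\epsilon}/\pigap$ deficit.
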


\begin{proof}
Write $r_i(t):=\|\mu_i^{(t)}-(\parmu_i)^{(t)}\|$ and
$q:=\pi_1^*-d^{-0.4}$. For
$t\leq t_0:=10d^{-1.5\epsilon}/(\eta\pigap)$, the standing dimension and step-size
conditions imply $0\leq q\eta t\leq1/10$. The elementary estimate
$(1-a)^t\leq1-at+2(at)^2$, valid when $0\leq at\leq1/10$, together with the fourth and
sixth invariants gives
\begin{align*}
r_i(t)-r_1(t)
&\geq \lambda_i-\pi_i^*\eta t(\sqrt d+80d^{2\epsilon}/\pigap)
-\lambda_1(1-q\eta)^t\\
&\geq \lambda_i-\lambda_1
 +\eta t\bigl(\lambda_1q-\pi_i^*\sqrt d-80\pi_i^*d^{2\epsilon}/\pigap\bigr)
 -2\lambda_1(q\eta t)^2.
\end{align*}
To justify the simplification of the last line, write
\begin{align*}
\lambda_1q-\pi_i^*\sqrt d
&=(\pi_1^*-\pi_i^*)\sqrt d
  -\pi_1^*(\sqrt d-\lambda_1)-\lambda_1d^{-0.4}.
\end{align*}
Since $|\sqrt d-\lambda_1|\leq2d^{2\epsilon}+2n$ and
$\pi_1^*-\pi_i^*\geq\pigap$, \cref{eq:standing-dimension} implies
\[
\pi_1^*|\sqrt d-\lambda_1|+\lambda_1d^{-0.4}
+80\pi_i^*d^{2\epsilon}/\pigap+2\lambda_1q^2\eta t
\leq\frac{\pi_1^*-\pi_i^*}{2}\sqrt d.
\]
Using also $\lambda_i-\lambda_1\geq-3d^{2\epsilon}$ therefore yields
\begin{align*}
r_i(t)-r_1(t)
&\geq \frac{\pi_1^*-\pi_i^*}{2}\eta t\sqrt d-3d^{2\epsilon}\\
&\geq \frac{\pigap}{2}\eta t\sqrt d-3d^{2\epsilon}.
\end{align*}

At the real comparison threshold $t_0$, the affine lower bound is at least
$5d^{0.5-1.5\epsilon}-3d^{2\epsilon}\geq d^{0.5-1.5\epsilon}$.
No iterate at the possibly noninteger time $t_0$ is needed in the following comparison of envelopes.
For integer $t\geq t_0$, the sixth invariant gives the decreasing upper envelope
$r_1(t)\leq U(t):=\lambda_1(1-q\eta)^t$, whereas the fifth invariant gives the uniform lower
envelope
\[
r_i(t)\geq L:=\sqrt d-100d^{0.5-2\epsilon}/\pigap.
\]
Moreover, $(1-q\eta)^{t_0}\leq1-q\eta t_0/2$ and
$q\geq\pi_1^*/2\geq\pigap/2$, so
\begin{align*}
L-U(t_0)
&\geq \sqrt d-100d^{0.5-2\epsilon}/\pigap-\lambda_1
   +\frac12\lambda_1q\eta t_0\\
&\geq d^{0.5-1.5\epsilon}.
\end{align*}
The last inequality is another explicit consequence of \cref{eq:standing-dimension}: the positive
term is at least $2d^{0.5-1.5\epsilon}$, while
$|\lambda_1-\sqrt d|+100d^{0.5-2\epsilon}/\pigap$ is at most
$d^{0.5-1.5\epsilon}$. Since $U(t)\leq U(t_0)$ for $t\geq t_0$, we conclude
$r_i(t)-r_1(t)\geq L-U(t_0)\geq d^{0.5-1.5\epsilon}$. Combining the two time ranges proves the
claimed minimum bound.
\end{proof}

\begin{lemma}
\label{lm:stage2:differentpsi}
Under assumptions in \cref{lm:stage2},
for $1\leq i\leq m$ and $2\leq j\leq n$ such that $j\neq i$, when $\|\bx-\mu_i^*\|\leq d^{\epsilon/3}$,
\[
\psi_j(\bx)\leq \exp(-d^\epsilon).
\]
Moreover, when $t\leq d^{-2\epsilon}/(10\eta)$ and $2\leq i\leq m$, the same bound also holds for
$j=1$.

\end{lemma}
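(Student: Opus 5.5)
The plan is to compare the responsibility of fitted component $j$ with that of a suitable reference component in the denominator. Pointwise,
\[
\psi_j(\bx)\leq \frac{\pi_j}{\pi_r}\exp\bigl((\|\bx-\mu_r\|^2-\|\bx-\mu_j\|^2)/2\bigr)\leq \frac{1}{\pi_r}\exp\bigl((\|\bx-\mu_r\|^2-\|\bx-\mu_j\|^2)/2\bigr).
\]
So it suffices to find a reference $r$ with $\pi_r$ not too small and a squared-distance gap $\|\bx-\mu_j\|^2-\|\bx-\mu_r\|^2\geq 2d^\epsilon+2\log(1/\pi_r)$. As in \cref{lm:stage2:early}, every fitted mean stays within $3\sqrt d$ of every true mean by the Stage~2 invariants. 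Replacing $\bx$ by $\mu_i^*$ therefore perturbs any squared-distance difference by at most $O(\sqrt d\,d^{\epsilon/3})$. The task reduces to lower bounds on $\|\mu_j-\mu_i^*\|^2-\|\mu_r-\mu_i^*\|^2$ that dominate $\sqrt d\,d^{\epsilon/3}$. In each case we split into parallel and orthogonal parts via Pythagoras, since $\mu_i^*\in\calP$.

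\textbf{Case $j\in S_i$, $j\neq i$.} Take $r=1$. Early in the stage one could use $r=i$, with $\pi_i\geq\pi_i^*/2$ and the third invariant giving a distance gap of $d^\epsilon/3$. That yields a squared gap of only about $d^{0.5+\epsilon}/3$, which does beat $\sqrt d\,d^{\epsilon/3}$ (recall $\epsilon>\epsilon/3$), so $r=i$ works for small $t$. But $\pi_i$ may later become small, so for general $t$ the reference must be $r=1$, which has $\pi_1\geq 1/n^2$. The orthogonal radius of $\mu_1$ is at most $\lambda_1(1-\pi_1^*\eta+d^{-0.4}\eta)^t$ by the sixth invariant, while $\|\mu_j-\parmu_j\|\geq\sqrt d-100d^{0.5-2\epsilon}/\pigap$.

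\textbf{Case $j\notin S_i$ with $j\neq 1$, and $r=1$.} The parallel gap favours or disfavours $\mu_1$ by at most $O(\Delta^2)=O(d^{1-2\epsilon})$, since $\|\parmu_1-\mu_i^*\|\leq\Delta+2n+800/(\pigap)^2$. Meanwhile $\|\parmu_j-\mu_i^*\|\geq\Delta-2n-800/(\pigap)^2$ when $j\in S_\ell$ with $\ell\neq i$.

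The key tool is \cref{lm:stage2:mu1small} together with the orthogonal lower bounds. Once $t\gtrsim d^{-2\epsilon}/\eta$, the orthogonal radius of $\mu_1$ has shrunk by order $d^{0.5-2\epsilon}\cdot\sqrt d$ in squared terms. More precisely, $r_j^2-r_1^2\geq (r_j-r_1)(r_j+r_1)$ with $r_j-r_1\geq\min\{d^{0.5-1.5\epsilon},\ldots\}$, which gives at least $d^{1-1.5\epsilon}\gg\Delta^2$ for large $t$. This covers all representatives $j\le m$; for non-representatives $j>m$, use their near-stationary radius $\lambda_j$ against the shrinking $r_1$ in the same way.

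\textbf{Small $t$ (at most about $d^{-2\epsilon}/(10\eta)$).} Here we instead use $r=i$ as the reference, with $\pi_i\geq\pi_i^*/2$ from the seventh invariant. We then repeat the early-time computation of \cref{lm:stage2:early}: the gap is $\Delta^2+2(\pi_i^*-\pi_j^*)\eta t d-\text{errors}$, which is at least $\pimin^*\Delta^2/(4\pi_1^*)$ since $\eta t$ is below the overlap time $K$. This same argument handles the ``moreover'' claim with $j=1$, $2\le i\le m$: for $t\leq d^{-2\epsilon}/(10\eta)$, the term $2(\pi_i^*-\pi_1^*)\eta td\geq-2\cdot d^{1-2\epsilon}/10$ is dominated by $\Delta^2=d^{1-2\epsilon}$. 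We also need $\|\parmu_1-\mu_1^*\|\le 3n+10/\pigap$ from the eighth invariant for the parallel part.

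\textbf{Main obstacle.} The difficulty is the intermediate-time window in the case $j\notin S_i$ against $r=1$. There, $\mu_1$'s parallel position may already have drifted toward the convex hull (bounded only by $\Delta+O(1)$), while its orthogonal advantage $\tfrac{\pigap}{2}\eta t\sqrt d$ is still growing. I would try to bridge the two regimes by choosing the switch point $t_*=d^{-2\epsilon}/(10\eta)$. For $t\le t_*$, use reference $i$ as above. For $t\geq t_*$, observe that \cref{lm:stage2:mu1small} already gives $r_j-r_1\geq\min\{d^{0.5-1.5\epsilon},\,d^{0.5-2\epsilon}\pigap/20-3d^{2\epsilon}\}$. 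Multiplied by $r_j+r_1\geq\sqrt d$, this yields roughly $\pigap d^{1-2\epsilon}/20$. The check is then whether this beats the worst-case parallel deficit $\|\parmu_1-\mu_i^*\|^2-\|\parmu_j-\mu_i^*\|^2\lesssim (2n+800/\pigap^2)\Delta$, which it does, since the deficit is only linear in $\Delta$ when $j$ sits near some $\mu_\ell^*$ and $\mu_1$ lies within $\Delta+O(1)$. If the constants at $t_*$ turn out too tight, I would shift $t_*$ or use the eighth invariant's sharper parallel control of $\mu_1$ up to $t_*$.
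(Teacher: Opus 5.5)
Your reference-comparison framework, the $8\sqrt d\,d^{\epsilon/3}$ transfer from $\mu_i^*$ to $\bx$, and your late-time treatment of representatives $j\neq i$ via \cref{lm:stage2:mu1small} all match the paper. In that last step the orthogonal gain is about $\pigap\Delta^2/60$, while the parallel deficit is only $4(2n+800/(\pigap)^2)\Delta$.

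The gap is the case you raised first and left unfinished. Take $i\geq2$ and $j\in S_i\setminus\{i\}$, a non-representative in the \emph{same} group as the source, during $t_*<t<K$, where $t_*=d^{-2\epsilon}/(10\eta)$ is your switch point.
\begin{itemize}
\item Reference $r=i$ is unavailable there. The only lower bound on $\pi_i$ you invoke, the seventh invariant, stops at $t_*$, and no Stage~2 invariant controls $\pi_j/\pi_i$.
\item Reference $r=1$ genuinely fails. $\parmu_1$ is still about $\Delta$ from $\mu_i^*$, while $\parmu_j$ is within $O(n)$ of it, so the parallel deficit is about $\Delta^2$, not linear in $\Delta$. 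Your linearity argument needs $\parmu_j$ near some $\mu_\ell^*$ with $\ell\neq i$.
\item The orthogonal advantage $r_j^2-r_1^2\approx2\pi_1^*\eta t\,d$ overtakes $\Delta^2=d^{1-2\epsilon}$ only once $\eta t\gtrsim d^{-2\epsilon}/(2\pi_1^*)$.
\end{itemize}
On the actual trajectory at $t=t_*$, one has $\|\mu_1-\mu_i^*\|^2-\|\mu_j-\mu_i^*\|^2\approx(1-\pi_1^*/5)\Delta^2>0$. So comparison with component~1 gives nothing, and this is not just a weakness of the bounds.

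The missing idea is the overlap time $K$ of \eqref{eq:stage2:overlap-time}, chosen so that the two regimes overlap.
\begin{itemize}
\item For all $t\leq K$, the statement is already the last assertion of \cref{lm:stage2:early}. This includes the $j=1$ addendum, since $t_*<K$. That lemma also gives $\pi_i\geq\pi_i^*/2$ through $K$, not only through $t_*$.
\item For $t>K$, the sixth invariant gives $r_1^2\leq d-(1+\pimin^*/(8\pi_1^*))\Delta^2$. Then $\mu_1$ beats every non-representative $j>m$, whatever its group, by $\pimin^*\Delta^2/(16\pi_1^*)$, after absorbing the parallel term $(\Delta+2n+800/(\pigap)^2)^2$.
\end{itemize}
The $t>K$ step needs the sharp non-representative bound $r_j\geq\lambda_j-\eta t e^{-d^{\epsilon/2}}\geq\sqrt d-3d^{2\epsilon}$ from the second invariant. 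The fifth-invariant bound $\sqrt d-100d^{0.5-2\epsilon}/\pigap$ that you quote costs about $200\Delta^2/\pigap$ in $r_j^2$ and would swamp the margin. Moving your switch from $t_*$ to $K$, and keeping \cref{lm:stage2:mu1small} only for representatives, turns your outline into the paper's proof.
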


\begin{proof}
For any reference component $r$ with $\pi_r>0$,
\begin{equation}
\label{eq:stage2:responsibility-comparison}
\psi_j(\bx)\leq\frac{\pi_j}{\pi_r}
\exp\!\left[-\frac12\bigl(\|\bx-\mu_j\|^2-\|\bx-\mu_r\|^2\bigr)\right].
\end{equation}
Writing $h=\bx-\mu_i^*$, the exact identity is
\begin{equation}
\label{eq:stage2:sample-perturbation}
\|\bx-\mu_j\|^2-\|\bx-\mu_r\|^2
=\|\mu_i^*-\mu_j\|^2-\|\mu_i^*-\mu_r\|^2
-2\langle h,\mu_j-\mu_r\rangle.
\end{equation}
The current geometric invariants imply $\|\mu_j-\mu_r\|\leq4\sqrt d$.
Thus the last term has absolute value at most $8\sqrt d\,d^{\epsilon/3}$ on the stated event.

For $t\leq K$, all claims follow directly from \cref{lm:stage2:early}.
In particular, this covers the additional assertion for $j=1$, since
$d^{-2\epsilon}/(10\eta)<K$.
For the remainder of the proof suppose $t>K$.

First let $j>m$ be any non-representative, with no restriction on its source group.
The second invariant gives
$r_j:=\|\mu_j-\parmu_j\|\geq\sqrt d-3d^{2\epsilon}$.
The sixth invariant and $t\geq K$ give
\begin{align*}
r_1^2
&\leq\lambda_1^2\exp\left(-2(\pi_1^*-d^{-0.4})\eta K\right)\\
&\leq d-\left(1+\frac{\pimin^*}{4\pi_1^*}\right)\Delta^2
+10d^{0.5+2\epsilon}
+\frac{10d^{1-4\epsilon}}{(\pi_1^*)^2}
+\frac{4d^{0.6-2\epsilon}}{\pi_1^*}\\
&\leq d-\left(1+\frac{\pimin^*}{8\pi_1^*}\right)\Delta^2.
\end{align*}
For the second line, use $e^{-u}\leq1-u+u^2/2$, the initial radius bounds, and
\[
2(\pi_1^*-d^{-0.4})\eta K
\geq\frac{\pi_1^*-d^{-0.4}}{\pi_1^*}
\left(1+\frac{\pimin^*}{4\pi_1^*}\right)d^{-2\epsilon}.
\]
All error terms are at most $\pimin^*\Delta^2/(8\pi_1^*)$ by the standing dimension.
The ninth invariant consequently yields, for every true source $i$,
\begin{align*}
\|\mu_j-\mu_i^*\|^2-\|\mu_1-\mu_i^*\|^2
&\geq(\sqrt d-3d^{2\epsilon})^2
-\left[d-\left(1+\frac{\pimin^*}{8\pi_1^*}\right)\Delta^2\right]\\
&\quad-\left(\Delta+2n+\frac{800}{(\pigap)^2}\right)^2\\
&\geq\frac{\pimin^*\Delta^2}{16\pi_1^*}.
\end{align*}
This is a direct comparison with component~1, even when $j\in S_i$.

It remains to consider a representative $2\leq j\leq m$ with $j\neq i$.
Since $t>K>d^{-2\epsilon}/(10\eta)$, \cref{lm:stage2:mu1small} and the fifth invariant give
\[
r_j-r_1\geq\frac{\pigap}{30}d^{0.5-2\epsilon},\qquad
r_j+r_1\geq\sqrt d/2.
\]
Moreover,
\[
\|\parmu_j-\mu_i^*\|\geq\Delta-2n-\frac{800}{(\pigap)^2},
\qquad
\|\parmu_1-\mu_i^*\|\leq\Delta+2n+\frac{800}{(\pigap)^2},
\]
where the second inequality is the ninth invariant. Hence
\begin{align*}
\|\mu_j-\mu_i^*\|^2-\|\mu_1-\mu_i^*\|^2
&\geq\frac{\pigap}{60}\Delta^2
+\left(\Delta-2n-\frac{800}{(\pigap)^2}\right)^2
-\left(\Delta+2n+\frac{800}{(\pigap)^2}\right)^2\\
&=\frac{\pigap}{60}\Delta^2-4\left(2n+\frac{800}{(\pigap)^2}\right)\Delta\\
&\geq\frac{\pigap}{100}\Delta^2.
\end{align*}
In both cases the sample perturbation is at most $8\sqrt d\,d^{\epsilon/3}$, and
$\pi_j/\pi_1\leq n^2$. Half of the remaining energy margin exceeds
$d^\epsilon+2\log n$. Substituting $r=1$ in
\cref{eq:stage2:responsibility-comparison} proves the claim for every $t>K$.
\end{proof}

\begin{lemma}
\label{lm:stage2:psiprod}
Under assumptions in \cref{lm:stage2}, for any $2\leq i\leq m$, we have
\[
\E_{x\sim i}[\psi_1(x)\psi_i(x)]\leq \frac{30}{\pigap d^{0.5-2\epsilon}}.
\]
\end{lemma}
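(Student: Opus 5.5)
The plan is to reduce the claim to the general product bound in \cref{lm:stage1:nocorrolation}, namely $\E_{x\sim k}[\psi_i\psi_j]\leq 1/(1.1\|\mu_i-\mu_j\|)$. That bound holds for any pair of fitted means at any time, because its proof only keeps the two terms indexed by $i$ and $j$ in the denominator. So it suffices to show that, under the Stage~2 invariants, $\|\mu_1-\mu_i\|\geq \pigap d^{0.5-2\epsilon}/33$ for every $2\leq i\leq m$, since $1/(1.1\cdot\pigap d^{0.5-2\epsilon}/33)=30/(\pigap d^{0.5-2\epsilon})$.

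To get the distance lower bound, I will project onto $\calP^\bot$. Orthogonal projection is nonexpansive, so
\[
\|\mu_1-\mu_i\|\geq\|(\mu_i-\parmu_i)-(\mu_1-\parmu_1)\|\geq \|\mu_i-\parmu_i\|-\|\mu_1-\parmu_1\|.
\]
It then suffices to lower bound the radius gap $r_i-r_1$. I split into time regimes.

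\textbf{Late times.} For $t\geq d^{-2\epsilon}/(10\eta)$, \cref{lm:stage2:mu1small} gives
\[
r_i-r_1\geq\min\{d^{0.5-1.5\epsilon},\ \tfrac{\pigap}{20}d^{0.5-2\epsilon}-3d^{2\epsilon}\}.
\]
By \eqref{eq:standing-dimension}, this is at least $\pigap d^{0.5-2\epsilon}/30$, which is enough. The proof of \cref{lm:stage2:differentpsi} already used exactly this bound.

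\textbf{Early times.} For $t< d^{-2\epsilon}/(10\eta)$, the radius gap may be small or even negative, so I will use the parallel components instead. The eighth invariant gives $\|\parmu_1-\mu_1^*\|\leq 3n+10/\pigap$. The second invariant (for $i\in S_i$) gives $\|\parmu_i-\mu_i^*\|\leq 2n+800/(\pigap)^2$. Then
\[
\|\mu_1-\mu_i\|\geq\|\parmu_1-\parmu_i\|\geq d^{0.5-\epsilon}-5n-10/\pigap-800/(\pigap)^2,
\]
which is far larger than $\pigap d^{0.5-2\epsilon}/33$ under the dimension condition. Alternatively, in this window \cref{lm:stage2:differentpsi} already gives pointwise smallness of $\psi_1$ under source $i$. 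The distance route is cleaner and uniform, so I will use it.

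The main obstacle is being careful that the cited invariants are available at the current time $t$ inside the outer induction of \cref{lm:stage2}. \Cref{lm:stage2:mu1small} relies on the fourth through sixth invariants at time $t$, and the early-time argument relies on the second and eighth. With that bookkeeping in place, the constant comparison is routine: the two regimes overlap at $t=d^{-2\epsilon}/(10\eta)$, where both bounds apply. If the slack in the late regime is tight, the constant $30$ still has room because $1.1\cdot 30>33$.
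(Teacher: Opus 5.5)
Your proof is correct, and its late-time case ($t\geq d^{-2\epsilon}/(10\eta)$) is the paper's argument. There, \cref{lm:stage2:mu1small} gives an orthogonal-radius gap of at least $\pigap d^{0.5-2\epsilon}/30$. Non-expansiveness of the projection turns this into a lower bound on $\|\mu_1-\mu_i\|$, and the invariant-free $1/(1.1r)$ bound of \cref{lm:stage1:nocorrolation} finishes.

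The early window is where you genuinely differ. The paper does not bound a distance there. It uses the extra $j=1$ clause of \cref{lm:stage2:differentpsi}, which ultimately comes from \cref{lm:stage2:early}, to get $\psi_1(\bx)\leq e^{-d^\epsilon}$ whenever $\|\bx-\mu_i^*\|\leq d^{\epsilon/3}$. It then applies \cref{lm:stage2:similar} to obtain the exponentially small bound $2e^{-d^{2\epsilon/3}/3}$.

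You instead use the parallel separation. The eighth and second invariants place $\parmu_1$ and $\parmu_i$ within $O(n+1/(\pigap)^2)$ of $\mu_1^*$ and $\mu_i^*$. Hence $\|\mu_1-\mu_i\|\geq\|\parmu_1-\parmu_i\|\geq d^{0.5-\epsilon}/2$, and the same $1/(1.1r)$ bound applies.

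Your route is uniform in $t$ and needs only geometric invariants that are induction hypotheses at the current time, so it sits cleanly inside the outer induction. The paper's route gives a much stronger early-window bound: exponentially small rather than $O(d^{\epsilon-0.5})$. That extra strength is never needed, because \cref{lm:stage2:gradient} consumes only the stated $30/(\pigap d^{0.5-2\epsilon})$ bound.
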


\begin{proof}
When $t\leq d^{-2\epsilon}/(10\eta)$, \cref{lm:stage2:differentpsi} shows that, on the event
$G_i:=\{\|\bx-\mu_i^*\|\leq d^{\epsilon/3}\}$,
$\psi_1(\bx)\leq\exp(-d^\epsilon)$ because $i\neq1$. Since $0\leq\psi_i,\psi_1\leq1$,
splitting the expectation over $G_i$ and $G_i^c$ and invoking \cref{lm:stage2:similar} gives
\[
\E_{x\sim i}[\psi_1(x)\psi_i(x)]
\leq e^{-d^\epsilon}\Pr(G_i)+\Pr(G_i^c)
\leq e^{-d^\epsilon}+e^{-d^{2\epsilon/3}/3}
\leq 2e^{-d^{2\epsilon/3}/3}.
\]

When $t>d^{-2\epsilon}/(10\eta)$, by \cref{lm:stage2:mu1small},
\[
\|\mu_i-\parmu_i\|-\|\mu_1-\parmu_1\|\geq \frac{\pigap}{20}d^{0.5-2\epsilon}-3d^{2\epsilon}\geq \frac{\pigap}{30}d^{0.5-2\epsilon}.
\]
Orthogonal projection is non-expansive, and hence
\[
\|\mu_i-\mu_1\|
\geq\left|\|\mu_i-\parmu_i\|-\|\mu_1-\parmu_1\|\right|
\geq\frac{\pigap}{30}d^{0.5-2\epsilon}.
\]
Applying \cref{lm:stage1:nocorrolation} with $r=\|\mu_i-\mu_1\|$ now yields
\[
\E_{x\sim i}[\psi_1(x)\psi_i(x)]\leq \frac{30}{\pigap d^{0.5-2\epsilon}}.
\]
\end{proof}

\begin{lemma}
\label{lm:stage2:larget}
Under assumptions in \cref{lm:stage2}, when $t>10d^{-2\epsilon}/(\eta\pigap)$, 
for every $2\leq i\leq m$,
\[
\E_{x\sim i}[\psi_i(x)]\leq 2\exp(-d^{2\epsilon/3}/3).
\]
\end{lemma}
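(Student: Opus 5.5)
Since $i\geq2$, the plan is to compare $\psi_i$ against component~1 on the good event $G_i:=\{\|\bx-\mu_i^*\|\leq d^{\epsilon/3}\}$ and then pay for $G_i^c$ with \cref{lm:stage2:similar}. As in the proof of \cref{lm:stage2:differentpsi}, apply \cref{eq:stage2:responsibility-comparison} with reference $r=1$ and the sample-perturbation identity \cref{eq:stage2:sample-perturbation} with $j=i$. For $\bx\in G_i$ this gives
\[
\psi_i(\bx)\leq\frac{\pi_i}{\pi_1}\exp\!\left[-\tfrac12\left(\|\mu_i-\mu_i^*\|^2-\|\mu_1-\mu_i^*\|^2\right)+4\sqrt d\,d^{\epsilon/3}\right].
\]
The first invariant gives $\pi_1\geq1/n^2$, so $\pi_i/\pi_1\leq n^2$. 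It therefore suffices to show that the squared-distance gap $\|\mu_i-\mu_i^*\|^2-\|\mu_1-\mu_i^*\|^2$ is at least a quantity like $c\,\pigap\Delta^2$ with $\Delta=d^{0.5-\epsilon}$. Then half of it dominates $8\sqrt d\,d^{\epsilon/3}+2\log n+d^{\epsilon}$, which makes $\psi_i\leq e^{-d^\epsilon}$ on $G_i$. Combining with $\Pr(G_i^c)\leq e^{-d^{2\epsilon/3}/3}$ yields $\E_{x\sim i}[\psi_i]\leq e^{-d^\epsilon}+e^{-d^{2\epsilon/3}/3}\leq2e^{-d^{2\epsilon/3}/3}$.

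For the gap, I would split each distance into its parallel and orthogonal parts, using $\mu_i^*\in\calP$. For the orthogonal part, $t>10d^{-2\epsilon}/(\eta\pigap)$ exceeds $t_0$ only up to a factor, so \cref{lm:stage2:mu1small} gives
\[
r_i-r_1\geq\min\{d^{0.5-1.5\epsilon},\,5d^{0.5-2\epsilon}-3d^{2\epsilon}\}\geq4d^{0.5-2\epsilon}.
\]
Together with $r_i+r_1\geq r_i\geq\sqrt d/2$ from the fifth invariant, this gives $r_i^2-r_1^2\geq 2d^{1-2\epsilon}=2\Delta^2$.

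For the parallel part, the second invariant applied with $i\in S_i$ gives $\|\parmu_i-\mu_i^*\|\leq 2n+800/(\pigap)^2$. The ninth invariant gives $\|\parmu_1-\mu_i^*\|\leq\Delta+2n+800/(\pigap)^2$. Hence the parallel contribution is at least $-(\Delta+2n+800/(\pigap)^2)^2$, which is roughly $-\Delta^2-O(n\Delta/\pigap^2)$. The net gap is then at least $\Delta^2-O\left((n+\pigap^{-2})\Delta\right)\geq\Delta^2/2$ by \cref{eq:standing-dimension}, and this is far larger than $d^{0.5+\epsilon/3}$ because $1-2\epsilon>0.5+\epsilon/3$.

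I expect the main obstacle to be the bookkeeping in the orthogonal-gap step. The parallel term can eat up to a full $\Delta^2$, so the orthogonal gain must strictly beat $\Delta^2$. With the constant from \cref{lm:stage2:mu1small} and the threshold $10/\pigap$, one gets $\frac{\pigap}{2}\cdot\frac{10}{\pigap}d^{-2\epsilon}\sqrt d=5d^{0.5-2\epsilon}$. Multiplying this by $r_i+r_1\approx\sqrt d$ gives about $5\Delta^2$, which suffices. If the $\sqrt d/2$ lower bound on $r_i+r_1$ is too lossy, I would instead use $r_i\geq\sqrt d-100d^{0.5-2\epsilon}/\pigap$ directly, which gives $r_i+r_1\geq\sqrt d(1-o(1))$.
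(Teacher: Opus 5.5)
Your proposal is correct and matches the paper's proof essentially step for step. Both arguments use \cref{lm:stage2:mu1small} and the fifth invariant to get $r_i^2-r_1^2\geq 2d^{1-2\epsilon}$, then the ninth invariant to absorb the parallel term and leave a gap of $\tfrac12 d^{1-2\epsilon}$; after that come the $8\sqrt d\,d^{\epsilon/3}$ sample perturbation, comparison with component~1 via $\pi_1\geq 1/n^2$, and the good/bad-event split from \cref{lm:stage2:similar}. Your remark that the threshold ``exceeds $t_0$'' is slightly off, since $t_0=10d^{-1.5\epsilon}/(\eta\pigap)$ is the larger time, but the minimum in \cref{lm:stage2:mu1small} covers both regimes, so the argument is unaffected.
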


\begin{proof}
Put $r_k=\|\mu_k-\parmu_k\|$. At
$t\geq10d^{-2\epsilon}/(\eta\pigap)$, \cref{lm:stage2:mu1small} gives
\begin{align*}
r_i-r_1
&\geq\min\left\{d^{0.5-1.5\epsilon},
5d^{0.5-2\epsilon}-3d^{2\epsilon}\right\}
\geq4d^{0.5-2\epsilon}.
\end{align*}
The fifth invariant gives $r_i\geq\sqrt d/2$. Hence
\begin{equation}
\label{eq:stage2:larget-radial-gap}
r_i^2-r_1^2=(r_i-r_1)(r_i+r_1)\geq2d^{1-2\epsilon}.
\end{equation}
Because the true means lie in $\calP$ and the orthogonal and parallel components are perpendicular,
\begin{align*}
&\|\mu_i-\mu_i^*\|^2-\|\mu_1-\mu_i^*\|^2\\
&=\|\parmu_i-\mu_i^*\|^2-\|\parmu_1-\mu_i^*\|^2+r_i^2-r_1^2\\
&\geq2d^{1-2\epsilon}
-\left(d^{0.5-\epsilon}+2n+800/(\pigap)^2\right)^2\\
&\geq\frac12d^{1-2\epsilon}.
\end{align*}
In the last line we used the ninth invariant and \cref{eq:standing-dimension}.
On $G_i:=\{\|\bx-\mu_i^*\|\leq d^{\epsilon/3}\}$,
\cref{eq:stage2:sample-perturbation} shows that replacing $\mu_i^*$ by $\bx$ reduces this energy
gap by at most $8\sqrt d\,d^{\epsilon/3}$. Therefore
\[
\|\bx-\mu_i\|^2-\|\bx-\mu_1\|^2\geq\frac14d^{1-2\epsilon}.
\]
Since $\pi_1\geq1/n^2$ and $\pi_i\leq1$, comparison with component~$1$ in the responsibility
denominator yields
\[
\psi_i(\bx)
\leq n^2\exp\!\left(-\frac18d^{1-2\epsilon}\right)
\leq e^{-\sqrt d}.
\]
Finally, $0\leq\psi_i\leq1$ and \cref{lm:stage2:similar} imply
\[
\E_{x\sim i}[\psi_i(x)]
\leq e^{-\sqrt d}+\Pr(G_i^c)
\leq e^{-\sqrt d}+e^{-d^{2\epsilon/3}/3}
\leq2e^{-d^{2\epsilon/3}/3}.
\]
\end{proof}

\begin{lemma}
\label{lm:stage2:gradient}
Under assumptions in \cref{lm:stage2}, we have the following inequalities:
\begin{enumerate}
\item For $i\geq m+1$,
$\|\nabla_{\mu_i}\calL\|\leq \exp\left(-d^{\epsilon/2}\right)$;
\item For $2\leq i\leq m$, $\left\|\nabla_{\mu_i}\calL-\pi_i^*\E_{x\sim i}\left[\psi_i(x)^2(\mu_i-\mu_i^*)\right]\right\|\leq 70\pi_i^*d^{2\epsilon}/\pigap$
\item $\left\|\nabla_{\mu_1}\calL-\sum_j \pi_j^*\E_{x\sim j}\left[\psi_1(x)^2(\mu_1-\mu_j^*)\right]\right\|\leq 70d^{2\epsilon}/\pigap$.
\end{enumerate}

Moreover, when $t>10d^{-2\epsilon}/(\eta\pigap)$,
\begin{enumerate}
\item For $2\leq i\leq m$, $\left\|\nabla_{\mu_i}\calL-\pi_i^*\E_{x\sim i}\left[\psi_i(x)^2(\mu_i-\mu_i^*)\right]\right\|\leq 10n\sqrt{d}\exp\left(-d^{2\epsilon/3}/3\right)$
\item $\left\|\nabla_{\mu_1}\calL-\sum_j \pi_j^*\E_{x\sim j}\left[\psi_1(x)^2(\mu_1-\mu_j^*)\right]\right\|\leq 10n\sqrt{d}\exp\left(-d^{2\epsilon/3}/3\right)$.
\end{enumerate}
\end{lemma}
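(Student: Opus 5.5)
The plan is to expand the exact gradient formula of \cref{lm:gradient},
\[
\nabla_{\mu_i}\calL=\sum_{j}\pi_j^*\E_{x\sim j}\Big[\psi_i(x)\sum_k\psi_k(x)(\mu_k-\mu_j^*)\Big],
\]
and to split it into a principal term and remainders. Each remainder will be bounded by a responsibility that is small under the relevant source, multiplied by a distance between a fitted mean and a true mean. Under the Stage~2 invariants every fitted mean lies within $2\sqrt d$ (say $3\sqrt d$) of every true mean. This follows from the radius bounds $\|\mu_i-\parmu_i\|\leq\sqrt d+2n+800/(\pigap)^2$ and the parallel bounds (second and ninth invariants). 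Hence every factor $\|\mu_k-\mu_j^*\|$ may be replaced by $3\sqrt d$. The sets $\{k\}$ are finite, with at most $n$ terms. The responsibility controls are \cref{lm:stage2:differentpsi} on $G_j=\{\|\bx-\mu_j^*\|\leq d^{\epsilon/3}\}$ together with \cref{lm:stage2:similar} for $G_j^c$. Together these give $\E_{x\sim j}[\psi_k]\leq 2\exp(-d^{2\epsilon/3}/3)$ for $k\geq2$, $k\neq j$.

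\textbf{Item 1} ($i>m$). Here $i\neq j$ for every source $j$, so each term is bounded by $\E_{x\sim j}[\psi_i]\cdot 3\sqrt d\leq 6\sqrt d\exp(-d^{2\epsilon/3}/3)$. Summing over $j$ with weights $\pi_j^*$ gives at most $6\sqrt d\, e^{-d^{2\epsilon/3}/3}\leq e^{-d^{\epsilon/2}}$ by \cref{eq:standing-dimension}, since $2\epsilon/3>\epsilon/2$.

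\textbf{Item 2} ($2\leq i\leq m$). Sources $j\neq i$ contribute $O(\sqrt d\,e^{-d^{2\epsilon/3}/3})$ as in Item~1. For source $i$, I subtract the principal term $\psi_i^2(\mu_i-\mu_i^*)$. What remains is $\sum_{k\neq i}\E_{x\sim i}[\psi_i\psi_k](\mu_k-\mu_i^*)$. For $k\notin\{1,i\}$ this is exponentially small by the same bound. The $k=1$ cross term is the only polynomial error. \Cref{lm:stage2:psiprod} gives $\E_{x\sim i}[\psi_1\psi_i]\leq 30/(\pigap d^{0.5-2\epsilon})$, and multiplying by $\|\mu_1-\mu_i^*\|\leq 2\sqrt d$ gives $60d^{2\epsilon}/\pigap$. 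Including the factor $\pi_i^*$ and absorbing the exponential terms yields $70\pi_i^* d^{2\epsilon}/\pigap$. One point needs care: the exponential pieces from sources $j\neq i$ carry weights $\pi_j^*$ rather than $\pi_i^*$. They are still bounded by $\pimin^* d^{2\epsilon}/\pigap$, because $\sqrt d e^{-d^{2\epsilon/3}/3}$ is far smaller than $\rho$ under \cref{eq:standing-dimension}.

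\textbf{Item 3} ($i=1$). For each source $j$, keep $\psi_1^2(\mu_1-\mu_j^*)$ as the principal term. The cross terms $\psi_1\psi_k$ with $k\neq1$ fall into two cases. If $k\neq j$, then $k\geq2$ and $\psi_k$ is small under source $j$. If $k=j\geq2$, the cross term is $\E_{x\sim j}[\psi_1\psi_j]$, which \cref{lm:stage2:psiprod} bounds by $30/(\pigap d^{0.5-2\epsilon})$; for $j=1$ there is no such term. Summing $\pi_j^*$ times $60d^{2\epsilon}/\pigap$ over $j\geq2$ gives at most $60d^{2\epsilon}/\pigap$, and the exponential remainders bring the total to $70d^{2\epsilon}/\pigap$.

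\textbf{Refined bounds for $t>10d^{-2\epsilon}/(\eta\pigap)$.} The only polynomial error came from $\E_{x\sim i}[\psi_1\psi_i]$ with $i\geq2$. In this regime \cref{lm:stage2:larget} gives $\E_{x\sim i}[\psi_i]\leq2\exp(-d^{2\epsilon/3}/3)$, so $\E_{x\sim i}[\psi_1\psi_i]\leq\E_{x\sim i}[\psi_i]$ is exponentially small as well. Counting at most $n$ indices $k$, each term of size $3\sqrt d\cdot 2e^{-d^{2\epsilon/3}/3}$, plus the off-source contributions, gives the bound $10n\sqrt d\exp(-d^{2\epsilon/3}/3)$. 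The constant $10$ is tight but achievable: the off-source terms are weighted by $\sum_j\pi_j^*\leq1$, and the crude $3\sqrt d$ can be sharpened to $2\sqrt d+O(n)$.

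The main obstacle is bookkeeping rather than an idea. First, the factor $\pi_i^*$ must appear in Item~2 uniformly. Second, the distance bound $\|\mu_k-\mu_j^*\|\leq 2\sqrt d$ must hold for all pairs, including $\mu_1$ against far sources, which uses the ninth invariant with $\Delta\ll\sqrt d$.
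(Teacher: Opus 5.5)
Your proposal is correct and follows essentially the same route as the paper. The shared steps are the uniform $2\sqrt d$ bound on $\|\mu_k-\mu_j^*\|$, isolating the principal term, and bounding every cross term exponentially via \cref{lm:stage2:differentpsi} and \cref{lm:stage2:similar}. The one exception is the $(1,i)$ interaction, which is handled by \cref{lm:stage2:psiprod}, or by \cref{lm:stage2:larget} in the late regime. The off-source exponential terms are absorbed into $\pimin^* d^{2\epsilon}/\pigap$ to keep the factor $\pi_i^*$. One small citation fix that does not affect the argument: the orthogonal radius of $\mu_1$ is controlled by the sixth invariant, not the second.
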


\begin{proof}
Recall that
\[
\nabla_{\mu_i}\calL=\sum_j \pi_j^*\E_{x\sim j}\left[\psi_i(x)\sum_k \psi_k(x)(\mu_k-\mu_j^*)\right].
\]
The current-time invariants imply, for every fitted index $k$ and true index $j$,
\[
\|\mu_k-\mu_j^*\|
\leq\|\mu_k-\parmu_k\|+\|\parmu_k-\mu_j^*\|
\leq2\sqrt d,
\]
where the last inequality uses the diameter $\Delta=d^{0.5-\epsilon}$ and
\cref{eq:standing-dimension}. Because the responsibilities sum to one, it follows that
\begin{equation}
\label{eq:stage2:gradient-integrand}
\left\|\sum_k\psi_k(x)(\mu_k-\mu_j^*)\right\|\leq2\sqrt d.
\end{equation}

If $i>m$, then $i\neq j$ for every true component $1\leq j\leq m$. On the good event of
\cref{lm:stage2:similar}, \cref{lm:stage2:differentpsi} gives $\psi_i\leq e^{-d^\epsilon}$;
on its complement we use $\psi_i\leq1$. Thus, for every $j$,
\[
\E_{x\sim j}[\psi_i(x)]
\leq e^{-d^\epsilon}+e^{-d^{2\epsilon/3}/3}
\leq2e^{-d^{2\epsilon/3}/3}.
\]
Combining this estimate with \cref{eq:stage2:gradient-integrand} gives
\[
\|\nabla_{\mu_i}\calL\|
\leq4\sqrt d\,e^{-d^{2\epsilon/3}/3}
\leq e^{-d^{\epsilon/2}},
\]
where the final comparison follows from \cref{eq:standing-dimension}. This proves item~1.

For $2\leq i\leq m$, isolate from the gradient the summand with source $j=i$ and fitted index
$k=i$. The triangle inequality and \cref{eq:stage2:gradient-integrand} give
\begin{align*}
&\left\|\nabla_{\mu_i}\calL-\pi_i^*\E_{x\sim i}\left[\psi_i(x)^2(\mu_i-\mu_i^*)\right]\right\|\\
&\leq 2\sqrt d\sum_{j\neq i}\pi_j^*\E_{x\sim j}[\psi_i(x)]\\
&\quad+2\pi_i^*\sqrt d\sum_{k\neq i}
\E_{x\sim i}[\psi_i(x)\psi_k(x)]\\
&\leq8n\sqrt d\exp\left(-d^{2\epsilon/3}/3\right)
+2\pi_i^*\sqrt d\E_{x\sim i}[\psi_i(x)\psi_1(x)].
\end{align*}
For the last line, the good-event/bad-event split above controls the source terms $j\neq i$.
For $k\neq1,i$, the same split controls $\psi_k$ under source $i$; the only interaction not
exponentially small at all times is the pair $(1,i)$, which is retained explicitly.

For component~$1$, isolate the fitted-index $k=1$ term separately for every true source. This gives
\begin{align*}
&\left\|\nabla_{\mu_1}\calL-\sum_j \pi_j^*\E_{x\sim j}\left[\psi_1(x)^2(\mu_1-\mu_j^*)\right]\right\|\\
&\leq2\sqrt d\sum_j\pi_j^*\sum_{k\neq1}
\E_{x\sim j}[\psi_1(x)\psi_k(x)]\\
&\leq8n\sqrt d\exp\left(-d^{2\epsilon/3}/3\right)
+2\sqrt d\sum_{2\leq j\leq m}\pi_j^*
\E_{x\sim j}[\psi_j(x)\psi_1(x)].
\end{align*}
Indeed, for source $j=1$ every $k\neq1$ is exponentially small, while for source $j>1$ every
$k\notin\{1,j\}$ is exponentially small; this leaves only the displayed $(1,j)$ interaction.

For all $t$ and $2\leq i\leq m$, by \cref{lm:stage2:psiprod} we have 
\[
\E_{x\sim i}\left[\psi_i(x) \psi_1(x)\right]\leq \frac{30}{\pigap d^{0.5-2\epsilon}}.
\]
Therefore,
\[
8n\sqrt{d}\exp\left(-d^{2\epsilon/3}/3\right)
+\frac{60\pi_i^*d^{2\epsilon}}{\pigap}
\leq 70\pi_i^*d^{2\epsilon}/\pigap
\qquad(2\leq i\leq m).
\]
The exponentially small term is at most $10\pimin^*d^{2\epsilon}/\pigap$.
This proves the representative estimate with its required factor $\pi_i^*$.
For component~1, use $\sum_{i=2}^m\pi_i^*\leq1$ in the displayed weighted sum to obtain
the bound $70d^{2\epsilon}/\pigap$.
When $t>10d^{-2\epsilon}/(\eta\pigap)$, by \cref{lm:stage2:larget}, for any $2\leq i\leq m$,
\[
\E_{x\sim i}\left[\psi_i(x) \psi_1(x)\right]\leq \E_{x\sim i}\left[\psi_i(x)\right]\leq 2\exp(-d^{2\epsilon/3}/3).
\]
Therefore, for such large $t$ we have
\[
8n\sqrt{d}\exp\left(-d^{2\epsilon/3}/3\right)
+4\sqrt d\exp\left(-d^{2\epsilon/3}/3\right)
\leq 10n\sqrt{d}\exp\left(-d^{2\epsilon/3}/3\right).
\]
The same estimate applies to the weighted sum in the component~$1$ bound, completing the proof.
\end{proof}

\begin{proof}[Proof of \cref{lm:stage2}]
We prove the invariants simultaneously by induction on $t$. They hold at $t=0$ by
\cref{thm:stage1}: the Stage~1 terminal bounds give the required projected and orthogonal estimates,
and $\lambda_j-\lambda_i\geq d^\epsilon$ for $j\in S_i\setminus\{i\}$.
For the first invariant, $\pi_1\geq\pi_1^*/n\geq1/(mn)\geq1/n^2$.
The initial full-distance gap follows from the orthogonal gap and
$\sqrt{r^2+p^2}-r\leq p^2/(2r)$, using $p\leq2n$ and $r\geq\sqrt d/2$. Assume now that all invariants
hold at time $t<T_2$. In accordance with the convention preceding the lemma, all unprimed
quantities below are evaluated at this same time $t$.

Recall that
\[
\nabla_{\mu_i}\calL=\sum_j \pi_j^*\E_{x\sim j}\left[\psi_i(x)\sum_k \psi_k(x)(\mu_k-\mu_j^*)\right].
\]

The induction hypotheses give
\begin{align*}
\|\mu_k-\mu_j^*\|
&\leq\|\mu_k-\parmu_k\|+\|\parmu_k-\mu_j^*\|\\
&\leq\sqrt d+2n+800/(\pigap)^2+\Delta+2n+800/(\pigap)^2
\leq2\sqrt d
\end{align*}
for every fitted index $k$ and true index $j$. (For $k=1$, use the sixth and ninth invariants.)
Consequently, the gradient formula, the triangle inequality, and $\sum_k\psi_k=1$ give
\begin{equation}
\label{eq:stage2:crude-gradient}
\begin{aligned}
\|\nabla_{\mu_i}\calL\|
&\leq\sum_j\pi_j^*\E_{x\sim j}\left[
\psi_i(x)\sum_k\psi_k(x)\|\mu_k-\mu_j^*\|\right]\\
&\leq2\sqrt d\sum_j\pi_j^*\E_{x\sim j}[\psi_i(x)]
\leq2\sqrt d.
\end{aligned}
\end{equation}
Let $R_j(v):=\|\mu_j^{(v)}-(\parmu_j)^{(v)}\|$. Since orthogonal projection is
non-expansive, the update and \cref{eq:stage2:crude-gradient} imply
\[
R_j(t+1)\geq R_j(t)-\eta\|\nabla_{\mu_j}\calL\|
\geq R_j(t)-2\eta\sqrt d.
\]
Iterating this inequality from time $0$ gives, whenever $t+1\leq d^{-2\epsilon}/(10\eta)$,
\[
R_j(t+1)
\geq R_j(0)-2\eta(t+1)\sqrt d
\geq\sqrt d-2d^{2\epsilon}-\frac15d^{0.5-2\epsilon}.
\]
Thus the new orthogonal lower bound is propagated to time $t+1$ before invoking
\cref{lm:stage2:differentpsi}; its time-$t$ instance, including the $j=1$ case, was already part of
the induction hypothesis.

We next verify the weight invariants using the exact update
\begin{equation}
\label{eq:stage2:weight-update}
\pi_k'=\sum_{q=1}^m\pi_q^*\E_{x\sim q}[\psi_k(x)].
\end{equation}
For source~1, \cref{lm:stage2:differentpsi,lm:stage2:similar} give
$\E_{x\sim1}\psi_1\geq1-2n\delta$, where $\delta=e^{-d^{2\epsilon/3}/3}$.
Thus $\pi_1'\geq\pi_1^*(1-2n\delta)\geq1/(2m)\geq1/n^2$.
This proves the all-time weight invariant without comparing two small weights.

In the early window, for a sample from true component $i$, all fitted components other than $i$
have responsibility at most $e^{-d^\epsilon}$ on the good event (the additional $j=1$ part of
\cref{lm:stage2:differentpsi} is used when $i\neq1$). Hence
\begin{align*}
\E_{x\sim i}[\psi_i(x)]
&\geq1-ne^{-d^\epsilon}-e^{-d^{2\epsilon/3}/3},\\
\pi_i'&\geq\pi_i^*\E_{x\sim i}[\psi_i(x)]
\geq\pi_i^*\bigl(1-2n e^{-d^{2\epsilon/3}/3}\bigr)
\geq\pi_i^*/2.
\end{align*}
This proves the first and seventh invariants at time $t+1$. Notice that the all-time first invariant
does not assert $\pi_i>\pi_i^*/n$ for $i\geq2$; such an assertion would be incompatible with the
terminal collapse of those weights. The other representatives need an absolute weight lower bound only during the initial interval,
where \cref{lm:stage2:early} supplies it.

For the \textbf{second} item, take a representative $2\leq i\leq m$ and put
$a_i:=\pi_i^*\E_{x\sim i}[\psi_i(x)^2]\in[0,\pi_i^*]$. The refined gradient estimate has the form
$\nabla_{\mu_i}\calL=a_i(\mu_i-\mu_i^*)+e_i$. Since $\eta a_i\in[0,1]$, the principal update
$(\mu_i-\mu_i^*)\mapsto(1-\eta a_i)(\mu_i-\mu_i^*)$ cannot increase either its full norm or the
norm of its projection onto $\calP^\perp$. The only possible increase comes from $e_i$.
Let $t_s:=10d^{-2\epsilon}/(\eta\pigap)$.
There are at most $\lfloor t_s\rfloor+1$ updates whose starting time is at most $t_s$.
Summing the coarse error over these updates and the sharper error over the remaining updates gives
\[
\eta(\lfloor t_s\rfloor+1)\frac{70d^{2\epsilon}}{\pigap}
+\eta T_2\,10n\sqrt d\,e^{-d^{2\epsilon/3}/3}
\leq\frac{700}{(\pigap)^2}+2
\leq\frac{800}{(\pigap)^2}.
\]
For a non-representative $i>m$, item~1 of \cref{lm:stage2:gradient} gives the one-step estimate
$\|\mu_i'-\mu_i\|\leq\eta e^{-d^{\epsilon/2}}$. Summing it proves both
\[
\|\mu_i^{(t)}-\mu_i^{(0)}\|\leq\eta t e^{-d^{\epsilon/2}}
\quad\text{and}\quad
\|\mu_i-\parmu_i\|\geq\lambda_i-\eta t e^{-d^{\epsilon/2}}.
\]
Combining these estimates with the Stage~1 input bounds proves every assertion in the second
invariant at time $t+1$.

For the \textbf{third} item, both kinds of representative have a principal update that cannot
increase their orthogonal radius. For $i\geq2$ this is a contraction toward $\mu_i^*$.
For $i=1$, write $a_j:=\pi_j^*\E_{x\sim j}[\psi_1(x)^2]$.
Since $\sum_j a_j\leq1$, its principal update is
\[
\mu_1\longmapsto\left(1-\eta\sum_j a_j\right)\mu_1+\eta\sum_j a_j\mu_j^*.
\]
The total gradient-error budget is at most $800/(\pigap)^2$ for each representative.
A non-representative moves by at most $\eta T_2e^{-d^{\epsilon/2}}$.
Consequently, for every $j\in S_i\setminus\{i\}$,
\[
\|\mu_j-\parmu_j\|-\|\mu_i-\parmu_i\|
\geq\lambda_j-\lambda_i-\eta T_2e^{-d^{\epsilon/2}}-800/(\pigap)^2
\geq d^\epsilon/2.
\]
For the full-distance gap, the principal update for $i=1$ satisfies
\begin{align*}
&\left\|\left(1-\eta\sum_j a_j\right)\mu_1
+\eta\sum_j a_j\mu_j^*-\mu_1^*\right\|\\
&\leq\left(1-\eta\sum_j a_j\right)\|\mu_1-\mu_1^*\|
+\eta\sum_j a_j\Delta\\
&\leq\max\{\|\mu_1-\mu_1^*\|,\Delta\}.
\end{align*}
Its initial full radius exceeds $\Delta$, so its radius is at most its initial value plus the
same accumulated error budget. The analogous statement for $i\geq2$ follows from contraction
toward $\mu_i^*$. Initially the full-distance gap is at least
$d^\epsilon-4n^2/\sqrt d$. Subtracting the two motion budgets gives a gap at least
$d^\epsilon/3$ for every group, including $S_1$.

Now we consider the \textbf{fourth} item.
For $2\leq i\leq m$, the second invariant and the standing dimension give
\[
\|\mu_i-\mu_i^*\|\leq \|\mu_i-\parmu_i\|+\|\parmu_i-\mu_i^*\|
\leq \sqrt{d}+10d^{2\epsilon}/\pigap.
\]
The principal term in \cref{lm:stage2:gradient} has norm at most
$\pi_i^*\|\mu_i-\mu_i^*\|$, because $\E_i[\psi_i^2]\leq1$. Adding the error bound $70\pi_i^*d^{2\epsilon}/\pigap$ gives
\[
\|\nabla_{\mu_i}\calL\|
\leq\pi_i^*\left(\sqrt d+80d^{2\epsilon}/\pigap\right).
\]
Thus, by non-expansiveness of orthogonal projection,
\begin{align*}
\|\mu_i'-\left(\parmu_i\right)'\|
&\geq\|\mu_i-\parmu_i\|-\eta\|\nabla_{\mu_i}\calL\|\\
&\geq\lambda_i-\pi_i^*\eta(t+1)
\left(\sqrt d+80d^{2\epsilon}/\pigap\right),
\end{align*}
which is precisely the fourth invariant at time $t+1$.

For the \textbf{fifth} item, first take $2\leq i\leq m$.
Through the integer time $\lfloor10d^{-2\epsilon}/(\eta\pigap)\rfloor+1$, the fourth item gives
\[
\|\mu_i-\parmu_i\|\geq\sqrt d-20d^{0.5-2\epsilon}/\pigap.
\]
For every update starting at $t\geq\lfloor t_s\rfloor+1$, \cref{lm:stage2:larget} bounds the coefficient of
the principal gradient by $2\pi_i^*\delta$. Together with the late error estimate, the entire
gradient is bounded by $12n\sqrt d\,\delta$. Its remaining accumulated norm is at most
$12\eta T_2n\sqrt d\,\delta\leq1$.
This proves the stated lower bound with constant $100$ for representatives.
For $i>m$, use the second invariant directly:
$\|\mu_i-\parmu_i\|\geq\sqrt d-2d^{2\epsilon}-\eta T_2e^{-d^{\epsilon/2}}$,
which is stronger than the required bound.

For the \textbf{sixth} item, we consider two cases for $t$.
Put
\[
r_1(t):=\|\mu_1^{(t)}-(\parmu_1)^{(t)}\|,
\qquad a:=1-\pi_1^*\eta+d^{-0.4}\eta,
\qquad \delta:=e^{-d^{2\epsilon/3}/3}.
\]
Projecting the principal term in the component-$1$ gradient onto $\calP^\perp$ multiplies the
orthogonal component by a nonnegative scalar. Retaining only the contribution of true component~$1$
and dropping the other nonnegative contraction coefficients therefore gives
\begin{equation}
\label{eq:stage2:r1-recurrence}
r_1(t+1)
\leq r_1(t)\left(1-\eta\pi_1^*\E_{x\sim1}[\psi_1(x)^2]\right)+\eta E_t,
\end{equation}
where $E_t=70d^{2\epsilon}/\pigap$ for all $t$ and
$E_t=10n\sqrt d\,\delta$ after $t_s=10d^{-2\epsilon}/(\eta\pigap)$.

For a source-$1$ sample, \cref{lm:stage2:differentpsi,lm:stage2:similar} and a good-event split
give $\sum_{k>1}\E_{x\sim1}[\psi_k(x)]\leq2n\delta$. Hence
\[
\E_{x\sim1}[\psi_1(x)]\geq1-2n\delta,
\qquad
\E_{x\sim1}[\psi_1(x)^2]
\geq\E_{x\sim1}[\psi_1(x)]^2
\geq1-4n\delta,
\]
where the middle inequality is Jensen's inequality.

Assume inductively that $r_1(t)\leq U_t:=\lambda_1a^t$. If $t\leq t_s$, then
$U_t\geq\lambda_1(1-\pi_1^*\eta t)\geq\sqrt d/2$. The standing dimension gives
\[
4n\pi_1^*\delta+\frac{70d^{2\epsilon}}{\pigap U_t}\leq d^{-0.4}.
\]
Substituting these inequalities into \cref{eq:stage2:r1-recurrence} yields
\[
r_1(t+1)
\leq U_t\left(1-\eta\pi_1^*+\eta d^{-0.4}\right)
=U_{t+1}.
\]
If $t>t_s$, then
$U_t\geq\lambda_1\exp(-2\pi_1^*\eta T_2)\geq1/(2d^2)$. In this range the sharper error estimate
and \cref{eq:standing-dimension} give
\[
4n\pi_1^*\delta+\frac{10n\sqrt d\,\delta}{U_t}\leq d^{-0.4},
\]
so the same substitution again gives $r_1(t+1)\leq U_{t+1}$. This proves the sixth invariant in
both time ranges.

Now we consider the $\textbf{eighth}$ item.
By \cref{lm:stage2:differentpsi}, when $t\leq d^{-2\epsilon}/(10\eta)$, for any $i\neq 1$,
\[
\E_{x\sim i}[\psi_1^2(x)]\leq 2\exp(-d^{2\epsilon/3}/3).
\]
Therefore,
\[
\left\|\nabla_{\mu_1}\calL-\pi_1^*\E_{x\sim 1}\left[\psi_1(x)^2(\mu_1-\mu_1^*)\right]\right\|\leq 80d^{2\epsilon}/\pigap
\]
Indeed, the contribution of every source $i\neq1$ is bounded by the displayed responsibility
estimate, and the remaining cross-fitted-index terms are covered by the third estimate of
\cref{lm:stage2:gradient}; their sum is at most the stated constant. After projection onto $\calP$,
the principal term is a nonnegative multiple of $\parmu_1-\mu_1^*$ and therefore cannot increase
$\|\parmu_1-\mu_1^*\|$. Telescoping only the error term over the early window gives
\[
\|\parmu_1-\mu_1^*\|
\leq3n+\eta\frac{d^{-2\epsilon}}{10\eta}
\frac{80d^{2\epsilon}}{\pigap}
\leq3n+10/\pigap.
\]
Together with the crude-gradient argument at the beginning of the induction step, this proves both
parts of the eighth invariant.

For the \textbf{ninth} item, again write
$a_j:=\pi_j^*\E_{x\sim j}[\psi_1(x)^2]$, so that $\sum_j a_j\leq1$.
For every true index $i$, the principal projected update is a convex combination, so
\begin{align*}
&\left\|\left(1-\eta\sum_j a_j\right)\parmu_1
+\eta\sum_j a_j\mu_j^*-\mu_i^*\right\|\\
&\leq\left(1-\eta\sum_j a_j\right)\|\parmu_1-\mu_i^*\|
+\eta\sum_j a_j\Delta\\
&\leq\max\{\|\parmu_1-\mu_i^*\|,\Delta\}.
\end{align*}
The initial distance is at most $\Delta+2n$. Summing the gradient errors over the
$\lfloor t_s\rfloor+1$ coarse updates and the subsequent fine updates, exactly as for the second item, gives
\[
\|\parmu_1-\mu_i^*\|\leq\Delta+2n+800/(\pigap)^2.
\]
This explicitly retains the Stage~1 projection buffer and proves the ninth invariant.

It remains to verify the two terminal claims. Since
$d^{-0.4}\leq\pi_1^*/5$ under \cref{eq:standing-dimension},
\[
\begin{aligned}
\|\mu_1-\parmu_1\|
&\leq(\sqrt d+2n)
\left(1-\pi_1^*\eta+d^{-0.4}\eta\right)^{T_2}\\
&\leq(\sqrt d+2n)
\exp\!\left(-0.8\pi_1^*\eta T_2\right)\\
&\leq(\sqrt d+2n)e^{-0.8\log d}\leq1.
\end{aligned}
\]
Finally fix $i\geq2$ in the weight update preceding time $T_2$. If $i\leq m$, then
\cref{lm:stage2:larget} controls its responsibility under source $i$; if $i>m$, then $i$ is not a
true-source index. Under every other source, \cref{lm:stage2:differentpsi} and
\cref{lm:stage2:similar} give the same bound. Hence, in either case,
\[
\pi_i
=\sum_{q=1}^m\pi_q^*\E_{x\sim q}[\psi_i(x)]
\leq2e^{-d^{2\epsilon/3}/3}.
\]
Since the fitted weights sum to one,
\[
\pi_1=1-\sum_{i=2}^n\pi_i
\geq1-2n e^{-d^{2\epsilon/3}/3}.
\]
This completes the induction and the proof of all terminal conclusions.
\end{proof}

\subsection{Stage 3 Analysis}
\label{subsec:stage3}
We reset $t=0$ at the output of Stage~2. The first Stage~3 update is treated explicitly because the
Stage~2 weight bound is $\exp(-d^{2\epsilon/3}/3)$, whereas the stronger Stage~3 geometry makes the
next EM weight update $\exp(-\Omega(d))$.
As in Stage~2, all unmarked parameters in the subsidiary lemmas refer to one fixed current time $t$,
and primes denote the parameters after the update to time $t+1$.

\begin{lemma}
\label{lm:stage3}
Given the initialization corresponding to \cref{thm:stage2}, for any $t\leq \exp(d/100)/\eta$, we always have
\begin{itemize}
\item $\|\mu_1-\parmu_1\|\leq 1$.
\item At $t=0$, $\pi_i\leq2\exp(-d^{2\epsilon/3}/3)$ for every $i\geq2$; for every $t\geq1$,
$\pi_i\leq 2\exp(-d/20)$.
\item For each $1\leq i\leq m$, $\|\parmu_1-\mu_i^*\|\leq d^{0.5-\epsilon}+2n+\frac{800}{(\pigap)^2}$.
\item For any $2\leq i\leq n$, 
$\sqrt{d}-200d^{0.5-2\epsilon}/\pigap\leq \|\mu_i-\parmu_i\|\leq 2\sqrt{d}$.
\item For any $2\leq i\leq n$ and $1\leq j\leq m$, $\|\parmu_i-\mu_j^*\|\leq 2d^{0.5-\epsilon}$.
\end{itemize}
\end{lemma}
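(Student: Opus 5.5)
We prove all five invariants simultaneously by induction on $t$, with the first update ($t=0\to1$) treated separately. At $t=0$ every item follows from the terminal conclusions of \cref{lm:stage2}. The second item there gives the weight bound $2\exp(-d^{2\epsilon/3}/3)$. The fourth item gives the radius bounds, since $\sqrt d+2n+800/(\pigap)^2\leq2\sqrt d$. The fifth item gives $\|\parmu_i-\mu_j^*\|\leq\Delta+2n+800/(\pigap)^2\leq2\Delta$ for $i\geq2$.

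For the inductive step we first establish a geometric energy gap at the current time. Every $\mu_i$ with $i\geq2$ has orthogonal radius $r_i\geq\sqrt d-200d^{0.5-2\epsilon}/\pigap$, while $r_1\leq1$. Its parallel part satisfies $\|\parmu_i-\mu_j^*\|\leq2\Delta$, and $\|\parmu_1-\mu_j^*\|\leq\Delta+O(n)$. By Pythagoras, for every true source $j$,
\[
\|\mu_i-\mu_j^*\|^2-\|\mu_1-\mu_j^*\|^2\geq r_i^2-1-(\Delta+O(n))^2\geq d/2,
\]
because $\Delta^2=d^{1-2\epsilon}\ll d$. All fitted means and true means lie in a fixed subspace of dimension at most $2n$, so the projection trick \eqref{eq:stage1:projection-responsibility} applies. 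We use the good event $\{\|\bx-\mu_j^*\|\leq\sqrt d/20\}$, whose complement has probability $\exp(-\Omega(d))$ by \cref{lm:gausstail}. On this event the perturbation term $2\langle h,\mu_i-\mu_1\rangle$ is at most $2\cdot(\sqrt d/20)\cdot 4\sqrt d=d/2.5$, leaving a gap of order $d/10$.

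Comparing with component~1 in the denominator then gives
\[
\psi_i\leq(\pi_i/\pi_1)e^{-d/20}.
\]
Here $\pi_1\geq1-2n\exp(-d^{2\epsilon/3}/3)\geq1/2$, which holds at $t=0$ by \cref{lm:stage2} and later since the other weights are tiny. Taking expectations yields $\pi_i'\leq e^{-d/20}+e^{-cd}\leq2\exp(-d/20)$, which is the second item at $t+1$. The constants will need adjusting, for instance by choosing the good-event radius as $\sqrt d/40$; this is routine.

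For the motion of $\mu_i$, $i\geq2$, we use \cref{lm:gradient} with the integrand bound $\|\sum_k\psi_k(\mu_k-\mu_j^*)\|\leq3\sqrt d$. This gives $\|\nabla_{\mu_i}\calL\|\leq3\sqrt d\,\E_{p^*}[\psi_i]\leq e^{-d/25}$. Summing over $t\leq\exp(d/100)/\eta$ steps, the total displacement is at most $\exp(d/100-d/25)\ll1$. Hence the fourth and fifth items persist, with the constant $100$ relaxed to $200$ and $\Delta+O(n)$ relaxed to $2\Delta$. The first step is an exception, since at $t=0$ the weights are only $e^{-d^{2\epsilon/3}}$; however, the pointwise responsibility bound still gives $\psi_i\leq 2n e^{-d^{2\epsilon/3}/3}e^{-d/20}$ on the good event, so the first-step gradient is also exponentially small in $d$. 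This is why the statement treats $t=0$ separately.

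For $\mu_1$ we write
\[
\nabla_{\mu_1}\calL=\sum_j a_j(\mu_1-\mu_j^*)+e,\qquad a_j=\pi_j^*\E_j[\psi_1^2],
\]
where the error satisfies $\|e\|\leq e^{-cd}$ because all cross terms involve some $\psi_k$ with $k\geq2$. Since $\sum_j a_j\leq1$, the principal update is a convex combination of $\mu_1$ and points of $\calP$. Hence the orthogonal radius is multiplied by $1-\eta\sum_ja_j\leq1$, and the distance of the parallel part to each $\mu_i^*$ is at most $\max\{\cdot,\Delta\}$, as in the ninth Stage~2 invariant.

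\textbf{Main obstacle.} The delicate point is that over $\exp(d/100)/\eta$ steps the accumulated error $\eta t\|e\|$ must stay below the slack. We handle this by keeping the contraction explicit: $\sum_ja_j\geq\pi_1^*(1-o(1))$. We then argue via a steady-state bound, $r_1\leq\max\{r_1(0),\|e\|/\sum_j a_j\}$, rather than by naive telescoping. The same steady-state argument applies to the parallel distances. Together these give the first and third items for all $t$ in the range.
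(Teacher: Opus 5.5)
Your architecture matches the paper's. You compare against component~1 using $\pi_1\geq1/2$, use the exact EM update $\pi_i'=\E_{p^*}[\psi_i]$, and bound $\|\nabla_{\mu_i}\calL\|\leq3\sqrt d\,\E_{p^*}[\psi_i]$ before telescoping for items~4--5. For items~1 and~3 you use a contraction/steady-state argument. The problem is the one quantitative estimate all of this rests on. Because the horizon is $\exp(d/100)/\eta$, you need $\E_{p^*}[\psi_i]\leq e^{-cd}$ with $c>1/100$ to make the accumulated motion of $\mu_i$, $i\geq2$, small. Item~2 as stated needs $c\geq1/20$.

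With a single good event, $\E[\psi_i]$ is bounded by the pointwise bound plus $\Pr(G^c)$, so $c$ is limited by the tail. With radius $\sqrt d/20$ in a space of dimension $d_0\leq2n$, that tail is of order $e^{-d/800}$. You therefore only get $\pi_i'\lesssim e^{-d/800}$, and an accumulated displacement of order $\sqrt d\,e^{d/100-d/800}$. So items~2, 4 and~5 do not close. Your proposed fix of radius $\sqrt d/40$ makes the tail worse. No radius works with your estimates: a perturbation bound $2\|h\|\cdot4\sqrt d$ against a gap of $d/2$ forces $\|h\|<\sqrt d/16$, hence a tail exponent below $1/512$. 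Items~1 and~3 are unaffected, since the steady-state argument only needs the error small relative to the contraction coefficient.

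The missing step is to couple the two distances through the same quantity, rather than bounding $\|\mu_i-\mu_1\|$ crudely. Put $A:=\|\bx-\mu_j^*\|$ and $B:=200d^{0.5-2\epsilon}/\pigap$. Since $\mu_j^*\in\calP$, the lower radius invariant gives $\|\bx-\mu_i\|\geq\sqrt d-B-A$, while $\|\bx-\mu_1\|\leq A+2d^{0.5-\epsilon}$. Hence
\[
\tfrac12\bigl(\|\bx-\mu_i\|^2-\|\bx-\mu_1\|^2\bigr)\geq\tfrac d2-\sqrt d\,A-10d^{1-\epsilon},
\]
which remains of order $d$ for $A$ up to nearly $\sqrt d/2$. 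Taking the good event $A\leq\sqrt d/3$ gives $\psi_i\leq e^{-d/7}$ on it, with tail at most $e^{-d/20}$. Hence $\E_{p^*}[\psi_i]\leq2e^{-d/20}$, which is exactly \cref{lm:stage3:samplepi,lm:stage3:psismall}.

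This bound uses only $\pi_i\leq1$ and $\pi_1\geq1/2$, so the first update needs no special handling for the gradient. The statement singles out $t=0$ only because the weight there is inherited from Stage~2.
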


\begin{lemma}
\label{lm:stage3:samplepi}
Given assumptions in \cref{lm:stage3}, for any $2\leq i\leq n, 1\leq j\leq m$, consider an $\bx\in \calP_0$.
Let $A=\|\bx-\mu_j^*\|$. Then 
\[
\psi_i(\bx)\le 
\begin{cases}
2\pi_i\exp\left(\sqrt{d}A-\dfrac d2+10d^{1-\epsilon}\right),
& \text{if } A\le \sqrt{d}-\dfrac{300d^{0.5-2\epsilon}}{\pigap},\\[8pt]
2\pi_i\exp\!\left(\dfrac{(A+2d^{0.5-\epsilon})^2}{2}\right), 
& \text{if } A\ge \sqrt{d}-\dfrac{300d^{0.5-2\epsilon}}{\pigap}.
\end{cases}
\]
\end{lemma}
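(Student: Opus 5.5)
The plan is to compare $\psi_i(\bx)$ with component~$1$ in the denominator, which gives
\[
\psi_i(\bx)\leq\frac{\pi_i}{\pi_1}\exp\!\left(\frac{\|\bx-\mu_1\|^2-\|\bx-\mu_i\|^2}{2}\right).
\]
I will use $\pi_1\geq1/2$. At $t=0$ this follows from the Stage~2 output $\pi_1\geq1-2n\exp(-d^{2\epsilon/3}/3)$. At later times it follows from the second invariant of \cref{lm:stage3}, since $\pi_1=1-\sum_{i\geq2}\pi_i$. This produces the prefactor $2\pi_i$, so the remaining work is to bound the energy difference $\|\bx-\mu_1\|^2-\|\bx-\mu_i\|^2$ in terms of $A=\|\bx-\mu_j^*\|$.

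To bound $\|\bx-\mu_1\|$ from above, I will use the triangle inequality through $\mu_j^*$. The first and third invariants give $\|\mu_1-\mu_j^*\|\leq\|\mu_1-\parmu_1\|+\|\parmu_1-\mu_j^*\|\leq 1+d^{0.5-\epsilon}+2n+800/(\pigap)^2\leq 2d^{0.5-\epsilon}$. Hence $\|\bx-\mu_1\|\leq A+2d^{0.5-\epsilon}$. Since $\|\bx-\mu_i\|^2\geq0$, this already gives the second case of the lemma, with no restriction on $A$.

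The first case needs a lower bound on $\|\bx-\mu_i\|$. Set $r_i:=\|\mu_i-\parmu_i\|$, which is at least $\sqrt d-200d^{0.5-2\epsilon}/\pigap$ by the fourth invariant. The key step is $\|\bx-\mu_i\|\geq \|\mu_i-\mu_j^*\|-A\geq r_i-A$, where the second inequality holds because $\mu_j^*\in\calP$ and orthogonal projection onto $\calP^\perp$ is non-expansive. In the first case $A\leq\sqrt d-300d^{0.5-2\epsilon}/\pigap$, so $r_i-A\geq100d^{0.5-2\epsilon}/\pigap>0$, and squaring is legitimate. Writing $r_i\geq\sqrt d-\delta$ with $\delta=200d^{0.5-2\epsilon}/\pigap$, I get
\[
\|\bx-\mu_1\|^2-\|\bx-\mu_i\|^2\leq (A+2d^{0.5-\epsilon})^2-(\sqrt d-\delta-A)^2.
\]
Expanding, the $A^2$ terms cancel and the right-hand side equals
\[
2A\sqrt d-d+4Ad^{0.5-\epsilon}+4d^{1-2\epsilon}-2\delta A+2\delta\sqrt d-\delta^2.
\]

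After halving this, I will check that every term other than $\sqrt d A-d/2$ is absorbed into $10d^{1-\epsilon}$. The term $4Ad^{0.5-\epsilon}$ is at most $4d^{1-\epsilon}$ because $A\leq\sqrt d$ in this case. The term $4d^{1-2\epsilon}$ is trivially at most $d^{1-\epsilon}$. The term $2\delta\sqrt d=400d^{1-2\epsilon}/\pigap$ is at most $d^{1-\epsilon}$ because the standing dimension condition \cref{eq:standing-dimension} gives $d^{\epsilon}\geq 400/\pigap$. The terms $-2\delta A$ and $-\delta^2$ are nonpositive and can be dropped. After halving, the total error is at most $3d^{1-\epsilon}$, which is well within $10d^{1-\epsilon}$.

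The main obstacle is the lower bound on $\|\bx-\mu_i\|$. One has to notice that the orthogonal radius $r_i$ controls the distance from $\mu_i$ to every point of $\calP$, which includes $\mu_j^*$. This is what keeps the argument from needing any bound on the orthogonal part of $\bx$. One also has to keep the sign conditions straight, which is exactly what the threshold $300d^{0.5-2\epsilon}/\pigap$ (as opposed to $200d^{0.5-2\epsilon}/\pigap$) is designed to guarantee.
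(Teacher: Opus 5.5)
Your proposal is correct and follows essentially the same route as the paper's proof. Both compare only with component~1 using $\pi_1\geq 1/2$, bound $\|\bx-\mu_1\|\leq A+2d^{0.5-\epsilon}$ via the first and third invariants, and use $\|\bx-\mu_i\|\geq\|\mu_i-\parmu_i\|-A$ from the fourth invariant; the same expansion is then absorbed into $10d^{1-\epsilon}$, and the second case drops $e^{-\|\bx-\mu_i\|^2/2}\leq 1$.
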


\begin{proof}
The second weight invariant and $\sum_k\pi_k=1$ imply
$\pi_1=1-\sum_{k\geq2}\pi_k\geq1/2$ (at $t=0$ use its weaker displayed bound; for $t\geq1$
use the stronger one). Comparing only with component~$1$ in the responsibility denominator gives
\begin{equation}
\label{eq:stage3:responsibility-ratio}
\psi_i(\bx)
\leq\frac{\pi_i}{\pi_1}
\exp\!\left(\frac{\|\bx-\mu_1\|^2-\|\bx-\mu_i\|^2}{2}\right).
\end{equation}
The first and third invariants, together with \cref{eq:standing-dimension}, give
\[
\|\mu_1-\mu_j^*\|
\leq\|\mu_1-\parmu_1\|+\|\parmu_1-\mu_j^*\|
\leq2d^{0.5-\epsilon}.
\]
Hence $\|\bx-\mu_1\|\leq A+2d^{0.5-\epsilon}$. On the other hand, the fourth invariant implies
\[
\|\mu_i-\mu_j^*\|
\geq\|\mu_i-\parmu_i\|
\geq\sqrt d-B,
\qquad B:=\frac{200d^{0.5-2\epsilon}}{\pigap},
\]
and therefore $\|\bx-\mu_i\|\geq\sqrt d-B-A$ whenever the latter quantity is nonnegative.
For $A\leq\sqrt d-300d^{0.5-2\epsilon}/\pigap$, it is nonnegative, so expansion gives
\begin{align*}
&\frac12\left[(A+2d^{0.5-\epsilon})^2-(\sqrt d-B-A)^2\right]\\
&=\sqrt d\,A-\frac d2
+2Ad^{0.5-\epsilon}-AB+B\sqrt d-\frac{B^2}{2}+2d^{1-2\epsilon}\\
&\leq\sqrt d\,A-\frac d2+10d^{1-\epsilon}.
\end{align*}
The final inequality follows term by term from $A\leq\sqrt d$, $B\leq d^{0.5-\epsilon}$, and
\cref{eq:standing-dimension}. Substitution into \cref{eq:stage3:responsibility-ratio}, with
$1/\pi_1\leq2$, proves the first case.
If $A\geq \sqrt{d}-300d^{0.5-2\epsilon}/\pigap$, we simply have 
\[
\psi_i(\bx)
\leq2\pi_i\exp\!\left(\frac{\|\bx-\mu_1\|^2}{2}\right)
\leq2\pi_i\exp\!\left(\frac{(A+2d^{0.5-\epsilon})^2}{2}\right),
\]
where we only used $e^{-\|\bx-\mu_i\|^2/2}\leq1$. This proves the second case.
\end{proof}

\begin{lemma}
\label{lm:stage3:psismall}
Given assumptions in \cref{lm:stage3}, for $2\leq i\leq n$, we have $\E_{x\sim p^*}[\psi_i(x)]\leq 2\exp(-d/20)$.
\end{lemma}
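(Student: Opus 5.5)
We write $\pi_i'=\E_{x\sim p^*}[\psi_i(x)]=\sum_j\pi_j^*\E_{x\sim j}[\psi_i(x)]$, so it suffices to show $\E_{x\sim j}[\psi_i(x)]\leq 2\exp(-d/20)$ for each true source $j$ and each $i\geq2$. By \cref{eq:stage1:projection-responsibility} (the responsibilities depend on $x$ only through $\bx$, its projection onto a fixed subspace $\calP_0$ containing all current and true means, of dimension at most $m+n\leq 2n$), the expectation is taken over $\bx\sim\calN(\mu_j^*,I_{d_0})$. We then plug in the two-case pointwise bound of \cref{lm:stage3:samplepi} with $A=\|\bx-\mu_j^*\|$, together with the trivial bound $\psi_i\leq 1$. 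The key point is that $A$ is a chi variable in dimension $d_0\leq 2n\ll d$, so it is tiny compared with $\sqrt d$ except with probability $\exp(-\Omega(d))$.

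We split according to $A$. On the main region $A\leq \sqrt d/10$, which lies well inside the first case of \cref{lm:stage3:samplepi}, we get
\[
\psi_i\leq 2\pi_i\exp\bigl(\sqrt d A-d/2+10d^{1-\epsilon}\bigr)\leq 2\exp\bigl(d/10-d/2+10d^{1-\epsilon}\bigr)\leq \exp(-d/3),
\]
using $\pi_i\leq1$ and \cref{eq:standing-dimension} to absorb $10d^{1-\epsilon}$. On the complement $A>\sqrt d/10$ we simply use $\psi_i\leq 1$. \cref{lm:gausstail} in dimension $d_0$ gives
\[
\Pr[A>\sqrt d/10]\leq\exp\bigl(-(\sqrt d/10-\sqrt{2n})^2/2\bigr)\leq\exp(-d/250)
\]
(adjusting constants). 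This is too weak for the target $\exp(-d/20)$, so the cutoff has to be chosen more carefully.

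Refining the cutoff is the main obstacle. Instead of a single cutoff, we integrate the first-case bound against the chi tail over the whole range $A\leq \sqrt d-300d^{0.5-2\epsilon}/\pigap$:
\[
\E\bigl[e^{\sqrt d A}\bigr]\lesssim e^{d/2}\cdot\mathrm{poly}(d)
\]
for a $d_0$-dimensional Gaussian norm, since $\E\bigl[e^{\langle v,g\rangle}\bigr]=e^{\|v\|^2/2}$. This cancels the $e^{-d/2}$ and leaves only $2\pi_i e^{10d^{1-\epsilon}}\mathrm{poly}(d)$, which is useless unless $\pi_i$ is already small. So we instead optimize a cutoff $A\leq c\sqrt d$ and use $\psi_i\leq1$ beyond it, which gives the bound
\[
\exp\bigl(-(1/2-c)d+o(d)\bigr)+\exp\bigl(-c^2d/2+o(d)\bigr).
\]
Choosing $c$ with $c^2/2=1/2-c$, i.e.\ $c=\sqrt2-1\approx0.414$, yields an exponent of about $0.0858\,d>d/20$. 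The $o(d)$ terms, namely $10d^{1-\epsilon}$, $\sqrt{2n}$, $\log 2$, and the $\mathrm{poly}(d)$ factors, are absorbed by \cref{eq:standing-dimension}. The second case of \cref{lm:stage3:samplepi} is then never needed, because $\psi_i\leq1$ already handles large $A$.

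Summing over $j$ with weights $\pi_j^*$ then gives $\pi_i'=\E_{x\sim p^*}[\psi_i]\leq 2\exp(-d/20)$. Note that the invariants of \cref{lm:stage3} used inside \cref{lm:stage3:samplepi} are only required at the current time, including $t=0$ where $\pi_1\geq1/2$ holds by the weaker weight bound. This is consistent with the lemma being used to propagate the second invariant to time $t+1$.
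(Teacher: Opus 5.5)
Your final argument is correct and is essentially the paper's proof. You split on $\|\bx-\mu_j^*\|\leq c\sqrt d$, apply the first case of \cref{lm:stage3:samplepi} on that event, bound the complement by the Gaussian tail in dimension $d_0\leq m+n$, and average over $j$; the only difference is that the paper takes $c=1/3$, which already gives exponents $d/6$ and $d/18$, both larger than $d/20$, so the balancing choice $c=\sqrt2-1$ (and the $c=1/10$ and moment-generating-function detours) is unnecessary but harmless.
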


\begin{proof}
Fix a true component $j$ and define
$G_j:=\{\|\bx-\mu_j^*\|\leq\sqrt d/3\}$. For sufficiently large $d$, the threshold
$\sqrt d/3$ is in the first case of \cref{lm:stage3:samplepi}. Since $\pi_i\leq1$, that lemma gives,
on $G_j$,
\begin{align*}
\psi_i(x)=\psi_i(\bx)
&\leq2\exp\!\left(\frac d3-\frac d2+10d^{1-\epsilon}\right)\\
&\leq e^{-d/7}.
\end{align*}
Here $\psi_i(x)=\psi_i(\bx)$ is the projection identity established in Stage~1.

Under $x\sim\calN(\mu_j^*,I_d)$, the vector $\bx-\mu_j^*$ is a standard Gaussian in the
$d_0:=\dim(\calP_0)\leq m+n$ dimensional space $\calP_0$. By \cref{lm:gausstail} and the standing
dimension,
\[
\Pr(G_j^c)
\leq\exp\!\left[-\frac12\left(\frac{\sqrt d}{3}-\sqrt{d_0}\right)^2\right]
\leq e^{-d/20}.
\]
Splitting the expectation and using $0\leq\psi_i\leq1$ gives
\[
\E_{x\sim j}[\psi_i(x)]
\leq e^{-d/7}\Pr(G_j)+\Pr(G_j^c)
\leq2e^{-d/20}.
\]
Finally average this estimate with weights $\pi_j^*$ over $1\leq j\leq m$.
\end{proof}

\begin{proof}[Proof of \cref{lm:stage3}]
We argue by induction. The geometric items hold at $t=0$ by \cref{thm:stage2}; the weaker initial
weight bound is also part of that lemma. It implies
$\pi_1=1-\sum_{i=2}^n\pi_i\geq1/2$ by \cref{eq:standing-dimension}. Assume the invariants hold at
time $t$, with all unprimed quantities evaluated at that time. Recall that
\[
\nabla_{\mu_i}\calL
=\sum_j \pi_j^*\E_{x\sim j}\left[
\psi_i(x)\sum_k \psi_k(x)(\mu_k-\mu_j^*)\right].
\]

For the \textbf{first} item, separate the $k=1$ term in the inner sum. The invariants give the
uniform diameter bound $\|\mu_k-\mu_j^*\|\leq3\sqrt d$, so
\begin{equation}
\label{eq:stage3:mu1}
\begin{aligned}
&\left\|\nabla_{\mu_1}\calL-
\sum_j \pi_j^*\E_{x\sim j}\left[\psi_1^2(x)(\mu_1-\mu_j^*)\right]\right\|\\
&\quad\leq3\sqrt d\sum_j\pi_j^*\E_{x\sim j}
\left[\psi_1(x)\sum_{i=2}^n\psi_i(x)\right]\\
&\quad\leq3\sqrt d\sum_{i=2}^n\E_{x\sim p^*}[\psi_i(x)]
\leq6n\sqrt d\,e^{-d/20}.
\end{aligned}
\end{equation}
The pointwise inequality $z^2\geq2z-1$ for $0\leq z\leq1$ and
\cref{lm:stage3:psismall} yield
\[
\begin{aligned}
\E_{x\sim p^*}[\psi_1^2(x)]
&\geq2\E_{x\sim p^*}[\psi_1(x)]-1\\
&=1-2\sum_{i=2}^n\E_{x\sim p^*}[\psi_i(x)]\\
&\geq1-4n e^{-d/20}\geq\frac12.
\end{aligned}
\]
Because every true mean lies in $\calP$, the projection of the principal term in
\cref{eq:stage3:mu1} onto $\calP^\perp$ equals
$\E_{p^*}[\psi_1^2](\mu_1-\parmu_1)$. Projecting the update therefore gives
\[
\left\|\mu_1'-\left(\parmu_1\right)'\right\|
\leq(1-\eta/2)\|\mu_1-\parmu_1\|
+6\eta n\sqrt d\,e^{-d/20}
\leq\max\{1,\|\mu_1-\parmu_1\|\}.
\]
The last inequality uses $6n\sqrt d\,e^{-d/20}\leq1/2$. Since the radius is at most one at time
$0$, this proves the first invariant at time $t+1$.

For the \textbf{second} item, the exact EM update and \cref{lm:stage3:psismall} give, for every
$i\geq2$,
\[
\pi_i'=\E_{x\sim p^*}[\psi_i(x)]\leq2e^{-d/20}.
\]
Thus the weaker Stage~2 bound is used only at $t=0$, and the stronger bound holds from time $1$
onward.

For the \textbf{third} item, write
$a_j=\pi_j^*\E_j\psi_1^2$ and $a=\sum_j a_j\in[1/2,1]$, as proved above.
Put $B=2n+800/(\pigap)^2$ and $r=\|\parmu_1-\mu_i^*\|\leq\Delta+B$.
The principal projected update is a convex combination of $\parmu_1$ and the true means.
Therefore \cref{eq:stage3:mu1} gives
\begin{align*}
\|\left(\parmu_1\right)'-\mu_i^*\|
&\leq(1-\eta a)r+\eta a\Delta+6\eta n\sqrt d\,e^{-d/20}\\
&\leq\Delta+B-\eta aB+6\eta n\sqrt d\,e^{-d/20}
\leq\Delta+B.
\end{align*}
The last step uses $a\geq1/2$ and $6n\sqrt d\,e^{-d/20}\leq B/2$.
This proves the required invariance without discarding a quadratic step-size term.

For the \textbf{fourth} and \textbf{fifth} items, the exact gradient formula and the uniform
$3\sqrt d$ diameter bound give, for $i\geq2$,
\[
\|\nabla_{\mu_i}\calL\|
\leq3\sqrt d\E_{x\sim p^*}[\psi_i(x)]
\leq6\sqrt d\,e^{-d/20}.
\]
Over at most $\exp(d/100)/\eta$ updates, the total motion is therefore at most
\[
6\sqrt d\,e^{-d/20+d/100}=6\sqrt d\,e^{-d/25}.
\]
At time zero, \cref{thm:stage2} gives the lower orthogonal radius
$\sqrt d-100d^{0.5-2\epsilon}/\pigap$, and its upper and parallel bounds are separated from the
Stage~3 limits by polynomial margins. The exponentially small total motion is below each margin,
so non-expansiveness of the relevant projections proves the fourth and fifth invariants at time
$t+1$. This closes the induction.
\end{proof}

\begin{lemma}
\label{lm:stage3:loss}
If $\|\mu_j-\mu_i^*\|\geq \sqrt{d}/2$ for every student index $j\geq2$ and every true index
$i\in[m]$, then
\[
\calL\geq \frac{(1-\pi_1^*)^2}{3}.
\]
\end{lemma}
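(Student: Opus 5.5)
Since $\calL=D_{KL}(p^*\Vert p)$, I would lower bound it by a divergence that is easy to control on a single test event: Pinsker-type reasoning or, more directly, the data-processing inequality applied to the indicator of a well-chosen set. The plan is to exhibit a measurable set $E$ on which $p^*$ puts a lot of mass but $p$ puts little mass unless $\pi_1$ is large. Then I would combine this with the trivial fact that the mass of component~$1$ alone cannot reproduce the other true components.

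\textbf{Step 1: the test set.} Take $E:=\bigcup_{i\in[m]}\{x:\|x-\mu_i^*\|\leq \sqrt d/4+\sqrt d\,\theta\}$, or more simply $E:=\{x:\min_i\|x-\mu_i^*\|<\sqrt{d}/4\}$ after adjusting constants. By \cref{lm:gausstail}, a draw from $\calN(\mu_i^*,I_d)$ lies within distance about $\sqrt d+o(\sqrt d)$ of its mean but, more importantly, I want a set capturing the true samples and missing the far student components. A cleaner option is a half-space/annulus comparison: a sample from $\calN(\mu_j,I_d)$ with $\|\mu_j-\mu_i^*\|\geq\sqrt d/2$ satisfies $\|x-\mu_i^*\|^2\approx d+\|\mu_j-\mu_i^*\|^2\geq 5d/4$, while a sample from component $i$ has $\|x-\mu_i^*\|^2\approx d$. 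So I would set $E:=\bigcup_i\{\|x-\mu_i^*\|^2\leq 9d/8\}$. Gaussian concentration (chi-square tails together with a one-dimensional tail for the cross term $\langle x-\mu_j,\mu_j-\mu_i^*\rangle$) gives $p^*(E)\geq 1-e^{-cd}$ and $\Pr_{x\sim\calN(\mu_j,I_d)}[E]\leq m e^{-cd}$ for every $j\geq2$. Hence $p(E)\leq \pi_1+nme^{-cd}$.

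\textbf{Step 2: handling component 1.} The bound from Step~1 alone only yields a nontrivial loss when $\pi_1$ is small, so I would also refine $E$ by the true labels. Component $1$ of $p$ is a single Gaussian $\calN(\mu_1,I_d)$. The point is that $p^*$ restricted to $E$ is a genuine mixture, which a single Gaussian cannot match. Concretely, consider the partition $E_i:=\{\|x-\mu_i^*\|^2\leq 9d/8\}$. For $d^{0.5-\epsilon}$ separation these sets overlap, so this partition fails. I would therefore instead use data processing with the two-cell partition $\{E,E^c\}$: this gives $\calL\geq \mathrm{kl}(p^*(E)\,\|\,p(E))$. I would also use the chain rule to bound the KL by $D_{KL}(p^*|_E\,\|\,p|_E)$ weighted by the masses.

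\textbf{Main obstacle.} The crude bound is $\calL\geq -\log p(E)-\log 2$ roughly, which is useful only if $\pi_1$ is bounded away from $1$. Since the stated bound is $(1-\pi_1^*)^2/3$, I expect the intended argument to be a total-variation/Pinsker bound $\calL\geq 2\,\mathrm{TV}^2$, with $\mathrm{TV}\geq (1-\pi_1^*)/\sqrt6$ or so. I would try to obtain this TV lower bound through the likelihood-ratio structure: $p$ restricted to $E$ is essentially $\pi_1\phi(\mu_1;\cdot)$. Among all single Gaussians with weight at most $1$, the best total-variation overlap with $p^*$ is at most $\max_i$ of the overlap with one true component plus small terms, hence at most $\pi_1^*+o(1)$. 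I would bound this overlap by $\int\min(p^*,\phi(\mu_1;\cdot))\leq\sum_i\pi_i^*\int\min(\phi(\mu_i^*;\cdot),\phi(\mu_1;\cdot))$. I would then show that $\sum_i\pi_i^*\,(1-\mathrm{TV}(\calN(\mu_i^*),\calN(\mu_1)))$ is at most $\pi_1^*$ plus the total mass lost. This is the delicate step: with separation $d^{0.5-\epsilon}\gg1$, at most one $i$ can have $\|\mu_1-\mu_i^*\|\leq 1$, and for the others $\mathrm{TV}\approx1$. I would use the explicit one-dimensional formula $\mathrm{TV}=2\Phi(\|\mu_1-\mu_i^*\|/2)-1$, so that the overlap is at most $\max_i\pi_i^*+O(e^{-d^{1-2\epsilon}/8})\leq\pi_1^*+o(1)$. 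This gives $\mathrm{TV}(p^*,p)\geq 1-\pi_1^*-o(1)-nme^{-cd}$. Finally, Pinsker gives $\calL\geq 2\,\mathrm{TV}^2\geq (1-\pi_1^*)^2/3$ for large $d$, absorbing the error terms using $1-\pi_1^*\geq\pimin^*$ and \cref{eq:standing-dimension}.
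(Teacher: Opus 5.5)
\textbf{A false intermediate inequality.} The step that fails is
\[
\int\min\bigl(p^*,\phi(\mu_1;\cdot)\bigr)\leq\sum_i\pi_i^*\int\min\bigl(\phi(\mu_i^*;\cdot),\phi(\mu_1;\cdot)\bigr).
\]
Since $(a,b)\mapsto\min(a,b)$ is concave, Jensen gives the opposite inequality pointwise. Your direction fails in exactly the regime that matters. Take $\mu_1$ at distance $1$ from $\mu_k^*$ with $\pi_k^*=1/2$ (e.g.\ $\pi^*=(1/2,1/4,1/4)$, $k=1$). Reducing to the line through $\mu_1$ and $\mu_k^*$, the left side is about $\int\min\bigl(\frac12 g(y),g(y-1)\bigr)\,dy\approx0.40$, with $g$ the standard normal density, while the right side is about $\frac12\cdot2\Phi(-1/2)\approx0.31$.

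The repair is subadditivity, $\min(a+c,b)\leq\min(a,b)+\min(c,b)$ for nonnegative $a,b,c$, together with $\pi_i^*\leq1$:
\[
\int\min\bigl(p^*,\phi(\mu_1;\cdot)\bigr)\leq\sum_i\int\min\bigl(\pi_i^*\phi(\mu_i^*;\cdot),\phi(\mu_1;\cdot)\bigr)\leq\sum_i\min\Bigl\{\pi_i^*,\,2\Phi\bigl(-\|\mu_1-\mu_i^*\|/2\bigr)\Bigr\}.
\]
Then split according to the true mean $\mu_k^*$ nearest to $\mu_1$, not according to distance $1$. Every other true mean is at distance at least $\Delta/2$ from $\mu_1$, with $\Delta=d^{0.5-\epsilon}$. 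So the sum is at most $\pi_k^*+me^{-\Delta^2/32}\leq\pi_1^*+me^{-\Delta^2/32}$.

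Your distance-$1$ dichotomy leaves true means at moderate distance unaccounted for; for those, the two-Gaussian total variation is not close to $1$. It is the cap $\pi_k^*$ on the nearest term that makes them harmless.

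Combine this with $\int\min(p^*,p)\leq\int\min(p^*,\pi_1\phi(\mu_1;\cdot))+\int\min\bigl(p^*,\sum_{j\geq2}\pi_j\phi(\mu_j;\cdot)\bigr)$, and use your set $E$ for the last term. This gives $\mathrm{TV}(p^*,p)\geq1-\pi_1^*-o(1)$, and Pinsker finishes as you say. The set $E$ is actually unnecessary: the same pairwise bound with $\|\mu_j-\mu_i^*\|\geq\sqrt d/2$ gives $\int\min(\pi_i^*\phi(\mu_i^*;\cdot),\pi_j\phi(\mu_j;\cdot))\leq e^{-d/32}$ for $j\geq2$.

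\textbf{Comparison with the paper.} With that repair, your argument is a valid route that differs from the paper's. The paper uses a single test event $X$. It is the union, over true indices other than the (at most one) within $\Delta/2.1$ of $\mu_1$, of the sets $\{\|\bx-\mu_\ell^*\|\leq\Delta/5\}$. Here $\bx$ is the projection of $x$ onto the fixed subspace $\calP_0$, of dimension at most $m+n$, that contains all true and fitted means. Gaussian concentration in $\calP_0$ gives $p^*(X)\geq(1-\pi_1^*)(1-e^{-\Delta^2/100})$ while $p(X)$ is exponentially small, and Pinsker is applied to $|p^*(X)-p(X)|$.

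Both proofs rest on the same geometric fact (at most one true mean lies at distance less than $\Delta/2$ from $\mu_1$), and both end with Pinsker. The paper's version produces an explicit distinguishing event. However, it needs the projection onto $\calP_0$, since balls of radius $\Delta/5\ll\sqrt d$ carry essentially no Gaussian mass in $\R^d$. It therefore relies on the dynamical fact that all fitted means stay in that low-dimensional subspace. Your overlap argument is dimension-free, because the overlap of two identity-covariance Gaussians depends only on the distance between their means. Once repaired, it proves the lemma for arbitrary fitted means without reference to $\calP_0$.
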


\begin{proof}
Let $\Delta=d^{0.5-\epsilon}$ and let $\bx$ denote the projection of $x$ onto $\calP_0$.
Because the true means are pairwise $\Delta$ apart, at most one of them can lie within
$\Delta/2.1$ of $\mu_1$. Let $k$ be this index if it exists, and choose an arbitrary $k$ otherwise.
Define the measurable event
\[
X:=\bigcup_{\ell\neq k}\left\{x:\|\bx-\mu_\ell^*\|\leq\Delta/5\right\}.
\]

If $x\sim\ell$ with $\ell\neq k$, Gaussian concentration in the at-most $(m+n)$-dimensional
subspace $\calP_0$ gives
\[
\Pr[x\notin X\mid x\sim\ell]
\leq\Pr\!\left[\|\bx-\mu_\ell^*\|>\Delta/5\right]
\leq\exp(-\Delta^2/100).
\]
For the second inequality, write $\bx-\mu_\ell^*$ as a standard Gaussian in
$d_0\leq m+n$ dimensions and apply \cref{lm:gausstail}; the standing dimension ensures that
$\Delta/5-\sqrt{d_0}\geq\Delta/6$.
Since $\pi_k^*\leq\pi_1^*$, it follows that
\begin{equation}
\label{eq:stage3:true-event}
p^*(X)\geq(1-\pi_1^*)\bigl(1-\exp(-\Delta^2/100)\bigr).
\end{equation}

We next bound the same event under the fitted mixture. We use the following elementary consequence
of the same Gaussian tail estimate: if $Y\sim\calN(\nu,I_d)$ and
$\|P_0\nu-c\|\geq R+s$ with $s\geq\sqrt{d_0}$, then
\begin{equation}
\label{eq:stage3:ball-entry}
\Pr[\|P_0Y-c\|\leq R]
\leq\Pr[\|P_0(Y-\nu)\|\geq s]
\leq\exp\!\left[-\frac12(s-\sqrt{d_0})^2\right].
\end{equation}
For the first fitted component,
$\|\mu_1-\mu_\ell^*\|\geq\Delta-\Delta/2.1>\Delta/2$ for every $\ell\neq k$ if an exceptional index
exists; if none exists, the distance is at least $\Delta/2.1$. Either bound leaves a constant multiple
of $\Delta$ between $\mu_1$ and each radius-$\Delta/5$ ball. Applying
\cref{eq:stage3:ball-entry} and a union bound over at most $m-1$ balls shows that a sample from
$\calN(\mu_1,I_d)$ enters $X$ with probability at most $m e^{-\Delta^2/100}$.

For every fitted component $j\geq2$, the hypothesis gives
$\|\mu_j-\mu_\ell^*\|\geq\sqrt d/2$. Since
$\Delta/5+\sqrt{d_0}\leq\sqrt d/3$, \cref{eq:stage3:ball-entry} and the same union bound give entry
probability at most $m e^{-d/100}$. Averaging these bounds over the fitted weights yields
\begin{equation}
\label{eq:stage3:fitted-event}
p(X)\leq m\exp(-\Delta^2/100)+m\exp(-d/100).
\end{equation}

By \cref{eq:standing-dimension}, the right-hand side of \cref{eq:stage3:fitted-event} and the error
in \cref{eq:stage3:true-event} are together at most $(1-\pi_1^*)/10$. Therefore
$|p^*(X)-p(X)|\geq 2(1-\pi_1^*)/3$. Pinsker's inequality now gives
\[
\calL=D_{\rm KL}(p^*\|p)\geq2\,|p^*(X)-p(X)|^2
\geq\frac{(1-\pi_1^*)^2}{3}.
\]

\end{proof}

\begin{proof}[Proof of \cref{thm:stage3}]
For every $j\geq2$ and every true index $i$, the fourth invariant of \cref{lm:stage3} gives
\[
\|\mu_j-\mu_i^*\|
\geq\|\mu_j-\parmu_j\|
\geq\sqrt d-200d^{0.5-2\epsilon}/\pigap
\geq\sqrt d/2.
\]
Thus the hypothesis of \cref{lm:stage3:loss} holds at every Stage~3 iteration up to
$\exp(d/100)/\eta$. Applying that lemma at each such iteration proves the lemma.
\end{proof}

\begin{proof}[Proof of \cref{thm:main}]
Let $T_1=\lceil1000n^{10}d^{\epsilon-0.5}/\eta\rceil$ and
$T_2=\lceil\log d/(\pi_1^*\eta)\rceil$. During Stages~1 and~2,
\cref{lm:stage1,lm:stage2} imply
$\|\mu_j-\mu_i^*\|\geq\sqrt d/2$ for every $j\geq2$ and $i\in[m]$: in Stage~1 this follows
from the orthogonal-radius invariants, and in Stage~2 it follows from the fifth invariant for
representatives and the non-representative lower bound in the second invariant. Hence
\cref{lm:stage3:loss} gives the desired loss bound for all global iterations up to $T_1+T_2$.
At global time $T_1+T_2$, the conclusions of \cref{thm:stage2} are exactly the initialization
hypotheses of \cref{thm:stage3}. Therefore that lemma applies to all subsequent iterations. A
Stage~3 local time is at most the corresponding global time, so its validity through
$\exp(d/100)/\eta$ covers every global time asserted by \cref{thm:main}.
\end{proof}

\section{Proof of \cref{thm:positive}}
\label{sec:positive:proof}
Throughout this appendix we assume the hypotheses of \cref{thm:positive}, including
$\|\mu_i^*-\mu_j^*\|\geq10\sqrt d$ for distinct true components. The proof has two
stages. We first apply the Stage~1 lemma with $\epsilon=1/100$ and convert its
orthogonal-radius bounds into bounds on full distances. We then prove convergence
under the stronger separation assumption.

We record some consequences of the dimension and learning-rate assumptions:
\begin{equation}
\label{eq:positive:parameters}
m\leq n,\qquad
\frac{n}{\pimin^*}\leq\frac{n}{\rho}
\leq\frac{d^{1/1500}}{100},\qquad 0<\eta\leq1.
\end{equation}
Here $\rho=\min\{\pimin^*,\pi_{\rm gap}^*\}$. We take the
universal constant $C_0$ in that theorem sufficiently large for the numerical
comparisons below.

\begin{lemma}
\label{lm:positive:stage1}
With probability at least $1-1/m$, after permuting the fitted-component indices
and setting
\[
T_1:=\left\lceil1000n^{10}d^{-0.49}/\eta\right\rceil,
\]
the parameters at time $T_1$ satisfy the following properties:
\begin{itemize}
\item For $1\leq i\leq m$, $\pi_i>\pi_i^*/n$.
\item For each $1\leq i\leq m$, the group $S_i$ is nonempty and contains its
representative $i$. Here $S_i$ consists of the fitted indices initialized from
$\calN(\mu_i^*,I_d)$.
\item For $i\in[m]$ and $j\in S_i\setminus\{i\}$,
$\|\mu_i-\mu_i^*\|\leq\|\mu_j-\mu_i^*\|-d^{0.01}/2$.
\item For $i\in[m]$ and $j\in S_i$,
$\sqrt d-d^{0.03}\leq\|\mu_j-\mu_i^*\|\leq\sqrt d+d^{0.03}$.
\end{itemize}
\end{lemma}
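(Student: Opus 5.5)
This lemma follows by specializing \cref{thm:init} and \cref{lm:stage1} to $\epsilon=1/100$ and then turning orthogonal-radius information into full-distance information. The paper states that the initialization and Stage~1 arguments only use minimum separation at least $d^{0.5-\epsilon}$. Since $10\sqrt d\geq d^{0.49}$, both results apply. We first check that the standing condition \cref{eq:standing-dimension} with $\epsilon=1/100$ is implied by the hypothesis of \cref{thm:positive}: it requires $d\geq C_0(100n/\rho)^{1500}$ and $d\geq d_0(1/100)$, and these are exactly what is assumed.

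With probability at least $1-1/m$, \cref{thm:init} then gives the initialization. At $T_1=\lceil1000n^{10}d^{-0.49}/\eta\rceil$, the terminal conclusions of \cref{lm:stage1} give four facts:
\begin{itemize}
\item $\pi_i>\pi_i^*/n$;
\item the orthogonal gap $\|\mu_j-\parmu_j\|\geq\|\mu_i-\parmu_i\|+d^{0.01}$ for $j\in S_i\setminus\{i\}$;
\item the parallel bound $\|\parmu_j-\mu_i^*\|\leq2n$;
\item the radius bounds $\sqrt d-2d^{0.02}\leq\|\mu_j-\parmu_j\|\leq\sqrt d+2n$.
\end{itemize}
The first item of the lemma is immediate. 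The second follows from item~1 of \cref{thm:init} together with the relabeling that makes $i$ the representative of $S_i$.

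The remaining work is the conversion to full distances. For $j\in S_i$, write $r_j:=\|\mu_j-\parmu_j\|$ and $p_j:=\|\parmu_j-\mu_i^*\|\leq2n$. Since $\mu_i^*\in\calP$, Pythagoras gives
\[
\|\mu_j-\mu_i^*\|=\sqrt{r_j^2+p_j^2}\in\Bigl[r_j,\;r_j+\frac{p_j^2}{2r_j}\Bigr].
\]
For the fourth item, the lower bound is $\sqrt d-2d^{0.02}\geq\sqrt d-d^{0.03}$. The upper bound is $\sqrt d+2n+4n^2/\sqrt d\leq\sqrt d+d^{0.03}$, because $n\leq d^{1/1500}$ by \cref{eq:positive:parameters}.

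For the third item, we combine the same sandwich with the orthogonal gap:
\[
\|\mu_j-\mu_i^*\|-\|\mu_i-\mu_i^*\|\geq r_j-r_i-\frac{p_i^2}{2r_i}\geq d^{0.01}-\frac{4n^2}{\sqrt d}\geq\frac{d^{0.01}}{2}.
\]
Here we use $r_i\geq\sqrt d/2$.

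The only real obstacle is confirming that the Stage~1 proof never uses the equal-separation (diameter upper bound) assumption. The paper states this explicitly: the cross-group comparisons in \cref{lm:stage1close} and \cref{eq:stage1:weighted-cross} rely only on lower bounds on separation, plus \cref{lm:far-weighted} for far components. So I would simply cite that remark and spell out the arithmetic comparisons above.
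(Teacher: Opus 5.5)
Your proposal is correct and follows essentially the same route as the paper. You condition on the initialization event of \cref{thm:init}, apply the Stage~1 lemma with $\epsilon=1/100$ (justified by $10\sqrt d\geq d^{0.49}$ and the fact that only a separation lower bound is used), and convert the orthogonal-radius and orthogonal-gap bounds into full distances via Pythagoras and $\sqrt{r^2+p^2}-r\leq p^2/(2r)\leq 4n^2/\sqrt d$, with the same arithmetic comparisons ($2d^{0.02}\leq d^{0.03}$, $2n+4n^2/\sqrt d\leq d^{0.03}$, $4n^2/\sqrt d\leq d^{0.01}/2$) that the paper uses.
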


\begin{proof}
Condition on the initialization event in \cref{thm:init}, which has probability
at least $1-1/m$. Every $S_i$ is then nonempty. Use the representative labels from
that lemma: $i$ minimizes the initial orthogonal radius within $S_i$.

% The supplied source fragment does not include the LaTeX label of Lemma 7.
% Replace the literal reference below by its existing \cref{...}, if desired.
Apply the Stage~1 lemma (Lemma~7) with $\epsilon=1/100$. Its dimension and
learning-rate conditions are precisely those assumed here, and its separation
condition holds because $10\sqrt d\geq d^{0.49}$. This application uses only a
lower bound on true-center separation. At time $T_1$, the Stage~1 lemma gives
$\pi_i>\pi_i^*/n$ and, for every $j\in S_i$,
\begin{gather}
\|\parmu_j-\mu_i^*\|\leq2n,\qquad
\sqrt d-2d^{0.02}\leq\|\mu_j-\parmu_j\|\leq\sqrt d+2n,
\label{eq:positive:stage1-radii}\\
\|\mu_j-\parmu_j\|-\|\mu_i-\parmu_i\|\geq d^{0.01}
\qquad\text{if }j\neq i.
\label{eq:positive:stage1-orthogonal-gap}
\end{gather}

Fix $i\in[m]$. For $j\in S_i$, write
\[
r_j:=\|\mu_j-\parmu_j\|,\qquad
q_j:=\|\parmu_j-\mu_i^*\|,\qquad
R_j:=\|\mu_j-\mu_i^*\|.
\]
Since $\mu_i^*\in\calP$ and $\parmu_j$ is the orthogonal projection onto
$\calP$, we have $R_j^2=r_j^2+q_j^2$. In particular, $r_j\geq\sqrt d/2$ and
\begin{equation}
\label{eq:positive:radial-conversion}
0\leq R_j-r_j
=\frac{q_j^2}{R_j+r_j}
\leq\frac{q_j^2}{2r_j}
\leq\frac{4n^2}{\sqrt d}.
\end{equation}
The dimension assumption implies
\[
2d^{0.02}\leq d^{0.03},\qquad
2n+\frac{4n^2}{\sqrt d}\leq d^{0.03},\qquad
\frac{4n^2}{\sqrt d}\leq\frac{d^{0.01}}2.
\]
Therefore \cref{eq:positive:stage1-radii,eq:positive:radial-conversion} give
\[
\sqrt d-d^{0.03}\leq R_j\leq\sqrt d+d^{0.03}.
\]
For $j\in S_i\setminus\{i\}$, they also give
\[
R_j-R_i\geq r_j-r_i-\frac{4n^2}{\sqrt d}
\geq d^{0.01}-\frac{4n^2}{\sqrt d}
\geq\frac{d^{0.01}}2.
\]
These are the remaining two conclusions. The entire argument is conditional on
the same initialization event, so there is no additional probability loss.
\end{proof}

We reset the local iteration counter to zero at the end of the first stage.
The following lemma describes the second stage.

\begin{lemma}
\label{lm:stage5}
Start from parameters satisfying the four conclusions of
\cref{lm:positive:stage1}, with the groups and representative labels fixed there.
For every integer $0\leq t\leq\exp(d/50)/\eta$, the following hold:
\begin{itemize}
\item For $1\leq i\leq m$, $\pi_i>\pi_i^*/n$.
\item For $i\in[m]$ and $j\in S_i$,
$\|\mu_j-\mu_i^*\|\leq\sqrt d+2d^{0.03}$.
\item For $i\in[m]$ and $j\in S_i\setminus\{i\}$,
$\|\mu_i-\mu_i^*\|\leq\|\mu_j-\mu_i^*\|-d^{0.01}/2$ and
$\|\mu_j-\mu_i^*\|\geq\sqrt d-2d^{0.03}$.
\item If $t\geq8d/(\eta\pimin^*)$, then
$\|\mu_j-\mu_j^*\|\leq\exp(-d/50)$ for every $1\leq j\leq m$.
\item If $t\geq8d/(\eta\pimin^*)+1$, then
$|\pi_i-\pi_i^*|\leq4n\exp(-d/40)$ for every $1\leq i\leq m$.
\end{itemize}
\end{lemma}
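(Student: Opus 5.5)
Our plan is to argue by induction on $t$, propagating all five invariants together. At $t=0$ the first three items hold by \cref{lm:positive:stage1}, and the last two are vacuous. The engine of the induction is a localization estimate. Since $\|\mu_i^*-\mu_k^*\|\geq10\sqrt d$ while every member of $S_i$ stays within $\sqrt d+2d^{0.03}$ of $\mu_i^*$, a sample $x\sim i$ lies in $G_i:=\{\|\bx-\mu_i^*\|\leq d^{0.3}\}$ except with probability $e^{-d^{0.6}/3}$ (use \cref{lm:gausstail} in the fixed subspace $\calP_0$, as in \cref{lm:barx}). On $G_i$, every $k\notin S_i$ has squared distance to $\bx$ exceeding that of the representative $i$ by $\Omega(d)$. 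Together with $\pi_i>\pi_i^*/n$, this makes $\psi_k(\bx)\leq e^{-cd}$. For far components (distance $\geq8\sqrt d$) we apply \cref{lm:far-weighted}, which gives the distance-weighted cross bound $\|\mu_k-\mu_i^*\|\E_{x\sim i}[\psi_k]\leq e^{-cd^{0.6}}$ as in \cref{eq:stage1:weighted-cross}. This is the step where $10\sqrt d$ separation replaces the delicate Stage~2 comparisons. Inside a group $S_i$, the full-distance gap $d^{0.01}/2$ gives the representative a squared-distance advantage of at least $\sqrt d\,d^{0.01}/2$. Near $\mu_i^*$ the sample perturbation is at most $O(\sqrt d\,d^{0.3})$ in the inner product, so this advantage is not pointwise dominant at first. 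It will become dominant once $\mu_i$ has moved close to $\mu_i^*$.

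Next come the gradient decomposition and the contraction of the representatives. Following \cref{eq:stage1:one-step-decomposition}, write $\nabla_{\mu_r}\calL=A_r(\mu_r-\mu_q^*)+e_r$ for $r\in S_q$, where $A_r=\pi_q^*\E_{x\sim q}[\psi_r^2]$. Bound $e_r$ using \cref{lm:stage1:nocorrolation} for same-group pairs and the cross bounds above for other groups. Because the update is a contraction toward $\mu_q^*$ plus the error $\eta e_r$, the second item follows. The third item follows by repeating the second-moment gap argument of \cref{lm:stage1:psiclose}: first-moment and second-moment gaps yield $A_q\geq A_k+\pi_q^*\mathrm{poly}(1/n)$, so the representative contracts strictly faster and the gap $d^{0.01}/2$ persists. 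The same argument also gives the weight gap $\pi_q'>\pi_k'$ and then $\pi_q'>\pi_q^*/n$ by the averaging step used in \cref{lm:stage1}. For non-representatives we need a lower bound on their distance. We will show that once $\|\mu_q-\mu_q^*\|\leq\sqrt d/2$, the competitor $k\in S_q$ satisfies $\psi_k\leq e^{-cd}$ on $G_q$. Hence $\|\nabla_{\mu_k}\calL\|\leq e^{-cd^{0.6}}$, and $\mu_k$ essentially freezes, keeping $\|\mu_k-\mu_q^*\|\geq\sqrt d-2d^{0.03}$. Before that moment we use the faster-contraction comparison.

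For the fourth item, the quantity $\E_{x\sim q}[\psi_q^2]$ is at least a constant fraction throughout, and we will check this once the representative dominates. Then $\|\mu_q-\mu_q^*\|$ decays at least like $(1-\eta\pi_q^*/4)^t(2\sqrt d)$, up to the error floor $\eta\|e_q\|$. After $8d/(\eta\pimin^*)$ steps the main term is $e^{-2d}\sqrt d$. Once all competitors have exponentially small responsibility, $\|e_q\|\leq e^{-cd}$. Using a steady-state bound $\|e_q\|/A_q$, this gives the floor $e^{-d/50}$, provided the constants in the exponential tail bounds are tracked to beat $1/50$. For the fifth item, the EM update gives $\pi_i'=\sum_j\pi_j^*\E_{x\sim j}[\psi_i]$. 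Once the means are $e^{-d/50}$-accurate and the competitors sit at distance $\sqrt d$, we have $\E_{x\sim i}[\psi_i]\geq1-2ne^{-d/40}$ and the cross terms are $\leq e^{-d/40}$, which yields $|\pi_i-\pi_i^*|\leq4n e^{-d/40}$.

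The main obstacle we expect is the middle phase. There the representative $\mu_q$ has moved partway toward $\mu_q^*$ but does not yet dominate pointwise, so we must keep both the weight invariant $\pi_q>\pi_q^*/n$ and the in-group gap simultaneously. We plan to handle this exactly as in \cref{lm:stage1}. When competitors are also near radius $\sqrt d$, the one-dimensional reduction of \cref{lm:stage1:orthoprob} is used along the directions $\mu_r-\mu_q^*$, giving a polynomial gap. Once $\|\mu_q-\mu_q^*\|\leq\sqrt d-d^{0.3}$, the advantage is pointwise and the argument becomes exponential. Closing the constants to reach the exact rates $e^{-d/50}$ and $e^{-d/40}$ on the time window $\exp(d/50)/\eta$ will be the delicate bookkeeping.
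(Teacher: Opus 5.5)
\textbf{Gap 1: the premise that drives your middle phase is false.} You claim that the in-group gap $d^{0.01}/2$ is ``not pointwise dominant at first''. This is only an artifact of localizing on a ball of radius $d^{0.3}$. Because $\bx-\mu_i^*$ is a standard Gaussian in $\calP_0$, whose dimension is $d_0\leq m+n$, you can take the much smaller radius $q=d^{0.01}/5$ and still have $\Pr_{x\sim i}[\|\bx-\mu_i^*\|>q]\leq e^{-d^{0.02}/200}$. On that ball, write $a=\|\mu_j-\mu_i^*\|$ and $b=\|\mu_i-\mu_i^*\|$. Since $a-b\geq d^{0.01}/2$ for every $j\neq i$, in or out of $S_i$,
\[
\|\bx-\mu_j\|^2-\|\bx-\mu_i\|^2\geq(a-q)^2-(b+q)^2=(a-b-2q)(a+b)\geq\frac{d^{0.01}}{10}(a+b).
\]
Together with $\pi_i>\pi_i^*/n$, this gives $\Pr_{x\sim i}[\psi_j\geq e^{-d^{0.01}}]\leq e^{-d^{0.01}}$ from time zero; this is \cref{lm:stage5:far}. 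This single estimate makes everything linear:
\begin{itemize}
\item each representative's gradient is $\pi_j^*(\mu_j-\mu_j^*)$ up to $12n\sqrt d\,e^{-d^{0.01}}$;
\item each non-representative's gradient is at most $4n\sqrt d\,e^{-d^{0.01}}$;
\item $\pi_i'\geq\pi_i^*(1-2ne^{-d^{0.01}})$ follows directly from the EM update;
\item the in-group gap survives, because a representative at distance more than $1$ approaches its target by at least $\eta\pi_i^*/2$ per step, while its competitors move by at most $4\eta n\sqrt d\,e^{-d^{0.01}}$.
\end{itemize}
So no second-moment gap, no tracking of $\pi_q=\max_{l\in S_q}\pi_l$, and no one-dimensional reduction are needed. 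Your middle phase is also not justified as written. \cref{lm:stage1:psiclose,lm:stage1:orthoprob} require independent coordinates $\langle\bx,u_l\rangle$ along pairwise orthogonal fixed directions $u_l\perp\calP$. That structure is not among the four conclusions of \cref{lm:positive:stage1}, which are all you may assume, and the directions $\mu_r-\mu_q^*$ you propose to use are not orthogonal.

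\textbf{Gap 2: the exponential time horizon.} You let non-representatives ``essentially freeze'' with $\|\nabla_{\mu_k}\calL\|\leq e^{-cd^{0.6}}$. Summed over $\exp(d/50)/\eta$ steps, this allows a displacement of order $e^{d/50-cd^{0.6}}$, which is unbounded. So items two and three cannot be propagated; indeed no bound of the form $e^{-d^{\gamma}}$ with $\gamma<1$ suffices. You need a two-phase motion budget.
\begin{itemize}
\item The coarse error $e^{-d^{0.01}}$ is incurred only during the first $\lceil10\log d/(\eta\pimin^*)\rceil$ steps. After that, every representative lies within distance $1$ of its true mean and stays there.
\item Only then may you localize on a ball of radius $\sqrt d/4$, whose exceptional probability $e^{-18d/625}\leq e^{-d/40}$ beats $e^{d/50}$.
\end{itemize}
This yields non-representative gradients at most $4n\sqrt d\,e^{-d/40}$ and total drift at most $d^{0.03}$ (\cref{lm:stage5:far:muismall} and the paper's motion budget). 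The same localization also gives the floor $\|e_q\|/A_q\leq e^{-d/50}$ in item four and the $4ne^{-d/40}$ weight accuracy in item five. A ball of radius $d^{0.3}$ only gives $e^{-d^{0.6}/3}$ in expectation, so it cannot produce any of these rates.
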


For the second stage, let $\calP_0$ be the fixed span of the true means and the
fitted means at local time zero. The gradient formula expresses every mean
update as a linear combination of current fitted means and true means, so all
subsequent means remain in $\calP_0$. Thus $d_0:=\dim(\calP_0)\leq m+n$.
Write $\bx$ for the orthogonal projection of $x$ onto $\calP_0$. The common
factor contributed by $x-\bx$ cancels in the responsibilities, so they depend
only on $\bx$. Conditional on source $i$, $\bx-\mu_i^*$ is a standard Gaussian
on this $d_0$-dimensional subspace. At a fixed iteration, unprimed parameters
refer to the current time and primed parameters refer to the next time.

The next three lemmas assume only that the first three conclusions of
\cref{lm:stage5} hold at the current iteration. They will be used to prove that
these properties are preserved.

\begin{lemma}
\label{lm:stage5:far}
Suppose the first three conclusions of \cref{lm:stage5} hold at the current
iteration. For $i\in[m]$ and $j\in[n]\setminus\{i\}$,
\[
\Pr_{x\sim i}[\psi_j(x)\geq \exp(-d^{0.01})]\leq \exp(-d^{0.01}).
\]
\end{lemma}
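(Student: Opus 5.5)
We compare component $j$ directly with the representative $i$ in the responsibility denominator, and we work with $\bx$, since $\psi_j(x)=\psi_j(\bx)$. Put $G_i:=\{\|\bx-\mu_i^*\|\leq d^{0.02}\}$. Under source $i$, the vector $\bx-\mu_i^*$ is a standard Gaussian on the $d_0\leq m+n$ dimensional space $\calP_0$. By \cref{lm:gausstail} and \cref{eq:positive:parameters} (which give $\sqrt{d_0}\leq d^{0.02}/10$), we get $\Pr_{x\sim i}(G_i^c)\leq\exp(-d^{0.04}/3)\leq\exp(-d^{0.01})$. It is therefore enough to show that on $G_i$ we have $\psi_j(\bx)<\exp(-d^{0.01})$ for every $j\neq i$.

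On $G_i$ we use
\[
\psi_j(\bx)\leq\frac{\pi_j}{\pi_i}\exp\!\left[-\tfrac12\bigl(\|\bx-\mu_j\|^2-\|\bx-\mu_i\|^2\bigr)\right].
\]
The first conclusion of \cref{lm:stage5} gives $\pi_j/\pi_i\leq n/\pimin^*\leq d^{1/1500}$. Writing $h=\bx-\mu_i^*$, the energy gap equals
\[
\|\mu_j-\mu_i^*\|^2-\|\mu_i-\mu_i^*\|^2-2\langle h,\mu_j-\mu_i\rangle .
\]
It then suffices to show that the true-center gap $\|\mu_j-\mu_i^*\|^2-\|\mu_i-\mu_i^*\|^2$ exceeds the perturbation $2d^{0.02}\|\mu_j-\mu_i\|$ by at least $3d^{0.01}$.

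We split into two cases according to the group of $j$.

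\textbf{Case $j\in S_i\setminus\{i\}$.} The third conclusion of \cref{lm:stage5} gives $R_j-R_i\geq d^{0.01}/2$, where $R_l:=\|\mu_l-\mu_i^*\|$. Combined with $R_j+R_i\geq 2\sqrt d-4d^{0.03}$, this yields a true-center gap of at least $d^{0.51}/2$. The perturbation is at most $2d^{0.02}(R_i+R_j)\leq 5d^{0.52}$, and this is the main obstacle: the perturbation can exceed the gap. To fix it, decompose the perturbation using the difference of the two radii instead of their sum:
\[
\langle h,\mu_j-\mu_i\rangle=\langle h,\mu_j-\mu_i^*\rangle-\langle h,\mu_i-\mu_i^*\rangle .
\]
This alone is not small, so a cruder bound is needed. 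I would first try shrinking the good radius to $d^{0.004}$, for which $\Pr(G_i^c)\leq\exp(-d^{0.008}/3)$. That is only slightly weaker than $\exp(-d^{0.01})$, so the radius must be tuned more carefully. The cleaner option is to exploit the structure from Stage~1: the large components $\mu_j-\parmu_j$ point along the nearly orthogonal directions $u_j$. Then $\langle h,\mu_j-\mu_i\rangle$ is a one-dimensional Gaussian with variance $\|\mu_j-\mu_i\|^2\approx 2d$. We would bound the probability directly via a Gaussian tail in the single direction $\mu_j-\mu_i$:
\[
\Pr\bigl[\langle h,\mu_j-\mu_i\rangle\geq d^{0.51}/8\bigr]\leq\exp\bigl(-d^{0.02}/300\bigr)\leq\exp(-d^{0.01})/2 .
\]
On the complement of this event, the gap is at least $d^{0.51}/4$, which dominates $2\log(n/\pimin^*)+2d^{0.01}$.

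\textbf{Case $j\in S_l$ with $l\neq i$.} The triangle inequality and the second conclusion give $\|\mu_j-\mu_i^*\|\geq 10\sqrt d-\sqrt d-2d^{0.03}\geq 8\sqrt d$, while $\|\mu_i-\mu_i^*\|\leq\sqrt d+2d^{0.03}\leq(8/7)\sqrt d$. This is exactly the setting of \cref{lm:far-weighted}. With $s=\|\mu_j-\mu_i^*\|\geq 8\sqrt d$, that lemma gives $\Pr_{x\sim i}[\psi_j\geq s^{-1}e^{-s^2/10}]\leq s^{-1}e^{-s^2/20}$, and both quantities are far below $\exp(-d^{0.01})$. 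Its hypotheses (common subspace $\calP_0$ of dimension at most $2n$, and $\pi_i\geq\pi_i^*/n$) hold here.

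Combining the two cases with the tail bound for $G_i^c$ gives the claim.
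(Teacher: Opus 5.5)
Your argument is correct but reaches the bound by a different route from the paper. The paper uses a single good event $G_i=\{\|\bx-\mu_i^*\|\leq q\}$ with $q=d^{0.01}/5$, on the same scale as the radial gap $a-b\geq d^{0.01}/2$, where $a=\|\mu_j-\mu_i^*\|$ and $b=\|\mu_i-\mu_i^*\|$. On $G_i$ the triangle inequality alone gives $\|\bx-\mu_j\|^2-\|\bx-\mu_i\|^2\geq(a-q)^2-(b+q)^2=(a-b-2q)(a+b)\geq d^{0.02}/20$, and $\Pr(G_i^c)\leq\exp(-d^{0.02}/200)\leq\exp(-d^{0.01})$. Since $a-b\geq7\sqrt d$ when $j\notin S_i$, this one event covers both cases, with no case split and no call to \cref{lm:far-weighted}.

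When you tried shrinking the radius you overshot to $d^{0.004}$. Any radius $c\,d^{0.01}$ with $c<1/4$ already keeps the perturbation $2r(a+b)$ below $(a-b)(a+b)$, and its tail is still $\exp(-\Theta(d^{0.02}))$. Your directional tail for $\langle h,\mu_j-\mu_i\rangle$ is an equally valid alternative and is slightly sharper, since it pays nothing for $\sqrt{d_0}$. In the uniform form $\Pr[2\langle h,\mu_j-\mu_i\rangle\geq(a^2-b^2)/2]\leq\exp(-(a-b)^2/32)$, which follows from $\|\mu_j-\mu_i\|\leq a+b$, it would also handle the cross-group case. Your use of \cref{lm:far-weighted} there is nonetheless valid, because the positive theorem's dimension condition is the standing one with $\epsilon=1/100$.

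Two justifications need repair, though neither breaks the argument. First, $\langle h,v\rangle$ is Gaussian with variance $\|v\|^2$ for any $v\in\calP_0$ automatically. However, the variance estimate $\approx2d$ via Stage~1 near-orthogonality is not among the hypotheses, which are only the first three conclusions of \cref{lm:stage5}. Use $\|\mu_j-\mu_i\|\leq2\sqrt d+4d^{0.03}$ instead; this only weakens $d^{0.02}/300$ to about $d^{0.02}/520$, which is still at least $d^{0.01}$ under the dimension condition.

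Second, $R_j+R_i\geq2\sqrt d-4d^{0.03}$ is unjustified, because nothing bounds $\|\mu_i-\mu_i^*\|$ from below; the representative in fact converges to $\mu_i^*$. Use $R_j+R_i\geq R_j\geq\sqrt d-2d^{0.03}$ instead. This still gives a true-center gap of at least $d^{0.51}/2-d^{0.04}$, which is enough for your threshold $d^{0.51}/8$.
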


\begin{proof}
Let $q=d^{0.01}/5$ and $G_i=\{\|\bx-\mu_i^*\|\leq q\}$.
The dimension assumption gives $\sqrt{d_0}\leq d^{0.01}/10$ and
$d^{0.02}/200\geq d^{0.01}$. Hence \cref{lm:gausstail} yields
\[
\Pr_{x\sim i}(G_i^c)
\leq\exp\!\left(-\frac12(q-\sqrt{d_0})^2\right)
\leq \exp(-d^{0.02}/200)\leq \exp(-d^{0.01}).
\]

Put $a=\|\mu_j-\mu_i^*\|$ and $b=\|\mu_i-\mu_i^*\|$.
If $j\in S_i\setminus\{i\}$, the third conclusion gives
$a-b\geq d^{0.01}/2$. If $j\notin S_i$, the true separation and the second
conclusion give
\[
a-b\geq10\sqrt d-2(\sqrt d+2d^{0.03})
=8\sqrt d-4d^{0.03}\geq7\sqrt d\geq d^{0.01}/2.
\]
Thus in both cases $a\geq d^{0.01}/2>q$. For $x\in G_i$, the triangle
inequalities and the nonnegativity of $a-q$ give
\begin{align*}
\|\bx-\mu_j\|^2-\|\bx-\mu_i\|^2
&\geq(a-q)^2-(b+q)^2\\
&=(a-b-2q)(a+b)\\
&\geq\frac{d^{0.01}}{10}\cdot\frac{d^{0.01}}2
=\frac{d^{0.02}}{20}.
\end{align*}
Comparing with component $i$ in the responsibility denominator gives
\[
\psi_j(x)
\leq\frac{\pi_j}{\pi_i}
\exp\!\left(-\frac{\|\bx-\mu_j\|^2-\|\bx-\mu_i\|^2}{2}\right)
\leq\frac{n}{\pi_i^*}\exp(-d^{0.02}/40)
<\exp(-d^{0.01}).
\]
The last strict inequality follows from
$\log(n/\pimin^*)<d^{0.02}/40-d^{0.01}$, using
\cref{eq:positive:parameters}. Consequently,
$\{\psi_j\geq \exp(-d^{0.01})\}\subseteq G_i^c$, and hence
\[
\Pr_{x\sim i}[\psi_j\geq\exp(-d^{0.01})]
\leq\Pr_{x\sim i}(G_i^c)
\leq\exp(-d^{0.01}).
\]
The pointwise bound holds everywhere on $G_i$, so there is only one exceptional
probability to bound. The Gaussian upper-tail inequality used above is one-sided.
\end{proof}

\begin{lemma}
\label{lm:stage5:far:muismall}
Suppose the first three conclusions of \cref{lm:stage5} hold at the current
iteration. If $i\in[m]$ satisfies $\|\mu_i-\mu_i^*\|\leq1$, then for every
$j\in[n]\setminus\{i\}$,
\[
\Pr_{x\sim i}[\psi_j(x)\geq \exp(-d/20)]\leq \exp(-d/40).
\]
\end{lemma}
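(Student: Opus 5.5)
We mirror the proof of \cref{lm:stage5:far}, but use the much larger energy gap that the assumption $\|\mu_i-\mu_i^*\|\leq1$ provides. Take the radius $q:=\sqrt d/6$ and set $G_i:=\{\|\bx-\mu_i^*\|\leq q\}$. Under $x\sim i$, the vector $\bx-\mu_i^*$ is a standard Gaussian on $\calP_0$, whose dimension satisfies $d_0\leq m+n$. By \cref{lm:gausstail} and \cref{eq:positive:parameters} we have $\sqrt{d_0}\leq d^{0.01}/10$, so
\[
\Pr_{x\sim i}(G_i^c)\leq\exp\!\left(-\tfrac12(q-\sqrt{d_0})^2\right)\leq\exp(-d/80).
\]
This is not yet $\exp(-d/40)$, so the radius has to be tuned; see the last paragraph.

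Next we lower bound the distance $a:=\|\mu_j-\mu_i^*\|$, writing $b:=\|\mu_i-\mu_i^*\|\leq1$.
\begin{itemize}
\item If $j\in S_i\setminus\{i\}$, the third conclusion of \cref{lm:stage5} gives $a\geq\sqrt d-2d^{0.03}$.
\item If $j\in S_\ell$ with $\ell\neq i$, the separation assumption together with the second conclusion gives $a\geq10\sqrt d-\sqrt d-2d^{0.03}\geq8\sqrt d$.
\end{itemize}
In both cases $a\geq\sqrt d-2d^{0.03}$. On $G_i$, the same factorization as in \cref{lm:stage5:far} yields
\[
\|\bx-\mu_j\|^2-\|\bx-\mu_i\|^2\geq(a-b-2q)(a+b)\geq\bigl(\sqrt d-2d^{0.03}-1-2q\bigr)\bigl(\sqrt d-2d^{0.03}\bigr).
\]
Comparing with component $i$ in the denominator and using $\pi_i>\pi_i^*/n$, we get
\[
\psi_j\leq\frac{n}{\pimin^*}\exp\!\left(-\tfrac12\cdot\text{gap}\right).
\]
We need this to be at most $\exp(-d/20)$. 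By \cref{eq:positive:parameters}, the prefactor $n/\pimin^*\leq d^{1/1500}$ costs only a logarithm, so it suffices that the gap is at least $d/10+O(\log d)$.

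The real constraint, and the main obstacle, is choosing $q$ so that two requirements hold simultaneously. The gap is roughly $d(1-2c)$ when $q=c\sqrt d$, so we need $(1-2c)/2>1/20$, i.e.\ $c<0.45$. The tail bound needs $c^2/2\geq1/40$, i.e.\ $c\geq\sqrt{1/20}\approx0.224$. Choose $q=\sqrt d/4$:
\begin{itemize}
\item The gap is then about $d/2$, so $\psi_j\leq d^{1/1500}e^{-d/4+o(d)}\leq e^{-d/20}$.
\item The tail is $\exp(-\tfrac12(\sqrt d/4-\sqrt{d_0})^2)\leq\exp(-d/32\cdot(1-o(1)))\leq\exp(-d/40)$.
\end{itemize}
The lower-order corrections ($2d^{0.03}$, $1$, $\sqrt{d_0}$, and the logarithm) are absorbed using the standing dimension condition, with $C_0$ chosen large. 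Since the pointwise bound holds on all of $G_i$, it follows that $\{\psi_j\geq e^{-d/20}\}\subseteq G_i^c$, which gives the claim.
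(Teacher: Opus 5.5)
Your argument is correct and is essentially the paper's proof: the paper also takes the ball $G_i=\{\|\bx-\mu_i^*\|\leq\sqrt d/4\}$, uses $a\geq\sqrt d-2d^{0.03}$ in both the same-group and cross-group cases, and compares with component $i$ via the factorization $(a-b-\sqrt d/2)(a+b)$. The paper states explicit constants where you use asymptotic ones: it gets $\Pr(G_i^c)\leq\exp(-18d/625)$ from $\sqrt{d_0}\leq\sqrt d/100$, an energy gap of at least $d/6$, and hence $\psi_j\leq(n/\pi_i^*)e^{-d/12}<e^{-d/20}$ on $G_i$.
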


\begin{proof}
Let $G_i=\{\|\bx-\mu_i^*\|\leq\sqrt d/4\}$.
Since $\sqrt{d_0}\leq\sqrt d/100$, Gaussian concentration gives
\[
\Pr_{x\sim i}(G_i^c)
\leq\exp\!\left(-\frac12(\sqrt d/4-\sqrt{d_0})^2\right)
\leq \exp(-18d/625)\leq \exp(-d/40).
\]
Put $a=\|\mu_j-\mu_i^*\|$ and $b=\|\mu_i-\mu_i^*\|\leq1$.
For $j\in S_i\setminus\{i\}$, the third conclusion gives
$a\geq\sqrt d-2d^{0.03}$. For $j\notin S_i$, true separation and the second
conclusion give $a\geq9\sqrt d-2d^{0.03}$, so the same lower bound holds.
In particular,
\[
a-b-\frac{\sqrt d}{2}
\geq\frac{\sqrt d}{2}-2d^{0.03}-1
\geq\frac{\sqrt d}{3},\qquad
a+b\geq\frac{\sqrt d}{2}.
\]
Also $a-\sqrt d/4\geq0$. Thus, on $G_i$,
\begin{align*}
\|\bx-\mu_j\|^2-\|\bx-\mu_i\|^2
&\geq(a-\sqrt d/4)^2-(b+\sqrt d/4)^2\\
&=(a-b-\sqrt d/2)(a+b)\geq d/6.
\end{align*}
Using $\pi_i>\pi_i^*/n$, $\pi_j\leq1$, and
$\log(n/\pimin^*)<d/30$, we obtain
\[
\psi_j(x)\leq\frac{n}{\pi_i^*}\exp(-d/12)<\exp(-d/20)
\qquad(x\in G_i).
\]
The exceptional responsibility event is therefore contained in $G_i^c$.
\end{proof}

\begin{lemma}
\label{lm:stage5:extremefar}
Suppose the first three conclusions of \cref{lm:stage5} hold at the current
iteration. For $i\in[m]$ and $j\notin S_i$, put
$s=\|\mu_j-\mu_i^*\|$. Then $s\geq8\sqrt d$ and
\[
\Pr_{x\sim i}[s\psi_j(x)\geq \exp(-s^2/10)]
\leq s^{-1}\exp(-s^2/20).
\]
\end{lemma}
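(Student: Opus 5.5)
First establish the separation claim $s\geq8\sqrt d$. Since $j\notin S_i$, there is a unique $\ell\neq i$ with $j\in S_\ell$. The second conclusion of \cref{lm:stage5} gives $\|\mu_j-\mu_\ell^*\|\leq\sqrt d+2d^{0.03}$. The standing assumption $\|\mu_i^*-\mu_\ell^*\|\geq10\sqrt d$ then yields
\[
s\geq10\sqrt d-\sqrt d-2d^{0.03}\geq8\sqrt d,
\]
where the last step uses \cref{eq:positive:parameters}.

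The probability bound follows the proof of \cref{lm:far-weighted} almost verbatim, with $\calP_0$ now the Stage-two span of dimension $d_0\leq m+n$. Set $G_i:=\{\|\bx-\mu_i^*\|\leq s/3\}$. Under source $i$, the vector $\bx-\mu_i^*$ is standard Gaussian on $\calP_0$, so \cref{lm:gausstail} gives
\[
\Pr(G_i^c)\leq\exp\!\left(-\tfrac12(s/3-\sqrt{d_0})^2\right).
\]
Because $s\geq8\sqrt d$ and $\sqrt{d_0}$ is tiny compared with $s$, we have $(s/3-\sqrt{d_0})^2/2\geq s^2/20+\log s$. Hence $\Pr(G_i^c)\leq s^{-1}e^{-s^2/20}$.

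It remains to show that $s\psi_j(x)<e^{-s^2/10}$ on $G_i$, so that the exceptional event lies inside $G_i^c$. The reference component is $i$. The second conclusion gives $\|\mu_i-\mu_i^*\|\leq\sqrt d+2d^{0.03}\leq s/7$. By the triangle inequality, on $G_i$,
\[
\|\bx-\mu_j\|^2-\|\bx-\mu_i\|^2\geq(2s/3)^2-(s/3+s/7)^2=\tfrac{96}{441}s^2.
\]
Since responsibilities depend only on $\bx$, comparing with component $i$ in the denominator and using $\pi_i>\pi_i^*/n$ (first conclusion) and $\pi_j\leq1$ gives
\[
\psi_j(x)\leq\frac{n}{\pi_i^*}\exp\!\left(-\tfrac{48}{441}s^2\right).
\]
We need this to be below $s^{-1}e^{-s^2/10}$. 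That amounts to $(48/441-1/10)s^2>\log(ns/\pimin^*)$, which holds because $48/441-1/10=39/4410>0$, $s^2\geq64d$, and $\log(n/\pimin^*)\leq\log d$ by \cref{eq:positive:parameters}, while $\log s$ is negligible against $s^2$.

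The only delicate point is this final constant comparison. Its margin is small, about $s^2/113$, but it is still positive and linear in $d$, while the competing terms are logarithmic, so it goes through. No upper bound on $s$ is needed.
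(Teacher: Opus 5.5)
Your proposal is correct and follows essentially the same argument as the paper. It uses the same separation bound $s\geq9\sqrt d-2d^{0.03}\geq8\sqrt d$, the same good event $\{\|\bx-\mu_i^*\|\leq s/3\}$, and the same bound $\|\mu_i-\mu_i^*\|\leq s/7$ on the representative; the paper's margin $96/441-1/5=13/735$ in the energy gap is exactly twice your $39/4410$, and it absorbs $2\log(ns/\pi_i^*)$ uniformly in $s$ in the same way.
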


\begin{proof}
True separation and the second conclusion give
$s\geq9\sqrt d-2d^{0.03}\geq8\sqrt d$.
For all such $s$, the standing dimension implies
\begin{equation}
\label{eq:positive:far-comparisons}
\sqrt{d_0}\leq s/120,\qquad
\log s\leq\frac{9s^2}{3200},\qquad
2\log\frac{ns}{\pimin^*}<\frac{13s^2}{735}.
\end{equation}
For completeness, the logarithmic inequalities hold at $s=8\sqrt d$ by
\cref{eq:positive:parameters}. The differences between their quadratic
right-hand sides and logarithmic left-hand sides are increasing for
$s\geq8\sqrt d$, which proves their uniform validity without an upper bound
on $s$.

Let $G_i(s)=\{\|\bx-\mu_i^*\|\leq s/3\}$. By Gaussian concentration and
\cref{eq:positive:far-comparisons},
\[
\Pr_{x\sim i}(G_i(s)^c)
\leq\exp\!\left(-\frac12(s/3-\sqrt{d_0})^2\right)
\leq \exp(-169s^2/3200)
\leq s^{-1}\exp(-s^2/20).
\]
The representative satisfies
$\|\mu_i-\mu_i^*\|\leq\sqrt d+2d^{0.03}\leq s/7$.
Consequently, on $G_i(s)$,
\begin{align*}
\|\bx-\mu_j\|^2-\|\bx-\mu_i\|^2
&\geq(2s/3)^2-(s/3+s/7)^2\\
&=\frac{96}{441}s^2\\
&>\frac{s^2}{5}+2\log\frac{ns}{\pi_i^*}.
\end{align*}
Here $96/441-1/5=13/735$, so the last step follows from
\cref{eq:positive:far-comparisons}. Comparing with component $i$ in the
responsibility denominator gives
\[
\psi_j(x)
<\frac{n}{\pi_i^*}
\exp\!\left(-\frac{s^2}{10}-\log\frac{ns}{\pi_i^*}\right)
=s^{-1}\exp(-s^2/10).
\]
Thus $\{s\psi_j\geq \exp(-s^2/10)\}\subseteq G_i(s)^c$, as required.
\end{proof}

\begin{proof}[Proof of \cref{lm:stage5}]
We prove the first three items by induction on the integer time $t$. They hold
at $t=0$ by \cref{lm:positive:stage1}. Assume they hold through time $t$, where
$t+1\leq\exp(d/50)/\eta$. Recall the gradient form
\[
\nabla_{\mu_j}\calL
=\sum_i\pi_i^*\E_{x\sim i}
\left[\psi_j(x)\sum_k\psi_k(x)(\mu_k-\mu_i^*)\right].
\]
We first consider the contribution of each $i,k$. If $k\notin S_i$, then
$s:=\|\mu_k-\mu_i^*\|\geq8\sqrt d$, and \cref{lm:stage5:extremefar} gives
\begin{align*}
\E_{x\sim i}[\psi_k(x)\|\mu_k-\mu_i^*\|]
&\leq\exp(-s^2/10)
+s\Pr_{x\sim i}[s\psi_k(x)\geq\exp(-s^2/10)]\\
&\leq2\exp(-s^2/20)\leq\exp(-d).
\end{align*}
If $k\in S_i$, then $\|\mu_k-\mu_i^*\|\leq2\sqrt d$. For $j\neq i$,
since $0\leq\psi_j(x)\leq1$, \cref{lm:stage5:far} gives
\begin{align*}
\E_{x\sim i}[\psi_j(x)]
&\leq\exp(-d^{0.01})
+\Pr_{x\sim i}[\psi_j(x)\geq\exp(-d^{0.01})]\\
&\leq2\exp(-d^{0.01}).
\end{align*}
Consequently,
\[
\E_{x\sim i}[\psi_j(x)\|\mu_k-\mu_i^*\|]
\leq4\sqrt d\,\exp(-d^{0.01}).
\]
If also $\|\mu_i-\mu_i^*\|\leq1$, then \cref{lm:stage5:far:muismall} gives
\begin{align*}
\E_{x\sim i}[\psi_j(x)\|\mu_k-\mu_i^*\|]
&\leq2\sqrt d\left(\exp(-d/20)
+\Pr_{x\sim i}[\psi_j(x)\geq\exp(-d/20)]\right)\\
&\leq4\sqrt d\,\exp(-d/40).
\end{align*}

Therefore, for any $j>m$, summing over $k$ and averaging over the true weights
gives
\[
\|\nabla_{\mu_j}\calL\|\leq4n\sqrt d\,\exp(-d^{0.01}).
\]
Moreover, if $\|\mu_i-\mu_i^*\|\leq1$ for every $1\leq i\leq m$, then
\[
\|\nabla_{\mu_j}\calL\|\leq4n\sqrt d\,\exp(-d/40)
\qquad(j>m).
\]
For a representative $1\leq j\leq m$, all terms in the gradient except
$i=k=j$ have total norm at most $4n\sqrt d\,\exp(-d^{0.01})$.
Indeed, when $i=j$ and $k\in S_j\setminus\{j\}$, use the bound on
$\E_{x\sim j}[\psi_k(x)]$; all other terms are covered by the preceding two
cases. Using $1-\psi_j(x)^2=(1+\psi_j(x))\sum_{l\neq j}\psi_l(x)$, we obtain
\begin{align}
&\|\nabla_{\mu_j}\calL-\pi_j^*(\mu_j-\mu_j^*)\|\notag\\
&\leq\pi_j^*\|\mu_j-\mu_j^*\|\E_{x\sim j}[1-\psi_j(x)^2]
+4n\sqrt d\,\exp(-d^{0.01})\notag\\
&\leq4\sqrt d\sum_{l\neq j}\E_{x\sim j}[\psi_l(x)]
+4n\sqrt d\,\exp(-d^{0.01})\notag\\
&\leq12n\sqrt d\,\exp(-d^{0.01}).
\label{eq:positive:coarse-gradient}
\end{align}
If $\|\mu_i-\mu_i^*\|\leq1$ for every $1\leq i\leq m$, the same calculation
using \cref{lm:stage5:far:muismall} gives
\begin{equation}
\label{eq:positive:fine-gradient}
\|\nabla_{\mu_j}\calL-\pi_j^*(\mu_j-\mu_j^*)\|
\leq12n\sqrt d\,\exp(-d/40).
\end{equation}

The dimension assumption implies
\begin{equation}
\label{eq:positive:error-sizes}
12n\sqrt d\,\exp(-d^{0.01})\leq\frac{\pimin^*}{20},\qquad
12n\sqrt d\,\exp(-d/40)\leq\frac{\pimin^*}{2}\exp(-d/50).
\end{equation}
By \cref{eq:positive:coarse-gradient}, for each $1\leq j\leq m$,
\begin{equation}
\label{eq:positive:coarse-recurrence}
\|\mu_j'-\mu_j^*\|
\leq(1-\eta\pi_j^*)\|\mu_j-\mu_j^*\|
+12\eta n\sqrt d\,\exp(-d^{0.01}).
\end{equation}
Thus, when $\|\mu_j-\mu_j^*\|\geq0.1$,
\[
\|\mu_j'-\mu_j^*\|
\leq(1-\eta\pi_j^*/2)\|\mu_j-\mu_j^*\|.
\]
Also, \cref{eq:positive:coarse-recurrence,eq:positive:error-sizes} show that
$\|\mu_j-\mu_j^*\|\leq1$ implies $\|\mu_j'-\mu_j^*\|\leq1$.
Starting from $\|\mu_j^{(0)}-\mu_j^*\|\leq2\sqrt d$, these bounds give
\begin{equation}
\label{eq:positive:unit-time}
\|\mu_j^{(u)}-\mu_j^*\|\leq1
\quad\text{whenever}\quad
\left\lceil\frac{10\log d}{\eta\pimin^*}\right\rceil\leq u\leq t+1.
\end{equation}
Indeed, before entering the unit ball the distance contracts at every step;
after the displayed number of steps its bound is at most
$2\sqrt d\,\exp(-5\log d)\leq1$. Once it enters the unit ball, it remains
there. This argument uses only updates through the current time $t$.
In particular, when $t\geq\lceil10\log d/(\eta\pimin^*)\rceil$, the stronger
gradient bounds apply to the current update.

We now verify the first three items at time $t+1$. For the first item,
\cref{lm:stage5:far} and the exact weight update give
\[
\pi_i'
\geq\pi_i^*\E_{x\sim i}[\psi_i(x)]
\geq\pi_i^*\left(1-2n\exp(-d^{0.01})\right)>\pi_i^*/n.
\]
For the second item, the principal update of a representative contracts toward
its true mean, while the entire gradient of a non-representative has the bounds
above. Therefore the total possible increase of $\|\mu_j-\mu_i^*\|$, for
$j\in S_i$, through time $t+1$ is at most
\begin{align}
&12\eta n\sqrt d\,\exp(-d^{0.01})
\left\lceil\frac{10\log d}{\eta\pimin^*}\right\rceil
+\frac{\exp(d/50)}{\eta}\,12\eta n\sqrt d\,\exp(-d/40)\notag\\
&\qquad\leq d^{0.03}.
\label{eq:positive:motion-budget}
\end{align}
Here the first term counts at most the displayed number of initial updates;
every later update has the stronger bound by \cref{eq:positive:unit-time}.
The last inequality follows from \cref{eq:positive:parameters} and
$\lceil x\rceil\leq x+1$. Combining this with \cref{lm:positive:stage1} gives
$\|\mu_j'-\mu_i^*\|\leq\sqrt d+2d^{0.03}$.

For $j\in S_i\setminus\{i\}$, the same sum with $4$ in place of $12$ bounds
the total displacement of $\mu_j$. Hence
$\|\mu_j'-\mu_i^*\|\geq\sqrt d-2d^{0.03}$.
If $\|\mu_i-\mu_i^*\|\leq1$, we also have $\|\mu_i'-\mu_i^*\|\leq1$, and
\[
\|\mu_j'-\mu_i^*\|-\|\mu_i'-\mu_i^*\|
\geq\sqrt d-2d^{0.03}-1\geq d^{0.01}/2.
\]
If $\|\mu_i-\mu_i^*\|>1$, then its distance decreases by at least
$\eta\pi_i^*/2$, whereas the non-representative distance decreases by at most
$4\eta n\sqrt d\,\exp(-d^{0.01})$. Consequently,
\begin{align*}
&\bigl(\|\mu_j'-\mu_i^*\|-\|\mu_i'-\mu_i^*\|\bigr)
-\bigl(\|\mu_j-\mu_i^*\|-\|\mu_i-\mu_i^*\|\bigr)\\
&\qquad\geq\eta\left(\frac{\pi_i^*}{2}
-4n\sqrt d\,\exp(-d^{0.01})\right)\geq0.
\end{align*}
Thus the gap is preserved. This proves the third item and closes the induction
for the first three items throughout the stated time interval.

For the fourth item, when $t\geq\lceil10\log d/(\eta\pimin^*)\rceil$, all
representatives are within one of their true means. By
\cref{eq:positive:fine-gradient},
\begin{equation}
\label{eq:positive:fine-recurrence}
\|\mu_j'-\mu_j^*\|
\leq(1-\eta\pi_j^*)\|\mu_j-\mu_j^*\|
+12\eta n\sqrt d\,\exp(-d/40).
\end{equation}
If $\|\mu_j-\mu_j^*\|\geq\exp(-d/50)$,
\cref{eq:positive:error-sizes} therefore implies
\[
\|\mu_j'-\mu_j^*\|
\leq(1-\eta\pi_j^*/2)\|\mu_j-\mu_j^*\|.
\]
If $\|\mu_j-\mu_j^*\|\leq\exp(-d/50)$, the same recurrence gives
$\|\mu_j'-\mu_j^*\|\leq\exp(-d/50)$.
Starting from the unit bound, at most $\lceil d/(20\eta\pimin^*)\rceil$
further steps suffice, since contraction over that many steps gives a bound
of $\exp(-d/40)\leq\exp(-d/50)$ unless the smaller ball has already been
reached. Finally,
\[
\left\lceil\frac{10\log d}{\eta\pimin^*}\right\rceil
+\left\lceil\frac{d}{20\eta\pimin^*}\right\rceil
\leq\frac{10\log d+d/20}{\eta\pimin^*}+2
\leq\frac{8d}{\eta\pimin^*}.
\]
This proves the fourth item, including persistence of the bound at later times
in the stated interval.

For the fifth item, fix an integer $t\geq8d/(\eta\pimin^*)$ with
$t+1\leq\exp(d/50)/\eta$. By \cref{lm:stage5:far:muismall},
\begin{equation}
\label{eq:positive:own-responsibility}
\E_{x\sim i}[1-\psi_i(x)]
=\sum_{j\neq i}\E_{x\sim i}[\psi_j(x)]
\leq2n\exp(-d/40).
\end{equation}
For $k\neq i$, we have $i\notin S_k$ and
$\|\mu_i-\mu_k^*\|\geq8\sqrt d\geq1$. The distance-weighted estimate at the
beginning of this proof therefore gives
\begin{equation}
\label{eq:positive:cross-responsibility}
\E_{x\sim k}[\psi_i(x)]
\leq\E_{x\sim k}[\psi_i(x)\|\mu_i-\mu_k^*\|]
\leq\exp(-d).
\end{equation}
Using the exact weight update,
\begin{align*}
|\pi_i'-\pi_i^*|
&=\left|-\pi_i^*\E_{x\sim i}[1-\psi_i(x)]
+\sum_{k\neq i}\pi_k^*\E_{x\sim k}[\psi_i(x)]\right|\\
&\leq2n\exp(-d/40)+\exp(-d)\leq4n\exp(-d/40).
\end{align*}
This is the required bound at time $t+1$, proving the fifth item with its
one-step offset.
\end{proof}

\begin{lemma}
\label{lm:last:loss}
Let $0<\delta\leq \exp(-d/100)$. If, for every $1\leq i\leq m$,
\[
|\pi_i-\pi_i^*|\leq\delta,\qquad
\|\mu_i-\mu_i^*\|\leq\delta,
\]
then $\calL\leq \exp(-d/200)$.
\end{lemma}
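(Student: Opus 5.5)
The plan is to bound $\calL=D_{KL}(p^*\Vert p)$ from above by discarding all non-representative components and then using joint convexity of KL divergence. Only the $m$ representatives are controlled by the hypotheses, but discarding components can only shrink the fitted density. Since $\pi_j\geq0$ for every $j>m$, we have pointwise
\[
p(x)\geq q(x):=\sum_{i=1}^m\pi_i\phi(\mu_i;x),
\qquad\text{hence}\qquad
\calL\leq\E_{x\sim p^*}\left[\log\frac{p^*(x)}{q(x)}\right].
\]
The same pointwise bound handles the remaining mass $\sum_{j>m}\pi_j\leq m\delta$, so no control on the positions of the non-representatives is needed.

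Next I normalize $q$. Put $s:=\sum_{i=1}^m\pi_i$ and $\tilde q:=q/s$, which is the mixture with weights $b_i:=\pi_i/s$ and means $\mu_i$. The hypothesis $|\pi_i-\pi_i^*|\leq\delta$ gives $1-m\delta\leq s\leq1$. Then
\[
\E_{p^*}\log(p^*/q)=-\log s+D_{KL}(p^*\Vert\tilde q),
\qquad
-\log s\leq 2m\delta .
\]
The inequality $-\log s\leq2m\delta$ uses $m\delta\leq1/2$, which holds since $\delta\leq e^{-d/100}$ and $m\leq n\ll d$.

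For the normalized term I apply joint convexity of KL, with $a_i:=\pi_i^*$:
\[
D_{KL}\!\left(\sum_i a_i\calN(\mu_i^*,I_d)\,\Big\Vert\,\sum_i b_i\calN(\mu_i,I_d)\right)
\leq D_{KL}(a\Vert b)+\sum_i a_i\,D_{KL}\!\left(\calN(\mu_i^*,I_d)\Vert\calN(\mu_i,I_d)\right).
\]
The Gaussian terms are explicit, each equal to $\|\mu_i-\mu_i^*\|^2/2\leq\delta^2/2$. For the weight term I would bound KL by the $\chi^2$-divergence, $D_{KL}(a\Vert b)\leq\sum_i(a_i-b_i)^2/b_i$, and collect the following estimates:
\begin{itemize}
\item $|b_i-\pi_i|=\pi_i(1-s)/s\leq2m\delta$, so $|a_i-b_i|\leq(2m+1)\delta$;
\item $b_i\geq\pi_i\geq\pi_i^*-\delta\geq\pimin^*/2$;
\item consequently $D_{KL}(a\Vert b)\leq 2m(2m+1)^2\delta^2/\pimin^*$.
\end{itemize}

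Summing the three contributions gives
\[
\calL\leq 2m\delta+\frac{\delta^2}{2}+\frac{2m(2m+1)^2\delta^2}{\pimin^*}.
\]
The dimension assumption gives $n/\pimin^*\leq d^{1/1500}/100$ (see \cref{eq:positive:parameters}), so every polynomial prefactor is at most $d$. With $\delta\leq e^{-d/100}$ the right-hand side is then at most $3d\,e^{-d/100}\leq e^{-d/200}$ for large $d$.

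There is no real obstacle here; the only points needing care are the following.
\begin{itemize}
\item Justify $p\geq q$ rather than trying to control the non-representatives.
\item Keep the weight denominators $b_i$ bounded below by $\pimin^*/2$, which requires $\delta\leq\pimin^*/2$; this follows from the dimension condition.
\item Check the final numerical comparison of $3d\,e^{-d/100}$ against $e^{-d/200}$.
\end{itemize}
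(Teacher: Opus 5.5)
Your proposal is correct and is essentially the paper's argument. Both discard the non-representatives via $p\ge\widetilde p:=\sum_{i\le m}\pi_i\phi(\mu_i;\cdot)$ and then use convexity to split the loss into a weight term plus $\frac12\sum_i\pi_i^*\|\mu_i-\mu_i^*\|^2$; the paper does this with a pointwise Jensen step using the posterior weights $w_i(x)=\pi_i^*\phi(\mu_i^*;x)/p^*(x)$, which is the same joint-convexity bound.

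The only difference is that the paper never normalizes the sub-mixture. Your two pieces recombine exactly, $-\log s+D_{KL}(a\Vert b)=\sum_i\pi_i^*\log(\pi_i^*/\pi_i)$, and the paper bounds this directly by $2m\delta$ via $-\log(1-u)\le 2u$. Your normalization and $\chi^2$ step is therefore unnecessary, though harmless.
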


\begin{proof}
Let $\phi(\mu;x)$ denote the density of $\calN(\mu,I_d)$, and set
\[
\widetilde p(x):=\sum_{i=1}^m\pi_i\phi(\mu_i;x),\qquad
w_i(x):=\frac{\pi_i^*\phi(\mu_i^*;x)}{p^*(x)}.
\]
The dimension assumption ensures $\delta\leq\pimin^*/2$, so every matching
weight is positive. Since $p\geq\widetilde p$ and $\sum_iw_i(x)=1$, Jensen's
inequality for $-\log$ gives, pointwise,
\begin{align*}
\log\frac{p^*(x)}{p(x)}
&\leq-\log\left(\sum_{i=1}^m w_i(x)
\frac{\pi_i\phi(\mu_i;x)}{\pi_i^*\phi(\mu_i^*;x)}\right)\\
&\leq\sum_{i=1}^m w_i(x)
\log\frac{\pi_i^*\phi(\mu_i^*;x)}{\pi_i\phi(\mu_i;x)}.
\end{align*}
Integrating against $p^*$ and using the equal-covariance Gaussian density ratio
yields
\begin{align*}
\calL
&\leq\sum_{i=1}^m\pi_i^*\log\frac{\pi_i^*}{\pi_i}
+\frac12\sum_{i=1}^m\pi_i^*\|\mu_i-\mu_i^*\|^2.
\end{align*}
Indeed, the expectation of
$\log(\phi(\mu_i^*;x)/\phi(\mu_i;x))$ under
$x\sim\calN(\mu_i^*,I_d)$ is $\|\mu_i-\mu_i^*\|^2/2$.
This argument allows the matching fitted weights to sum to less than one.

For $0\leq u\leq1/2$, $-\log(1-u)\leq2u$. Consequently,
\[
\pi_i^*\log\frac{\pi_i^*}{\pi_i}
\leq\pi_i^*\left[-\log\left(1-\frac{\delta}{\pi_i^*}\right)\right]
\leq2\delta.
\]
It follows that
\[
\calL\leq2m\delta+\frac{\delta^2}{2}
\leq2m \exp(-d/100)+\frac12\exp(-d/50)
\leq \exp(-d/200),
\]
where the last comparison follows from \cref{eq:positive:parameters}.
\end{proof}

\begin{proof}[Proof of \cref{thm:positive}]
Condition on the event of \cref{lm:positive:stage1}, of probability at least
$1-1/m$, and let
\[
T_1:=\left\lceil1000n^{10}d^{-0.49}/\eta\right\rceil.
\]
The dimension and learning-rate conditions imply
$T_1\leq d/(\eta\pimin^*)$.
Apply \cref{lm:positive:stage1} for the first $T_1$ global iterations, and then
apply \cref{lm:stage5} with its local clock reset to zero. At every local integer
time $s\geq\lceil8d/(\eta\pimin^*)\rceil+1$ within its horizon, the last two
conclusions give
\[
\|\mu_i-\mu_i^*\|\leq \exp(-d/50),\qquad
|\pi_i-\pi_i^*|\leq4n \exp(-d/40)
\qquad(1\leq i\leq m).
\]
Both bounds are at most $\delta:=\exp(-d/100)$. Also,
\[
T_1+\left\lceil\frac{8d}{\eta\pimin^*}\right\rceil+1
\leq\frac{10d}{\eta\pimin^*}.
\]
For any global integer time $10d/(\eta\pimin^*)\leq t\leq \exp(d/50)/\eta$,
the local time $s=t-T_1$ therefore satisfies the required lower bound and is
at most $\exp(d/50)/\eta$. Thus \cref{lm:last:loss} applies throughout the claimed
global interval and gives $\calL\leq \exp(-d/200)$.
\end{proof}
% Include in the paper's appendix with \input{simulation_appendix}.
% Requires only amsmath, amssymb, and graphicx, as used in the main paper.
% Place the three em_appendix_*.pdf figures in the Overleaf project root.
% This file has no dependencies on other appendix or generated TeX files.
\section{Additional experiments}
\label{app:simulation-additional}

We supplement Section~\ref{sec:experiments} with three-seed evaluations
and comparisons with dimension-dependent separation. All settings use
two ground-truth components with weights $(0.3,0.7)$, ten fitted components,
and learning rate $\eta=0.05$. Initialization follows
Algorithm~\ref{alg:em_update}: independently draw
$Z_i\sim\operatorname{Categorical}(0.3,0.7)$ and
$\mu_i^{(0)}=\mu_{Z_i}^*+\varepsilon_i$, where
$\varepsilon_i\sim\mathcal N(0,I_d)$, and set $\pi_i^{(0)}=1/10$.
We report all runs for seeds $0,1,2$.

\paragraph{Three seeds for the fixed-separation experiment.}
We repeat the setting in Section~\ref{sec:experiments}: $d=10,500$,
ground-truth means $(\pm3,0,\ldots)$, batch size $B=8192$, and 2000 updates.
Figure~\ref{fig:simulation-app-original} shows all six trajectories.
At $d=10$, all three runs approach both true components. At $d=500$,
all three runs exhibit underfitting: one fitted component receives
essentially all the final weight and its mean approaches the mixture mean between the true centers. As with the main-text
experiment, this fixed-separation setting is an illustration rather than
an instance of the negative theorem.

\paragraph{Mean and weight trajectories at $d=10^6$.}
We compare separations $\Delta=d^{0.45}\approx501.19$ and
$\Delta=10\sqrt d=10000$ at $d=10^6$, placing the true means at
$(\Delta/2,0,\ldots)$ and $(-\Delta/2,0,\ldots)$.
We use $B=16$ fresh samples per update for 500 updates, keeping the
component counts, true weights, Gaussian initialization, and learning rate
specified above.
Figures~\ref{fig:simulation-app-means}
and~\ref{fig:simulation-app-weights} show all ten fitted means and
weights at every update, for each seed and separation.
Mean trajectories use the same projection as Figure~\ref{fig:traj}:
$x_i(t)=\mu_{i,1}^{(t)}$ and
$y_i(t)=\|\mu_{i,2:d}^{(t)}\|_2$.
At the smaller separation, one active mean moves between the true centers
and receives essentially all final weight in every seed. At the larger
separation, two active means approach the distinct true centers, and their
weights fluctuate during training. The full trajectories include temporary
weight losses. Component colors and indices are fixed across these two
figures and retain initialization order.
Here an active component has fitted weight at least $0.01$.

\begin{figure}[tbp]
\centering
\includegraphics[width=\linewidth]{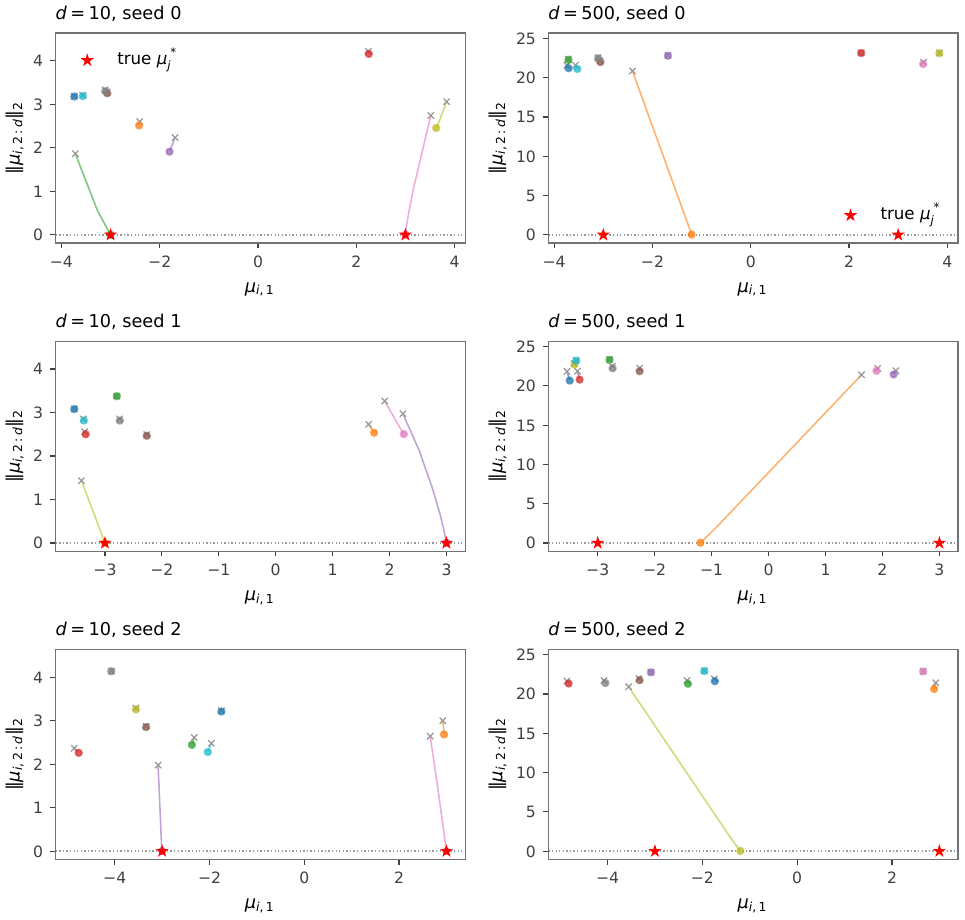}
\caption{Three-seed evaluation of the fixed-separation experiment in
Section~\ref{sec:experiments}. Columns show $d=10,500$; rows show seeds
$0,1,2$. As in Figure~\ref{fig:traj}, gray crosses mark initialization,
colored dots mark final means, and red stars mark true means. At $d=10$,
all three seeds approach both true components. At $d=500$, all three
seeds exhibit underfitting, with a single fitted component carrying
essentially all final weight.}
\label{fig:simulation-app-original}
\end{figure}

\begin{figure}[tbp]
\centering
\includegraphics[width=\linewidth]{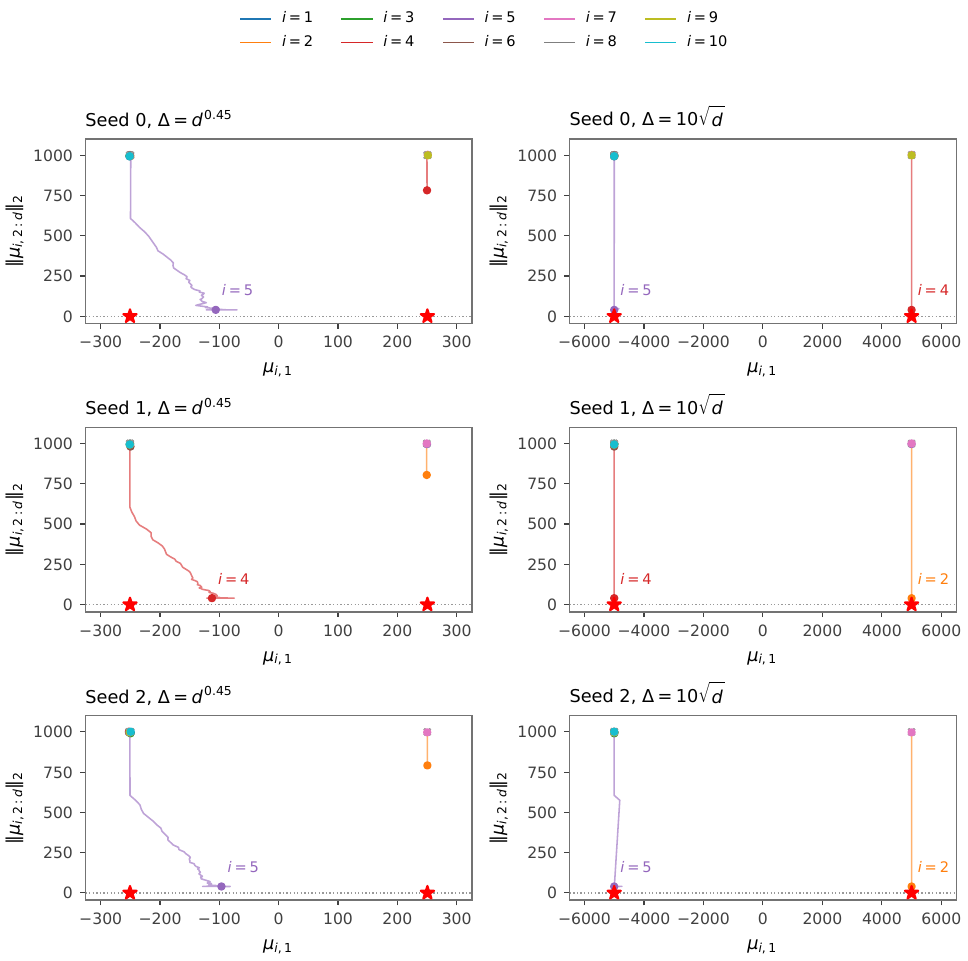}
\caption{Mean trajectories at $d=10^6$, using the same coordinates and
style as Figure~\ref{fig:traj}. Rows show seeds $0,1,2$; columns show
the two separations. 
At the smaller separation one active mean lies between the true centers; at the larger separation two active means approach the distinct true centers.}
\label{fig:simulation-app-means}
\end{figure}

\begin{figure}[tbp]
\centering
\includegraphics[width=\linewidth]{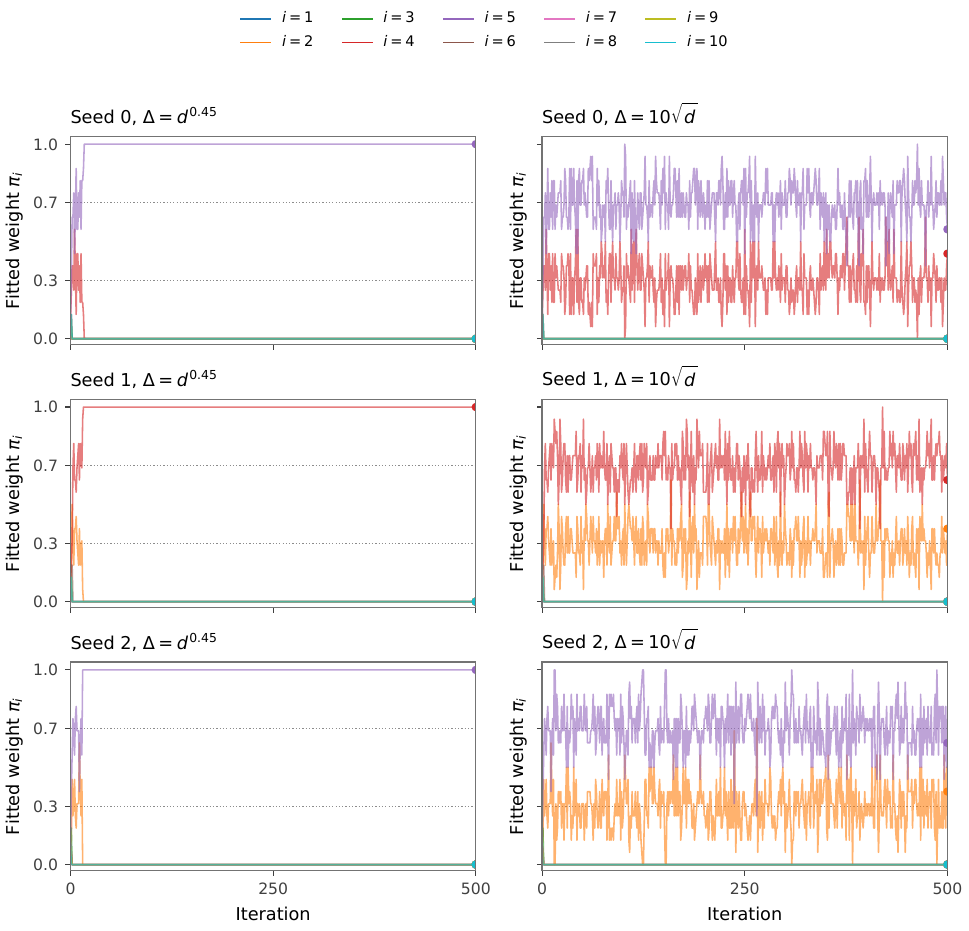}
\caption{Fitted-weight trajectories $\pi_i^{(t)}$ at $d=10^6$ for all ten
components and all 500 updates. At the smaller separation one component receives almost all final weight. At the larger separation two components have substantial final weights.}
\label{fig:simulation-app-weights}
\end{figure}

%%%%%%%%%%%%%%%%%%%%%%%%%%%%%%%%%%%%%%%%%%%%%%%%%%%%%%%%%%%%

\newpage

\end{document}